\documentclass{article}

\usepackage{microtype}
\usepackage{graphicx}
\usepackage{booktabs} %

\usepackage[pagebackref=true]{hyperref}
\renewcommand*\backref[1]{\ifx#1\relax \else (Cited on p. #1) \fi}

\usepackage[accepted]{icml2025}

\usepackage{amsmath}
\usepackage{amssymb}
\usepackage{mathtools}
\usepackage{amsthm}
\usepackage{bbold}
\usepackage{bm}

\usepackage[capitalize,noabbrev]{cleveref}

\theoremstyle{plain}
\newtheorem{theorem}{Theorem}[section]
\newtheorem{proposition}[theorem]{Proposition}
\newtheorem{lemma}[theorem]{Lemma}

\theoremstyle{definition}
\newtheorem{definition}[theorem]{Definition}
\newtheorem{assumption}[theorem]{Assumption}
\theoremstyle{remark}
\newtheorem{remark}[theorem]{Remark}

\usepackage{graphicx,float,subfig}
\usepackage{multirow}

\usepackage{amsmath,amssymb,mathtools}
\usepackage{amsthm}
\usepackage{enumitem}
\usepackage{cleveref}
\usepackage{physics}

\usepackage[textwidth=2cm]{todonotes}  
\usepackage{xargs}

\DeclareMathOperator*{\argmin}{argmin}

\newcommand{\cF}{\mathcal{F}}

\newcommand{\cG}{\mathcal{G}}
\newcommand{\cV}{\mathcal{V}}

\newcommand{\cM}{\mathcal{M}}
\newcommand{\cW}{\mathcal{W}}

\newcommand{\bF}{\mathbb{F}}
\newcommand{\bG}{\mathbb{G}}
\newcommand{\bJ}{\mathbb{J}}
\newcommand{\bP}{\mathbb{P}}
\newcommand{\R}{\mathbb{R}}

\newcommand{\bQ}{\mathbb{Q}}
\newcommand{\bO}{\mathbb{O}}
\newcommand{\bV}{\mathbb{V}}
\newcommand{\bW}{\mathbb{W}}
\newcommand{\bGamma}{\mathbb{\Gamma}}
\newcommand{\bgamma}{\Tilde{\gamma}}
\newcommand{\bOne}{\mathbb{1}}

\newcommand{\kl}{\mathrm{KL}}
\newcommand{\W}{\mathrm{W}}
\newcommand{\bw}{\mathrm{BW}}
\newcommand{\Ww}{\W_{\W_2}}
\newcommand{\id}{\mathrm{Id}}
\newcommand{\T}{\mathrm{T}}

\newcommand{\mmd}{\mathrm{MMD}}
\newcommand{\sw}{\mathrm{SW}}
\newcommand{\spt}{\mathrm{spt}}

\newcommand{\cP}{\mathcal{P}}
\newcommand{\cPa}[1]{\mathcal{P}_{2,\mathrm{ac}}(#1)}
\newcommand{\cPP}[1]{\mathcal{P}_2\big(\cP_2(#1)\big)}
\newcommand{\cPaP}[1]{\mathcal{P}_{\mathrm{ac}}\big(\cP_2(#1)\big)}
\newcommand{\cPPa}[1]{\mathcal{P}_2\big(\cP_{\mathrm{2,ac}}(#1)\big)}

\newcommand{\dF}[2]{\frac{\delta #1}{\delta #2}(#2)}

\newcommand{\gW}{\nabla_{\W_2}}
\newcommand{\gWw}{\nabla_{\W_{\W_2}}}
\newcommand{\cyl}{\mathrm{Cyl}\big(\cP_2(\cM)\big)}
\newcommand{\cFC}{\mathrm{Cyl}\big(I\times\cP_2(\cM)\big)}

\newcommand{\hess}{\mathrm{H}}
\DeclareMathOperator{\Hess}{Hess}

\newcommand{\sca}[2]{\langle#1,#2\rangle}
\newcommand{\veps}{\varepsilon}

\let\oldqedhere\qedhere
\renewcommand{\qedhere}{\pushQED{\qed}\oldqedhere}

\newcommand{\TDer}{T^{\mathrm{Der}}}
\newcommand{\mbf}[1]{\mathbf{#1}}

\icmltitlerunning{Flowing Datasets with Wasserstein over Wasserstein Gradient Flows}

\begin{document}

\twocolumn[
\icmltitle{Flowing Datasets with Wasserstein over Wasserstein Gradient Flows}

\icmlsetsymbol{equal}{*}

\begin{icmlauthorlist}

\icmlauthor{Clément Bonet}{equal,yyy}
\icmlauthor{Christophe Vauthier}{equal,xxx}
\icmlauthor{Anna Korba}{yyy}

\end{icmlauthorlist}

\icmlaffiliation{yyy}{ENSAE, CREST, IP Paris}
\icmlaffiliation{xxx}{Université Paris-Saclay, Laboratoire de Mathématique d'Orsay}

\icmlcorrespondingauthor{Clément Bonet}{clement.bonet@ensae.fr}
\icmlcorrespondingauthor{Chrisophe Vauthier}{christophe.vauthier@universite-paris-saclay.fr}

\icmlkeywords{Machine Learning, ICML}

\vskip 0.3in
]

\printAffiliationsAndNotice{\icmlEqualContribution} %

\begin{abstract}
    Many applications in machine learning involve data represented as probability distributions. The emergence of such data requires radically novel techniques to design tractable gradient flows on probability distributions over this type of (infinite-dimensional) objects. For instance, being able to flow labeled datasets is a core task for applications ranging from domain adaptation to transfer learning or dataset distillation. In this setting, we propose to represent each class by the associated conditional distribution of features, and to model the dataset as a mixture distribution supported on these classes (which are themselves probability distributions), meaning that labeled datasets can be seen as probability distributions over probability distributions. We endow this space with a metric structure from optimal transport, namely the Wasserstein over Wasserstein (WoW) distance, derive a differential structure on this space, and define WoW gradient flows. The latter enables to design dynamics over this space that decrease a given objective functional. We apply our framework to transfer learning and dataset distillation tasks, leveraging our gradient flow construction as well as novel tractable functionals that take the form of Maximum Mean Discrepancies with Sliced-Wasserstein based kernels between probability distributions.
\end{abstract}

\section{Introduction}

\looseness=-1 Probability measures provide a powerful way to represent many data types. For instance, they allow to naturally represent documents \citep{kusner2015word}, genes \citep{bellazzi2021gene}, point clouds \citep{qi2017pointnet, geuter2025ddeqs}, images \citep{sodini2025approximation}, or single-cell data \citep{persad2023seacells, haviv2024covariance}. Remarkably, it has been shown that one can embed any finite dataset with little or no distortion \citep{andoni2018snowflake, kratsios2023small} in the Wasserstein space, \emph{i.e.}, the space of probability distributions (\emph{e.g.}, over a Euclidean space) equipped with the Wasserstein-2 distance from Optimal Transport (OT). 
This has motivated the use of this space to embed many types of data ranging from words \citep{vilnis2015word} to knowledge graphs \citep{he2015learning, wang2022dirie}, graphs \citep{bojchevski2017deep, petric2019got}, or neuroscience data \citep{bonet2023sliced}.
Therefore, it is essential to develop tools to work on the space of probability measures over probability measures, also known as random measures. %
In particular, they provide a natural way to represent labeled datasets as mixtures \citep{alvarez2020geometric}.

\looseness=-1 A natural distance on this space is the Wasserstein over Wasserstein distance (WoW) \citep{nguyen2016borrowing, catalano2024hierarchical}, also known as the Hierarchical OT distance, which lifts the Wasserstein distance between probability distributions as a ground cost, to define a Wasserstein distance between random measures. 
The latter has been used for generative modeling applications %
\citep{dukler2019wasserstein}, domain adaptation tasks \citep{el2022hierarchical}, comparing documents \citep{yurochkin2019hierarchical} or multilevel clustering \citep{ho2017multilevel}. It has also been used to compare Gaussian mixtures \citep{chen2018optimal, delon2020wasserstein, wilson2024wasserstein} or generic mixtures \citep{dusson2023wasserstein, chen2024optimal}. However, its poor sample complexity has motivated the development of alternative distance measures, such as those based on Integral Probability Metrics \citep{catalano2024hierarchical}. 
Nonetheless, this space possesses a rich Riemannian structure, enabling the definition of concepts like geodesics. This has been leveraged recently by \citet{haviv2024wasserstein} to perform generative modeling over the space of probability distributions with Flow Matchings.

While this space naturally supports a range of machine learning tasks,  optimization methods tailored to it have received limited attention.
Yet, this is important for multiple applications, including variational inference with a Gaussian mixture family \citep{lambert2022variational, huix2024theoretical}, computing barycenters \citep{delon2020wasserstein}, or flowing datasets \citep{alvarez2021dataset}, \emph{e.g.}, for domain adaptation, transfer learning \citep{alvarez2021dataset, hua2023dynamic} or dataset distillation \citep{wang2018dataset}.
In this paper, we propose to leverage the Riemannian structure of random measures equipped with the WoW distance, by defining and simulating gradient flows, \emph{i.e.}, paths of random measures that follow the steepest descent %
of a given objective functional.

\paragraph{Related works.} \looseness=-1 An elegant and popular way to perform optimization over probability distributions (over a manifold) is to leverage the Riemannian structure of the Wasserstein space \citep{otto2001the}, and to use Wasserstein gradient flows \citep{ambrosio2005gradient, santambrogio2017euclidean}. Several time discretizations of these flows have been studied \citep{jordan1998variational, salim2020wasserstein,bonet2024mirror}, and they have been applied to simulate the flow dynamics of multiple objectives such as the Kullback-Leibler divergence \citep{wibisono2018sampling,salim2020wasserstein,diao2023forward}, the Maximum Mean Discrepancy (MMD) \citep{arbel2019maximum, altekruger2023neural, hertrich2024wasserstein, hertrich2024generative} and variants thereof \citep{glaser2021kale, chen2024regularized,neumayer2024wasserstein, chazal2024statistical} or the Sliced-Wasserstein distance \citep{liutkus2019sliced, du2023nonparametric, bonet2024sliced}. Yet, all these works focus on the case where the probability distributions are defined over a finite-dimensional manifold, \emph{e.g.} $\R^d$. In practice, simulating these flows often boils down to simulating a particle system in $\R^d$.
Hence, these works do not address probability distributions defined on infinite-dimensional spaces, such as the space of probability measures, which is the focus of this work.

The closest works to ours are the ones of \citet{alvarez2021dataset} and \citet{hua2023dynamic}. These papers cast labeled datasets as measures over a product space of the features and the conditional distributions 
(\emph{i.e.}, the distributions of the features of a given class). 
However, they circumvent the issue of designing gradient flows on this space by modeling the conditional probabilities as Gaussian distributions, hence parametrized by a mean and covariance, which are finite-dimensional objects. While this enables them to leverage standard Wasserstein gradient flows, this Gaussian modeling of mixture components is a strong assumption that may not capture the true shape of many labeled datasets in practice. %

\paragraph{Contributions.} In this work, we introduce a principled framework for optimizing functionals over the space of probability measures on probability measures, leveraging the Riemannian structure of this space to develop Wasserstein over Wasserstein (WoW) gradient flows. We provide a theoretical construction of the flows, and then a practical implementation through time discretization using a forward Euler scheme. %
We also propose a novel functional objective, that writes as an MMD with kernel between distributions based on the Sliced-Wasserstein distance, and whose gradient flow simulation is tractable. 
We then apply this scheme to flow datasets viewed as random measures; specifically, as mixtures of probability distributions corresponding to the class-conditional distributions. %
We focus on image datasets, %
and show that the flow enables structured transitions of classes toward other classes, with applications to transfer learning and dataset distillation.

\paragraph{Notations.}

\looseness=-1 For a Riemannian manifold $\cM$, $d:\cM\times\cM\to\R_+$ is its geodesic distance. For $x\in\cM$, we denote by $T_x\cM$ the tangent space at $x$, and by $\|\cdot\|_x$ the Riemannian metric. We define by $T\cM=\{(x,v),\ x\in\cM\ \text{and }v\in T_x\cM\}$ the tangent bundle. We define for $(x,v)\in T\cM$ the projections $\pi^\cM(x,v) = x$ and $\pi^{\mathrm{v}}(x,v) = v$. $\exp:T\cM\to\cM$ is the exponential map. For $x\in \cM$, if $\exp_x:T_x\cM\to\cM$ is invertible, we note $\log_x$ its inverse. %
$\nabla$ and $\mathrm{div}$ refer to the Riemannian gradient and divergence on $\cM$. 
For a metric space $(X,d)$, $\cP_2(X)$ denotes the space of probability distributions on $X$ with second finite moments, \emph{i.e.}, $\cP_2(X) = \{\mu\in\cP(X),\ \int d(x,o)^2\mathrm{d}\mu(x) < \infty\}$ with $o\in X$ some arbitrary origin. For any $\mu\in\cP_2(\cM)$, $L^2(\mu, T\cM)$ is the set of functions $v:\cM\to T\cM$ such that $\int \|v(x)\|_x^2\mathrm{d}\mu(x)<\infty$. For a measurable map $\T:\cM\to \cM$, we note by $\T_\#\mu$ the pushforward measure. $\id$ denotes the identity map on $\cM$. 
$\cPa{\cM}\subset\cP_2(\cM)$ is the space of measures absolutely continuous \emph{w.r.t.} the volume measure on $\cM$. For $\mu,\nu \in \cP(X)$, we denote $\mu\ll \nu$ if $\mu$ is absolutely continuous \emph{w.r.t.} $\nu$. $\Pi(\mu,\nu)=\{\gamma\in\cP(X\times X),\ \pi^1_\#\gamma=\mu,\ \pi^2_\#\gamma=\nu\}$ with $\pi^i:(x_1,x_2)\mapsto x_i$, is the set of couplings, and $\Pi_o(\mu,\nu)$ the set of optimal couplings.

\section{Background}

We begin by introducing some background on Optimal Transport (OT) and on Wasserstein Gradient Flows. For theoretical purposes, we provide background on the geometry of ($\cP_2(\cM), \W_2)$ with $\cM$ a Riemannian manifold, as in the next section, we will rely on results which hold on compact Riemannian manifolds (without boundary). Nonetheless, the applications will be done for $\cM=\R^d$. The reader may refer to \Cref{appendix:background_ot} for more details.

\paragraph{Optimal Transport.}

The Wasserstein distance between $\mu,\nu\in\cP_2(\cM)$ is defined as 
\begin{equation}
    \W_2^2(\mu,\nu)= \inf_{\bgamma\in\Pi(\mu,\nu)}\ \int d(x,y)^2\ \mathrm{d}\bgamma(x,y).
\end{equation}
\looseness=-1 The metric space $(\cP_2(\cM),\W_2)$ has a Riemannian structure \citep{otto2001the, erbar2010heat}. In particular, if the log map is well defined $\mu$-almost everywhere (a.e.), (constant-speed) geodesics between $\mu,\nu$ are defined as $\mu_t = \big(\exp_{\pi^1}\circ(t\log_{\pi^1}\circ \pi^2)\big)_\#\bgamma$ with $\bgamma\in\Pi_o(\mu,\nu)$ an optimal coupling. If $\mu \in\cPa{\cM}$, %
there is a map $\T$, namely the OT map, such that $\T_\#\mu=\nu$ and $\bgamma = (\id, \T)_\#\mu$ by McCann's theorem for a wide range of manifolds \citep{mccann2001polar, figalli2007existence}. In particular, $\T=\exp_\id\circ (-\nabla \varphi_{\mu,\nu})$ with $\varphi_{\mu,\nu}$ a Kantorovich potential between $\mu$ and $\nu$, and geodesics become $\mu_t=\exp_\id\circ (-t\nabla\varphi_{\mu,\nu})_\#\mu$. %
At any $\mu\in\cPa{\cM}$, we can define a tangent space $T_\mu\cP_2(\cM)=\overline{\{\nabla\varphi,\ \varphi\in C_c^\infty(\cM)\}}^{L^2(\mu,T\cM)}$ with $C_c^\infty(\cM)$ the space of smooth compactly supported functions on $\cM$ \citep{erbar2010heat, gigli2011inverse}.
This is a Hilbert space endowed with the $L^2(\mu, T\cM)$ inner product. The exponential map on $\cP_2(\cM)$ is then defined as $\exp_\mu(v) = (\exp_{\id}\circ v)_\#\mu$ for $\mu\in\cP_2(\cM)$, $v\in T_\mu\cP_2(\cM)$. %
For instance, when $\cM=\R^d$ with $d(x,y)^2 = \|x-y\|_2^2$, then $\exp_x(y) = x+y$ and $\log_x(y)=y-x$ for all $x,y\in \R^d.$

Let $\cP_2(T\cM) := \{\gamma \in \cP(T\cM),\  \int (d(x,o)^2 + \|v\|_x^2) \mathrm{d}\gamma(x,v) < \infty\}$ where $o \in \cM$  is any reference point. Following \citep{gigli2011inverse}, we define for every $\mu,\nu \in \cP_2(\cM)$,
\begin{multline}
    \exp_\mu^{-1}(\nu) := \big\{\gamma \in \cP_2(T\cM),\  \pi^\cM_{\#}\gamma = \mu,\ \exp_\#\gamma = \nu,  \\ \int \|v\|^2_x \mathrm{d}\gamma(x,v) = \W_2^2(\mu,\nu) \big\}    
\end{multline}
the set of plans $\gamma\in\cP_2(T\cM)$ such that $(\pi^\cM, \exp)_\#\gamma$ is an OT plan between $\mu$ and $\nu$. %
This allows one to avoid using the logarithm map, which might not be well defined everywhere, \emph{e.g.} being multivalued. %
This space carries more information than the set of optimal couplings as it precises which geodesic was chosen to move the mass, as $\mu_t = \big(\exp_{\pi^\cM}\circ(t\pi^{\mathrm{v}})\big)_\#\gamma$ %
are constant speed geodesics between $\mu$ and $\nu$ \citep[Theorem 1.11]{gigli2011inverse}. On $\cP_2(\R^d)$, this translates as $\exp_\mu^{-1}(\nu)=\{(\pi^1, \pi^2-\pi^1)_\#\bgamma,\ \bgamma\in \Pi_o(\mu,\nu)\}$ \citep{gigli2004geometry, hertrich2024wasserstein}. %
We show in the next proposition, whose proof can be found in \Cref{proof:surjective_map_PM}, that we can build a surjective map from $\exp_\mu^{-1}(\nu)$ to $\Pi_o(\mu,\nu)$.
\begin{proposition} \label{prop:surjective_map_PM}
    Let $\mu,\nu\in\cP_2(\cM)$. A surjective map from $\exp_\mu^{-1}(\nu)$ to $\Pi_o(\mu,\nu)$ is given by $\gamma\mapsto (\pi^\cM,\exp)_\#\gamma$.
    In particular, $\exp_\mu^{-1}(\nu)$ is not empty, and if $\gamma\in\exp_\mu^{-1}(\nu)$, then $d\big(x,\exp_x(v)\big) = \|v\|_x$ for $\gamma$-a.e. $(x,v)\in T\cM$. 
    
    Additionally, if $\cM$ is compact and connected, and $\mu\in\cPa{\cM}$, then there exists a unique $\gamma\in\exp_\mu^{-1}(\nu)$, of the form $\gamma=(\id, -\nabla\varphi_{\mu,\nu})_\#\mu$. %
\end{proposition}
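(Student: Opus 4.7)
The plan is to prove the four claims in the order (i) the map is well-defined into $\Pi_o(\mu,\nu)$ with the norm identity as a byproduct, (ii) it is surjective via a measurable selection of minimizing geodesics, (iii) non-emptiness then follows from classical existence of OT plans, and (iv) uniqueness in the compact/a.c.\ case comes from McCann-type arguments.

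\textbf{Map lands in $\Pi_o(\mu,\nu)$ and gives the norm identity.} Given $\gamma\in\exp_\mu^{-1}(\nu)$, set $\bgamma:=(\pi^\cM,\exp)_\#\gamma$. By construction, $\pi^1_\#\bgamma=\pi^\cM_\#\gamma=\mu$ and $\pi^2_\#\bgamma=\exp_\#\gamma=\nu$, so $\bgamma\in\Pi(\mu,\nu)$. The curve $t\mapsto\exp_x(tv)$ joins $x$ to $\exp_x(v)$ with length $\|v\|_x$, hence $d(x,\exp_x(v))\le\|v\|_x$ pointwise. Integrating against $\gamma$ gives
\begin{equation*}
    \int d(x,y)^2\,\D\bgamma(x,y)\le\int\|v\|_x^2\,\D\gamma(x,v)=\W_2^2(\mu,\nu).
\end{equation*}
Since the converse inequality always holds for admissible couplings, $\bgamma\in\Pi_o(\mu,\nu)$ and equality must hold $\gamma$-a.e., proving $d(x,\exp_x(v))=\|v\|_x$ for $\gamma$-a.e.\ $(x,v)$.

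\textbf{Surjectivity via measurable selection.} This is the main obstacle: given $\bgamma\in\Pi_o(\mu,\nu)$, one needs a Borel map $S:\cM\times\cM\to T\cM$ with $S(x,y)=(x,v(x,y))$, $\exp_x(v(x,y))=y$, and $\|v(x,y)\|_x=d(x,y)$. Consider the multifunction
\begin{equation*}
    F(x,y):=\{v\in T_x\cM \,:\, \exp_x(v)=y,\ \|v\|_x=d(x,y)\}.
\end{equation*}
Since $\cM$ is complete (as a Riemannian manifold with boundary-free compact or Euclidean background here), Hopf--Rinow guarantees $F(x,y)\neq\emptyset$ everywhere, and $F$ has closed graph by continuity of $\exp$ and $\|\cdot\|_x$. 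A standard Kuratowski--Ryll-Nardzewski / Castaing measurable selection theorem then yields a Borel selector $v(x,y)$. Setting $\gamma:=S_\#\bgamma$ with $S(x,y)=(x,v(x,y))$ gives $\pi^\cM_\#\gamma=\mu$, $\exp_\#\gamma=(\pi^2)_\#\bgamma=\nu$, and
\begin{equation*}
    \int\|v\|_x^2\,\D\gamma=\int d(x,y)^2\,\D\bgamma=\W_2^2(\mu,\nu),
\end{equation*}
so $\gamma\in\exp_\mu^{-1}(\nu)$ and $(\pi^\cM,\exp)_\#\gamma=\bgamma$ by construction.

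\textbf{Non-emptiness.} Existence of an optimal coupling $\bgamma\in\Pi_o(\mu,\nu)$ is classical (lower semicontinuity of the cost and tightness of $\Pi(\mu,\nu)$); the surjective preimage construction of the previous paragraph then yields some $\gamma\in\exp_\mu^{-1}(\nu)$, hence it is non-empty.

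\textbf{Uniqueness in the compact connected, $\mu\in\cPa{\cM}$ case.} By McCann's theorem on compact Riemannian manifolds \citep{mccann2001polar}, there is a unique optimal plan $\bgamma=(\id,\T)_\#\mu$ with $\T=\exp_\id\circ(-\nabla\varphi_{\mu,\nu})$, and moreover for $\mu$-a.e.\ $x$ the point $\T(x)$ lies outside the cut locus of $x$, so there is a unique minimizing geodesic from $x$ to $\T(x)$, of initial velocity $-\nabla\varphi_{\mu,\nu}(x)$. Now let $\gamma\in\exp_\mu^{-1}(\nu)$. Disintegrating $\gamma$ against its first marginal $\mu$ as $\gamma=\int\delta_x\otimes\gamma_x\,\D\mu(x)$, the condition $(\pi^\cM,\exp)_\#\gamma=\bgamma$ forces $\exp_x(v)=\T(x)$ for $\gamma_x$-a.e.\ $v$, while $\|v\|_x=d(x,\T(x))$ by the norm identity above. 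Hence $\gamma_x=\delta_{-\nabla\varphi_{\mu,\nu}(x)}$ for $\mu$-a.e.\ $x$, and $\gamma=(\id,-\nabla\varphi_{\mu,\nu})_\#\mu$, giving uniqueness.
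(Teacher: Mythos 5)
Your proof follows essentially the same architecture as the paper's: well-definedness plus the norm identity from the pointwise bound $d(x,\exp_x(v))\le\|v\|_x$ and optimality, surjectivity via a measurable selection of minimizing velocities, and uniqueness from McCann's theorem. The selection step is phrased differently (you apply Kuratowski--Ryll-Nardzewski/Castaing to the closed-valued multifunction $F(x,y)=\{v:\exp_x(v)=y,\ \|v\|_x=d(x,y)\}$, while the paper applies a right-inverse selection theorem to the Borel surjection $(x,v)\mapsto(x,\exp_x(v))$ with compact fibers), but these are interchangeable packagings of the same idea.

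The one place where you diverge substantively, and where your argument leans on an unjustified assertion, is in the uniqueness step. You claim that ``for $\mu$-a.e.\ $x$ the point $\T(x)$ lies outside the cut locus of $x$'' as though it followed directly from the cited statement of McCann's theorem. It does not: McCann's theorem, as stated in \Cref{th:mccann}, gives the form $\T=\exp_{\id}\circ(-\nabla\varphi_{\mu,\nu})$ but does not assert that $\T(x)\notin\mathrm{cut}(x)$ a.e. That stronger fact is true (it goes back to Cordero-Erausquin--McCann--Schmuckenschläger and is also in Figalli's work), but it requires its own proof or citation. The paper sidesteps this by a softer argument: Gigli's Theorem 1.8 gives $\exp_x^{-1}(\T(x))\subseteq-\partial^+\varphi(x)$ for $\mu$-a.e.\ $x$ (any minimizing velocity to a point in the $c$-superdifferential lies in minus the superdifferential of $\varphi$), and since $\varphi$ is Lipschitz on the compact $\cM$, Rademacher gives $\partial^+\varphi(x)=\{\nabla\varphi(x)\}$ a.e., hence for $\mu$-a.e.\ $x$. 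This pins down the velocity without ever invoking the cut locus, so it is more self-contained. If you want to keep your route, you should cite the a.e.\ avoidance of the cut locus explicitly; otherwise, replacing that sentence by the superdifferential-plus-Rademacher argument closes the gap.
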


\paragraph{Wasserstein Gradient Flows.} %

Let $\cF:\cP_2(\cM)\to \R$ be a lower semi-continuous functional. We briefly introduce the differential structure on $(\cP_2(\cM),\W_2)$, \emph{i.e.}, probability measures on manifolds, inspired by \citep{erbar2010heat} and \citep{lanzetti2022first}. %

Let $\mu\in\cP_2(\cM)$. We say that $\gW\cF(\mu) \in L^2(\mu, T\cM)$ is a Wasserstein gradient of $\cF$ at $\mu$ if for any $\nu\in\cP_2(\cM)$ and any $\gamma\in \exp_\mu^{-1}(\nu)$, we have the Taylor expansion
\begin{multline} \label{eq:wgrad_m}
    \cF(\nu) = \cF(\mu) + \int \langle \gW\cF(\mu)(x), v\rangle_x\ \mathrm{d}\gamma(x,v) \\ + o\big(\W_2(\mu,\nu)\big).
\end{multline}

If such a gradient exists, then we say that $\cF$ is Wasserstein differentiable at $\mu$. There is a unique gradient belonging to $T_\mu\cP_2(\cM)$ and we restrict to this gradient. Informally, the Wasserstein gradient of $\cF$ can be computed as $\gW\cF(\mu) = \nabla \frac{\delta\cF}{\delta\mu}(\mu)$, with $\frac{\delta\cF}{\delta\mu}(\mu)$ the first variation defined, when it exists, as the unique function (up to an additive constant) such that, for $\chi$ satisfying $\int \mathrm{d}\chi = 0$, $\frac{\mathrm{d}}{\mathrm{d}t}\cF(\mu + t\chi)\big|_{t=0} = \int \frac{\delta\cF}{\delta\mu}(\mu)\ \mathrm{d}\chi$~\citep[Lemma 10.4.1]{ambrosio2005gradient}. Examples of differentiable functionals include potential energies $\cV(\mu) = \int V\mathrm{d}\mu$ and interaction energies $\cW(\mu) = \iint W(x,y)\mathrm{d}\mu(x)\mathrm{d}\mu(y)$ for $V:\cM\to \R$ and $W:\cM\times\cM\to \R$ twice differentiable with bounded Hessian, for which $\gW\cV(\mu)=\nabla V$ and $\gW\cW(\mu)(x) = \int \big(\nabla_1 W(x, y) + \nabla_2 W(y, x)\big) \mathrm{d}\mu(y)$. Moreover, if the functional $\cF:\cP_2(\R^d)\to\R$ has a closed-form over discrete measures, \emph{i.e.}, there exists $F:(\R^d)^n\to \R$ such that $\cF\big(\frac{1}{n}\sum_{i=1}^n \delta_{x_i}\big) = F(x_1,\dots,x_n)$, then we can use backpropagation on $F$ and find the Wasserstein gradient of $\cF$ using the relation $\gW\cF(\frac{1}{n}\sum_{i=1}^n \delta_{x_i})(x_i)=n\nabla_i F(x_1,\dots,x_n)$ (see \Cref{prop:backprop_grad}).

A Wasserstein gradient flow of a differentiable functional $\cF$ is a curve $t\mapsto \mu_t$ which is a (weak) solution of the continuity equation $\partial_t\mu_t = \mathrm{div}\big(\mu_t\gW\cF(\mu_t)\big)$. %
A possible discretization is the Riemannian Wasserstein gradient descent \citep{bonnabel2013stochastic, bonet2024sliced}, defined as $\mu_{k+1} = \exp_{\id}\big(-\tau \gW\cF(\mu_k)\big)_\#\mu_k$. For discrete distributions $\mu_k = \frac{1}{n}\sum_{i=1}^n \delta_{x_i^k}$, it translates as, for all $k\ge 0$, $i\in\{1,\dots,n\}$, $x_i^{k+1} = \exp_{x_i^k}\big(-\tau \gW\cF(\mu_k)(x_i^k)\big)$. For $\cM=\R^d$, this is simply $x_i^{k+1} = x_i^k - \tau \gW\cF(\mu_k)(x_i^k)$, which corresponds to  Wasserstein gradient descent. %

\section{Wasserstein over Wasserstein Space}

\looseness=-1 We introduce in this section the Wasserstein over Wasserstein space $(\cPP{\cM}, \Ww)$, \emph{i.e.}, the space of probability distributions over probability distributions $\cPP{\cM}$, endowed with the OT distance with the squared Wasserstein distance on $\cP_2(\cM)$ as groundcost. We first state some properties of this distance, and then introduce a differential structure on this space which will be used in the next sections to develop suitable optimization methods. %
In the following, $\cM$ is a compact and connected manifold. The proofs can be found in \Cref{appendix:proofs}.%

\subsection{OT Distance and Riemannian Structure}%

The WoW distance is defined as the OT problem with the squared Wasserstein distance on $\cP_2(\cM)$ as groundcost, %
\emph{i.e.}, for $\bP,\bQ\in\cPP{\cM}$,
\vspace{-0.5em}
\begin{equation}
    \Ww(\bP,\bQ)^2 = \inf_{\Gamma\in\Pi(\bP,\bQ)}\ \int \W_2^2(\mu,\nu)\ \mathrm{d}\Gamma(\mu,\nu).
\end{equation}
This defines a distance \citep{nguyen2016borrowing}. Analogously to $\cP_2(\cM)$ and Brenier-McCann's theorem, it has been shown that there exists an OT map from $\bP$ to $\bQ$ under absolute continuity of $\bP$ with respect to a suitable reference measure $\bP_0\in\cPP{\cM}$ \citep{emami2024monge}, which has no atom and satisfies an integration by part formula \citep{schiavo2020rademacher}. %
We refer to \Cref{appendix:wow} for more details.

Now, let us denote for any $\gamma\in\cP_2(T\cM)$, the projections $\phi^\cM(\gamma)=\pi^\cM_\#\gamma$, $\phi^{\exp}(\gamma)=\exp_\#\gamma$ and $\phi^{\mathrm{v}}(\gamma) = \pi^{\mathrm{v}}_\#\gamma$. %
For any $\bP,\bQ\in\cPP{\cM}$, let us also define %
\begin{align}
    \exp_\bP^{-1}(\bQ) &:= \big\{\bGamma\in\cPP{T\cM}, %
    \phi^\cM_\#\bGamma=\bP,\ \phi^{\exp}_\#\bGamma=\bQ,\nonumber \\
    &\; \iint \|v\|_x^2\dd\gamma(x,v)\dd\bGamma(\gamma)=\Ww^2(\bP,\bQ)\big\}. %
\end{align}
Relying on \Cref{prop:surjective_map_PM}, we can define for any $\bP,\bQ\in\cPP{\cM}$ a surjective map from $\exp_\bP^{-1}(\bQ)$ to $\Pi_o(\bP,\bQ)$.

\begin{proposition} \label{prop:surjective_map_PPM}
    Let $\bP,\bQ\in\cPP{\cM}$. Then, $\bGamma\mapsto (\phi^\cM,\phi^{\exp})_\#\bGamma$ is a surjective map from $\exp_{\bP}^{-1}(\bQ)$ to $\Pi_o(\bP,\bQ)$. In particular, $\exp_{\bP}^{-1}(\bQ)$ is not empty and if $\bGamma\in\exp_{\bP}^{-1}(\bQ)$, $\gamma\in\exp_{\pi^\cM_\#\gamma}^{-1}(\exp_\#\gamma)$ for $\bGamma$-a.e. $\gamma$. 
    
    Additionally, if %
    $\bP\ll\bP_0$, there exists a unique $\bGamma\in\exp_{\bP}^{-1}(\bQ)$, of the form $(\mu\mapsto (\id, -\nabla\varphi_{\mu, \T(\mu)})_\#\mu)_\#\bP$ with $\T$ the unique transport map from $\bP$ to $\bQ$ and $\varphi_{\mu, \T(\mu)}$ a Kantorovich potential between $\mu, \T(\mu) \in \cP_2(\cM)$.
\end{proposition}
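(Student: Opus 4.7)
The plan mirrors the proof of \Cref{prop:surjective_map_PM}, lifted one level up. The key inequality I would rely on is that for any $\gamma\in\cP_2(T\cM)$,
\begin{equation}
\W_2^2(\phi^\cM(\gamma),\phi^{\exp}(\gamma)) \leq \int \|v\|_x^2\dd\gamma(x,v),
\end{equation}
with equality precisely when $\gamma\in\exp_{\phi^\cM(\gamma)}^{-1}(\phi^{\exp}(\gamma))$ (this is built into \Cref{prop:surjective_map_PM}). To show the map is well-defined, take $\bGamma\in\exp_\bP^{-1}(\bQ)$: the projection conditions make $(\phi^\cM,\phi^{\exp})_\#\bGamma$ a coupling of $\bP$ and $\bQ$, and the squeeze
\begin{equation}
\Ww^2(\bP,\bQ) \leq \int \W_2^2(\mu,\nu)\dd(\phi^\cM,\phi^{\exp})_\#\bGamma(\mu,\nu) \leq \iint \|v\|_x^2\dd\gamma(x,v)\dd\bGamma(\gamma) = \Ww^2(\bP,\bQ)
\end{equation}
forces equality throughout, which simultaneously gives $(\phi^\cM,\phi^{\exp})_\#\bGamma\in\Pi_o(\bP,\bQ)$ and the pointwise statement $\gamma\in\exp_{\pi^\cM_\#\gamma}^{-1}(\exp_\#\gamma)$ for $\bGamma$-a.e. $\gamma$.

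For surjectivity, I would start from an arbitrary $\Pi\in\Pi_o(\bP,\bQ)$ (nonempty by classical OT) and build $\bGamma$ by a Borel selection. By \Cref{prop:surjective_map_PM}, $\exp_\mu^{-1}(\nu)$ is nonempty for every $(\mu,\nu)$; invoking a Kuratowski--Ryll-Nardzewski type selection theorem yields a Borel map $S:\cP_2(\cM)^2\to \cP_2(T\cM)$ with $S(\mu,\nu)\in\exp_\mu^{-1}(\nu)$. Setting $\bGamma=S_\#\Pi$, the projection conditions $\phi^\cM_\#\bGamma=\bP$ and $\phi^{\exp}_\#\bGamma=\bQ$ follow from $\phi^\cM\circ S=\pi^1$, $\phi^{\exp}\circ S=\pi^2$ and $\Pi\in\Pi(\bP,\bQ)$, while the cost identity
\begin{equation}
\iint \|v\|_x^2\dd\gamma(x,v)\dd\bGamma(\gamma) = \int \W_2^2(\mu,\nu)\dd\Pi(\mu,\nu) = \Ww^2(\bP,\bQ)
\end{equation}
places $\bGamma$ in $\exp_\bP^{-1}(\bQ)$. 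By construction $(\phi^\cM,\phi^{\exp})_\#\bGamma=\Pi$, which also proves the nonemptiness of $\exp_\bP^{-1}(\bQ)$.

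For uniqueness under $\bP\ll\bP_0$, I would invoke the Monge-type theorem of \citet{emami2024monge} to get a unique OT map $\T:\cP_2(\cM)\to\cP_2(\cM)$ with $\T_\#\bP=\bQ$, hence $\Pi_o(\bP,\bQ)=\{(\id,\T)_\#\bP\}$. The hypothesis $\bP\ll\bP_0$ together with the concentration of $\bP_0$ on $\cPa{\cM}$ (recalled in \Cref{appendix:wow}) implies that $\bP$-a.e. $\mu$ lies in $\cPa{\cM}$, so the second part of \Cref{prop:surjective_map_PM} forces $\exp_\mu^{-1}(\T(\mu))=\{(\id,-\nabla\varphi_{\mu,\T(\mu)})_\#\mu\}$ for $\bP$-a.e. $\mu$. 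The selection $S$ is then uniquely determined on the relevant set, giving the explicit form of $\bGamma$.

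The delicate step is the Borel selection. One must verify that the multifunction $(\mu,\nu)\mapsto \exp_\mu^{-1}(\nu)$ has closed nonempty values in a suitable topology on $\cP_2(T\cM)$ and is Borel-measurable; this should follow from weak continuity of the projection maps $\phi^\cM$, $\phi^{\exp}$, lower semicontinuity of $\gamma\mapsto\int\|v\|_x^2\dd\gamma$, and continuity of $\W_2^2$, but making it rigorous in the non-locally-compact space $\cP_2(T\cM)$ is the main source of technical care. The compactness of $\cM$ should allow one to confine the selection to a tight subset on which standard measurable selection results apply cleanly.
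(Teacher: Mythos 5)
Your proposal follows essentially the same route as the paper's proof: the squeeze argument for well-definedness and the a.e.\ optimality of $\gamma$, a measurable selection to construct a preimage for an arbitrary $\Pi\in\Pi_o(\bP,\bQ)$, and the Emami--Pass Monge theorem combined with \Cref{prop:surjective_map_PM} for the uniqueness part. The one place where you diverge is the selection mechanism, and it is worth spelling out how the paper resolves the technical concern you raise at the end. Rather than appealing to Kuratowski--Ryll-Nardzewski (which would require checking weak measurability of the multifunction $(\mu,\nu)\mapsto\exp_\mu^{-1}(\nu)$, a somewhat awkward condition to verify directly), the paper applies a selection theorem for Borel surjective maps with compact fibers to the projection
\begin{equation}
f\colon A := \bigl\{\gamma\in\cP_2(T\cM)\ \big|\ \gamma\in\exp^{-1}_{\pi^\cM_\#\gamma}(\exp_\#\gamma)\bigr\}\ \to\ \cP_2(\cM)^2,\qquad f(\gamma)=(\pi^\cM_\#\gamma,\exp_\#\gamma).
\end{equation}
Compactness of $\cM$ enters precisely where you anticipated: every $\gamma\in A$ is supported on the compact set $K=\{(x,v)\in T\cM : \|v\|_x\le\mathrm{diam}(\cM)\}$ (since $\|v\|_x=d(x,\exp_x(v))$ $\gamma$-a.e.), so $A$ is a closed subset of $\cP_2(T\cM)$ and the fibers $f^{-1}(\mu,\nu)=\exp_\mu^{-1}(\nu)$ are compact as closed subsets of $\cP(K)$; surjectivity of $f$ is \Cref{prop:surjective_map_PM}. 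This bypasses the measurability-of-multifunctions check entirely and is the cleaner way to make the selection rigorous in this setting, but your version is correct in spirit and would go through with the same tightness argument.
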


\looseness=-1 
The proof of \Cref{prop:surjective_map_PPM} can be found in \Cref{proof:surjective_map_PPM}. The previous construction enables us to formalize the Riemannian structure of $(\cPP{\cM}, \Ww)$, without having to define a notion of logarithm map on $\cP_2(\cM)$, which might be ill-defined when the OT plan is not unique. Between $\bP,\bQ\in\cPP{\cM}$, we can define for $\bGamma\in\exp_{\bP}^{-1}(\bQ)$ a geodesic $t\mapsto \bP_t = \exp_{\phi^{\cM}}\circ (t\phi^{\mathrm{v}})_\#\bGamma$, which satisfies for all $s,t\in [0,1]$, $\Ww(\bP_s,\bP_t) = |t-s| \Ww(\bP,\bQ)$, see \Cref{appendix:wow}. For $\bP\ll \bP_0$, using \Cref{prop:surjective_map_PPM}, the curve simplifies as $\bP_t = \exp_{\id} \circ (-t\nabla \varphi_{\id, \T})_\#\bP$. Moreover, for $\cM=\R^d$, this reads as $\bP_t = \big(\mu\mapsto (\id - t\nabla\varphi_{\mu, \T(\mu)})_\#\mu\big)_\#\bP$.

\subsection{Differential Structure} \label{section:diff_structure}

We now provide a differential structure to $(\cPP{\cM}, \W_{\W_2})$, following the one of  \citep{ambrosio2005gradient, erbar2010heat, lanzetti2022first} for $(\cP_2(\cM),\W_2)$. In this section, let $\bF:\cPP{\cM}\to \R$ be a lower semi-continuous functional. We define formally the Hilbert space $L^2(\bP, T\cP_2(\cM))$ of functions from $\cP_2(\cM)$ to $T\cP_2(\cM)$ in \Cref{appendix:wow_l2}. First, we define the notions of (extended) sub- and super-differential.

\begin{definition} \label{def:sub_diff}
    $\xi\in L^2(\bP,T\cP_2(\cM))$ belongs to the %
    sub-differential $\partial^-\bF(\bP)$ of $\bF$ at $\bP$ if for all $\bQ\in \cPP{\cM}$,
    \begin{multline}
        \bF(\bQ) \ge \bF(\bP) + \sup_{\bGamma}\ \iint \langle \xi(\pi^\cM_{\#}\gamma)(x), v \rangle_{x}\ \mathrm{d} \gamma(x,v) \mathrm{d}\bGamma(\gamma) \\
        + o\big(\Ww(\bP,\bQ)\big), \label{eq:PPM_subdifferential}
    \end{multline}
    where the $\bGamma$ in the $\sup$ are selected in $\exp^{-1}_{\bP}(\bQ)$.
    Similarly, $\xi\in L^2(\bP, T\cP_2(\cM))$ belongs to the super-differential $\partial^+\bF(\bP)$ of $\bF$ at $\bP$ if $-\xi\in \partial^-(-\bF)(\bP)$. %
\end{definition}

If the functional admits a sub- and super-differential, which coincide, we can define a gradient.

\begin{definition} \label{def:def_wwgrad}
    $\bF$ is Wasserstein differentiable at $\bP\in \cPP{\cM}$ if $\partial^+\bF(\bP) \cap \partial^-\bF(\bP) \neq \emptyset$. In this case, we say that $\xi\in \partial^-\bF(\bP) \cap \partial^+\bF(\bP)$ is a WoW gradient of $\bF$ at $\bP$, and it satisfies for any $\bQ\in\cPP{\cM}$, $\bGamma\in \exp_\bP^{-1}(\bQ)$,
    \begin{multline}\label{eq:taylor_expansion_p2p2}
        \bF(\bQ) = \bF(\bP) + \iint \langle \xi(\pi^\cM_\#\gamma)(x), v\rangle_x\ \mathrm{d}\gamma(x, v)\mathrm{d}\bGamma(\gamma) \\ + o\big(\Ww(\bP,\bQ)\big).
    \end{multline}
    In the following, we note $\gWw\bF(\bP)$ such a gradient.
\end{definition}

We can also define a notion of strong sub- and super-differential, as well as gradient, by allowing the coupling $\bGamma$ to be non-optimal, in contrast with the previous definitions. 
\begin{definition} \label{def:def_strong_wwgrad}
    $\xi \in L^2(\bP, T\cP_2(\cM))$ is a strong subdifferential of $\bF$ at $\bP$ if for all $\bQ\in \cP_2(\cP_2(\cM))$,  for all $\bGamma \in \cPP{T\cM}$ s.t. $\phi^\cM_\#\bGamma=\bP$, $\phi^{\exp}_\#\bGamma=\bQ$, %
    \begin{multline}
        \bF(\bQ)\ge \bF(\bP) + \iint \langle \xi(\pi^\cM_{\#}\gamma)(x), v \rangle_{x}\ \mathrm{d} \gamma(x,v) \mathrm{d}\bGamma(\gamma) \\
        + o\left(\sqrt{\iint \|v\|^2_x \dd\gamma(x,v) \dd\bGamma(\gamma) }\right).
    \end{multline}
    Strong superdifferentials and gradients are defined similarly.
\end{definition}

The latter definition is particularly useful when perturbing a measure along a non-optimal direction, as in the case of the forward Euler schemes we will compute in the next section.

\looseness=-1 We now turn to examples of functional on $\cPP{\cM}$ that take the form of free energies. Given $\cF:\cP_2(\cM)\to\R$, we define a potential energy $\bV:\cPP{\cM}\to\R$ as $\bV(\bP)=\int \cF(\mu)\mathrm{d}\bP(\mu)$. %
Analogously to classical Wasserstein gradients, %
its WoW gradient is obtained as $\gWw\bV(\bP)(\mu)=\gW\cF(\mu)$. Given a kernel $\cW:\cP_2(\R^d)\times\cP_2(\R^d)\to\R$, %
we define interaction energies as $\bW(\bP) = \iint \cW(\mu,\nu)\ \mathrm{d}\bP(\mu)\mathrm{d}\bP(\nu)$, and their WoW gradients are obtained as $\gWw\bW(\bP)(\mu) = \int \big(\nabla_{\W_2,1}\cW(\mu,\nu) + \nabla_{\W_2,2}\cW(\nu,\mu)\big)\ \mathrm{d}\bP(\nu)$. We refer to \Cref{appendix:diff_V_W} for more details.

Let us now define cylinder functions, which provide a class of Wasserstein differentiable functionals \citep{von2009entropic, schiavo2020rademacher, fornasier2023density}.

\begin{definition}%
    A functional $\cF:\cP_2(\cM)\to \R$ is a cylinder if there exists $k\ge 0$, $F\in C_c^\infty(\R^k)$ and $V_1,\dots,V_k\in C_c^\infty(\cM)$ such that, for all $\mu\in\cP_2(\cM)$, %
    \begin{equation}
        \cF(\mu) = F\left(\int V_1 \mathrm{d}\mu, \dots, \int V_k\mathrm{d}\mu\right).
    \end{equation}
    In this case, we note $\cF\in \cyl$. 
   Similarly, for $I$ an interval, we note $\cF \in \mathrm{Cyl}\big(I \times \cP_2(\cM)\big)$, if $\cF(t,\mu) = F\left(t,\int V_1 \mathrm{d}\mu, \dots, \int V_k\mathrm{d}\mu\right)$ for every $t \in I$ and $\mu \in \cP_2(\cM)$, this time for some $F \in C^\infty_c(I \times \R^k)$. 
\end{definition}
Using the chain rule, %
any $\cF \in \mathrm{Cyl}\big(\cP_2(\cM)\big)$ is Wasserstein differentiable and for all $\mu\in\cP_2(\cM)$, 
\begin{equation}
    \gW\cF(\mu) = \sum_{i=1}^k \frac{\partial}{\partial x_i} F\left(\int V_1\mathrm{d}\mu,\cdots, \int V_k\mathrm{d}\mu\right)\ \nabla V_i.
\end{equation}

This provides the main building block for defining a tangent space, in which we will show that WoW gradients reside.

\begin{definition}%
    The tangent space at $\bP\in\cPP{\cM}$ is
    \begin{equation}
        T_{\bP}\cP_2\big(\cP_2(\cM)\big) = \overline{\{\gW\varphi,\ \varphi \in \cyl\}}%
    \end{equation} 
    where the closure is taken in the space $L^2(\bP,T\cP_2(\cM))$.
\end{definition} %

We now justify the definition of this tangent space. We show the existence of velocity fields $(v_t)_t$ belonging to the latter, associated to any absolutely continuous curves $(\bP_t)_t$, such that the pair $(v_t,\bP_t)_t$ satisfy a continuity equation. We recall  that a curve $(\bP_t)_{t\in[0,1]}$ is absolutely continuous if there exists $g\in L^1([0,1])$ such that $\Ww(\bP_s,\bP_t)\le \int_s^t g(u)\mathrm{d}u$, and %
its metric derivative is $|\bP'|(t) = \lim_{h\to 0} \frac{1}{h}\Ww(\bP_{t+h},\bP_t)$, which exists a.e. %
\citep[Th. 1.1.2]{ambrosio2005gradient}.

\begin{proposition} \label{prop:continuity_eq}
    Let $(\bP_t)_{t\in I}$ be an absolutely continuous curve on $\cPP{\cM}$. Then, for a.e. $t\in I$, there exists $v_t\in T_{\bP_t}\cPP{\cM}$ such that $\|v_t\|_{L^2(\bP_t,T\cP_2(\cM))}\le |\bP'|(t)$ and for all $\varphi\in\mathrm{Cyl}(I\times \cP_2(\cM))$,
    \begin{equation} \label{eq:continuity_equation}
        \iint \big(\partial_t\varphi_t(\mu) + \langle \gW\varphi_t(\mu), v_t(\mu)\rangle_{L^2(\mu)}\big)\ \mathrm{d}\bP_t(\mu)\mathrm{d}t = 0.
    \end{equation} 
\end{proposition}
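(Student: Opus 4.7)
The strategy mirrors the classical construction of velocity fields for absolutely continuous curves in the Wasserstein space (\emph{cf.} the Ambrosio--Gigli--Savar\'e approach), lifted one level up by means of \Cref{prop:surjective_map_PPM}. I would attach to each small increment an optimal plan $\bGamma_h\in\exp_{\bP_t}^{-1}(\bP_{t+h})$, extract from it a rescaled velocity field $v_t^h\in L^2(\bP_t,T\cP_2(\cM))$, pass to a weak limit in $h$, and verify the continuity equation directly against cylinder test functions.

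\textbf{Approximate velocities.} For each $h>0$ and a.e. $t\in I$, pick $\bGamma_h\in\exp_{\bP_t}^{-1}(\bP_{t+h})$ (non-empty by \Cref{prop:surjective_map_PPM}) and disintegrate $\bGamma_h=\int \bGamma_h^\mu\,\mathrm{d}\bP_t(\mu)$ over its first marginal $\phi^\cM_\#\bGamma_h=\bP_t$. For $\bP_t$-a.e.\ $\mu$, average in $\bGamma_h^\mu$ to form $\bar\gamma_h^\mu:=\int\gamma\,\mathrm{d}\bGamma_h^\mu(\gamma)\in\cP_2(T\cM)$, and define $v_t^h(\mu)\in L^2(\mu,T\cM)$ to be $1/h$ times the barycentric projection of $\bar\gamma_h^\mu$. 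Two successive applications of Jensen's inequality together with the optimality of $\bGamma_h$ then give
\begin{equation*}
\int\|v_t^h(\mu)\|^2_{L^2(\mu,T\cM)}\,\mathrm{d}\bP_t(\mu)\le \frac{\Ww^2(\bP_t,\bP_{t+h})}{h^2},
\end{equation*}
which stays bounded as $h\to 0$ by absolute continuity, with $\limsup$ at most $|\bP'|(t)^2$. Weak compactness in the Hilbert space $L^2(\bP_t,T\cP_2(\cM))$ then yields a limit $v_t$ along a subsequence, satisfying $\|v_t\|\le|\bP'|(t)$; orthogonally projecting onto the closed subspace $T_{\bP_t}\cPP{\cM}$ only decreases the norm and does not alter any inner product against a cylinder gradient.

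\textbf{Continuity equation.} For $\varphi\in\mathrm{Cyl}(I\times\cP_2(\cM))$, decompose
\begin{equation*}
\int\varphi_{t+h}\mathrm{d}\bP_{t+h}-\int\varphi_t\mathrm{d}\bP_t=\int(\varphi_{t+h}-\varphi_t)\mathrm{d}\bP_t+\int\bigl(\varphi_{t+h}(\exp_\#\gamma)-\varphi_{t+h}(\pi^\cM_\#\gamma)\bigr)\mathrm{d}\bGamma_h(\gamma).
\end{equation*}
The first piece divided by $h$ tends to $\int\partial_t\varphi_t\,\mathrm{d}\bP_t$. Writing $\varphi_t(\mu)=F(t,\mu[V_1],\dots,\mu[V_k])$, first-order Taylor expansions of $F$ and of each $V_i\circ\exp$ on the compact manifold $\cM$ give
\begin{equation*}
\varphi_{t+h}(\exp_\#\gamma)-\varphi_{t+h}(\pi^\cM_\#\gamma)=\int\langle\gW\varphi_{t+h}(\pi^\cM_\#\gamma)(x),v\rangle_x\,\mathrm{d}\gamma(x,v)+O\Bigl(\textstyle\int\|v\|_x^2\,\mathrm{d}\gamma\Bigr),
\end{equation*}
uniformly in $\gamma$. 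Integration against $\bGamma_h$ and division by $h$ kills the remainder (bounded by $\Ww^2(\bP_t,\bP_{t+h})/h=O(h)$), while the linear term reduces to $\int\langle\gW\varphi_{t+h}(\mu),v_t^h(\mu)\rangle_{L^2(\mu,T\cM)}\,\mathrm{d}\bP_t(\mu)$, since a pairing $\int\langle u(x),v\rangle_x\mathrm{d}\gamma$ with $u$ a map depends on $\gamma$ only through its barycentric projection. Weak convergence $v_t^{h_n}\rightharpoonup v_t$ now permits passing to the limit, yielding a.e. in $t$ the infinitesimal identity $\tfrac{\mathrm{d}}{\mathrm{d}t}\int\varphi_t\mathrm{d}\bP_t=\int\partial_t\varphi_t\mathrm{d}\bP_t+\int\langle\gW\varphi_t(\mu),v_t(\mu)\rangle_{L^2(\mu,T\cM)}\mathrm{d}\bP_t(\mu)$; integrating in $t$ against a $\varphi$ compactly supported in $I$ makes the boundary terms vanish and produces \eqref{eq:continuity_equation}.

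\textbf{Main obstacle.} The true technical difficulty lies in packaging the two-level plan $\bGamma_h\in\cPP{T\cM}$ into a single function $\mu\mapsto v_t^h(\mu)\in T_\mu\cP_2(\cM)$: one must disintegrate $\bGamma_h$ over $\bP_t$, average in the inner variable, then barycentrically project, all while controlling the $L^2$ norm via Jensen in order to preserve the metric-derivative bound. The cylinder structure of $\varphi_t$ is what makes this affordable, since it forces $\gW\varphi_t(\mu)$ to be a bona fide \emph{map} (not a generic plan), which is precisely what renders the crucial pairing insensitive to each of the lossy projections performed along the way.
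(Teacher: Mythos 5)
Your approach takes a genuinely different route from the paper's, which never constructs $v_t$ as a limit of rescaled transport plans. The paper follows the functional-analytic strategy of \citep[Theorem 8.3.1]{ambrosio2005gradient} and \citep[Proposition 2.5]{erbar2010heat}: for $\varphi\in\cyl$ it introduces the difference-quotient function $H(\mu,\nu)=|\varphi(\mu)-\varphi(\nu)|/\W_2(\mu,\nu)$ (extended by $\|\gW\varphi(\mu)\|_{L^2(\mu)}$ on the diagonal), proves $H$ upper semicontinuous, and deduces that $t\mapsto\int\varphi\,\mathrm{d}\bP_t$ is absolutely continuous with derivative at most $|\bP'|(t)\,\|\gW\varphi\|_{L^2(\bP_t)}$. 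This shows that the linear form $L(\gW\varphi):=-\iint\partial_t\varphi_t\,\mathrm{d}\bP_t\,\mathrm{d}t$ is bounded on $\{\gW\varphi:\varphi\in\cFC\}\subset L^2(\lambda,T\cP_2(\cM))$ with $\lambda=\int_I\bP_t\,\mathrm{d}t$, so a Riesz-representation argument produces a single measurable $v\in L^2(\lambda)$ for which \eqref{eq:continuity_equation} is immediate; the pointwise bound $\|v_t\|\le|\bP'|(t)$ is then obtained afterward by localizing in time. Your difference-quotient construction is the other classical strategy for this type of statement, and the estimates you invoke (Jensen on the barycentric projection, the second-order Taylor control on cylinder functions, weak lower semicontinuity of the norm) are all sound.

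However, there is a genuine gap that your sketch does not address: the weak limit $v_t$ is obtained pointwise for each $t$ along a $t$-dependent subsequence $h_n$, and this does not yield any measurability of the map $(t,\mu)\mapsto v_t(\mu)$ as a section of $L^2(\lambda,T\cP_2(\cM))$. That measurability is not cosmetic here; it is required both to make sense of the $\mathrm{d}t$-integral in \eqref{eq:continuity_equation} and to interpret the almost-everywhere bound $\|v_t\|_{L^2(\bP_t)}\le|\bP'|(t)$. A secondary but related concern is the measurable-in-$t$ choice of $\bGamma_h\in\exp_{\bP_t}^{-1}(\bP_{t+h})$: the paper proves a Borel selection only one level down, from $\cP_2(\cM)^2$ to $\cP_2(T\cM)$ (\Cref{lemma:selection_2}), and a selection from $\cPP{\cM}^2$ to $\cPP{T\cM}$ would need a separate argument. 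To close the gap along your route you would want to take the weak limit of $v^h:=h^{-1}\mathcal{B}(\bGamma_h)$ globally in $L^2(\lambda,T\cP_2(\cM))$ rather than $t$ by $t$, and then localize in time to recover the pointwise bound; but this essentially reconstructs the paper's Riesz-representation argument.
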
  The proof of \Cref{prop:continuity_eq} is deferred to \Cref{proof:continuity_eq}. 
We leave the investigation of the converse implication to future work, \emph{i.e.} that satisfying the (weak) continuity equation \eqref{eq:continuity_equation} implies absolute continuity of the curve $(\bP_t)_t$. 
We then have the following properties, which show that elements of the tangent space are strong gradients and are unique. Their proofs are deferred to \Cref{sec:proof_prop_tan_wwgrad_is_strong} and \Cref{proof:prop_tan_wwgrad_is_unique}.
\begin{proposition} \label{prop:prop_tan_wwgrad_is_strong}
    Let $\xi \in \partial^- \bF(\bP) \cap T_\bP \cPP{\cM}$. Then $\xi$ is a strong subdifferential of $\bF$ at $\bP$.
\end{proposition}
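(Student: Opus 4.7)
The plan is to use that $\xi$ can be approximated by cylinder gradients to reduce the question to a second-order Taylor expansion for cylinder functions, and then transfer the sub-differential inequality from an optimal plan to an arbitrary one. Fix $\bQ\in\cPP{\cM}$ and $\bGamma\in\cPP{T\cM}$ with $\phi^\cM_\#\bGamma=\bP$, $\phi^{\exp}_\#\bGamma=\bQ$, and set $r:=\sqrt{\iint\|v\|_x^2\,\mathrm{d}\gamma(x,v)\,\mathrm{d}\bGamma(\gamma)}$. Pushing $\bGamma$ forward by $(\phi^\cM,\phi^{\exp})$ yields a coupling in $\Pi(\bP,\bQ)$ of cost at most $r^2$, so $\Ww(\bP,\bQ)\le r$. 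Plugging any optimal $\bGamma_o\in\exp_\bP^{-1}(\bQ)$ into the supremum of \Cref{def:sub_diff} yields
\begin{equation*}
\bF(\bQ)\ge \bF(\bP)+\iint \langle \xi(\pi^\cM_\#\gamma)(x),v\rangle_x\,\mathrm{d}\gamma(x,v)\,\mathrm{d}\bGamma_o(\gamma)+o(r),
\end{equation*}
so the problem reduces to replacing $\bGamma_o$ in the inner product integral by the arbitrary $\bGamma$ up to an $o(r)$ error.

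The key computation is for a single cylinder function $\varphi\in\cyl$, $\varphi(\mu)=F(\int V_1\,\mathrm{d}\mu,\ldots,\int V_k\,\mathrm{d}\mu)$. A Taylor expansion of $F$ together with the manifold estimate $V_i(\exp_x(v))-V_i(x)-\langle\nabla V_i(x),v\rangle_x=O(\|v\|_x^2)$ (uniformly bounded thanks to compactness of $\cM$ and smoothness of the $V_i$) gives, for every $\gamma\in\cP_2(T\cM)$,
\begin{equation*}
\varphi(\exp_\#\gamma)-\varphi(\pi^\cM_\#\gamma)=\int\langle\gW\varphi(\pi^\cM_\#\gamma)(x),v\rangle_x\,\mathrm{d}\gamma(x,v)+R_\varphi(\gamma),
\end{equation*}
with $|R_\varphi(\gamma)|\le C_\varphi\int\|v\|_x^2\,\mathrm{d}\gamma$ for a constant $C_\varphi$ depending only on $\varphi$. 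Integrating over $\bGamma$ collapses the left-hand side to $\int\varphi\,\mathrm{d}(\bQ-\bP)$, which is plan-independent. Writing this identity for both $\bGamma$ and $\bGamma_o$ and subtracting gives $\iint\langle\gW\varphi,v\rangle\,\mathrm{d}\gamma\,\mathrm{d}\bGamma=\iint\langle\gW\varphi,v\rangle\,\mathrm{d}\gamma\,\mathrm{d}\bGamma_o+O(C_\varphi r^2)$. To lift this to a general $\xi\in T_\bP\cPP{\cM}$, pick cylinder functions $\varphi_n$ with $\gW\varphi_n\to\xi$ in $L^2(\bP,T\cP_2(\cM))$. Since $\|\xi-\gW\varphi_n\|_x$ does not depend on $v$, a direct Cauchy--Schwarz computation shows $|\iint\langle\xi-\gW\varphi_n,v\rangle\,\mathrm{d}\gamma\,\mathrm{d}\bGamma|\le\|\xi-\gW\varphi_n\|_{L^2(\bP,T\cP_2(\cM))}\cdot r$, and the analogous bound against $\bGamma_o$ uses $\Ww(\bP,\bQ)\le r$. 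Combining,
\begin{equation*}
\iint\langle\xi,v\rangle\,\mathrm{d}\gamma\,\mathrm{d}\bGamma - \iint\langle\xi,v\rangle\,\mathrm{d}\gamma\,\mathrm{d}\bGamma_o = O(C_{\varphi_n}r^2)+O\big(\|\xi-\gW\varphi_n\|_{L^2(\bP)}\cdot r\big),
\end{equation*}
and a diagonal choice $n=n(r)\to\infty$ with both $C_{\varphi_n}\,r\to 0$ and $\|\xi-\gW\varphi_n\|_{L^2(\bP)}\to 0$ reduces the right-hand side to $o(r)$. Substituting back into the first display yields the strong sub-differential inequality.

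The main obstacle is the diagonal extraction: the Taylor-remainder constants $C_{\varphi_n}$ are not controlled by the $L^2$ convergence of the gradients and can blow up with $n$, so $n(r)$ must be selected to grow slowly enough that $C_{\varphi_n}\,r\to 0$ while $\|\xi-\gW\varphi_n\|_{L^2(\bP)}$ still vanishes. A secondary technical point is making the second-order Taylor estimate for $V_i\circ\exp$ explicit on the Riemannian manifold $\cM$; this is done in normal coordinates using uniform bounds on the Hessians of the $V_i$ and on the injectivity radius, both of which are automatic from compactness.
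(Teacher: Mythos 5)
Your proof is correct, but it takes a genuinely different route from the paper's. The paper argues by contradiction: it supposes there is a sequence of non-optimal plans $\bGamma_n$ with $\veps_n = \|\bGamma_n\|_\bP \to 0$ violating the strong inequality, pairs each with an optimal plan $\bUpsilon_n \in \exp_\bP^{-1}(\phi^{\exp}_\#\bGamma_n)$, passes to barycentric projections $\Psi_n = \mathcal{B}(\bGamma_n)$ and $\Phi_n = \mathcal{B}(\bUpsilon_n)$, extracts weak $L^2(\bP)$-limits of $\veps_n^{-1}\Psi_n$ and $\veps_n^{-1}\Phi_n$, and then uses the cylinder-function Taylor bound (\Cref{lemma:PM_cylinder_taylor_expansion}) plus the density defining $T_\bP\cPP{\cM}$ to conclude $\sca{\xi}{\Phi-\Psi}_{L^2(\bP)}=0$, contradicting the violated inequality. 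You instead run a direct $\veps$--$\delta$ argument: fix $\bGamma$ and an optimal $\bGamma_o$, observe that the second-order Taylor bound makes the first-order terms against $\bGamma$ and $\bGamma_o$ agree up to $O(C_\varphi r^2)$ for any cylinder $\varphi$ (because the integrated increment $\int \varphi\,\dd(\bQ-\bP)$ is plan-independent), and then pass from $\gW\varphi$ to $\xi$ by Cauchy--Schwarz using $T_\bP\cPP{\cM} = \overline{\{\gW\varphi\}}$. The two proofs share the key ingredient --- the uniform second-order Taylor estimate for cylinder functionals, which is exactly the paper's \Cref{lemma:PM_cylinder_taylor_expansion}, and which you would need to establish with the same care (in particular the case $\|\gamma\|_\mu > \mathrm{diam}(\cM)$). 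What your approach buys is that it avoids the machinery of barycentric projections and weak compactness in $L^2(\bP)$ entirely, giving a shorter, more self-contained argument; what the paper's approach buys is a structure that generalizes more readily when explicit remainder constants are unavailable.

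The ``main obstacle'' you flag --- the possible blow-up of the constants $C_{\varphi_n}$ and the resulting need for a diagonal choice $n = n(r)$ --- is not actually an obstacle, and the diagonalization is unnecessary. The statement to prove is ``for every $\veps > 0$ there exists $\delta > 0$ such that $r < \delta$ implies $\bF(\bQ) \ge \bF(\bP) + \iint \sca{\xi}{v}\,\dd\gamma\,\dd\bGamma - \veps r$''. Given $\veps$, first pick a \emph{single fixed} cylinder $\varphi$ with $\|\xi - \gW\varphi\|_{L^2(\bP)} < \veps/6$; the constant $C_\varphi$ is then fixed once and for all. Next pick $\delta_1$ from the weak subdifferential so that $\Ww(\bP,\bQ) < \delta_1$ bounds that remainder by $(\veps/6)\Ww(\bP,\bQ) \le (\veps/6)r$, and set $\delta_2 = \veps/(12 C_\varphi)$ so the two Taylor remainders contribute at most $2C_\varphi r^2 \le (\veps/6)r$; the two Cauchy--Schwarz errors contribute at most $(\veps/6)r$ each. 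Taking $\delta = \min(\delta_1, \delta_2)$ finishes the argument with $n$ (equivalently $\varphi$) chosen \emph{before} $r$, so there is no competition between $C_{\varphi_n}$ growth and $\|\xi - \gW\varphi_n\|_{L^2(\bP)}$ decay.
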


\begin{proposition} \label{prop:prop_tan_wwgrad_is_unique}
    There is at most one element in $\partial^-\bF(\bP) \cap \partial^+ \bF(\bP) \cap T_\bP \cPP{\cM}$.
\end{proposition}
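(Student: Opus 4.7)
The plan is to suppose two elements $\xi_1,\xi_2\in\partial^-\bF(\bP)\cap\partial^+\bF(\bP)\cap T_\bP\cPP{\cM}$ and show that $\xi_1-\xi_2$ vanishes in $L^2(\bP,T\cP_2(\cM))$ by pairing it with a dense family of cylinder gradients in $T_\bP\cPP{\cM}$.

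The first step is to upgrade $\xi_1$ and $\xi_2$ to strong Wasserstein gradients. By \Cref{prop:prop_tan_wwgrad_is_strong} each $\xi_i$ is a strong subdifferential of $\bF$ at $\bP$, and by the analogous statement applied to $-\bF$ (using $-\xi_i\in\partial^-(-\bF)(\bP)\cap T_\bP\cPP{\cM}$) each $\xi_i$ is also a strong superdifferential. Writing the resulting strong Taylor expansions from \Cref{def:def_strong_wwgrad} for both gradients at the same test coupling $\bGamma\in\cPP{T\cM}$ with $\phi^\cM_\#\bGamma=\bP$, and subtracting, the values $\bF(\bQ)$ and $\bF(\bP)$ cancel and one is left with
\begin{equation*}
\iint\langle(\xi_1-\xi_2)(\pi^\cM_\#\gamma)(x),v\rangle_x\,\dd\gamma(x,v)\,\dd\bGamma(\gamma) = o\!\left(\sqrt{\iint\|v\|_x^2\,\dd\gamma\,\dd\bGamma}\right),
\end{equation*}
valid for every admissible $\bGamma$, with no optimality constraint.

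Next I instantiate $\bGamma$ by a controlled perturbation along a cylinder gradient. For $\varphi\in\cyl$ and $\epsilon>0$, I set
\begin{equation*}
\bGamma_\epsilon \,=\, \bigl(\mu\mapsto(\id,\epsilon\,\gW\varphi(\mu))_\#\mu\bigr)_\#\bP,
\end{equation*}
which has first marginal $\bP$ and satisfies the length identity $\iint\|v\|_x^2\,\dd\gamma\,\dd\bGamma_\epsilon=\epsilon^2\|\gW\varphi\|_{L^2(\bP,T\cP_2(\cM))}^2$. Inserting $\bGamma_\epsilon$ into the preceding relation turns the linear term into $\epsilon\,\langle\xi_1-\xi_2,\gW\varphi\rangle_{L^2(\bP,T\cP_2(\cM))}$ and the right-hand side into $o(\epsilon)$. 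Dividing by $\epsilon$ and sending $\epsilon\to 0^+$ yields $\langle\xi_1-\xi_2,\gW\varphi\rangle_{L^2(\bP,T\cP_2(\cM))}=0$ for every $\varphi\in\cyl$. The argument closes by density: $T_\bP\cPP{\cM}$ is by definition the $L^2(\bP,T\cP_2(\cM))$-closure of $\{\gW\varphi:\varphi\in\cyl\}$, hence a closed linear subspace containing $\xi_1-\xi_2$; approximating this difference by a sequence of cylinder gradients and passing to the limit in the inner product forces $\|\xi_1-\xi_2\|_{L^2(\bP,T\cP_2(\cM))}^2=0$, so $\xi_1=\xi_2$.

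The step I anticipate as the most delicate is the bookkeeping for $\bGamma_\epsilon$: one must check that $\mu\mapsto(\id,\epsilon\,\gW\varphi(\mu))_\#\mu$ is Borel from $\cP_2(\cM)$ to $\cP_2(T\cM)$ so that $\bGamma_\epsilon\in\cPP{T\cM}$, and that the corresponding $\bQ_\epsilon=\phi^{\exp}_\#\bGamma_\epsilon$ lies in $\cPP{\cM}$ so that the strong-gradient inequality actually applies. Compactness of $\cM$ together with the smoothness and compact support of the functions defining a cylinder function make $\gW\varphi$ uniformly bounded in $\mu$ and render these verifications essentially automatic, but they should be spelled out in the formal write-up.
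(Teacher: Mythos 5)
Your proof is correct and follows the same strategy as the paper: upgrade $\xi_1,\xi_2$ to strong gradients via \Cref{prop:prop_tan_wwgrad_is_strong}, subtract the two strong Taylor expansions along an arbitrary coupling with first marginal $\bP$, and deduce $\xi_1=\xi_2$ from the resulting $o(\cdot)$ estimate. The only cosmetic difference is the final test function: you plug in $\bGamma_\epsilon$ built from $\epsilon\,\gW\varphi$ for $\varphi\in\cyl$ and then close with density of cylinder gradients in $T_\bP\cPP{\cM}$, whereas the paper tests directly with $\Psi=\veps(\xi_1-\xi_2)$ (which already lies in $T_\bP\cPP{\cM}$), obtaining $\veps\|\xi_1-\xi_2\|^2_{L^2(\bP)}=o(\veps)$ without the density detour.
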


\looseness=-1 As $L^2(\bP,T\cP_2(\cM))$ and the tangent space are Hilbert spaces, one can always decompose a WoW gradient with a part in $T_\bP\cPP{\cM}$ and another part orthogonal to it. We show in \Cref{proof:prop_tan_wwgrad_is_unique} that under technical assumptions, this orthogonal part has a null contribution in the Taylor expansion given in \eqref{eq:taylor_expansion_p2p2}. 
Thus, in this case, we can restrict ourselves to the unique WoW gradient belonging to the tangent space, in particular to write optimization schemes. %

\section{WoW Gradient Flows}

In this section, we aim at minimizing $\bF:\cPP{\R^d}\to\R$ some functional. We first show the existence of the WoW gradient flow of this functional as the limit of the JKO scheme \citep{jordan1998variational} for $\bF$ convex along generalized geodesics \citep{ambrosio2005gradient}. Then, building on the differentiable structure of the space introduced earlier, we propose a forward (explicit) scheme that is computationally more efficient in practice than the implicit JKO scheme, and tractable for relevant functionals.%

\subsection{Optimization Schemes on $\cPP{\R^d}$}

\paragraph{JKO Scheme.}

Let $\bP_0\in \cPPa{\R^d}$. The JKO sheme of $\bF$ is defined, for all $k\ge 0$ and $\tau>0$, as%
\begin{equation} \label{eq:jko}
    \bP_{k+1} \in \argmin_{\bP\in \mathcal{P}_2(\mathcal{P}_{\mathrm{ac}}(\R^d))}\ \frac{1}{2\tau} \Ww(\bP,\bP_k)^2 + \bF(\bP).
\end{equation}
Its Wasserstein gradient flow is defined as the limit when $\tau\to 0$. Leveraging \citep[Theorem 4.0.4]{ambrosio2005gradient}, we show in the next Proposition the existence of the flow for functionals $\bF$ that are $\lambda$-convex along generalized  geodesics %
$\bP_t = \big(\big((1-t) \T_{\pi^1}^{\pi^2} + t \T_{\pi^1}^{\pi^3}\big)_\#\pi^1\big)_\#\Gamma$ between $\bQ,\bO\in\cPP{\R^d}$, where $\Gamma\in \Pi(\bP,\bQ,\bO)$ satisfies $\pi^{1,2}_\#\Gamma\in\Pi_o(\bP,\bQ)$, $\pi^{1,3}_\#\Gamma\in\Pi_o(\bP,\bO)$ with $\pi^{1,2}:(x,y,z)\mapsto (x,y)$, $\pi^{1,3}:(x,y,z)\mapsto (x,z)$ and $\bP\in\cPPa{\R^d}$. Since $\bP\in\cPPa{\R^d}$, there is always an OT map starting from $\mu\sim\bP$ towards any $\nu\in\cP_2(\R^d)$, which we write $\T_\mu^\nu$. 
\begin{proposition} \label{prop:jko_scheme}
    Let $\lambda\ge 0$. Let $\bF:\cPP{\R^d}\to \R$ be proper, coercive, lower-semi continuous and $\lambda$-convex along generalized geodesics, \emph{i.e.}, satisfying for all $t\in [0,1]$,
    \begin{equation}
        \bF(\bP_t) \le (1-t) \bF(\bP_0) + t\bF(\bP_1) - \frac{\lambda t (1-t)}{2} \Ww^2(\bP_0,\bP_1),
    \end{equation}
    for $\bP_t = \big(\big((1-t) \T_{\pi^1}^{\pi^2} + t \T_{\pi^1}^{\pi^3}\big)_\#\pi^1\big)_\#\Gamma$, $\Gamma\in \Pi(\bP,\bQ,\bO)$ that satisfies $\pi^{1,2}_\#\Gamma\in\Pi_o(\bP,\bQ)$, $\pi^{1,3}_\#\Gamma\in\Pi_o(\bP,\bO)$ and $\bP\in\cPPa{\R^d}$. Then, the gradient flow of $\bF$ exists and is unique.
\end{proposition}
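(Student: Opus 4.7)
The statement is exactly the conclusion of the abstract existence and uniqueness theorem for gradient flows of $\lambda$-convex functionals on metric spaces (AGS Theorem 4.0.4), applied to the space $(\cPP{\R^d}, \Ww)$. My plan is therefore to verify that the AGS axioms are satisfied in this setting, with the notion of generalized geodesics given in the statement. Three things have to be checked: (i) the underlying metric space is Polish; (ii) the functional is proper, coercive, l.s.c. (given); (iii) the class of generalized geodesics just defined fits the abstract AGS framework, in particular it provides admissible interpolations enjoying the distance estimate $\Ww^2(\bP_t,\bR)\le (1-t)\Ww^2(\bQ,\bR) + t\Ww^2(\bO,\bR) - t(1-t)\Ww^2(\bQ,\bO)$ for appropriate base points $\bR$.

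The first step is to recall that $\cP_2(\R^d)$ endowed with $\W_2$ is a Polish space, and hence so is $\cPP{\R^d}$ endowed with $\Ww$, by standard arguments (this is where compactness assumed in Section 3 is dropped; on $\R^d$ the second moment constraint suffices). The second step is to check that the curve $\bP_t = \big(\big((1-t)\T_{\pi^1}^{\pi^2} + t\T_{\pi^1}^{\pi^3}\big)_\#\pi^1\big)_\#\Gamma$ is well-defined: since $\bP\in\cPPa{\R^d}$, Brenier's theorem yields, for $\pi^1_\#\Gamma$-a.e.\ $\mu$, unique OT maps $\T_\mu^\nu$ from $\mu$ to any target $\nu$; the measurable selection of these maps in $\mu$ is standard, so $\bP_t\in\cPP{\R^d}$ for every $t\in[0,1]$.

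The third and main step is to transfer the classical generalized geodesic inequality on $(\cP_2(\R^d),\W_2)$ (AGS, Lemma 9.2.1) to $(\cPP{\R^d},\Ww)$ by a Fubini argument along $\Gamma$. Concretely, fix $\bQ,\bO$ and a 3-plan $\Gamma\in\Pi(\bP,\bQ,\bO)$ with optimal pairwise marginals. Slicing by $\Gamma$ and applying the fiberwise generalized geodesic estimate on $\cP_2(\R^d)$ to the Wasserstein interpolation between $\pi^2(\omega)$ and $\pi^3(\omega)$ based at $\pi^1(\omega)$, then integrating against $\Gamma$, yields the required $\Ww$-inequality for the lifted curve $\bP_t$. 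Combined with the $\lambda$-convexity assumption, this puts $\bF$ in the class of functionals that admit a unique, contractive gradient flow produced as the narrow limit of the JKO iterates defined in \eqref{eq:jko}, by a direct application of AGS Theorem 4.0.4.

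\textbf{Main obstacle.} The delicate point is verifying that this lifted notion of generalized geodesic genuinely satisfies all the AGS axioms and that the JKO minimization problem \eqref{eq:jko} admits minimizers at every step (existence via the direct method, using l.s.c.\ and coercivity of $\bF$, and tightness of sublevels in the $\Ww$-topology). Here one must be careful because the constraint $\bP\in\cPPa{\R^d}$ in \eqref{eq:jko} is not closed under narrow convergence in general; a standard workaround is to argue that minimizers are automatically absolutely continuous thanks to the regularizing effect of the $\Ww^2$ penalty. Once this is secured, the rest of the argument is an essentially mechanical invocation of the abstract AGS machinery.
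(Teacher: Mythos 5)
Your proposal takes essentially the same route as the paper: it reduces to AGS Theorem 4.0.4, identifies as the key ingredient the $1$-convexity of $\bQ\mapsto\tfrac12\Ww(\bQ,\bP)^2$ along generalized geodesics anchored at $\bP$, and sketches its proof by pushing the fiberwise generalized-geodesic estimate on $\cP_2(\R^d)$ through the plan $\Gamma$; the paper's \Cref{prop:w_1cvx} does exactly this (writing out the parallelogram identity on $L^2(\mu)$ and using $(\T_\mu^\nu,\T_\mu^\eta)_\#\mu\in\Pi(\nu,\eta)$ to lower-bound the cross term by $\Ww^2(\bQ,\bO)$, rather than citing AGS Lemma~9.2.1 directly). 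One caveat: you are right that $\mathcal{P}_2(\mathcal{P}_{\mathrm{ac}}(\R^d))$ is not narrowly closed, but your proposed fix—that the $\Ww^2$ penalty alone forces minimizers to remain absolutely continuous—is not clearly true (in the classical JKO setting it is the entropy term, not the metric penalty, that yields absolute continuity); the paper simply does not address this point either, so treat it as an acknowledged gap rather than something your workaround resolves.
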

The proof of \Cref{prop:jko_scheme} can be found in \Cref{proof:jko_scheme}. Examples of $\lambda$-convex $\bF$ on $\cP_2(\cP_2(\R^d))$ include potential energies for any $\cF$ $\lambda$-convex along generalized geodesics on $\cP_2(\R^d)$, and interaction energies for $\lambda=0$ and $\cW$ jointly convex along generalized geodesics, see \Cref{appendix:convexity_cpp}.

\paragraph{Forward Scheme.} Given the existence of the WoW gradient of 
$\bF$, as established in the previous section, we propose an alternative to the implicit JKO scheme: a forward scheme, commonly referred to as Wasserstein gradient descent, defined as follows
\begin{equation}
    \forall k\ge 0,\ \bP_{k+1} = \exp_{\bP_k}\big(-\tau \gWw\bF(\bP_k)\big). %
\end{equation}
At the ``distribution particle'' level in $\mathcal{P}_2(\mathcal{M})$, this means that for each distribution $\mu_k\sim\bP_k$, we update it as 
\begin{equation}
    \mu_{k+1} = \exp_{\mu_k}\big(-\tau\gWw\bF(\bP_k)(\mu_k)\big).
\end{equation}

In practice, we will mostly focus on distributions of the form $\bP = \frac{1}{C}\sum_{c=1}^C \delta_{\mu^c}$ with $\mu^c = \frac{1}{n}\sum_{i=1}^n \delta_{x_i^c}$, which notably include labeled datasets (assuming for simplicity now that all classes $c=1,\dots,C$ contain $n$ examples). Thus, we apply to each particle in $\cM=\R^d$ the update
\begin{equation}
    \begin{aligned}
        x_{i,k+1}^{c} &= \exp_{x_{i,k}^{c}}\big(-\tau \gWw\bF(\bP_k)(\mu_k^{c})(x_{i,k}^{c})\big) \\
        &= x_{i,k}^{c} - \tau\gWw\bF(\bP_k)(\mu_k^{c})(x_{i,k}^{c}).
    \end{aligned}
\end{equation}
We see that there are two levels of interactions for each particle in $\mathcal{M}$: one ``intra-class'' through the dependence in  the distribution $\mu_c$ and one ``inter-class'' between the distributions $\mu^c\sim \bP$ through the dependence in $\mathbb{P}_k$ in the gradient.
Thus, we expect to observe an interaction between particles of each distribution $\mu^c$, but also between each distribution $\mu^c$.

\subsection{Examples of Discrepancies}

Classical functionals in the study of Wasserstein gradient flows are obtained as linear combinations of potential energies, interaction energies and internal energies \citep{santambrogio2015optimal}. We focus here on potential energies and interaction energies. We leave the study of internal energies on this space for future works. We refer to \emph{e.g.} \citep{von2009entropic, sturm2024wasserstein} for discussions of entropy functionals on this space.

A classical discrepancy to compare probability distributions, which can be written as a sum of a potential energy and an interaction energy, is the Maximum Mean Discrepancy (MMD) \citep{gretton2012kernel}. Given a positive definite kernel $K:\cP_2(\R^d)\times \cP_2(\R^d)\to \R$, $\bP,\bQ\in\cPP{\R^d}$, let $\bF(\bP)=\frac12\mmd^2(\bP,\bQ)$ be defined as
\begin{equation} \label{eq:mmd}
    \begin{aligned}
        \bF(\bP) &= \frac12 \iint K(\mu,\nu)\ \mathrm{d}(\bP-\bQ)(\mu)\mathrm{d}(\bP-\bQ)(\nu) \\
        &= \bV(\bP) + \bW(\bP) + \mathrm{cst},
    \end{aligned}
\end{equation}
where $\bV(\bP)=\int \cV(\mu)\mathrm{d}\bP(\mu)$, $\cV(\mu)=-\int K(\mu,\nu)\mathrm{d}\bQ(\nu)$, $\bW(\bP)=\frac12 \iint K(\mu,\nu)\ \mathrm{d}\bP(\mu)\mathrm{d}\bP(\nu)$ and the constant only depends on $\bQ$ that is fixed. For $K$, we will use %
kernels based on the Sliced-Wasserstein (SW) \citep{rabin2012wasserstein, bonneel2015sliced}, defined between $\mu,\nu\in\cP_2(\R^d)$ as
\begin{equation}
    \sw_2^2(\mu,\nu) = \int_{S^{d-1}} \W_2^2(P^\theta_\#\mu, P^\theta_\#\nu)\ \mathrm{d}\sigma(\theta),
\end{equation} 
\looseness=-1 with $S^{d-1}=\{\theta\in\R^d,\ \|\theta\|_2=1\}$ the sphere, $P^\theta(x)=\langle x,\theta\rangle$ the coordinate of the projection of $x\in\R^d$ on the line $\theta\R$ for $\theta\in S^{d-1}$, and $\sigma$ the uniform measure on $S^{d-1}$. For instance, positive definite kernels include the  Gaussian SW kernel $K(\mu,\nu) = e^{-\sw_2^2(\mu,\nu)/h}$ \citep{kolouri2016sliced, carriere2017sliced, meunier2022distribution}. We also experiment with the Riesz SW kernel $K(\mu,\nu)=-\sw_2(\mu,\nu)$ in analogy with the Riesz kernel (sometimes referred to as negative distance kernel), $k(x,y)=-\|x-y\|_2$ on $\R^d\times \R^d$, which is not positive definite, but which has demonstrated very good results in practice \citep{hertrich2024generative} and does not require tuning a bandwidth $h$. 

\paragraph{WoW gradient of the MMD.} Given $\nu\in\cP_2(\R^d)$, if $K_\nu:\mu\mapsto K(\mu,\nu)$ is a Wasserstein differentiable functional, then $\bF$ is differentiable, and its WoW gradient at $\bP\in\cPP{\R^d}$ is of the form, for all $\mu\in\cP_2(\R^d)$,
\begin{multline}
  \gWw\bF(\bP)(\mu) =\int \gW K_\nu(\mu)\ \mathrm{d}(\bP-\bQ)(\nu).
\end{multline}
For the Gaussian SW kernel $K(\mu,\nu)=e^{-\frac{1}{2h}\sw_2^2(\mu,\nu)}$, denoting $\cF(\mu)=\frac12\sw_2^2(\mu,\nu)$, its gradient can be obtained by the chain rule as
\begin{equation}
    \gW K_\nu(\mu) = -\frac{1}{h}e^{-\frac{1}{2h} \sw_2^2(\mu,\nu)}\gW\cF(\mu),
\end{equation}
where $\gW\cF(\mu) = \int_{S^{d-1}} \psi_\theta'(\langle x,\theta\rangle)\theta\ \mathrm{d}\sigma(\theta)$ with $\psi_\theta$ the Kantorovich potential between $P^\theta_\#\mu$ and $P^\theta_\#\nu$ \citep[Proposition 5.1.7]{bonnotte2013unidimensional}. 
In practice, the Sliced-Wasserstein distance, involving an integral over the sphere, is approximated through Monte Carlo. Moreover, for discrete measures $\bP=\frac{1}{C}\sum_{c=1}^C \delta_{\mu^{c,n}}$ and $\bQ=\frac{1}{C}\sum_{c=1}^C \delta_{\nu^{c,n}}$ with $\mu^{c,n} = \frac{1}{n}\sum_{i=1}^n \delta_{x_i^c}$ and $\nu^{c,n} = \frac{1}{n}\sum_{i=1}^n \delta_{y_i^c}$, we use autodifferentiation over $\mathbf{x}:=(x_i^c)_{i,c}$ of $F(\mathbf{x})=\bF(\bP)$, and rescale the Euclidean gradient of $F$ by $n\times C$ to obtain the WoW gradient $\gWw\bF(\bP)(\mu^{c,n})(x_i^c)=nC \nabla_{i,c} F(\mathbf{x})$. This is analogous to the Wasserstein gradient case, and coincides with the WoW gradient for functionals with a closed-form over discrete measures (see \Cref{prop:wow_backprop_grad}).

\begin{figure*}[t]
    \centering
    \includegraphics[width=\linewidth]{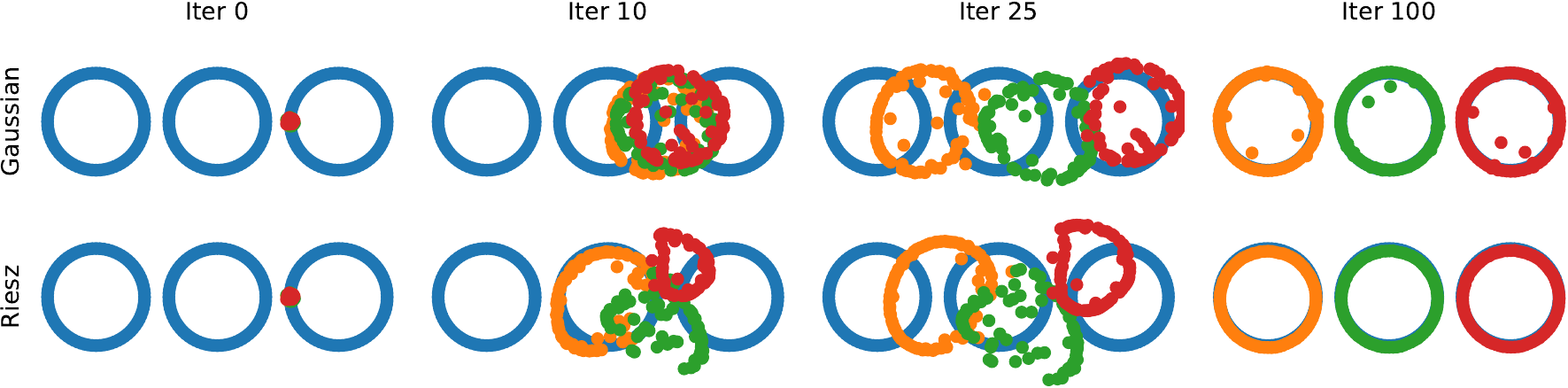}
    \caption{Minimization of $\bF(\bP)=\frac12\mmd^2(\bP,\bQ)$ with $\bQ$ a mixture of 3 rings, and with kernels either the Gaussian SW kernel with bandwidth $h=0.05$ or Riesz SW kernel, for a learning rate of $\tau=0.1$. We observe that they first form a ring for each distribution, and then each ring converges to a target ring.}
    \label{fig:xp_ring}
    \vspace{-0.5em}
\end{figure*}

\section{Applications}

\looseness=-1 In this section, we minimize the MMD on $\cPP{\R^d}$ to solve various tasks\footnote{Code available at \url{https://github.com/clbonet/Flowing_Datasets_with_WoW_Gradient_Flows}.}. We represent labeled datasets with $C$ classes as distributions $\bP=\frac{1}{C}\sum_{c=1}^C \delta_{\mu^{c,n}}$, where each $\mu^{c,n}=\frac{1}{n}\sum_{i=1}^n \delta_{x_i^c}$ is the distribution of samples belonging to class $c$. We emphasize that we are the first to represent labeled datasets this way. We first verify on synthetic data and datasets of images that minimizing such distance allows to transport classes between the source and target. Then, we leverage this property on a dataset distillation and a transfer learning task. We focus here on learning target distributions of the form $\bQ=\frac{1}{C}\sum_{k=1}^C \delta_{\nu^{c,n}}$ where $\nu^{c,n} = \frac{1}{n}\sum_{i=1}^n \delta_{y_i^c}$ are empirical distributions, each $\nu^{c,n}$ has the same number of particles $n$ and the number of class $C$ is supposed to be known. %
Similarly as \citep{hertrich2024generative}, we add a momentum to accelerate the scheme for image-based datasets. %
We refer to \Cref{appendix:add_xps} for more details about the experiments, as well as additional experiments using other kernels and an ablation study for the number of projections to approximate SW.
Related works %
\citep{alvarez2021dataset,hua2023dynamic} are described in detail in  \Cref{appendix:related}.

\paragraph{Synthetic Data.}

\looseness=-1 We illustrate on \Cref{fig:xp_ring} the evolution of particles when minimizing the MMD with kernels $K(\mu,\nu)=e^{-\sw_2^2(\mu,\nu)/(2h)}$ and $K(\mu,\nu) = -\sw_2(\mu,\nu)$, for a target being the three-ring dataset. Each ring represents a distribution $\nu^{c,n}$ with $n=80$, and the target is thus a mixture of three Dirac, \emph{i.e.}, $\bQ = \frac13 \sum_{c=1}^3 \delta_{\nu^{c,n}}$. We learn a distribution $\bP$ of the same form, with the same number of particles for each distribution. We observe for both kernels that the particles of each distribution $\mu^{c,n}$ (\emph{i.e.}, the different point clouds) %
form a ring early in the gradient flow dynamics, and then move in a structured 
manner towards the target. %
This illustrates the two level of interactions at the intra and inter distributions levels. %
In \Cref{appendix:rings}, we add comparisons with other hyperparameters and other kernels. %
Overall, the kernel $K(\mu,\nu)=-\sw_2(\mu,\nu)$ is the simplest to use, as it does not require tuning a bandwidth, and converges well in general. Thus, in the following experiments, we restrict ourselves to this kernel, and name the resulting loss MMDSW.

\paragraph{Domain Adaptation.}

We now focus on the case where both the source $\bP_0$ and the target $\bQ$ are distributions of images with $C$ classes. Thus, we have $\bP_0 = \frac{1}{C}\sum_{c=1}^C \delta_{\mu^{c,n}}$ and $\bQ=\frac{1}{C}\sum_{c=1}^C \delta_{\nu^{c,n}}$, and $\nu^{c,n}$, $\mu^{c,n}$ represent the empirical distribution of images belonging to the class $c$. We consider the \emph{*NIST} datasets, \emph{i.e.}, MNIST \citep{lecun2010mnist}, Fashion-MNIST (FMNIST) \citep{wang2018dataset}, KMNIST \citep{clanuwat2018deep} and USPS \citep{hull1994database}. These datasets all have $C=10$ classes and are of size $28\times 28$ (except for USPS which is upscaled to $28\times 28$). %
We also consider CIFAR10 \citep{krizhevsky2009learning} and SVHN \citep{netzer2011reading} which are of size $32\times 32\times 3$.
We first show in \Cref{fig:snapshots_mnist_to_gmnist} examples of trajectories starting from MNIST to the other \emph{*NIST} datasets (with step size $\tau=0.05$, momentum $m=0.9$ and $n=200$). We see that samples from MNIST are sent to samples of the target dataset, \emph{i.e.} that the flow converges well. 
We also observe that images from each class are mapped one-to-one to images within the same class (see \Cref{fig:flows_same_class} in the Appendix), without overlap or collapse across classes.

\begin{figure}[t]
    \centering
    \hspace*{\fill}
    \subfloat{\includegraphics[width=0.3\columnwidth]{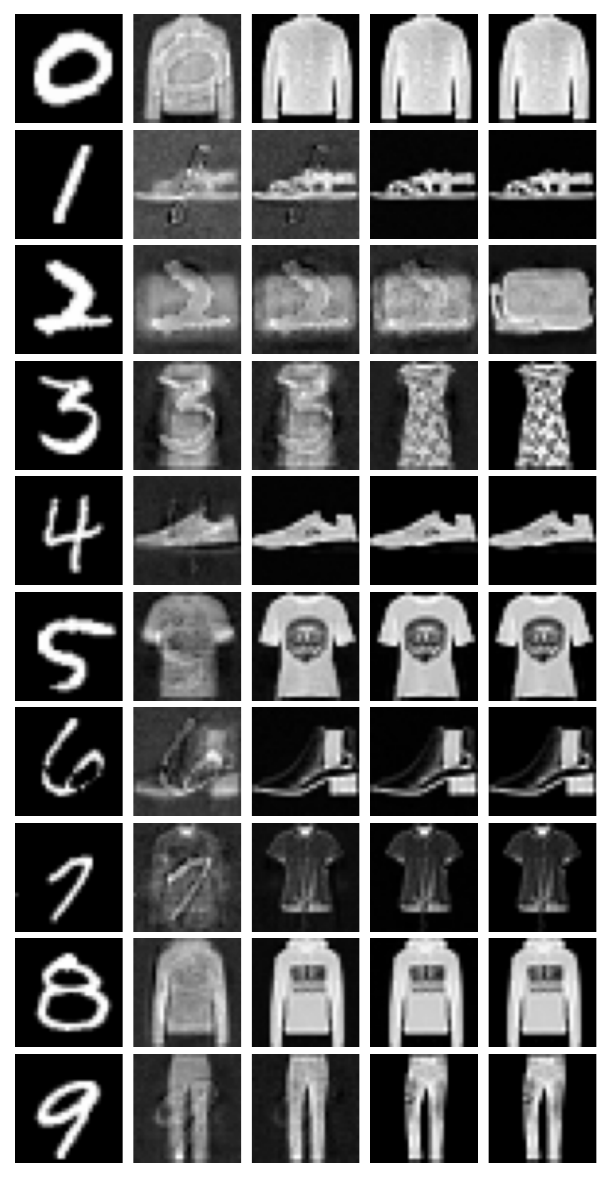}} \hfill
    \subfloat{\includegraphics[width=0.3\columnwidth]{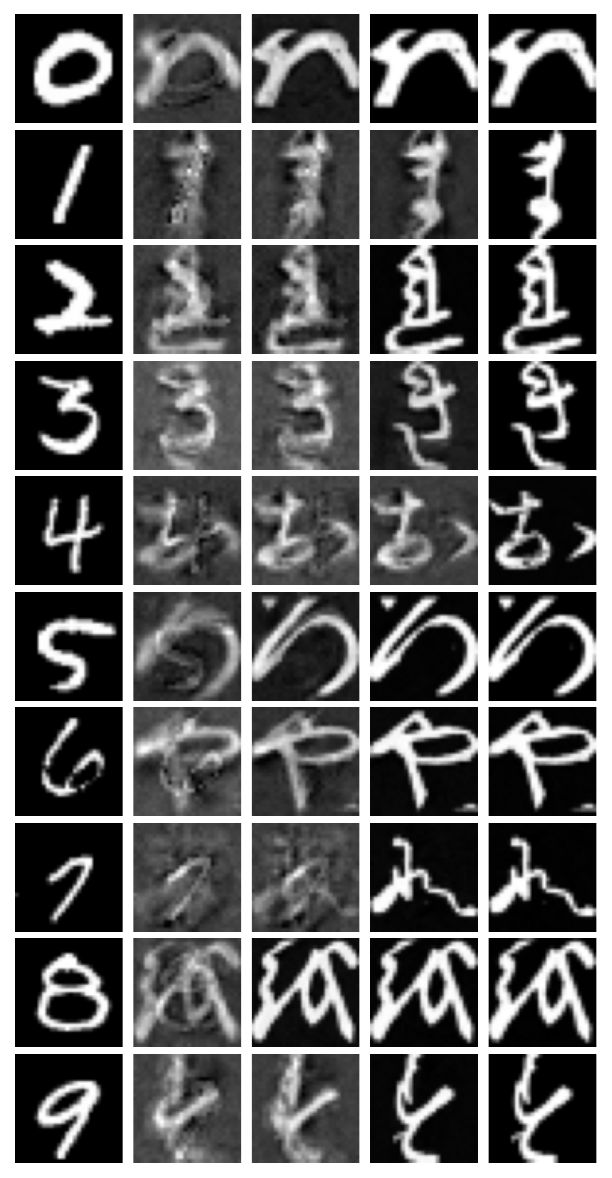}} \hfill
    \subfloat{\includegraphics[width=0.3\columnwidth]{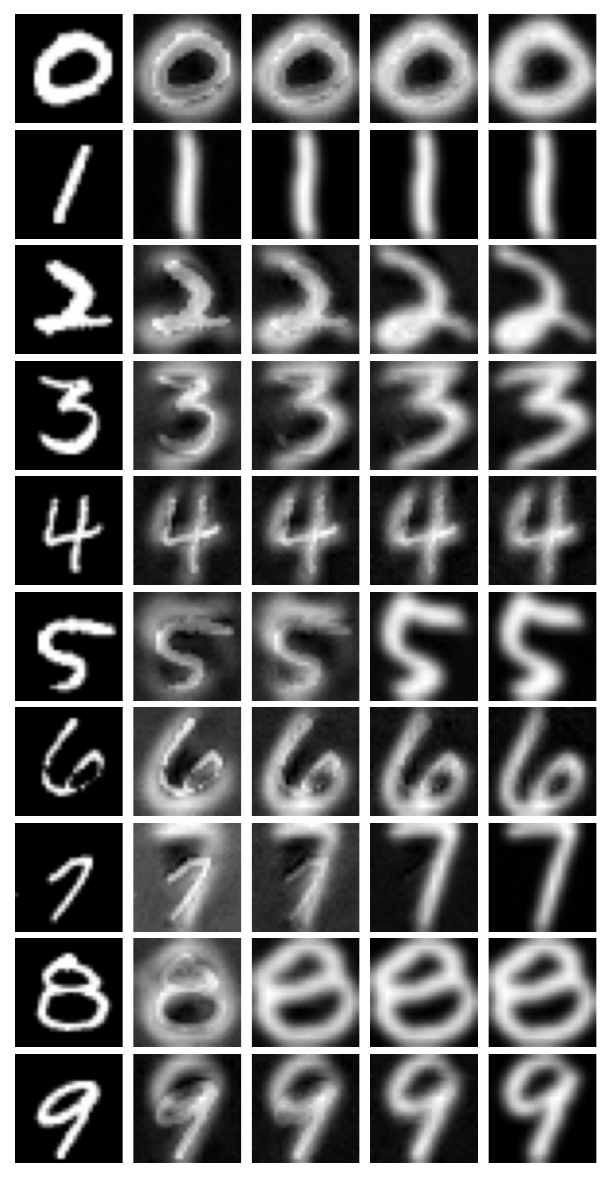}}
    \hspace*{\fill}
    \caption{Samples along the flow from MNIST to FMNIST (\textbf{Left}), KMNIST (\textbf{Middle}) and USPS (\textbf{Right}).}
    \label{fig:snapshots_mnist_to_gmnist}
    \vspace{-0.5em}
\end{figure}

\begin{table*}[t]
    \caption{Accuracy of the classifier on synthetic datasets. We compare Distribution Matching (DM) with the MMD with Riesz SW kernel on MNIST and Fashion MNIST, using $p\in\{1,10,50\}$ synthetic images by class.}
    \label{tab:results_dd}
    \centering
    \resizebox{\linewidth}{!}{
        \begin{tabular}{cccccccccccccccc}
             Dataset & $p$ & \multicolumn{2}{c}{$\psi^\theta = \mathcal{A}^\omega = \id$} & & \multicolumn{2}{c}{$\psi^\theta=\id$} & & \multicolumn{2}{c}{$\mathcal{A}^w=\id$} & & \multicolumn{2}{c}{$\mathcal{A}^w$ + $\psi^\theta$} & & \multicolumn{2}{c}{Baselines}\\ 
              & & DM & MMDSW & & DM & MMDSW & & DM & MMDSW & & DM & MMDSW & & Random & Full data \\ \toprule
            \multirow{3}{*}{MNIST} %
             & 1 & $61.1_{\pm 6.5}$ & $\textbf{66.5}_{\pm 5.5}$ & & - & $\textbf{66.8}_{\pm 5.3}$ & & $\textbf{87.8}_{\pm 0.6}$ & $60.3_{\pm 3.4}$ & & $\textbf{87.7}_{\pm 0.5}$ & $60.9_{\pm 3.3}$ & & $55.8_{\pm 2.0}$ & \\
             & 10 & $88.2_{\pm 2.8}$ & $\textbf{93.2}_{\pm 0.7}$ & & $88.7_{\pm 3.3}$ & $\textbf{93.8}_{\pm 0.7}$ & & $\textbf{97.0}_{\pm 0.1}$ & $96.4_{\pm 0.2}$ & & $\textbf{97.0}_{\pm 0.1}$ & $96.4_{\pm 0.3}$ & & $92.2_{\pm 1.1}$ & 99.4 \\
             & 50 & $95.9_{\pm 0.9}$ & $97.0_{\pm 0.2}$ & & $95.3_{\pm 1.4}$ & $97.5_{\pm 0.1}$ & & $\textbf{98.4}_{\pm 0.1}$ & $\textbf{98.4}_{\pm 0.1}$ & & $\textbf{98.4}_{\pm 0.1}$ & $\textbf{98.4}_{\pm 0.1}$ & & $97.6_{\pm 0.2}$ &\\ 
             \midrule
             \multirow{3}{*}{FMNIST} %
             & 1 & $54.4_{\pm 3.2}$ & $\textbf{60.0}_{\pm 4.1}$ & & - & $\textbf{60.6}_{\pm 3.6}$ & & $58.7_{\pm 0.4}$ & $\textbf{60.9}_{\pm 2.6}$ & & $58.7_{\pm 0.5}$ & $\textbf{60.8}_{\pm 2.2}$ & & $49.0_{\pm 7.5}$ & \\
             & 10 & $74.6_{\pm 1.0}$ & $\textbf{76.7}_{\pm 1.0}$ & & $74.7_{\pm 0.8}$ & $\textbf{76.6}_{\pm 1.1}$ & & $\textbf{81.2}_{\pm 2.3}$ & $78.0_{\pm 0.9}$ & & $\textbf{82.5}_{\pm 0.3}$ & $78.9_{\pm 1.2}$ & & $75.3_{\pm 0.7}$ & $92.4$\\
             & 50 & $81.3_{\pm 0.5}$ & $\textbf{84.2}_{\pm 0.1}$ & & $81.4_{\pm 1.0}$ & $\textbf{85.0}_{\pm 0.2}$ & & $\textbf{87.6}_{\pm 0.2}$ & $\textbf{87.6}_{\pm 0.2}$ & & $87.5_{\pm 0.1}$ & $\textbf{87.6}_{\pm 0.2}$ & & $83.2_{\pm 0.2}$ &\\
             \bottomrule
        \end{tabular}
    }
\end{table*}

To verify this quantitatively, we perform a domain adaptation task as in \citep[Section 7.3]{alvarez2021dataset}. Here, we first train a classifier on 5000 samples of MNIST with $n=500$ images by class. Then, we flow the other datasets to MNIST (with $\tau=0.1$ and momentum $m=0.9$), and measure the accuracy of the pretrained classifier on the flowed dataset. Note that while we use the class labels of the flowed dataset to perform the gradient flow dynamics, %
we  do not know a priori which class in the flowed dataset corresponds to which class in MNIST, yet it is needed for the evaluation of domain adaptation. 
To perform this alignment, we solve an OT problem with $\W_2^2$ as groundcost between $\bP$ and $\bQ$ (\emph{i.e.}, the WoW OT problem) with $\bP$ the distributions obtained at the end of the flow dynamic and $\bQ$ the ones of the target dataset. Since these distributions have a finite support of the same size ($C$), solving this OT problem provides such an alignment: we can associate a prediction of the pretrained model to an image and a ``true class'' of the flowed dataset.
We also perform this experiment with a pretrained neural network on CIFAR10, flowing SVHN toward CIFAR10, with $n=100$ samples by class, step size $\tau=0.1$ and momentum $m=0.9$.

\begin{figure}[t]
    \centering
    \includegraphics[width=\linewidth]{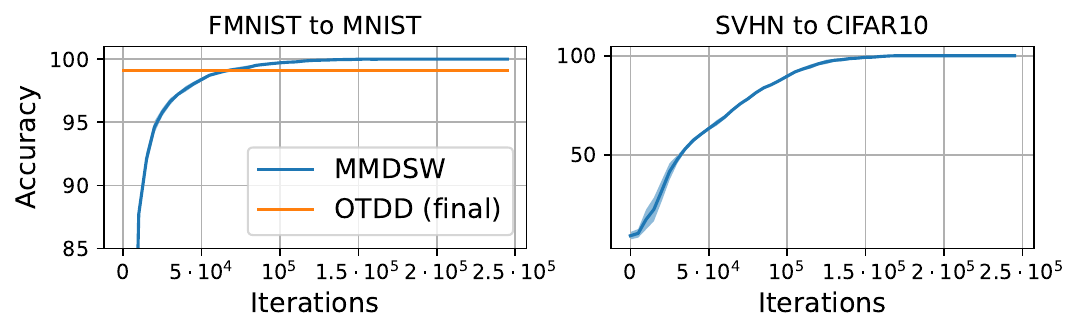}
    \caption{Accuracy of the pretrained classifiers along the flow from FMNIST (\textbf{Left}) and SVHN (\textbf{Right}) towards MNIST and CIFAR10.}  %
    \label{fig:evol_da_fmnist}
    \vspace{-1.5em}
\end{figure}

On \Cref{fig:evol_da_fmnist}, we report the accuracy of the pretrained classifier on the data flowed starting from FMNIST towards MNIST and from SVHN towards CIFAR10, over the iterations (averaged over 3 flows started at different splits of the source data). We also report the value from \citep{alvarez2021dataset} using OTDD on the MNIST dataset. We observe that the classifier converges to 100\% accuracy for a sufficient number of iterations. This demonstrates that the flow is able to perfectly match one class from the source dataset with a class of the target dataset, on which the classifier has been trained. 

We note that in a realistic setting of unsupervised domain adaptation, we would not have access to the labels of the source dataset \citep{courty2016optimal}. Thus, to flow the data as we did just earlier, we would need first to find pseudo-labels on the source datasets, \emph{e.g.} with clustering \citep{alvarez2021dataset, el2022hierarchical}. However, this is not the goal of the paper.

\paragraph{Dataset Distillation.}

\looseness=-1 %
Dataset distillation or condensation \citep{wang2018dataset}  seeks to produce a compact synthetic dataset derived from a large training set, such that training a neural network on the synthetic data yields performance close to that obtained with the full dataset.
\citet{zhao2023dataset} proposed to learn the synthetic dataset by performing Distribution Matching, \emph{i.e.}, denoting $\nu^c$ the distribution of each class $c$ of the target dataset, they minimize
\begin{equation} \label{eq:mmd_dataset_distillation}
    \begin{aligned}
        \mathcal{F}\big((\mu^c)_c\big) %
        &= \mathbb{E}_{\theta, \omega}\left[ \sum_{c=1}^C \mmd_k^2\big(\psi^\theta_\#\mathcal{A}^\omega_\#\mu^c, \psi^\theta_\#\mathcal{A}^\omega_\#\nu^c) \right],
    \end{aligned}
\end{equation}
with $k$ the linear kernel $k(x,y) = \langle x,y\rangle$, $\mathcal{A}^\omega:\R^d\to\R^d$ a random data augmentation (\emph{e.g.} rotation, cropping, see \citep{zhao2021dataset}) and $\psi^\theta:\R^d\to \R^{d'}$ with $d'\ll d$ a randomly initialized neural network used to embed the data.

Let $\bQ = \frac{1}{C}\sum_{c=1}^C \delta_{\nu^c}$ be the target dataset, $\phi^{\theta,\omega}(\mu) = \psi^\theta_\#\mathcal{A}^\omega_\#\mu$ and $\bP=\frac{1}{C}\sum_{c=1}^C \delta_{\mu^c}$. Note that $\phi^{\theta,\omega}_\#\bP = \frac{1}{C}\sum_{c=1}^C \delta_{\psi^\theta_\#\mathcal{A}^\omega_\#\mu^c}$. We propose to minimize
\begin{equation}
    \Tilde{\bF}(\bP) = \mathbb{E}_{\theta,\omega}\left[\mmd_K^2(\phi^{\theta,\omega}_\#\bP, \phi^{\theta,\omega}_\#\bQ)\right],
\end{equation}
\looseness=-1 with Riesz SW kernel $K$ between distributions. %
We compare on \Cref{tab:results_dd} the accuracy of a classifier on a test set of MNIST and FMNIST, trained on the synthetic dataset with $p\in\{1,10,50\}$ samples by class, either generated with MMD with Riesz SW kernel (MMDSW) or with Distribution Matching (DM), in 4 scenarios: in the ambient space with ($\psi^\theta=\id$) and without augmentation ($\psi^\theta=\mathcal{A}^\omega=\id$), and with an embedding with $(\mathcal{A}^\omega + \psi^\theta)$ and without an augmentation ($\mathcal{A}^\omega=\id$). We solve it with a stochastic gradient descent, sampling one augmentation and embedding at each step, for 20K iterations and initializing the samples on true data. The results are averaged over 3 synthetic datasets obtained initializing the flow at different samples, and 5 training of the classifier. On a Nvidia v100 GPU, the flow implemented in \texttt{Jax} \citep{jax2018github} runs in around 10 minutes with the embedding and in 30 seconds without it. The baseline ``random'' refers to the classifier trained on data sampled randomly from the original training set, and ``full data'' to the classifier trained on the full training set.
We observe on \Cref{tab:results_dd} that MMDSW consistently outperforms DM when flowing in the ambient space, and is competitive when adding an embedding. This indicates that adding interactions between classes appears to improve the results, possibly by distributing the samples more effectively and mitigating the presence of ambiguous samples near class borders. %

\begin{table}[t]
    \caption{Accuracy of classifier on augmented datasets for $k\in\{1,10,10,100\}$. M refers to MNIST, F to Fashion MNIST, K to KMNIST and U to USPS.}
    \label{tab:results_tf}
    \centering
    \resizebox{\linewidth}{!}{
        \begin{tabular}{cccccc}
             Dataset & $k$ &  Train on $\bQ$ & MMDSW & OTDD & \citep{hua2023dynamic} \\ \toprule
             \multirow{4}{*}{M to F} 
             & 1 & $26.0_{\pm 5.3}$ & $\textbf{40.5}_{\pm 4.7}$ & $30.5_{\pm 4.2}$ & $36.4_{\pm 3.3}$ \\
             & 5 & $38.5_{\pm 6.7}$ & $61.5_{\pm 4.6}$ & $59.7_{\pm 1.8}$ & $\textbf{62.7}_{\pm 1.1}$ \\
             & 10 & $53.9_{\pm 7.9}$ & $65.4_{\pm 1.5}$ & $64.0_{\pm 1.4}$ & $\textbf{66.2}_{\pm 1.0}$ \\
             & 100 & $71.1_{\pm 1.5}$ & $\textbf{74.7}_{\pm 0.8}$ & - & $73.5_{\pm 0.7}$ \\ \midrule
              \multirow{4}{*}{M to K} 
             & 1 & $18.4_{\pm 3.1}$ & $\textbf{20.9}_{\pm 2.0}$ & $18.8_{\pm 2.1}$ & $19.4_{\pm 1.9}$ \\
             & 5 & $25.9_{\pm 4.0}$ & $37.4_{\pm 2.2}$ & $31.3_{\pm 1.4}$ & $\textbf{39.0}_{\pm 1.0}$ \\
             & 10 & $30.9_{\pm 4.6}$ & $\textbf{44.7}_{\pm 1.8}$ & $34.1_{\pm 0.9}$ & $44.1_{\pm 1.2}$ \\
             & 100 & $60.1_{\pm 1.1}$ & $\textbf{66.8}_{\pm 0.8}$ & $66.3_{\pm 0.9}$ & $62.4_{\pm 1.2}$ \\ \midrule
              \multirow{4}{*}{M to U}
             & 1 & $32.4_{\pm 7.9}$ & $37.4_{\pm 6.1}$ & $\textbf{39.5}_{\pm 7.9}$ & $35.0_{\pm 5.6}$ \\
             & 5 & $51.4_{\pm 9.8}$ & $73.0_{\pm 1.0}$ & $\textbf{73.3}_{\pm 1.4}$ & $69.6_{\pm 1.3}$ \\
             & 10 & $60.3_{\pm 10.1}$ & $\textbf{77.2}_{\pm 1.2}$ & $72.7_{\pm 2.7}$ & $75.6_{\pm 1.2}$ \\
             & 100 & $87.5_{\pm 0.7}$ & $\textbf{89.7}_{\pm 0.4}$ & - & $88.1_{\pm 0.6}$ \\
             \bottomrule
        \end{tabular}
    }
\end{table}

\paragraph{Transfer Learning.} \looseness=-1 We now focus on the task of $k$-shot learning. In this setting, we are interested in training a classifier for datasets which have $k$ samples by class, where $k$ is typically small. Following \citep{alvarez2021dataset, hua2023dynamic}, we propose to augment the dataset by generating new synthetic samples for each class. To do this, we will flow a larger source dataset, with possibly different classes, towards the small target dataset, and then concatenate the synthetic and true samples to train the classifier on it.
More precisely, let $\bQ=\frac{1}{C}\sum_{c=1}^C \delta_{\nu^{c,k}}$ the target dataset, with $\nu^{c,k} = \frac{1}{k}\sum_{i=1}^k \delta_{y_i^c}$ an empirical distribution with $k$ samples, representing the distribution of the class $c$. Let $\bP_0=\frac{1}{C}\sum_{c=1}^C \delta_{\mu^{c,n}}$ be a source dataset, with $\mu^{c,n}=\frac{1}{n}\sum_{i=1}^n \delta_{x_i^c}$ and $n=200$. Then, the goal is to flow $\bP_0$ towards $\bQ$ by minimizing $\bF(\bP)=\frac12 \mmd^2(\bP,\bQ)$ with kernel $K(\mu,\nu)=-\sw_2(\mu,\nu)$. We expect to augment each class $c$ of $\bQ$ with $n$ samples. Then, we train a classifier with a LeNet5 architecture on the dataset obtained as $\hat{\bQ}=\frac{1}{C}\sum_{c=1}^C \delta_{\eta^{c,n+k}}$ with $\eta^{c,n+k} = \frac{1}{n+k}\sum_{i=1}^{n+k} \delta_{z_i^c}$ where $z_i^c = x_i^c$ for $i\le n$ and $z_i^c=y_{i-n}^c$ for $i> n$. We report the results for MNIST as $\bP_0$ and FMNIST, KMNIST and USPS as $\bQ$ on \Cref{tab:results_tf} for $k\in\{1,5,10,100\}$, compared with the baseline where we train directly on $\bQ$, and the baselines where we trained on the synthetic data obtained by minimizing OTDD \citep{alvarez2021dataset} or the MMD with product kernel as in \citep{hua2023dynamic}. The results are averaged over 5 training of the networks, and 3 outputs of the flows. Both methods have been reimplemented and we add more details in \Cref{appendix:tf}. We observe that all three methods improve upon the baseline, with a slight advantage for MMDSW.

\paragraph{Complexity.} Given $\bP=\frac{1}{C}\sum_{c=1}^C \delta_{\mu^{c,n}}$ and $\bQ=\frac{1}{C}\sum_{c=1}^C \delta_{\nu^{c,n}}$ with $\mu^{c,n}$ and $\nu^{c,n}$ discrete distributions with $n$ samples each, the MMD with a Sliced-Wasserstein based kernel requires to compute $C^2$ Sliced-Wasserstein distances, which has a total complexity of $O\big(C^2 Ln (\log n + d)\big)$ using $L$ projections to approximate SW.
We report on \Cref{tab:runtime_tf} the runtimes for the transfer learning experiment, averaged over 3 outputs of the flows and trained for 5K epochs for each method. MMDSW is much faster than both OTDD and the MMD with product kernel, at least with our implementations in \texttt{jax} detailed in \Cref{appendix:tf}. Both OTDD and the method of \citet{hua2023dynamic} are implemented using a dimension reduction technique in 2D and a Gaussian approximation to embed the conditional distributions. %

\begin{table}[t]
    \caption{Runtime in seconds for the transfer learning experiment from MNIST to Fashion MNIST%
    .}
    \label{tab:runtime_tf}
    \centering
    \resizebox{\linewidth}{!}{
        \begin{tabular}{ccccc}
             Dataset & $k$-shot & MMDSW & OTDD & \citep{hua2023dynamic} \\ \toprule
             \multirow{4}{*}{M to F} 
             & 1 & $13.95 \pm 1.37$ & $294.53 \pm 5.21$  & $131.77 \pm 2$ \\
             & 5 & $14.12 \pm 0.30$ & $1130.89 \pm 108$  & $132.98 \pm 1.1$ \\
             & 10 & $14.30 \pm 0.29$ & $2294.13 \pm 48$  & $134.35 \pm 0.75$ \\
             & 100 & $47.75 \pm 0.27$ & -  & $164.19 \pm 0.6$ \\ 
             \bottomrule
        \end{tabular}
    }
\end{table}

\section{Conclusion}

\looseness=-1 This work provides the first theoretical framework and practical implementation of gradient flows of a suitable MMD objective over the space of random measures, endowed with the Wasserstein over Wasserstein distance. On the theoretical side, we provided a rigorous differential structure on that space and showed that these flows are well-posed. On the numerical side, our results demonstrate that this novel approach provides meaningful dynamics for interpolating between random measures. There are many possible extensions of our study. For instance, it would be interesting to investigate the minimization of alternative functionals over the space of random measures, \emph{e.g.}, MMD with other kernels \citep{bachoc2023gaussian, kachaiev2024learning}, integral probability metrics~\citep{muller1997integral, catalano2024hierarchical} or f-divergences~\citep{csiszar1967information}. 
Future work could also address the theoretical treatment of non-compact manifolds or derive a continuity equation for Wasserstein over Wasserstein (WoW) gradient flows. 
Then, another topic of future research would be to provide quantitative guarantees on the convergence of these schemes.

\ifdefined\isaccepted 

\section*{Acknowledgements}

We thank the anonymous reviewers for their valuable comments. We also thank Quentin Mérigot for fruitful discussions, and David Alvarez-Melis for his help on the transfer learning experiment.
 This work was granted access to the HPC resources of IDRIS under the allocation 2024-AD011015891 made by GENCI.  CB and AK  acknowledge the support of the Agence nationale de la recherche, through the PEPR PDE-AI project (ANR-23-PEIA-0004), CV acknowledges the support of Région Île-de-France through the DIM AI4IDF project.

\fi

\section*{Impact Statement}

This paper presents work whose goal is to advance the field of 
Machine Learning. There are many potential societal consequences 
of our work, none which we feel must be specifically highlighted here.

\bibliography{references}
\bibliographystyle{icml2025}

\newpage
\appendix
\onecolumn

\section{Background on Optimal Transport} \label{appendix:background_ot}

\subsection{Optimal Transport on $\cP_2(\cM)$}

Let $\cM$ be a Riemannian manifold, and denote by $d:\cM\times\cM\to\R_+$ the associated geodesic distance. We recall that for any $\mu,\nu\in\cP_2(\cM)$, the Wasserstein distance is defined as 
\begin{equation}
    \W_2^2(\mu,\nu) = \inf_{\Tilde{\gamma}\in\Pi(\mu,\nu)}\ \int d(x,y)^2\ \mathrm{d}\Tilde{\gamma}(x,y).
\end{equation}

Let $\varphi:\cM\to \R$. For a cost $c:\cM\times\cM\to \R$, we recall that its $c$-transform is defined as $\varphi^c(y) = \inf_{x\in\cM}\ c(x,y) - \varphi(x)$. $\varphi$ is said to be $c$-concave if there exists $\phi:\cM\to \R$ such that $\varphi=\phi^c$. Here, we focus on $c(x,y)=\frac12 d(x,y)^2$. Then, the Wasserstein distance can be written through its dual (see \emph{e.g.} \citep[Theorem 5.10]{villani2009optimal}) as %
\begin{equation}
    \W_2^2(\mu,\nu) = \sup_{f\in L^1(\mu)}\ \int f\mathrm{d}\mu + \int f^c\mathrm{d}\nu,
\end{equation}
with $L^1(\mu)=\{f:\cM\to \R,\ \int |f|\mathrm{d}\mu < \infty\}$. The optimal $f$ is called the Kantorovich potential, is noted $\varphi_{\mu,\nu}$, and is a $c$-concave map.

We now recall McCann's theorem, which provides a sufficient condition for the existence of an OT map provided that $\mu$ is absolutely continuous with respect to the volume measure. We state the result for a connected compact Riemannian manifold. But, note that this result was then extended to other manifolds, see \emph{e.g.} \citep[Proposition 3.1]{figalli2007existence} for a similar result on complete Riemannian manifolds.

\begin{theorem}[Theorem 9 in \citep{mccann2001polar}] \label{th:mccann}
    Let $\cM$ be a connected compact Riemannian manifold. Let $\mu\in\cPa{\cM}$ and $\nu\in\cP_2(\cM)$. Then, the optimal coupling $\Tilde{\gamma}\in\Pi(\mu,\nu)$ is unique and of the form $\Tilde{\gamma}=(\id, \T)_\#\mu$ with $\T(x) = \exp_x\big(-\nabla\varphi_{\mu,\nu}(x)\big)$ for all $x\in\cM$, where $\varphi_{\mu,\nu}$ is a Kantorovich potential for the pair $\mu,\nu$.
\end{theorem}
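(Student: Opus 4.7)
The plan is to prove McCann's theorem by combining Kantorovich duality with a first-order analysis of the dual potentials, exploiting the Riemannian gradient of the squared distance.

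First I would establish existence of an optimal coupling and of a Kantorovich potential. Since $\cM$ is compact, the cost $c(x,y)=\frac12 d(x,y)^2$ is continuous and bounded, and $\Pi(\mu,\nu)$ is compact for the narrow topology, so an optimizer $\Tilde\gamma$ exists. Kantorovich duality then provides a $c$-concave function $\varphi=\varphi_{\mu,\nu}$ with $\varphi(x)+\varphi^c(y)\le c(x,y)$ everywhere, and equality on $\mathrm{spt}(\Tilde\gamma)$. Existence of such a $\varphi$ follows from Arzelà–Ascoli applied to a maximizing sequence, using that $c$-concave functions on a compact manifold are uniformly Lipschitz with constant controlled by $\mathrm{diam}(\cM)$.

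Next I would establish the pointwise optimality condition. Fixing $(x_0,y_0)\in\mathrm{spt}(\Tilde\gamma)$, the duality equality combined with the global inequality says that $x_0$ realises the minimum of $x\mapsto c(x,y_0)-\varphi(x)$. If $\varphi$ is differentiable at $x_0$ and $x_0$ does not lie in the cut locus of $y_0$, then first-order optimality yields $\nabla_x c(x_0,y_0)=\nabla\varphi(x_0)$. The key Riemannian fact is that $\nabla_x \frac12 d(x,y_0)^2\big|_{x=x_0}=-\log_{x_0}(y_0)$ on the complement of $\mathrm{Cut}(y_0)$, hence $y_0=\exp_{x_0}(-\nabla\varphi(x_0))=T(x_0)$.

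Then I would argue that this condition holds for $\mu$-a.e.\ $x_0$. The core step is showing that $c$-concave functions on a compact Riemannian manifold are locally semi-concave. This follows because $c(\cdot,y)$ is locally semi-concave away from $\mathrm{Cut}(y)$ and $\varphi$ is an infimum of translates of $c(\cdot,y)$, so the semi-concavity modulus is inherited. Semi-concavity implies differentiability outside a set of vanishing volume measure by Alexandrov's theorem (Riemannian version). Since $\mu\ll\mathrm{vol}$ and, for each $y$, $\mathrm{Cut}(y)$ has measure zero (and one applies Fubini against $\Tilde\gamma$ together with Sard-type arguments), the bad set is $\mu$-negligible.

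Finally I would conclude uniqueness and the pushforward property. The previous step shows $\Tilde\gamma$ is concentrated on the graph of the Borel map $T(x)=\exp_x(-\nabla\varphi(x))$, so $\Tilde\gamma=(\id,T)_\#\mu$ and projecting onto the second marginal gives $T_\#\mu=\nu$. Uniqueness of the optimal coupling then follows: any optimal $\Tilde\gamma'$ shares the same dual potential $\varphi$ (up to additive constant) and is therefore concentrated on the same graph, so $\Tilde\gamma'=\Tilde\gamma$.

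The main obstacle is the semi-concavity/differentiability step and the careful handling of the cut locus. Unlike the Euclidean case, where one simply invokes Rademacher on concave functions, here one must (i) verify that $c$-concavity transfers to local semi-concavity through the $\inf$ defining $\varphi$ despite the presence of $\mathrm{Cut}(y)$, and (ii) ensure that the measure-zero set where either $\nabla\varphi$ fails to exist or $x$ sits on $\mathrm{Cut}(T(x))$ is genuinely $\mu$-null, which requires the absolute continuity assumption on $\mu$ and the fact that cut loci are negligible for the volume measure.
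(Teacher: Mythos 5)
You should first note that the paper offers no proof of \Cref{th:mccann} at all: it is imported verbatim, with citation, as Theorem 9 of \citet{mccann2001polar}, so there is no internal argument to compare yours against. Judged on its own terms, your skeleton (duality and existence of a $c$-concave potential $\varphi$, the first-order condition on $\mathrm{spt}(\Tilde{\gamma})$, almost-everywhere differentiability, concentration on a graph, uniqueness via a common dual optimizer) is the standard route, and the uniqueness step is fine once one observes that for \emph{any} optimal plan strong duality forces $\varphi(x)+\varphi^c(y)=c(x,y)$ to hold on its support for the \emph{same} fixed dual pair.

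The genuine gap is in your handling of the cut locus. To use $\nabla_x \tfrac12 d(x,y_0)^2\big|_{x=x_0}=-\log_{x_0}(y_0)$ you require $x_0\notin\mathrm{Cut}(y_0)$ for $\Tilde{\gamma}$-a.e.\ $(x_0,y_0)$, and you propose to dispose of the exceptional set by ``Fubini against $\Tilde{\gamma}$ together with Sard-type arguments'' using that each $\mathrm{Cut}(y)$ is volume-null. This does not close: $\Tilde{\gamma}$ is not a product measure, and its disintegration given $y$ need not be absolutely continuous with respect to the volume measure even though $\mu$ is, so the volume-nullity of $\mathrm{Cut}(y)$ says nothing about $\Tilde{\gamma}\big(\{(x,y):x\in\mathrm{Cut}(y)\}\big)$; disintegrating in the other direction fails as well, since $\nu\in\cP_2(\cM)$ is arbitrary (possibly atomic or concentrated on cut loci). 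A priori nothing prevents an optimal plan from charging that set, and ruling this out is essentially as hard as the theorem itself. McCann's actual argument avoids excluding the cut locus altogether: $\tfrac12 d(\cdot,y_0)^2$ is superdifferentiable everywhere, so at a minimum point $x_0$ of $c(\cdot,y_0)-\varphi$ at which $\varphi$ is differentiable, the cost is forced to be differentiable at $x_0$ with gradient $\nabla\varphi(x_0)$, and one shows directly that any supergradient $p$ of $\tfrac12 d(\cdot,y_0)^2$ at $x_0$ satisfies $\exp_{x_0}(-p)=y_0$; no measure-theoretic removal of the cut locus is needed. You need this device (or an equivalent one) for the proof to go through. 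Two smaller remarks: invoking semi-concavity and Alexandrov is unnecessarily heavy here --- $\varphi$ is Lipschitz on the compact $\cM$, so Rademacher already gives differentiability $\mathrm{vol}$-a.e., hence $\mu$-a.e.\ by absolute continuity (semi-concavity only becomes essential for second-order/Monge--Amp\`ere statements) --- and the map $\T$ is of course only defined $\mu$-almost everywhere, despite the ``for all $x$'' phrasing in the statement.
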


For any $x\in\cM$, recall that the exponential map $\exp:T\cM\to \cM$ maps tangent vectors $v\in T_x\cM$ back to the manifold at the point reached at time $t=1$ by the geodesic starting at $x$ with initial velocity $v$. Moreover, when it is well defined, its inverse is the logarithm map $\log_x:\cM\to\cM$, which satisfies, for any $x\in \cM$, $y=\exp_x(v)$ where $v\in T_x\cM$, $\log_x(y)=v$. However, the exponential map is not always invertible. For instance, on the sphere $S^{d-1}$, there are an infinite number of geodesics, and thus of directions $v\in T_x\cM$, between $x\in\cM$ and its antipodal point $-x$ (see \Cref{fig:sphere}). Therefore, the logarithm map $\log_x(-x)$ is multivalued. 

Let $\mu,\nu\in\cP_2(\cM)$. When the exponential map is invertible at $\mu$-almost every $x\in\cM$, then a (constant-speed) geodesic between $\mu$ and $\nu$ can be defined for all $t\in [0,1]$ as $\mu_t = \exp_{\pi^1}\circ (t\log_{\pi^1} \circ \pi^2)_\#\Tilde{\gamma}$ where $\Tilde{\gamma}\in\Pi_o(\mu,\nu)$, \emph{i.e.} it satisfies $\W_2(\mu_s,\mu_t)=|t-s|\W_2(\mu,\nu)$ for all $s,t\in [0,1]$. However, the exponential map might not always be invertible, as described in the last paragraph. %
One way to circumvent this problem is to consider the space
\begin{equation}
    \exp_\mu^{-1}(\nu) = \{\gamma\in\cP_2(T\cM),\ \pi^\cM_\#\gamma=\mu,\ \exp_\#\gamma=\nu,\ \int \|v\|_x^2\ \mathrm{d}\gamma(x,v)=\W_2^2(\mu,\nu)\}.
\end{equation}
This space carries more information than the set of optimal couplings as it precises which geodesic was chosen to move the mass from $\mu$ to $\nu$ \citep{gigli2011inverse}. Indeed, regarding the previous example on the sphere, for $x\in S^{d-1}$, $\mu=\delta_x$ and $\nu=\delta_{-x}$, and any $v\in T_x\cM$ such that $-x=\exp_x(v)$, we have $\delta_{(x,v)}\in \exp_\mu^{-1}(\nu)$, while the optimal coupling would simply be given by the map $T(x)=-x$. Moreover, it allows to define geodesics as $t\mapsto \mu_t = \exp_{\pi^\cM}\circ (t\pi^{\mathrm{v}})_\#\gamma$ for any $\gamma\in\exp_\mu^{-1}(\nu)$ \citep[Theorem 1.11]{gigli2011inverse}. By \Cref{prop:surjective_map_PM}, if $\mu\in\cPa{\cM}$, then there exists a unique $\gamma\in\exp_\mu^{-1}(\nu)$, which is of the form $\gamma=(\id, -\nabla\varphi_{\mu,\nu})_\#\mu$. In this case, the geodesic between $\mu$ and $\nu$ is of the form $\mu_t = \exp_{\id}\circ (-t\nabla\varphi_{\mu,\nu})_\#\mu$.

\begin{figure}
    \centering
    \includegraphics[width=0.3\linewidth]{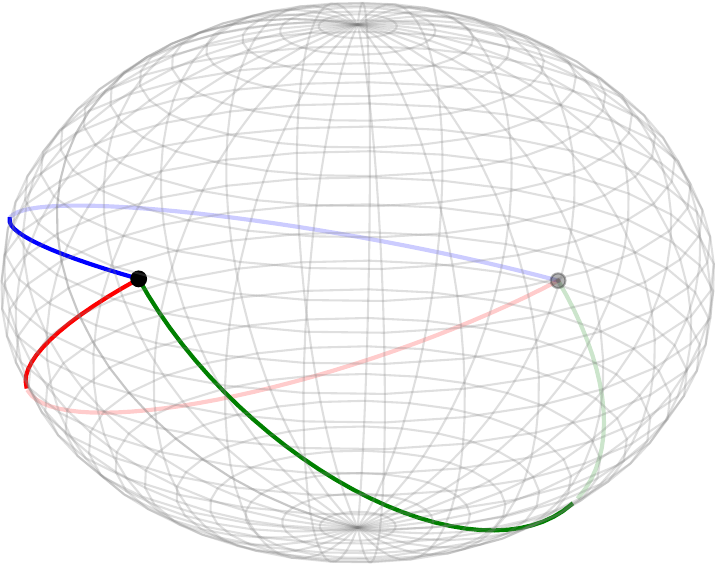}
    \caption{On the sphere, there are an infinite number of geodesics between $x$ and $-x$ (here 3 are represented). Thus, the logarithm map would be multivalued.}
    \label{fig:sphere}
\end{figure}

\subsection{Wasserstein Gradient Flows on $\cP_2(\cM)$} \label{appendix:wgd_manifold}

We provide in this Section some background on Wasserstein gradient flows on $\cP_2(\cM)$, with $\cM$ a Riemannian manifold with geodesic distance $d$. This presentations follows the one from \citep{lanzetti2022first} on $\cP_2(\R^d)$, adapted to $\cP_2(\cM)$ using results of \citep{erbar2010heat}.

\paragraph{Differential Structure.} First, we recall sub- and super differentiability on this space.

\begin{definition}[Wasserstein sub- and super-differentiability]
    Let $\cF:\cP_2(\cM)\to \R$ a lower semi-continuous functional. A map $\xi:\cM\to T\cM\in L^2(\mu, T\cM)$ belongs to the subdifferential $\partial^-\cF(\mu)$ of $\cF$ at $\mu$ if for all $\nu\in\cP_2(\cM)$,
    \begin{equation}
        \cF(\nu) \ge \cF(\mu) + \sup_{\gamma \in \exp_\mu^{-1}(\nu)}\ \int \langle \xi(x), v\rangle_x\ \mathrm{d}\gamma(x,v) + o\big(\W_2(\mu,\nu)\big).
    \end{equation}
    Similarly, $\xi\in L^2(\mu,T\cM)$ belongs to the superdifferential $\partial^+\cF(\mu)$ of $\cF$ at $\mu$ if $-\xi\in \partial^-(-\cF)(\mu)$.
\end{definition}

Similarly as on $\cP_2(\R^d)$ \citep{bonnet2019pontryagin, lanzetti2022first}, we say that a functional is Wasserstein differentiable if it admits sub- and super-differentials which coincide.

\begin{definition}[Wasserstein differentiability] \label{def:w_grad_pm}
    A functional $\cF:\cP_2(\cM)\to \R$ is Wasserstein differentiable at $\mu\in\cP_2(\cM)$ if $\partial^-\cF(\mu) \cap \partial^+\cF(\mu) \neq \emptyset$. In this case, we say that $\gW\cF(\mu)\in \partial^-\cF(\mu)\cap \partial^+\cF(\mu)$ is a Wasserstein gradient of $\cF$ at $\mu$, and it satisfies for any $\nu\in\cP_2(\cM)$, $\gamma\in\exp_{\mu}^{-1}(\nu)$,
    \begin{equation} \label{eq:wg_m}
        \cF(\nu) = \cF(\mu) + \int \langle \gW\cF(\mu)(x), v\rangle_x \ \mathrm{d}\gamma(x,v) + o\big(\W_2(\mu,\nu)\big).
    \end{equation}
\end{definition}

If $\mu\in\cPa{\cM}$, then by \Cref{prop:surjective_map_PM}, $\gamma\in\exp_\mu^{-1}(\nu)$ is unique and of the form $\gamma=(\id,-\nabla\varphi_{\mu,\nu})_\#\mu$. Thus, in that case, \eqref{eq:wg_m} translates as
\begin{equation}
    \cF(\nu) = \cF(\mu) + \int \langle \gW\cF(\mu)(x), -\nabla\varphi_{\mu,\nu}(x)\rangle_x\ \mathrm{d}\mu(x) + o\big(\W_2(\mu,\nu)\big),
\end{equation}
which coincides with \citep[Definition 3.1]{erbar2010heat} (for the subdifferential, and up to a sign as they use $c$-convex maps, and we use $\varphi_{\mu,\nu}$ a $c$-concave map).

If we take $t\mapsto \mu_t=(\exp_{\pi^\cM}\circ (t\pi^{\mathrm{v}}))_\#\gamma$, for $\gamma\in\exp_\mu^{-1}(\nu)$, a geodesic between $\mu,\nu$, then necessarily $(\pi^\cM,t\pi^{\mathrm{v}})_\#\gamma\in\exp_{\mu}^{-1}(\mu_t)$ \citep[Theorem 1.11]{gigli2011inverse}, and thus
\begin{equation} \label{eq:chain_rule}
    \cF(\mu_t) = \cF(\mu) + t\int \langle \gW\cF(\mu)(x), v\rangle_x\ \mathrm{d}\gamma(x,v) + o\big(\W_2(\mu, \mu_t)\big),
\end{equation}
which implies $\frac{\mathrm{d}}{\mathrm{d}t}\cF(\mu_t)\big|_{t=0} = \int \langle \gW\cF(\mu)(x), v\rangle_x\ \mathrm{d}\gamma(x,v)$. 

A priori, the Wasserstein gradient is not unique. Nevertheless, we can always restrict ourselves to a unique gradient belonging to a tangent space whenever it is an Hilbert space. This is the case for $\mu$ absolutely continuous \citep[Corollary 6.6]{gigli2011inverse}. So, we now focus on $\cPa{\cM}\subset \cP_2(\cM)$. In this case, the tangent space can be defined as $T_\mu\cP_2(\cM) = \overline{\{\nabla\varphi,\ \varphi\in C_c^\infty(\cM)\}}^{L^2(\mu,T\cM)}$. This is a closed linear subspace of $L^2(\mu,T\cM)$ and we can uniquely decompose any $\xi\in L^2(\mu,T\cM)$ as $\xi = \phi + \psi$ with $\phi\in T_\mu\cP_2(\cM)$ and $\psi\in T_\mu\cP_2(\cM)^\bot$ \citep[Theorem 4.11]{rudin86real}. Since $\mu\in\cPa{\cM}$, then by \Cref{prop:surjective_map_PM}, the optimal $\gamma$ is equal to $(\id, -\nabla\varphi_{\mu,\nu})_\#\mu$ with $\varphi_{\mu,\nu}$ a Kantorovich potential between $\mu$ and $\nu$. In this case, it can be shown that
\begin{equation}
    \int \langle\psi(x), v\rangle_x\ \mathrm{d}\gamma(x,v) = \int \langle \psi(x), - \nabla\varphi_{\mu,\nu}(x)\rangle_x\ \mathrm{d}\mu(x) = 0,
\end{equation}
since $\nabla\varphi_{\mu,\nu}\in T_\mu\cP_2(\cM)$ \citep[Lemma 2.6]{erbar2010heat}. Thus the only part of the gradient that matters is $\phi$, and we can show that it is unique.

\begin{proposition} \label{prop:grad_unique_pm}
    Let $\cF:\cP_2(\cM)\to \R$. Its gradient at $\mu\in\cPa{\cM}$, if it is exists, is the unique element of $T_\mu\cP_2(\cM) \cap \partial^+ \cF(\mu) \cap \partial^- \cF(\mu)$.
\end{proposition}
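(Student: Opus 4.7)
The plan is to establish existence of an element in $T_\mu \cP_2(\cM)$ (i.e., that one can always project a Wasserstein gradient onto the tangent space), and then show uniqueness by testing against smooth compactly supported potentials.

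For \textbf{existence of an element in the tangent space}, I would start from some $\xi \in \partial^-\cF(\mu) \cap \partial^+\cF(\mu)$, whose existence is assumed. Since $L^2(\mu, T\cM)$ is Hilbert and $T_\mu\cP_2(\cM)$ is a closed linear subspace, I decompose $\xi = \phi + \psi$ with $\phi \in T_\mu\cP_2(\cM)$ and $\psi \in T_\mu\cP_2(\cM)^\bot$. Because $\mu \in \cPa{\cM}$, \Cref{prop:surjective_map_PM} guarantees that for every $\nu \in \cP_2(\cM)$ the unique element of $\exp_\mu^{-1}(\nu)$ is $\gamma = (\id, -\nabla\varphi_{\mu,\nu})_\#\mu$, and $\nabla\varphi_{\mu,\nu} \in T_\mu\cP_2(\cM)$ by the cited result of Erbar. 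Hence
\begin{equation*}
    \int \langle \psi(x), v\rangle_x\, \mathrm{d}\gamma(x,v) = \int \langle \psi(x), -\nabla\varphi_{\mu,\nu}(x)\rangle_x\, \mathrm{d}\mu(x) = 0,
\end{equation*}
and the Taylor expansion \eqref{eq:wg_m} for $\xi$ reduces to the same expansion with $\phi$ in place of $\xi$. Therefore $\phi \in T_\mu\cP_2(\cM) \cap \partial^-\cF(\mu) \cap \partial^+\cF(\mu)$.

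For \textbf{uniqueness}, suppose $\phi_1, \phi_2$ both lie in this intersection. Given any $\varphi \in C_c^\infty(\cM)$, consider the perturbation $\nu_t := (\exp_\id \circ (t\nabla\varphi))_\# \mu$ for small $t > 0$. I would invoke standard regularity (see, e.g., \citep{erbar2010heat}) to see that $\gamma_t := (\id, t\nabla\varphi)_\#\mu \in \exp_\mu^{-1}(\nu_t)$ for $t$ small enough, with $\W_2(\mu,\nu_t) \le t\|\nabla\varphi\|_{L^2(\mu,T\cM)}$. Writing the Taylor expansion \eqref{eq:wg_m} for both $\phi_1$ and $\phi_2$ against this $\gamma_t$ and subtracting yields
\begin{equation*}
    0 = t \int \langle \phi_1(x) - \phi_2(x), \nabla\varphi(x)\rangle_x\, \mathrm{d}\mu(x) + o(t).
\end{equation*}
Dividing by $t$ and letting $t \to 0^+$ gives $\int \langle \phi_1 - \phi_2, \nabla\varphi\rangle_x\,\mathrm{d}\mu = 0$ for every $\varphi \in C_c^\infty(\cM)$. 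Since $T_\mu\cP_2(\cM)$ is the $L^2(\mu,T\cM)$-closure of $\{\nabla\varphi : \varphi \in C_c^\infty(\cM)\}$, this orthogonality extends to the whole tangent space. But $\phi_1 - \phi_2 \in T_\mu\cP_2(\cM)$, so $\phi_1 = \phi_2$ in $L^2(\mu,T\cM)$.

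The main obstacle I expect is the construction of the perturbation $\nu_t$ in the uniqueness step: one needs to justify, on a general manifold, that $(\id, t\nabla\varphi)_\#\mu$ is indeed an optimal element of $\exp_\mu^{-1}(\nu_t)$ for small $t$ (so that the Taylor expansion applies) and that $\W_2(\mu,\nu_t) = O(t)$. On $\R^d$ this is immediate from $\nu_t = (\id + t\nabla\varphi)_\#\mu$, but on a Riemannian manifold it requires controlling the exponential map and the fact that, for small $t$, $t\nabla\varphi$ is a $c$-concave Kantorovich potential; this can be handled by the results of \citep{erbar2010heat, gigli2011inverse} under our compactness assumption, after which the remainder of the argument is linear.
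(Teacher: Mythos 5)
Your proof is correct, and the uniqueness argument takes a genuinely different route from the paper's. Both approaches hinge on the same technical lemma (the paper's \Cref{lemma:ot_map_local}, which you correctly flag as the main obstacle): that $(\id, s\nabla\varphi)_\#\mu$ is an element of $\exp_\mu^{-1}\big((\exp_\id\circ(s\nabla\varphi))_\#\mu\big)$ for $s$ small, so $\W_2(\mu,\nu_s) = s\|\nabla\varphi\|_{L^2(\mu,T\cM)}$. Where you diverge is in how the orthogonality is extracted. You exploit the equality form of the Taylor expansion~\eqref{eq:wg_m} (valid for any element of $\partial^-\cF(\mu)\cap\partial^+\cF(\mu)$), subtract the two expansions tested against a single arbitrary $\nabla\varphi$, and obtain the exact identity $\langle\phi_1-\phi_2,\nabla\varphi\rangle_{L^2(\mu,T\cM)}=0$; density of $\{\nabla\varphi\}$ together with $\phi_1-\phi_2\in T_\mu\cP_2(\cM)$ then finishes it in one line. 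The paper instead approximates $\xi,\xi'$ by two potentials $\nabla\varphi_\varepsilon,\nabla\varphi_\varepsilon'$ up to $\varepsilon/2$, writes out all four sub-/super-differential inequalities at the two corresponding targets, sums them to bound $\langle\xi-\xi',\nabla\varphi_\varepsilon-\nabla\varphi_\varepsilon'\rangle\le 0$, and parlays this into $\|\xi-\xi'\|\le\|\xi-\xi'-(\nabla\varphi_\varepsilon-\nabla\varphi_\varepsilon')\|\le\varepsilon$, then sends $\varepsilon\to 0$. Your route is shorter and arguably cleaner; the paper's is the more defensive "approximate-and-bound-the-norm" pattern, which does not require invoking the equality form directly and would degrade gracefully if one only had one-sided information. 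Your existence step (orthogonal projection onto $T_\mu\cP_2(\cM)$ preserves the gradient property because the optimal $\gamma$ is induced by $-\nabla\varphi_{\mu,\nu}\in T_\mu\cP_2(\cM)$) is not part of the paper's proof proper but matches the discussion that immediately precedes the proposition, so including it is harmless, indeed welcome.
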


\begin{proof}
    See \Cref{proof:prop_grad_unique_pm}.
\end{proof}

Another interesting property of gradients belonging to the tangent space is that they are actually strong differentials, meaning that they satisfy the Taylor expansion along any coupling, \emph{i.e.}, for any $\nu\in\cP_2(\cM)$ and $\gamma\in\cP_2(T\cM)$ such that $\pi^\cM_\#\gamma=\mu$ and $\exp_\#\gamma=\nu$, $\gW\cF(\mu)\in T_\mu\cP_2(\cM)$ satisfies
\begin{equation}
    \cF(\nu) = \cF(\mu) + \int \langle\gW\cF(\mu)(x), v\rangle_x\ \mathrm{d}\gamma(x,v) + o\left(\sqrt{\int \|v\|_x^2\ \mathrm{d}\gamma(x,v)}\right).
\end{equation}
\citet[Lemma 3.2]{erbar2010heat} showed this property for couplings obtained through maps. We extend it for any coupling in the next Proposition. %
First, for $\mu \in \cP_2(\cM)$ fixed, we define $\cP_2(T\cM)_\mu := \{\gamma \in \cP_2(T\cM) \,|\, \pi^\cM_\#\gamma = \mu\}$. For every $\gamma \in \cP_2(T\cM)_\mu$, we define $\|\gamma\|^2_\mu := \int \|v\|^2_x \dd\gamma(x,v)$, and we further define its barycentric projection to be the unique vector field $\mathcal{B}(\gamma) \in L^2(\mu,T\cM)$ such that for every $\xi \in L^2(\mu,T\cM)$,
\begin{equation}
    \int \sca{\xi(x)}{v} \dd\gamma(x,v) = \int \sca{\xi(x)}{\mathcal{B}(\gamma)(x)} \dd\mu(x) = \sca{\xi}{\mathcal{B}(\gamma)}_{L^2(\mu)},
\end{equation}
(see \citep[Chapter 6]{gigli2011inverse}). Note that the barycentric projection satisfies $\|\mathcal{B}(\gamma)\|_{L^2(\mu)} \leq \|\gamma\|_\mu$.

\begin{proposition} \label{prop:strong_grad_erbar}
    Let $\xi \in \partial^- \cF(\mu) \cap T_\mu \cP_2(\cM)$. Then $\xi$ is an (extended) strong subdifferential of $\cF$ at $\mu$, i.e. for every $\gamma \in \cP_2(T\cM)_\mu$, 
    \begin{equation}
        \cF(\exp_\#\gamma) \ge \cF(\mu) + \int \sca{\xi(x)}{v} \dd\gamma(x,v) + o(\|\gamma\|_\mu).
    \end{equation}
    By symmetry of the arguments, it also holds for superdifferentials and gradients.
\end{proposition}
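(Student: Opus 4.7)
The plan is to reduce the strong subdifferential statement to the hypothesized (optimal) subdifferential inequality, plus a bound comparing the action of $\xi$ against an arbitrary transport plan and against an optimal one, and then to exploit that $\xi$ lies in the closure of smooth gradients to get this comparison.

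First, given $\gamma\in\cP_2(T\cM)_\mu$, set $\nu=\exp_\#\gamma$. Since every geodesic $t\mapsto\exp_x(tv)$ has length $\|v\|_x$, one has $d(x,\exp_x(v))\le\|v\|_x$, hence $\W_2(\mu,\nu)\le\|\gamma\|_\mu$. Pick any optimal plan $\gamma^\ast\in\exp_\mu^{-1}(\nu)$, which by definition satisfies $\|\gamma^\ast\|_\mu=\W_2(\mu,\nu)$. Applying the (extended) subdifferential hypothesis with $\gamma^\ast$ in the supremum yields
\begin{equation*}
    \cF(\nu)\ge \cF(\mu)+\int\langle\xi(x),v\rangle_x\,\dd\gamma^\ast(x,v)+o(\W_2(\mu,\nu)),
\end{equation*}
and the remainder is already $o(\|\gamma\|_\mu)$ by the bound above. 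Thus the proposition reduces to showing
\begin{equation*}
    \int\langle\xi,v\rangle\,\dd\gamma \;=\; \int\langle\xi,v\rangle\,\dd\gamma^\ast \;+\; o(\|\gamma\|_\mu).
\end{equation*}

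I would first establish this comparison for $\xi=\nabla\varphi$ with $\varphi\in C_c^\infty(\cM)$. Using Taylor expansion of $\varphi$ along each geodesic and the uniform Hessian bound $C_\varphi$ coming from compact support, one obtains
\begin{equation*}
    \varphi(\exp_x(v))-\varphi(x)=\langle\nabla\varphi(x),v\rangle_x+R_\varphi(x,v),\qquad |R_\varphi(x,v)|\le C_\varphi\|v\|_x^2.
\end{equation*}
Integrating this identity against any plan $\gamma'$ with $\pi^\cM_\#\gamma'=\mu$ and $\exp_\#\gamma'=\nu$ gives
\begin{equation*}
    \int\varphi\,\dd\nu-\int\varphi\,\dd\mu=\int\langle\nabla\varphi,v\rangle\,\dd\gamma'+O(C_\varphi\|\gamma'\|_\mu^2).
\end{equation*}
Applying this with $\gamma'=\gamma$ and $\gamma'=\gamma^\ast$, subtracting, and using $\|\gamma^\ast\|_\mu\le\|\gamma\|_\mu$, produces a comparison with error $O(C_\varphi\|\gamma\|_\mu^2)$.

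I would then extend the comparison to all $\xi\in T_\mu\cP_2(\cM)$ by density. Given $\epsilon>0$, choose $\varphi\in C_c^\infty(\cM)$ with $\|\xi-\nabla\varphi\|_{L^2(\mu,T\cM)}<\epsilon$; this fixes $C_\varphi$. By Cauchy–Schwarz, implemented via the barycentric projection of \citep{gigli2011inverse} together with $\|\mathcal{B}(\gamma')\|_{L^2(\mu)}\le\|\gamma'\|_\mu$,
\begin{equation*}
    \Bigl|\int\langle\xi-\nabla\varphi,v\rangle\,\dd\gamma\Bigr|+\Bigl|\int\langle\xi-\nabla\varphi,v\rangle\,\dd\gamma^\ast\Bigr|\le 2\epsilon\|\gamma\|_\mu.
\end{equation*}
Combined with the smooth case, the total difference between the two integrals is bounded by $2\epsilon\|\gamma\|_\mu+C_\varphi\|\gamma\|_\mu^2$. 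For $\|\gamma\|_\mu$ small enough, depending on $\varphi$ (hence on $\epsilon$), the quadratic term is dominated by $\epsilon\|\gamma\|_\mu$, so the total is at most $3\epsilon\|\gamma\|_\mu$. Since $\epsilon$ is arbitrary, this proves the comparison is $o(\|\gamma\|_\mu)$, and the superdifferential and gradient statements follow by applying the above to $-\xi$, $-\cF$ and by intersecting the resulting inequalities.

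The main obstacle is the density step: the Hessian constant $C_\varphi$ deteriorates as $\nabla\varphi$ approximates $\xi$ more closely, so the quadratic remainder is not uniform over the approximating sequence. The decisive point is the order of quantifiers: fix $\varphi$ (hence $C_\varphi$) so that the $L^2$-error is below $\epsilon$, and only then let $\|\gamma\|_\mu\to 0$ to suppress the $C_\varphi\|\gamma\|_\mu^2$ term, which is precisely the refinement over \citep[Lemma 3.2]{erbar2010heat} that allows non-map-induced plans.
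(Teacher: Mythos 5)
Your proof is correct. The key reduction — showing that $\int\langle\xi,v\rangle\,\mathrm{d}\gamma$ and $\int\langle\xi,v\rangle\,\mathrm{d}\gamma^\ast$ differ by $o(\|\gamma\|_\mu)$, first for cylinder gradients $\nabla\varphi$ via the quadratic Hessian remainder, then for general $\xi$ by density and the quantifier order (fix $\varphi$, then shrink $\|\gamma\|_\mu$) — is sound, and your handling of the non-uniformity of $C_\varphi$ across the approximating sequence is exactly the right point to be careful about. You also correctly pass from $o(\W_2(\mu,\nu))$ to $o(\|\gamma\|_\mu)$ using $\W_2(\mu,\nu)\leq\|\gamma\|_\mu$.

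Your route differs from the paper's in presentation but uses the same key ingredients. The paper argues by contradiction: it assumes a violating sequence $\gamma_n$ with $\veps_n=\|\gamma_n\|_\mu\to 0$, forms barycentric projections $\Psi_n=\mathcal{B}(\gamma_n)$ and $\Phi_n=\mathcal{B}(\eta_n)$ for optimal $\eta_n$, extracts weakly convergent subsequences of $\veps_n^{-1}\Psi_n$, $\veps_n^{-1}\Phi_n$ (using weak compactness of bounded sets in Hilbert space), and then uses the same cylinder Taylor estimate (the paper's Lemma~\ref{lemma:cylinder_taylor_expansion}) plus density of $\{\nabla\varphi\}$ to show the weak limits must agree, contradicting the assumed gap. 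Your direct version replaces weak compactness by a straightforward $\epsilon$--$\delta$ argument: it is more elementary, avoids passing to subsequences, and makes the dependence of constants transparent, which is arguably cleaner here. The paper's contradiction form is slightly more in line with the style of \citep{erbar2010heat} and generalizes readily to settings where one wants to argue about accumulation points rather than explicit rates, but for this statement nothing is lost by your direct approach. Both proofs reduce to the same quantitative fact: the actions of any plan $\gamma$ and an optimal plan $\gamma^\ast$ on a fixed cylinder gradient differ by $O(C_\varphi\|\gamma\|_\mu^2)$, and this propagates to all of $T_\mu\cP_2(\cM)$ by density and Cauchy--Schwarz via barycentric projection.
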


\begin{proof}
    See \Cref{proof:prop_strong_grad_erbar}.
\end{proof}

We now derive the Wasserstein gradients of well known functionals such as potential energies and interaction energies.

\begin{proposition} \label{prop:v_diff_pm}
    Let $V:\cM\to\R$ be twice differentiable with Hessian bounded in operator norm by $L$ for all $x\in\cM$, \emph{i.e.} $\|\mathrm{Hess}_\cM V(x)\| = \max_{v\in T_x\cM, \|v\|_x=1}\ \|\mathrm{Hess}_\cM V(x)[v]\|_x\le L$, and $\cV:\mu\mapsto \int V\mathrm{d}\mu$. Then $\cV$ is differentiable with gradient $\gW\cV(\mu) = \nabla_\cM V$ for any $\mu\in\cP_2(\cM)$.
\end{proposition}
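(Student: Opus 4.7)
The plan is to reduce the Wasserstein Taylor expansion to the pointwise second-order Taylor expansion of $V$ along geodesics, integrated against any $\gamma\in\exp_\mu^{-1}(\nu)$. The key observation is that $\gamma$ parametrizes how mass travels along geodesics between $\mu$ and $\nu$, and the definition gives $\int\|v\|_x^2\,\mathrm{d}\gamma(x,v)=\W_2^2(\mu,\nu)$, so a pointwise second-order remainder will integrate to an $O(\W_2^2)$ remainder, which is $o(\W_2)$.

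First I would check that $\nabla V\in L^2(\mu, T\cM)$, so that the candidate gradient actually lives in the right Hilbert space. The bounded Hessian hypothesis implies that $x\mapsto\|\nabla V(x)\|_x$ grows at most linearly in $d(x,x_0)$ for any fixed $x_0$ (integrating along a minimizing geodesic, the parallel transport of the gradient gains norm at rate $\le L$). Combined with $\mu\in\cP_2(\cM)$, this yields $\int\|\nabla V(x)\|_x^2\,\mathrm{d}\mu(x)<\infty$.

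Next I would write down the one-dimensional Taylor expansion for $f(t):=V(\exp_x(tv))$ along the geodesic $t\mapsto \exp_x(tv)$. Standard Riemannian calculus gives $f'(0)=\langle\nabla V(x),v\rangle_x$ and $f''(t)=\Hess V(\exp_x(tv))[\dot\gamma(t),\dot\gamma(t)]$ where $\dot\gamma$ is the parallel-transported initial velocity, with $\|\dot\gamma(t)\|=\|v\|_x$. The uniform bound on $\|\Hess V\|$ therefore gives the pointwise estimate
\begin{equation}
    \bigl|V(\exp_x(v))-V(x)-\langle\nabla V(x),v\rangle_x\bigr|\le \frac{L}{2}\|v\|_x^2
\end{equation}
for every $(x,v)\in T\cM$.

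Then, for any $\nu\in\cP_2(\cM)$ and any $\gamma\in\exp_\mu^{-1}(\nu)$, since $\pi^\cM_\#\gamma=\mu$ and $\exp_\#\gamma=\nu$,
\begin{equation}
    \cV(\nu)-\cV(\mu)=\int\bigl(V(\exp_x(v))-V(x)\bigr)\,\mathrm{d}\gamma(x,v).
\end{equation}
Plugging in the pointwise bound and using $\int\|v\|_x^2\,\mathrm{d}\gamma(x,v)=\W_2^2(\mu,\nu)$,
\begin{equation}
    \Bigl|\cV(\nu)-\cV(\mu)-\int\langle\nabla V(x),v\rangle_x\,\mathrm{d}\gamma(x,v)\Bigr|\le\frac{L}{2}\W_2^2(\mu,\nu)=o(\W_2(\mu,\nu)).
\end{equation}
Since this holds for every $\gamma\in\exp_\mu^{-1}(\nu)$, the same $\nabla V$ serves simultaneously as sub- and super-differential in the sense of \Cref{def:sub_diff} (or rather its $\cP_2(\cM)$ analogue), hence $\cV$ is Wasserstein differentiable at $\mu$ with $\gW\cV(\mu)=\nabla V$.

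There is no real obstacle here: the only mildly delicate point is the Riemannian Taylor formula with a uniform Hessian bound, which is standard once one expresses $f''$ via parallel transport; everything else is a direct integration against $\gamma$ and uses the defining identity $\int\|v\|_x^2\,\mathrm{d}\gamma=\W_2^2(\mu,\nu)$ of elements of $\exp_\mu^{-1}(\nu)$.
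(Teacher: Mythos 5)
Your proof is correct and follows essentially the same route as the paper: pointwise second-order Riemannian Taylor expansion of $V$ along geodesics under the uniform Hessian bound, integration against an arbitrary $\gamma\in\exp_\mu^{-1}(\nu)$ using the defining identity $\int\|v\|_x^2\,\mathrm{d}\gamma=\W_2^2(\mu,\nu)$ to make the remainder $O(\W_2^2)=o(\W_2)$, and the same parallel-transport-with-bounded-Hessian argument to check $\nabla V\in L^2(\mu,T\cM)$. The only cosmetic difference is your tighter constant $L/2$ in the remainder bound.
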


\begin{proof}
    See \Cref{proof:prop_v_diff_pm}.
\end{proof}

\begin{proposition} \label{prop:w_diff_pm}
    Let $W:\cM\times \cM\to\R$ be twice differentiable with Hessian for both arguments bounded in operator norm, and $\cW:\mu\mapsto \iint W(x,y)\mathrm{d}\mu(x)\mathrm{d}\mu(y)$. Then $\cW$ is differentiable with gradient $\gW\cW(\mu)(x) = \int \big(\nabla_1 W(x,y) + \nabla_2 W(y, x)\big)\ \mathrm{d}\mu(y)$ for any $\mu\in\cP_2(\cM)$, $x\in \cM$.
\end{proposition}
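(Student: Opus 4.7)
The plan is to mirror the argument for potential energies (\Cref{prop:v_diff_pm}) but applied to the product manifold $\cM \times \cM$, exploiting the symmetry of the double integral to recover the formula $\gW\cW(\mu)(x) = \int (\nabla_1 W(x,y) + \nabla_2 W(y,x))\,\mathrm{d}\mu(y)$.

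Fix $\mu \in \cP_2(\cM)$ and $\nu \in \cP_2(\cM)$, and pick any $\gamma \in \exp_\mu^{-1}(\nu)$. Using that $\nu = \exp_\#\gamma$ together with the product structure on $\cM \times \cM$ (whose exponential map is $(x,y) \mapsto (\exp_x(v), \exp_y(w))$), I would write
\begin{equation*}
\cW(\nu) - \cW(\mu) = \iint \bigl[W(\exp_x(v), \exp_y(w)) - W(x,y)\bigr]\,\mathrm{d}\gamma(x,v)\,\mathrm{d}\gamma(y,w).
\end{equation*}
Viewing $W$ as a function on the Riemannian product $\cM \times \cM$ with its natural metric, the boundedness of the Hessians with respect to each argument yields a uniform second-order Taylor expansion
\begin{equation*}
W(\exp_x(v), \exp_y(w)) = W(x,y) + \langle \nabla_1 W(x,y), v\rangle_x + \langle \nabla_2 W(x,y), w\rangle_y + R(x,v,y,w),
\end{equation*}
with $|R(x,v,y,w)| \le C(\|v\|_x^2 + \|w\|_y^2)$ for a constant $C$ depending only on the Hessian bound. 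This is the key technical input, and is the step I expect to be the most delicate: one needs the bound on $R$ to be global (not merely pointwise), which follows from the uniform Hessian bound together with the mean value theorem along geodesics in $\cM \times \cM$.

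Plugging the expansion into the integral, the linear part splits in two. For the first piece, since $\nabla_1 W(x,y)$ does not depend on $w$, integrating out $w$ uses $\pi^\cM_\#\gamma = \mu$ to give
\begin{equation*}
\iint \langle \nabla_1 W(x,y), v\rangle_x\,\mathrm{d}\gamma(x,v)\,\mathrm{d}\gamma(y,w) = \int \Bigl\langle \int \nabla_1 W(x,y)\,\mathrm{d}\mu(y),\, v\Bigr\rangle_x\,\mathrm{d}\gamma(x,v).
\end{equation*}
For the second piece, I would integrate out $v$ (again using $\pi^\cM_\#\gamma = \mu$) and then rename $(y,w) \to (x,v)$ to obtain the symmetric term $\int \langle \int \nabla_2 W(y,x)\,\mathrm{d}\mu(y),\, v\rangle_x\,\mathrm{d}\gamma(x,v)$. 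Summing gives exactly the claimed gradient tested against $v$.

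Finally, the remainder satisfies
\begin{equation*}
\iint |R(x,v,y,w)|\,\mathrm{d}\gamma(x,v)\,\mathrm{d}\gamma(y,w) \le 2C \int \|v\|_x^2\,\mathrm{d}\gamma(x,v) = 2C\,\W_2^2(\mu,\nu) = o\bigl(\W_2(\mu,\nu)\bigr),
\end{equation*}
using the definition of $\exp_\mu^{-1}(\nu)$. This matches the Taylor expansion of \Cref{def:w_grad_pm} with the same expression on both sides, so the identified vector field is simultaneously a sub- and super-differential, hence a Wasserstein gradient. One last verification I would make is that $x \mapsto \int(\nabla_1 W(x,y) + \nabla_2 W(y,x))\,\mathrm{d}\mu(y)$ lies in $L^2(\mu, T\cM)$, which follows from the Hessian bound (implying linear growth of $\nabla W$) together with $\mu \in \cP_2(\cM)$.
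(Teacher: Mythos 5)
Your proposal is correct and follows essentially the same route as the paper: both proofs view $W$ as a function on the Riemannian product $\cM\times\cM$, use the second-order Taylor expansion along geodesics with a remainder controlled by the uniform Hessian bound, push the double integral against $\gamma\otimes\gamma$, and split the linear term by integrating out one coordinate using $\pi^\cM_\#\gamma=\mu$. The only cosmetic caveat is that your final $L^2(\mu,T\cM)$ verification — ``linear growth of $\nabla W$'' — should, on a general manifold, be phrased via parallel transport as in the paper's \eqref{eq:bound_pt}, since tangent spaces at different points are not directly comparable.
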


\begin{proof}
    See \Cref{proof:prop_w_diff_pm}.
\end{proof}

We also introduce the notion of Hessian on the Wasserstein space, which will be useful to derive smoothness assumptions for the WoW gradients to be well defined.

\begin{definition} \label{def:wasserstein_hessian}
    Let $\cF:\cP_2(\cM)\to\R$. Let $\mu\in \cP_2(\cM)$. The Wasserstein Hessian of $\cF$ at $\gamma\in\exp_\mu^{-1}(\nu)$ for some $\nu\in\cP_2(\cM)$, is a map $\mathrm{H}\cF_\gamma:T\cM\to T\cM$ verifying $\frac{\mathrm{d}^2}{\mathrm{d}t^2}\cF(\mu_t)\big|_{t=0} = \int \langle\mathrm{H}\cF_\gamma(x, v), v\rangle_x\ \mathrm{d}\gamma(x,v)$ for a constant-speed geodesic $\mu_t=\big(\exp_{\pi^\cM}\circ (t\pi^{\mathrm{v}})\big)_\#\gamma$ with $\gamma\in\exp_\mu^{-1}(\nu)$.
\end{definition}

\paragraph{Wasserstein Gradient Flows.} A Wasserstein gradient flow of $\cF:\cP_2(\cM)\to\R$ is defined as a curve $t\mapsto \mu_t$ on an interval $I$, which is a weak solution of the continuity equation
\begin{equation}
    \partial_t\mu_t = \mathrm{div}\big(\mu_t\gW\cF(\mu_t)\big),
\end{equation}
\emph{i.e.}, which satisfies for any $\varphi\in C_c^\infty(I\times\cM)$,
\begin{equation}
    \int_I\int_\cM \big(\partial_t\varphi_t(x) - \langle \nabla_\cM \varphi_t(x), \gW\cF(\mu_t)(x)\rangle_x\big)\ \mathrm{d}\mu_t(x)\mathrm{d}t = 0.
\end{equation}

Usually, such equation needs to be approximated by a scheme discretized in time. A common way to do it is through the Jordan-Kinderlehrer-Otto (JKO) scheme introduced in \citep{jordan1998variational}, which is of the form 
\begin{equation} \label{eq:jko_scheme}
    \forall k\ge 0,\ \mu_{k+1} \in \argmin_{\mu\in\cP_2(\cM)}\ \frac{\W_2^2(\mu,\mu_k)}{2\tau} + \cF(\mu).
\end{equation}
Under suitable conditions, this scheme converges towards the Wasserstein gradient flow of $\cF$ \citep{ambrosio2005gradient, erbar2010heat}. However, this scheme is generally costly to compute, as it requires to solve an optimization problem at each iteration. In practice, it is more convenient to rely on an explicit discretization, which can be seen as a Riemannian Wasserstein gradient descent \citep{bonnabel2013stochastic, bonet2024sliced}, which is of the form
\begin{equation} \label{eq:rwgd}
    \forall k\ge 0,\ \mu_{k+1} = \exp_{\mu_k}\big(-\tau \gW\cF(\mu_k)\big).
\end{equation}
We note that this scheme can be obtained by linearizing the objective in \eqref{eq:jko_scheme}. Indeed, if $\mu_k\in\cPa{\cM}$, \eqref{eq:jko_scheme} can be written as
\begin{equation} \label{eq:jko_maps}
    \begin{cases}
        \T_{k+1} = \argmin_{\T\in L^2(\mu_k,T\cM)}\ \frac{1}{2\tau}\int\|\T(x)\|_x^2\ \mathrm{d}\mu_k(x) + \cF\big((\exp\circ \T)_\#\mu_k\big) \\
        \mu_{k+1} = (\exp\circ \T_{k+1})_\#\mu_k.
    \end{cases}
\end{equation}
Using the coupling $\gamma = (\id, \exp\circ\T)_\#\mu_k \in \Pi(\mu_k, (\exp\circ \T)_\#\mu_k)$ and that $\gW\cF(\mu_k)$ is a strong differential, then we have that
\begin{equation}
    \cF\big((\exp\circ \T)_\#\mu_k\big) = \cF(\mu_k) + \int \langle \gW\cF(\mu_k)(x), \T(x)\rangle_x\ \mathrm{d}\mu_k(x) + o\left(\sqrt{\int \|\T(x)\|_x^2\ \mathrm{d}\mu_k(x)}\right).
\end{equation}
Plugging this linearization in \eqref{eq:jko_maps}, we obtain
\begin{equation}
    \T_{k+1} \in \argmin_{\T\in L^2(\mu_k, T\cM)}\ \frac{1}{2\tau} \|\T\|_{L^2(\mu_k, T\cM)}^2 + \langle \gW\cF(\mu_k), \T\rangle_{L^2(\mu_k, T\cM)}.
\end{equation}
Taking the first order condition, we recover \eqref{eq:rwgd} as $\T_{k+1} = -\tau\gW\cF(\mu_k)$.

\paragraph{Wasserstein Gradient Descent.}
In practice, we usually work with particles, \emph{i.e.} we start at $\mu_0=\frac{1}{n}\sum_{i=1}^n \delta_{x_{i,0}}$, and update each particle at each iteration $k\ge 0$ as 
\begin{equation}
    \forall i\in \{1,\dots,n\},\ x_{i,k+1} = \exp_{x_{i,k}}\big(-\tau\gW\cF(\mu_k)(x_{i,k})\big)
\end{equation}
for $\mu_k=\frac{1}{n}\sum_{i=1}^n \delta_{x_{i,k}}$.
In particular, for $\cM=\R^d$, the scheme is obtained as 
\begin{equation}
    \forall i\in \{1,\dots,n\},\ x_{i,k+1} = x_{i,k} -\tau\gW\cF(\mu_k)(x_{i,k}).
\end{equation}
Moreover, if the functional $\cF:\cP_2(\R^d)\to\R$ has a closed-form over discrete measures, \emph{i.e.}, there exists $F:(\R^d)^n\to \R$ such that $\cF\big(\frac{1}{n}\sum_{i=1}^n \delta_{x_i}\big) = F(x_1,\dots,x_n)$, then we can use backpropagation on $F$ and use that $\gW\cF(\frac{1}{n}\sum_{i=1}^n \delta_{x_{i,k}})(x_{i,k})=n\nabla_i F(x_1,\dots,x_n)$.
\begin{proposition} \label{prop:backprop_grad}
    Let $\cF:\cP_2(\R^d)\to \R$ a Wasserstein differentiable functional, and $F:(\R^d)^n\to \R$ such that for any $\mbf{x}=(x_1,\dots,x_n)\notin \Delta_n := \{\mbf{x} \in (\R^d)^n \,|\, \exists i \neq j, x_i = x_j\}$ and $\mu_n=\frac{1}{n}\sum_{i=1}^n\delta_{x_i}$, $\cF(\mu_n) = F(x_1,\dots,x_n)$. Then, for all $i\in \{1,\dots,n\}$,
    \begin{equation}
        \gW\cF(\mu_n)(x_i)=n\nabla_i F(x_1,\dots,x_n).
    \end{equation}
\end{proposition}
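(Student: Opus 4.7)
The plan is to test the defining Taylor expansion of the Wasserstein gradient against one-particle perturbations. Fix $i \in \{1,\dots,n\}$ and an arbitrary direction $v_i \in \R^d$. Since $\mathbf{x} \notin \Delta_n$, the points $x_1,\dots,x_n$ are pairwise distinct, so there exists $t_0 > 0$ such that for every $|t| < t_0$ the perturbed configuration $x_1,\dots,x_{i-1},\, x_i + t v_i,\, x_{i+1},\dots,x_n$ is still pairwise distinct and moreover $x_i + t v_i$ is strictly closer to $x_i$ than to any other $x_j$. Define
\begin{equation*}
    \mu_n^t := \tfrac{1}{n} \sum_{j \neq i} \delta_{x_j} + \tfrac{1}{n}\delta_{x_i + t v_i}.
\end{equation*}
For $|t| < t_0$, the unique optimal transport plan between $\mu_n$ and $\mu_n^t$ matches $x_j$ to itself for $j \neq i$ and $x_i$ to $x_i + t v_i$, yielding $\W_2^2(\mu_n, \mu_n^t) = \tfrac{t^2}{n} \|v_i\|^2$.

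Next, I would consider the plan $\gamma_t := \tfrac{1}{n} \sum_{j \neq i} \delta_{(x_j, 0)} + \tfrac{1}{n}\delta_{(x_i, t v_i)} \in \cP_2(T\R^d)$. Using $\exp_x(v) = x+v$ in $\R^d$, one checks $\pi^{\R^d}_\# \gamma_t = \mu_n$, $\exp_\# \gamma_t = \mu_n^t$, and $\int \|v\|^2 \dd\gamma_t = \tfrac{t^2}{n}\|v_i\|^2 = \W_2^2(\mu_n, \mu_n^t)$, so $\gamma_t \in \exp_{\mu_n}^{-1}(\mu_n^t)$. Plugging into the Wasserstein Taylor expansion (Definition \ref{def:w_grad_pm}) gives
\begin{equation*}
    \cF(\mu_n^t) = \cF(\mu_n) + \tfrac{t}{n} \langle \gW \cF(\mu_n)(x_i),\, v_i \rangle + o(|t|).
\end{equation*}

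On the other hand, the hypothesis $\cF(\mu_n^t) = F(x_1,\dots,x_i + t v_i,\dots,x_n)$ shows that $t \mapsto F(\dots, x_i + t v_i, \dots)$ is differentiable at $t=0$ with derivative $\tfrac{1}{n}\langle \gW \cF(\mu_n)(x_i), v_i \rangle$. Hence $F$ admits directional derivatives at $\mathbf{x}$ in every direction of the $i$-th slot, and these are linear in $v_i$, so $\nabla_i F(\mathbf{x})$ exists and satisfies $\langle n\, \nabla_i F(\mathbf{x}), v_i \rangle = \langle \gW \cF(\mu_n)(x_i), v_i\rangle$ for every $v_i$, giving the claimed identity.

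The only subtle step is the optimality of $\gamma_t$; this is essentially a stability statement for the OT plan between empirical measures with distinct atoms under small perturbation of one atom, and follows from the elementary combinatorial argument sketched above (comparing the transport cost $\tfrac{t^2}{n}\|v_i\|^2$ with any alternative assignment involving a swap, which would incur a cost bounded below by the squared minimum interpoint distance, hence larger for $|t|$ sufficiently small). Once this is in hand, the rest is a direct matching of first-order expansions.
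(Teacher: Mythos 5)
Your route is genuinely different from the paper's, and most of it works, but the final inference has a real gap. The paper perturbs all $n$ atoms at once, $x_j\mapsto x_j+h_j$, and plugs the \emph{a priori non-optimal} plan $\gamma_n=\frac1n\sum_j\delta_{(x_j,x_j+h_j)}$ into a strong differential expansion valid for any coupling in $\Pi(\mu_n,\nu)$ (the Lanzetti et al.\ result cited in the proof), which yields joint Fréchet differentiability of $F$ at $\mbf{x}$ together with the formula in one step. You instead perturb a single atom along a ray, check by hand that the induced plan is optimal when $|t|$ is small relative to the atom spacing, and invoke only the Taylor expansion along optimal plans; this avoids the strong-differential property and is a genuine simplification of the hypotheses actually used.

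However, what you establish as written is only that for each fixed $v_i$ the map $t\mapsto F(\ldots,x_i+tv_i,\ldots)$ is differentiable at $0$, i.e.\ Gâteaux differentiability in the $i$-th slot, and the step ``the directional derivatives are linear in $v_i$, hence $\nabla_i F(\mbf{x})$ exists'' is not a valid inference: there are functions on $\R^d$ whose directional derivatives at a point all exist and are linear in the direction without the function being Fréchet differentiable there (for instance $f(x,y)=x^3y/(x^4+y^2)$, $f(0,0)=0$, at the origin). The repair is cheap and uses only ingredients you already have: run the same argument with a general small perturbation $h_i\in\R^d$ rather than a ray. For $\|h_i\|<\tfrac12\min_{j\neq k}\|x_j-x_k\|$, the identity-like plan is still the unique optimal one by the same combinatorial comparison, $\W_2(\mu_n,\mu_n^{h_i})=\|h_i\|/\sqrt n$, and the $o\big(\W_2(\mu_n,\mu_n^{h_i})\big)$ remainder in the definition of the Wasserstein gradient becomes a genuine $o(\|h_i\|)$, which is exactly Fréchet differentiability of the $i$-th partial of $F$ with the claimed gradient. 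Note also that the paper's joint perturbation gives the (slightly stronger, though not strictly needed) joint differentiability of $F$ at $\mbf{x}$, whereas the slot-by-slot argument yields only the partial gradients.
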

\begin{proof}
    See \Cref{proof:prop_backprop_grad}.
\end{proof}

\section{Wasserstein over Wasserstein Space} \label{appendix:wow}

\subsection{Function Spaces on $\cP_2(M)$} \label{appendix:wow_l2}

In this section we fix $\bP \in \cP_2(\cP_2(\cM))$. Recall that we define the tangent space %
to $\cP_2(\cM)$ at $\mu$ %
by $T_\mu\cP_2(\cM) := \overline{\{\nabla \varphi,\ \varphi \in C^\infty_c(M)\}}^{L^2(\mu,T\cM)}$ for every $\mu\in\cP_2(\cM)$. We also define the larger tangent space $\TDer\cP_2(\cM)$ by $\TDer_\mu \cP_2(\cM) := \overline{\Gamma(\cM,T\cM)}^{L^2(\mu,T\cM)}$ where $\Gamma(\cM,T\cM)$ is the space of smooth vector fields on $\cM$, \emph{i.e.} smooth maps from $\cM$ to $T\cM$. Our goal is to rigorously define the space $L^2(\bP,T\cP_2(\cM))$ and to show that it is indeed a Hilbert space. \newline

Let $B \subseteq \cM$ open, then the map $\mu \in \cP_2(\cM) \mapsto \mu(B)$ is Borel, indeed, it is lower semicontinuous by \citep[Equation (5.1.16)]{ambrosio2005gradient}. Thus, the map $\mu \in X \mapsto \mu \in \cP_2(Y)$ with $X = \cP_2(\cM)$ and $Y = \cM$ is a Borel map (in the sense of measure-valued maps), and the formula
\begin{equation}
    \tilde{\bP}(f) = \int_{\cP_2(\cM)} \int_{\cM} f(\mu,x) \dd \mu(x) \dd \bP(\mu)
\end{equation}
defines a probability measure $\tilde{\bP}$ on $\cP_2(\cM) \times \cM$ (we follow the same reasoning as in \citep[Section 5.3]{ambrosio2005gradient}). \newline

We then define $L^2(\bP,T\cM)$ to be the quotient of the space of measurable functions $f : \cP_2(\cM) \times \cM \to T\cM$, such that $f(\mu,x) \in T_x\cM$ for every $(\mu,x) \in \cP_2(\cM) \times \cM$, by the equivalence relation corresponding to equality $\tilde{\bP}$-almost everywhere, and we equip it with the norm $\|\cdot\|_{L^2(\bP)}$ defined by
\begin{equation}
    \|f\|_{L^2(\bP)}^2 := \int_{\cP_2(\cM)} \|f(\mu)\|^2_{L^2(\mu)} \dd\bP(\mu)
\end{equation}
(we view $f \in L^2(\bP,T\cM)$ interchangeably as a function with signatures $\cP_2(\cM) \times \cM \to T\cM$ and $\cP_2(\cM) \to (\cM \to T\cM)$, hence the notation $f(\mu)$). It is a Hilbert space: indeed, if $\cM$ is an open set $U \subseteq \R^n$, then since $TU = U \times \R^n$, $L^2(\bP,TU)$ is a Hilbert space as it is the direct sum of $n$ copies of the Hilbert space $L^2(\tilde{\bP})$, and in the general case, we can show that $L^2(\bP,T\cM)$ is complete by showing that Cauchy sequences converge, by using local charts and a partition of unity of $\cM$ to fall back on the case where $\cM$ is an open of $\R^n$. \newline

We can now define $L^2(\bP,T\cP_2(\cM))$ as the space of functions $f \in L^2(\bP,T\cM)$ such that $f(\mu) \in T_\mu\cP_2(\cM)$ for $\bP$-ae $\mu$. It is closed in $L^2(\bP,T\cM)$ and is therefore a Hilbert space. Indeed, if $\{f_n\}_{n=1}^\infty \subseteq L^2(\bP,T\cP_2(\cM))$ converges to $f \in L^2(\bP,T\cM)$, 
\begin{equation}
    \lim_{n \to \infty} \int \|f_n(\mu) - f(\mu)\|^2_{L^2(\mu)} \dd \bP(\mu) = 0.
\end{equation}
This implies that, up to extracting a subsequence, we have $\|f_n(\mu) - f(\mu)\|_{L^2(\mu)} \to 0$ for $\bP$-ae $\mu$\footnote{We recall the argument. For every $\veps > 0$, we have $\bP[\|f(\mu)-f_n(\mu)\|_{L^2(\mu)} \geq \veps] \to 0$ as $\bP[\|f(\mu)-f_n(\mu)\|_{L^2(\mu)} \geq \veps] \leq \frac{1}{\veps^2} \int \|f(\mu)-f_n(\mu)\|^2_{L^2(\mu)}\dd\bP(\mu)$. So, up to extracting a subsequence, we may assume that for every $n$, $\bP[\|f(\mu)-f_n(\mu)\|_{L^2(\mu)} \geq n^{-1}] \leq \frac{1}{n^2}$. Then, we can check that the set $A = \bigcap_N \bigcup_{n \geq N} \{\|f(\mu)-f_n(\mu)\|_{L^2(\mu)} \geq n^{-1}\}$ has null $\bP$-measure and that for any $\mu \notin A$, $f_n(\mu) \to f(\mu)$ in $L^2(\mu)$.}. But since the $T_\mu \cP_2(\cM)$ are Hilbert spaces, this implies that $f(\mu) \in T_\mu \cP_2(\cM)$ for $\bP$-ae $\mu$, and $f$ indeed belongs to $L^2(\bP,T\cP_2(\cM))$. We define similarly $L^2(\bP,\TDer\cP_2(\cM))$ and show that it is a Hilbert space. This latter space $\TDer \cP_2(\cM)$ is useful in that it allows us to define a notion of differential for $\W_2$-Lipschitz functions on $\cP_2(\cM)$, as we will see in the next subsection.

\subsection{Lipschitz Functions and Rademacher Property}

For every smooth vector field $w \in \Gamma(\cM,T\cM)$, let $(\psi^{w,t})_{t \in \R}$ be its flow on $\cM$, that is, the diffeomorphic flow solution of 
\begin{equation}
    \begin{cases}
        \forall (t,x) \in \R \times \cM,\ \frac{\mathrm{d}}{\mathrm{d}t}\psi^{w,t}(x) = w\big(\psi^{w,t}(x)\big) \\
        \psi_0 = \id,
    \end{cases}
\end{equation}
and denote $\Psi^{w,t}$ the map $\cP_2(\cM) \mapsto \cP_2(\cM)$ induced by the pushforward by $\psi^{w,t}$. 

The following definition is taken from \citep[Definition 9]{emami2024monge}.
\begin{definition} \label{def:rademacher_property}
    (Emami and Pass, 2024)
    We say that a measure $\bP_0 \in \cPP{\cM}$ satisfies the Rademacher property if for every $\W_2$-Lipschitz function $U : \cP_2(\cM) \mapsto \R$, there exists $D_{\bP_0}U \in L^2(\bP_0,\TDer\cP_2(T\cM))$ such that for every $w \in \Gamma(\cM,T\cM)$,
    \begin{equation}
        \lim_{t \to 0} \frac{U\big(\Psi^{w,t}(\cdot)\big) - U(\cdot)}{t} = \sca{D_{\bP_0}U(\cdot)}{w}_{L^2(\cdot)} \hbox{ in } L^2(\bP_0).
    \end{equation}
\end{definition}

Thus, every time we have a reference measure $\bP_0 \in \cPP{\cM}$ satisfying the Rademacher property, we can define for every $\W_2$-Lipschitz function $U$ a measurable section $D_{\bP_0}U$ of $\TDer\cP_2(\cM)$ that acts as a ``differential'' of sorts, for perturbations given by smooth vector fields on $\cM$.

\subsection{Wasserstein Geometry of $\cP_2(\cP_2(M))$}

We recall that the WoW distance between $\bP,\bQ\in\cPP{\cM}$ is defined as
\begin{equation}
    \Ww(\bP,\bQ)^2 = \inf_{\Gamma\in\Pi(\bP,\bQ)}\ \int \W_2^2(\mu,\nu)\ \mathrm{d}\Gamma(\mu,\nu).
\end{equation}
A natural question is to find the conditions under which this problem admits an OT map. \citet{emami2024monge} showed it is the case for $\cM$ a compact connected Riemannian manifold, for absolutely continuous measures \emph{w.r.t} a reference measure $\bP_0$ satisfying the following assumption:

\begin{assumption} \label{assumption:ref_measure} \leavevmode
    \begin{itemize}
        \item $\bP_0$ has no atoms
        \item $\bP_0$ satisfies the following integration by parts formula: for any $\cF,\cG\in \cyl$, and any smooth vector field $w \in \Gamma(\cM,T\cM)$, there exists a measurable map $\mu \mapsto \nabla_w^* \cG(\mu) \in \TDer_\mu\cP_2(\cM)$ such that
        \begin{equation}
            \int_{\cP_2(\cM)} \sca{\gW\cF(\mu)}{w}_{L^2(\mu)} \cdot \cG(\mu) \ \dd \bP_0(\mu) = \int_{\cP_2(\cM)} \cF(\mu) \cdot \nabla_w^*\cG(\mu)\ \dd\bP_0(\mu).
        \end{equation}
        \item $\bP_0$ is quasi-invariant with respect to the action of the flows generated by smooth vector fields, \emph{i.e.} for any smooth vector field $w \in \Gamma(\cM,T\cM)$, $\bP_0$ and $\bP_0^{t,w} := \Psi^{w,t}_\#\bP_0$ are mutually absolutely continuous for every $t \in \R$, and the Radon-Nikodym derivative 
        \begin{equation}
            R_r^w = \frac{\dd\bP_0^{t,w} \otimes \dd r}{\dd \bP_0 \otimes \dd r}, \quad r \in \R
        \end{equation}
        satisfies, for $\bP_0$-a.e. $\mu$, $\mathcal{L}^1-\mathrm{essinf}_{r\in (s,t)}\ R_r^w(\mu) > 0$ for all $s,t\in\R$ with $s\le t$.
    \end{itemize}
\end{assumption}

These assumptions were first proposed by \citet{schiavo2020rademacher}, and we refer to \citep{schiavo2020rademacher} for examples of measures satisfying them. By \citep[Theorem 2.10]{schiavo2020rademacher}, any $\bP_0$ satisfying \Cref{assumption:ref_measure} also satisfies the Rademacher property, which \citet{emami2024monge} leveraged to show the existence of an OT map. In all the following, we fix a reference measure $\bP_0 \in \cPP{\cM}$ satisfying \Cref{assumption:ref_measure}, and when there is no ambiguity, the ``differentials'' of a $\W_2$-Lipschitz function $U$ will be denoted $DU$. Moreover, by \citep[Theorem 2.10 (2)]{schiavo2020rademacher}, if $U\in\cyl$, then its differential coincides with the usual Wasserstein gradient, \emph{i.e.}, $DU=\gW U$.

\begin{theorem}[Theorem 13 in \citep{emami2024monge}]
    Let $\bP\in\cPPa{\cM}$ such that $\bP\ll \bP_0$ and $\bQ\in\cPP{\cM}$. Then, there is a unique optimal plan $\Gamma$, which is of the form $\Gamma=(\id,\T)_\#\bP$ with $\T:\cP_2(\cM)\to \cP_2(\cM)$ satisfying $\T_\#\bP=\bQ$. Moreover, $\T$ is of the form $\T(\mu) = \exp\big(-D_{\bP_0} U(\mu)\big)_\#\mu$, where $U$ is a ($\frac 12 \W_2^2$-concave) Kantorovich potential for $\bP,\bQ$. In fact, for $\bP$-a.e. $\mu$, $D_{\bP_0} U(\mu) = \nabla \varphi_{\mu,\T(\mu)}$, where $\varphi_{\mu,\T(\mu)}$ is a ($\frac 12 d^2$-concave) Kantorovich potential for $\mu,\T(\mu)$.
\end{theorem}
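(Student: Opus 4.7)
The plan is to prove this as a Brenier--McCann-type theorem on $(\cPP{\cM},\Ww)$, lifting the classical argument (cf.\ \Cref{th:mccann}) one level up. The three ingredients to combine are: (i) Kantorovich duality for the cost $c(\mu,\nu)=\tfrac{1}{2}\W_2^2(\mu,\nu)$ on the compact Polish space $(\cP_2(\cM),\W_2)$, yielding a $c$-concave potential $U$; (ii) the Rademacher property of $\bP_0$, inherited by $\bP$ via $\bP \ll \bP_0$, which provides a measurable differential $D_{\bP_0}U\in L^2(\bP,\TDer\cP_2(\cM))$ along smooth vector field flows; and (iii) the standard identification, at $\mu\in\cPa{\cM}$, of the Wasserstein gradient of $\tfrac{1}{2}\W_2^2(\cdot,\nu)$ as $\nabla\varphi_{\mu,\nu}$.

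By Kantorovich duality, any optimal plan $\bGamma\in\Pi_o(\bP,\bQ)$ is supported on the $c$-superdifferential of some $c$-concave potential $U:\cP_2(\cM)\to\R$, \emph{i.e.}\ for $\bGamma$-a.e.\ $(\mu,\nu)$ and all $\mu'\in\cP_2(\cM)$,
\begin{equation}
U(\mu)-U(\mu') \le \tfrac{1}{2}\W_2^2(\mu,\nu)-\tfrac{1}{2}\W_2^2(\mu',\nu).
\end{equation}
In particular $\mu$ minimizes $\mu'\mapsto\tfrac{1}{2}\W_2^2(\mu',\nu)-U(\mu')$. I would then fix a smooth $w\in\Gamma(\cM,T\cM)$, perturb $\mu$ along its flow $\Psi^{w,t}$, and write the first-order optimality condition at $t=0$. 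By \Cref{def:rademacher_property}, $\tfrac{\mathrm{d}}{\mathrm{d}t}\big|_{t=0}U(\Psi^{w,t}(\mu))=\langle D_{\bP_0}U(\mu),w\rangle_{L^2(\mu)}$; the analogous derivative of $t\mapsto\tfrac{1}{2}\W_2^2(\Psi^{w,t}(\mu),\nu)$ equals $\langle\nabla\varphi_{\mu,\nu},w\rangle_{L^2(\mu)}$ by the chain-rule identity \eqref{eq:chain_rule} combined with \Cref{prop:strong_grad_erbar}, using that $\mu\in\cPa{\cM}$ (so $\nabla\varphi_{\mu,\nu}\in T_\mu\cP_2(\cM)$ is the Wasserstein gradient of $\tfrac{1}{2}\W_2^2(\cdot,\nu)$) and that the initial velocity of $t\mapsto\Psi^{w,t}(\mu)$ in $L^2(\mu,T\cM)$ is $w$.

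Matching the two derivatives yields $\langle D_{\bP_0}U(\mu)-\nabla\varphi_{\mu,\nu},w\rangle_{L^2(\mu)}=0$ for every smooth $w$. Since $\Gamma(\cM,T\cM)$ is dense in $\TDer_\mu\cP_2(\cM)$ by definition, and both $D_{\bP_0}U(\mu)$ and $\nabla\varphi_{\mu,\nu}$ belong to $\TDer_\mu\cP_2(\cM)$, this forces $D_{\bP_0}U(\mu)=\nabla\varphi_{\mu,\nu}$ for $\bP$-a.e.\ $\mu$. Since $\mu\in\cPa{\cM}$, McCann's theorem (\Cref{th:mccann}) then uniquely determines $\nu$ from this identity: $\nu = \exp_{\id}(-D_{\bP_0}U(\mu))_\#\mu =: \T(\mu)$. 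Hence the disintegration of $\bGamma$ over the $\mu$-marginal is a Dirac mass at $\T(\mu)$, so $\bGamma=(\id,\T)_\#\bP$ and $\T_\#\bP=\bQ$; uniqueness of $\bGamma$ and of $\T$ (up to $\bP$-null sets) follows because $\T(\mu)$ is entirely determined $\bP$-a.e.\ by the common potential $U$.

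The main obstacle is reconciling the two notions of differentiability on $\cP_2(\cM)$: the Wasserstein differentiability of $\tfrac{1}{2}\W_2^2(\cdot,\nu)$ (\Cref{def:w_grad_pm}), and the derivative along smooth vector field flows in the Rademacher property (\Cref{def:rademacher_property}). The bridge is to show that $t\mapsto\Psi^{w,t}(\mu)$ is absolutely continuous in $(\cP_2(\cM),\W_2)$ with initial velocity $w\in L^2(\mu,T\cM)$. This follows from the coupling $(\id,\psi^{w,t})_\#\mu\in\Pi(\mu,\Psi^{w,t}(\mu))$, which yields $\W_2^2(\mu,\Psi^{w,t}(\mu))\le\int d(x,\psi^{w,t}(x))^2\,\mathrm{d}\mu(x)=t^2\|w\|^2_{L^2(\mu)}+o(t^2)$; once this is in hand, \Cref{prop:strong_grad_erbar} applied to $\tfrac{1}{2}\W_2^2(\cdot,\nu)$, whose Wasserstein gradient $\nabla\varphi_{\mu,\nu}$ lies in $T_\mu\cP_2(\cM)$, produces the required first-order expansion and closes the argument.
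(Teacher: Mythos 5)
This statement is not proved in the paper; it is a direct citation of Theorem 13 in \citet{emami2024monge}, and the paper merely notes that the explicit form of $\T$ is extracted from the cited proof. Your reconstruction follows the natural Brenier--McCann strategy lifted to $\big(\cPP{\cM},\Ww\big)$, which is indeed the correct framework, and your use of the $c$-superdifferential, the Rademacher property, and the identification of $\gW\big[\tfrac12\W_2^2(\cdot,\nu)\big](\mu)=\nabla\varphi_{\mu,\nu}$ at $\mu\in\cPa{\cM}$ are all the right ingredients. However, there are two genuine gaps.

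First, you treat $\tfrac{\mathrm{d}}{\mathrm{d}t}\big|_{t=0}U\big(\Psi^{w,t}(\mu)\big)=\langle D_{\bP_0}U(\mu),w\rangle_{L^2(\mu)}$ as a pointwise derivative, but \Cref{def:rademacher_property} only gives convergence of the difference quotient in $L^2(\bP_0)$. To make your first-order optimality argument pointwise, one must extract a subsequence $t_n\to 0$ along which the convergence holds $\bP_0$-a.e.\ (hence $\bP$-a.e.), and then, because the exceptional null set depends on $w$, one must reduce to a countable family of vector fields that is dense in $\TDer_\mu\cP_2(\cM)$ for $\bP$-a.e.\ $\mu$ to obtain the identity $\langle D_{\bP_0}U(\mu)-\nabla\varphi_{\mu,\nu},w\rangle_{L^2(\mu)}=0$ for all $w$ simultaneously. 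Without this reduction, the union over $w\in\Gamma(\cM,T\cM)$ of the exceptional sets could have full measure, and the pointwise conclusion does not follow.

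Second, the uniqueness of the optimal plan is asserted but not established. Your argument shows that any given optimal plan $\bGamma$, together with a Kantorovich potential supporting it, is induced by a map $\T$. But Kantorovich potentials are not unique, and two optimal plans could a priori be supported on the $c$-superdifferentials of two different potentials, yielding two different maps. The missing step is the standard convexity argument: the set $\Pi_o(\bP,\bQ)$ is convex, any convex combination of two distinct graph-supported plans is an optimal plan that is not graph-supported, contradicting the fact (which you have already proved) that every optimal plan is induced by a map. You should also verify, at the outset, that $U$ is $\W_2$-Lipschitz so that the Rademacher property applies; this follows from the compactness of $\cM$ (hence of $(\cP_2(\cM),\W_2)$) and the $c$-concavity of $U$, but deserves mention.
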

Note that the last two  statements on the form of $\T$ are not part of the statement of the theorem in \citep{emami2024monge}, but can be found in its proof.

\paragraph{Geodesics on $\cPP{\cM}$.} We recall that a constant-speed geodesic between $\bP,\bQ\in\cPP{\cM}$ is a curve $t\mapsto \bP_t$ defined on $[0,1]$, which satisfies $\bP_0=\bP$, $\bP_1=\bQ$ and for all $s,t\in [0,1]$, $\Ww(\bP_s,\bP_t) = |t-s| \Ww(\bP,\bQ)$ (see \emph{e.g.} \citep[Box 5.2]{santambrogio2015optimal}).

As we work on manifolds, we introduce similarly as on $\cP_2(\cM)$ a generalized inverse of the exponential map, which allows to characterize geodesics even when the optimal coupling is not unique. The multivalued inverse of the exponential map between $\bP,\bQ\in\cPP{\cM}$ is then
\begin{equation}
    \exp_{\bP}^{-1}(\bQ) = \left\{\bGamma\in\cPP{T\cM},\ \phi^\cM_\#\bGamma = \bP,\ \phi^{\exp}_\#\bGamma=\bQ,\ \iint \|v\|_x^2\ \mathrm{d}\gamma(x,v)\mathrm{d}\bGamma(\gamma) = \Ww(\bP,\bQ)^2\right\},
\end{equation}
where for any $\gamma\in\cP_2(T\cM)$, $\phi^\cM(\gamma)=\pi^\cM_\#\gamma$ and $\phi^{\exp}(\gamma)=\exp_\#\gamma$.

\begin{proposition} \label{prop:geodesic_ppm}
    Let $\bGamma\in\exp_{\bP}^{-1}(\bQ)$. Then the curve $t\mapsto \bP_t = \exp_{\phi^\cM}\circ (t\phi^{\mathrm{v}})_\#\bGamma$ defines a geodesic between $\bP$ and $\bQ$.
\end{proposition}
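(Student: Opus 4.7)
My plan is to verify the two defining properties of a constant-speed geodesic: the endpoint conditions $\bP_0 = \bP$ and $\bP_1 = \bQ$, and the identity $\Ww(\bP_s,\bP_t) = |t-s|\Ww(\bP,\bQ)$ for all $s,t \in [0,1]$. The endpoints are immediate from the definitions of $\phi^\cM$ and $\phi^{\exp}$: at $t = 0$, the map $\gamma \mapsto (\exp_{\pi^\cM} \circ (0 \cdot \pi^{\mathrm{v}}))_\#\gamma$ reduces to $\gamma \mapsto \phi^\cM(\gamma)$, so $\bP_0 = \phi^\cM_\#\bGamma = \bP$, and similarly $\bP_1 = \phi^{\exp}_\#\bGamma = \bQ$ since $\bGamma \in \exp_\bP^{-1}(\bQ)$.

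The core step is the inequality $\Ww(\bP_s,\bP_t) \le |t-s|\Ww(\bP,\bQ)$. For each $\gamma \in \cP_2(T\cM)$, write $\mu_t^\gamma := (\exp_{\pi^\cM} \circ (t\pi^{\mathrm{v}}))_\#\gamma \in \cP_2(\cM)$, so that $\bP_t = (\gamma \mapsto \mu_t^\gamma)_\#\bGamma$. The plan is to build an explicit coupling of $\bP_s$ and $\bP_t$ by pushing $\bGamma$ forward through $\gamma \mapsto (\mu_s^\gamma, \mu_t^\gamma)$. Using \Cref{prop:surjective_map_PPM}, for $\bGamma$-a.e. $\gamma$ we have $\gamma \in \exp_{\phi^\cM(\gamma)}^{-1}(\phi^{\exp}(\gamma))$, and therefore by the analogous one-level statement \citep[Theorem 1.11]{gigli2011inverse} cited just after \Cref{prop:surjective_map_PM}, the curve $t \mapsto \mu_t^\gamma$ is a constant-speed geodesic in $\cP_2(\cM)$ between $\phi^\cM(\gamma)$ and $\phi^{\exp}(\gamma)$. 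Hence $\W_2(\mu_s^\gamma,\mu_t^\gamma) = |t-s|\,\bigl(\int\|v\|_x^2\,\mathrm{d}\gamma(x,v)\bigr)^{1/2}$ for $\bGamma$-a.e. $\gamma$.

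Plugging this into the coupling bound yields
\begin{equation*}
\Ww^2(\bP_s,\bP_t) \le \int \W_2^2(\mu_s^\gamma,\mu_t^\gamma)\,\mathrm{d}\bGamma(\gamma) = (t-s)^2 \iint \|v\|_x^2\,\mathrm{d}\gamma(x,v)\,\mathrm{d}\bGamma(\gamma) = (t-s)^2 \Ww^2(\bP,\bQ),
\end{equation*}
where the last equality uses the defining condition of $\exp_\bP^{-1}(\bQ)$.

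To promote this inequality to equality for all $s,t$, I apply the triangle inequality in the standard way: chaining $\bP_0 \to \bP_s \to \bP_t \to \bP_1$ gives
\begin{equation*}
\Ww(\bP,\bQ) = \Ww(\bP_0,\bP_1) \le \Ww(\bP_0,\bP_s) + \Ww(\bP_s,\bP_t) + \Ww(\bP_t,\bP_1) \le \bigl(s + (t-s) + (1-t)\bigr)\Ww(\bP,\bQ),
\end{equation*}
so all inequalities are equalities and $\Ww(\bP_s,\bP_t) = (t-s)\Ww(\bP,\bQ)$. The only real subtlety is justifying the use of the one-level Gigli result fiberwise in $\gamma$; but this is exactly what \Cref{prop:surjective_map_PPM} provides, so no further obstacle remains.
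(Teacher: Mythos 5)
Your proof is correct and follows the same overall strategy as the paper: establish the upper bound $\Ww(\bP_s,\bP_t)\le|t-s|\,\Ww(\bP,\bQ)$ via an explicit coupling, then promote it to equality by the triangle-inequality sandwich $\bP_0\to\bP_s\to\bP_t\to\bP_1$. The one place where you diverge is in obtaining the upper bound: the paper descends one more level, building a second coupling $\big(\exp_{\pi^\cM}\circ(s\pi^{\mathrm{v}}),\,\exp_{\pi^\cM}\circ(t\pi^{\mathrm{v}})\big)_\#\gamma\in\Pi(\phi^s(\gamma),\phi^t(\gamma))$ and then invoking the pointwise manifold identity $d\big(\exp_x(sv),\exp_x(tv)\big)=|t-s|\,\|v\|_x$, whereas you stay one level higher and apply \citep[Theorem 1.11]{gigli2011inverse} fiberwise (justified by \Cref{prop:surjective_map_PPM}) to get the equality $\W_2(\mu_s^\gamma,\mu_t^\gamma)=|t-s|\,\W_2(\phi^\cM(\gamma),\phi^{\exp}(\gamma))$ directly. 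Both routes are valid; yours is slightly more economical since it reuses the already-cited one-level geodesic result instead of re-deriving it from the manifold identity, while the paper's version is more self-contained, exposing exactly which geometric fact at the $\cM$ level is doing the work. You also make the endpoint verification explicit, which the paper leaves implicit.
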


\begin{proof}
    See \Cref{proof:prop_geodesic_ppm}.
\end{proof}

\subsection{Differentiability on $\cPP{\cM}$} \label{appendix:diff_V_W}

We recall that by \Cref{section:diff_structure}, if $\bF$ is Wasserstein differentiable at $\bP$, then the WoW gradient $\gWw\bF(\bP)$ satisfies for any $\bGamma\in\exp_{\bP}^{-1}(\bQ)$, 
\begin{equation}
    \bF(\bQ) = \bF(\bP) + \iint \langle \gWw\bF(\bP)(\pi^\cM_\#\gamma)(x), v\rangle_x \ \mathrm{d}\gamma(x,v)\mathrm{d}\bGamma(\gamma) + o\big(\Ww(\bP,\bQ)\big).
\end{equation}

Using this formula, we now derive the gradient of potential and interaction energies.

\begin{proposition} \label{prop:bV_diff}
    Let $\cM$ be a compact and connected Riemannian manifold, $\cF:\cP_2(\cM)\to\R$ a twice Wasserstein differentiable functional with Hessian bounded in operator norm for all $\gamma\in\cP_2(T\cM)$, \emph{i.e.} $\sup_{(x,v)\in \mathrm{supp}(\gamma),\ \|v\|_x=1}\ \|\mathrm{H}\cF_\gamma(x, v)\|_{x}\le L$, 
    and $\bF(\bP)=\int \cF(\mu)\ \mathrm{d}\bP(\mu)$ for $\bP\in\cPP{\cM}$. Then, $\bF$ is Wasserstein differentiable, and its gradient is $\gWw\bF(\bP)=\gW\cF$.
\end{proposition}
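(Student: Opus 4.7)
The approach is to reduce the Wasserstein differentiability of $\bF$ on $\cPP{\cM}$ to the assumed twice Wasserstein differentiability of $\cF$ on $\cP_2(\cM)$, using the disintegration provided by \Cref{prop:surjective_map_PPM}. For $\bP,\bQ\in\cPP{\cM}$ and $\bGamma\in\exp_\bP^{-1}(\bQ)$, the marginal identities $\phi^\cM_\#\bGamma=\bP$ and $\phi^{\exp}_\#\bGamma=\bQ$ give
\[
\bF(\bQ)-\bF(\bP) \;=\; \int \big(\cF(\exp_\#\gamma)-\cF(\pi^\cM_\#\gamma)\big)\,\dd\bGamma(\gamma),
\]
and \Cref{prop:surjective_map_PPM} ensures that for $\bGamma$-a.e.\ $\gamma$ one has $\gamma\in\exp_{\pi^\cM_\#\gamma}^{-1}(\exp_\#\gamma)$, so $s\mapsto \mu_s^\gamma := (\exp_{\pi^\cM}\circ(s\pi^{\mathrm{v}}))_\#\gamma$ is a constant-speed geodesic in $\cP_2(\cM)$ from $\pi^\cM_\#\gamma$ to $\exp_\#\gamma$, with the global identity $\iint\|v\|_x^2\dd\gamma(x,v)\dd\bGamma(\gamma)=\Ww^2(\bP,\bQ)$.

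Next, for each such $\gamma$, I would apply a second order Taylor expansion of $s\mapsto\cF(\mu_s^\gamma)$ between $0$ and $1$. The first derivative at $s=0$ produces, by Wasserstein differentiability of $\cF$, the main linear term $\int \sca{\gW\cF(\pi^\cM_\#\gamma)(x)}{v}_x \dd\gamma(x,v)$, while the uniform Hessian bound controls the remainder by $|R(\gamma)|\le \tfrac{L}{2}\int \|v\|_x^2 \dd\gamma(x,v)$. Integrating both sides against $\bGamma$ and using the defining identity of $\exp_\bP^{-1}(\bQ)$ above, the remainder contributes at most $\tfrac{L}{2}\Ww^2(\bP,\bQ)=o(\Ww(\bP,\bQ))$, so the candidate $\xi:\mu\mapsto\gW\cF(\mu)$ satisfies the Taylor expansion \eqref{eq:taylor_expansion_p2p2} with the correct sign simultaneously for $\partial^-\bF(\bP)$ and $\partial^+\bF(\bP)$, yielding $\gWw\bF(\bP)=\gW\cF$. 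Membership $\gW\cF\in L^2(\bP,T\cP_2(\cM))$ follows from compactness of $\cM$ and the Hessian bound, which together provide uniform control on $\|\gW\cF(\mu)\|_{L^2(\mu,T\cM)}$.

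The main subtlety is upgrading the pointwise Wasserstein Hessian bound of \Cref{def:wasserstein_hessian}, stated only at $s=0$, into an actual quadratic remainder estimate valid on the whole segment $s\in[0,1]$. Concretely, one needs that the Hessian bound $\|\mathrm{H}\cF_{\gamma_s}(x,v)\|_x\le L$ persists at every intermediate $\mu_s^\gamma$, with $\gamma_s$ the inverse exponential from $\mu_s^\gamma$ to $\mu_1^\gamma$ obtained by shifting $\gamma$ along the geodesic; once established, Taylor's formula with integral remainder closes the estimate. The remaining technicalities---measurability of $\gamma\mapsto\cF(\pi^\cM_\#\gamma)$ and of the linear and quadratic integrands, plus the Fubini exchange between integration against $\bGamma$ and against each $\gamma$---are routine given the continuity of $\cF$, $\gW\cF$ and of the pushforward.
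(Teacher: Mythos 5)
Your proposal is correct and follows essentially the same route as the paper: disintegrate $\bF(\bQ)-\bF(\bP)$ against $\bGamma\in\exp_\bP^{-1}(\bQ)$, apply a second-order Taylor expansion of $s\mapsto\cF(\mu_s^\gamma)$ along each geodesic, control the remainder by the uniform Hessian bound, and integrate to obtain a $\frac{L}{2}\Ww^2(\bP,\bQ)=o(\Ww(\bP,\bQ))$ error. The ``main subtlety'' you flag — producing, at each intermediate time $s$, a plan $\gamma_s\in\exp^{-1}_{\mu_s^\gamma}(\nu)$ by parallel-transporting $v$ along the geodesic and relating $\frac{\dd^2}{\dd s^2}\cF(\mu_\gamma(s))$ to the Hessian evaluated at $\gamma_s$ with the correct $(1-s)^{-2}$ scaling — is precisely what the paper establishes in its Lemmas~\ref{lemma:gamma_t} and~\ref{lemma:taylor_remainder} (the paper uses a mean-value formulation rather than the integral remainder, but this is immaterial); note that the persistence of the bound $\|\hess\cF_{\gamma_s}\|\le L$ itself is not something to prove, it is given by the hypothesis which is uniform over all $\gamma\in\cP_2(T\cM)$.
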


\begin{proof}
    See \Cref{proof:prop_bv_diff}.
\end{proof}

\begin{proposition} \label{prop:bW_diff}
    Let $\cM$ be a compact and connected Riemannian manifold, $\cW:\cP_2(\cM)\times\cP_2(\cM)\to \R$ be Wasserstein differentiable with respect to each of its argument and with bounded Hessian in operator norm for all $\gamma\in\cP_2(T\cM)$ as in \Cref{prop:bV_diff}. Let $\bP\in\cPP{\cM}$ and $\bF(\bP) = \iint \cW(\mu,\nu)\ \mathrm{d}\bP(\mu)\mathrm{d}\bP(\nu)$. Then, $\gWw\bF(\bP)(\mu) = \int \big(\nabla_{\W_2,1} \cW(\mu,\nu) + \nabla_{\W_2,2}\cW(\nu,\mu)\big) \ \mathrm{d}\bP(\nu)$.
\end{proposition}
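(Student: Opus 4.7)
The plan is to mirror the argument of Proposition \ref{prop:w_diff_pm}, replacing the ground manifold $\cM$ by $\cP_2(\cM)$ and using the WoW inverse exponential. Fix $\bQ\in\cPP{\cM}$ and $\bGamma\in\exp_{\bP}^{-1}(\bQ)$. Because $\phi^\cM_\#\bGamma=\bP$ and $\phi^{\exp}_\#\bGamma=\bQ$, the product $\bGamma\otimes\bGamma$ couples $\bP\otimes\bP$ with $\bQ\otimes\bQ$, and
\begin{equation}
    \bF(\bQ)-\bF(\bP) = \iint\big[\cW(\exp_\#\gamma,\exp_\#\gamma') - \cW(\pi^\cM_\#\gamma,\pi^\cM_\#\gamma')\big]\,\mathrm{d}\bGamma(\gamma)\mathrm{d}\bGamma(\gamma').
\end{equation}
Writing $\mu=\pi^\cM_\#\gamma$, $\mu'=\exp_\#\gamma$, $\nu=\pi^\cM_\#\gamma'$, $\nu'=\exp_\#\gamma'$, \Cref{prop:surjective_map_PPM} guarantees $\gamma\in\exp_\mu^{-1}(\mu')$ and $\gamma'\in\exp_\nu^{-1}(\nu')$ for $\bGamma\otimes\bGamma$-a.e.\ $(\gamma,\gamma')$.

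The second step is the telescoping $\cW(\mu',\nu')-\cW(\mu,\nu) = [\cW(\mu',\nu)-\cW(\mu,\nu)] + [\cW(\mu',\nu')-\cW(\mu',\nu)]$. Applying the first-order Wasserstein Taylor expansion to $\cW(\cdot,\nu)$ at $\mu$ along $\gamma$, and to $\cW(\mu',\cdot)$ at $\nu$ along $\gamma'$, and using the second-order Taylor formula with the bounded Hessian hypothesis (as in the proof of \Cref{prop:w_diff_pm}, by differentiating $t\mapsto \cW(\mu_t,\nu)$ twice along geodesics), one obtains
\begin{equation}
    \cW(\mu',\nu)-\cW(\mu,\nu) = \int \langle \nabla_{\W_2,1}\cW(\mu,\nu)(x),v\rangle_x\,\mathrm{d}\gamma(x,v) + R_1,\quad |R_1|\le\tfrac{L}{2}\textstyle\int\|v\|_x^2\mathrm{d}\gamma,
\end{equation}
and analogously
\begin{equation}
    \cW(\mu',\nu')-\cW(\mu',\nu) = \int\langle\nabla_{\W_2,2}\cW(\mu',\nu)(y),w\rangle_y\,\mathrm{d}\gamma'(y,w) + R_2,\quad |R_2|\le\tfrac{L}{2}\textstyle\int\|w\|_y^2\mathrm{d}\gamma'.
\end{equation}

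The third step rebases the second leading term at $\mu$ rather than $\mu'$, producing a cross error $E(\gamma,\gamma')=\int\langle[\nabla_{\W_2,2}\cW(\mu',\nu)-\nabla_{\W_2,2}\cW(\mu,\nu)](y),w\rangle_y\,\mathrm{d}\gamma'(y,w)$. Using the joint Hessian bound to obtain the Lipschitz estimate $\|\nabla_{\W_2,2}\cW(\mu',\nu)-\nabla_{\W_2,2}\cW(\mu,\nu)\|_{L^2(\nu)}\le L\,\W_2(\mu,\mu')$ and Cauchy--Schwarz, one gets $|E(\gamma,\gamma')|\le L\,\|v\|_{L^2(\gamma)}\|w\|_{L^2(\gamma')}$. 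Integrating the three error terms against $\bGamma\otimes\bGamma$ and invoking the identity $\iint\|v\|_x^2\mathrm{d}\gamma\mathrm{d}\bGamma=\Ww^2(\bP,\bQ)$, each is bounded by a constant multiple of $\Ww^2(\bP,\bQ)=o(\Ww(\bP,\bQ))$.

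In the last step, collect the main terms. In the first, the integrand depends on $\gamma'$ only through $\nu=\pi^\cM_\#\gamma'$, so integrating out $\gamma'$ under $\phi^\cM_\#\bGamma=\bP$ produces an $\int\mathrm{d}\bP(\nu)$; in the second, swapping the roles of the dummy variables $\gamma\leftrightarrow\gamma'$ (legitimate by the symmetry of $\bGamma\otimes\bGamma$) similarly turns the outer integral into $\int \nabla_{\W_2,2}\cW(\nu,\mu)\,\mathrm{d}\bP(\nu)$. Summing gives
\begin{equation}
    \bF(\bQ)-\bF(\bP) = \iint\Big\langle\int\big[\nabla_{\W_2,1}\cW(\mu,\nu)+\nabla_{\W_2,2}\cW(\nu,\mu)\big]\mathrm{d}\bP(\nu),v\Big\rangle_x\mathrm{d}\gamma(x,v)\mathrm{d}\bGamma(\gamma) + o(\Ww(\bP,\bQ)),
\end{equation}
which by \Cref{def:def_wwgrad} identifies $\gWw\bF(\bP)$ with the claimed expression. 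The main obstacle is the control of $E$: the stated hypothesis (bounded Hessian of $\cW$ in each argument) does not literally assert joint Lipschitz regularity of $\mu\mapsto\nabla_{\W_2,2}\cW(\mu,\nu)$, so either an implicit mixed-Hessian regularity must be read into it, or one must argue via a single joint quadratic Taylor expansion on $\cP_2(\cM)\times\cP_2(\cM)$ that absorbs the cross term directly into a bound of order $\W_2(\mu,\mu')\W_2(\nu,\nu')$ without passing through the Lipschitz estimate.
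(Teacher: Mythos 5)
Your self-diagnosis is correct: the telescoping decomposition forces you to compare $\nabla_{\W_2,2}\cW(\mu',\nu)$ with $\nabla_{\W_2,2}\cW(\mu,\nu)$, and to close the estimate you need a Lipschitz bound of the form $\|\nabla_{\W_2,2}\cW(\mu',\nu)-\nabla_{\W_2,2}\cW(\mu,\nu)\|_{L^2(\nu)}\le L\,\W_2(\mu,\mu')$. The stated hypothesis, read literally, only bounds $\|\hess\cW_{(\gamma,\gamma')}\|$ in the sense of \Cref{def:wasserstein_hessian}, \emph{i.e.}, it controls the second derivative of $t\mapsto\cW(\mu_\gamma(t),\mu_{\gamma'}(t))$ along geodesics in the product. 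Turning such a bound into Lipschitz regularity of the partial gradient in the \emph{other} variable requires some notion of comparison of tangent vectors over $\cP_2(\cM)$ at different base points (an analogue of the parallel-transport argument the paper uses on $\cM$ in the proof of \Cref{prop:v_diff_pm}), which is a nontrivial extra layer you have not supplied. So, as written, the step controlling $E(\gamma,\gamma')$ is a genuine gap.

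The paper goes exactly the route you flag as the alternative: it applies the second-order Taylor remainder formula (\Cref{lemma:taylor_remainder}) once, on the product manifold $\cP_2(\cM)\times\cP_2(\cM)$, along the paired geodesics $t\mapsto(\mu_\gamma(t),\mu_{\gamma'}(t))$. This produces a single remainder $\tfrac12\tfrac{\dd^2}{\dd t^2}\cW(\mu_\gamma(t),\mu_{\gamma'}(t))$ which is directly bounded by the joint Hessian assumption, without ever separating out a cross term or needing a Lipschitz estimate. Both first-order terms and the cross-derivative contributions are absorbed into one operator-norm bound, and the resulting estimate reads $|\cW(\nu,\nu')-\cW(\mu,\mu')-\text{linear}|\le\tfrac L2\bigl(\int\|v\|_x^2\dd\gamma+\int\|v'\|_{x'}^2\dd\gamma'\bigr)$, after which integrating against $\bGamma\otimes\bGamma$ and using $\iint\|v\|_x^2\dd\gamma\dd\bGamma=\Ww^2(\bP,\bQ)$ closes the argument as in your last step. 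Your identification of the main terms and the marginalization/relabeling trick to produce $\int\nabla_{\W_2,1}\cW(\mu,\nu)\dd\bP(\nu)+\int\nabla_{\W_2,2}\cW(\nu,\mu)\dd\bP(\nu)$ is fine; to repair the proof you should replace the telescoping plus rebasing by the single joint second-order expansion, which uses no more than the stated assumption.
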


\begin{proof}
    See \Cref{proof:prop_bW_diff}.
\end{proof}

\paragraph{Relation with first variation.}

The gradients of the potential and interaction energies derived in the last two propositions are computed by showing that they satisfy the definition of the WoW gradients thanks to coupling arguments. Similarly to the case on  $\cP_2(\cM)$, we expect that they can also be computed as the gradient of the first variation, which is a much simpler way to compute Wasserstein gradient of generic functionals. Let $\bF:\cPaP{\cM}\to \R$. Then the first variation $\dF{\bF}{\bP}:\cP_2(\cM)\to \R$ at $\bP$ is defined as the unique function (up to a constant) satisfying 
\begin{equation}
    \lim_{\varepsilon\to 0} \ \frac{\bF(\bP+\varepsilon\chi)-\bF(\bP)}{\varepsilon} = \int \dF{\bF}{\bP} \ \mathrm{d}\chi,
\end{equation}
where $\int \mathrm{d}\chi = 0$ and $\bP + \varepsilon\chi \in \cPaP{\cM}$ for $\varepsilon$ small. Then, we expect that the WoW gradient of $\bF$ can be computed as 
\begin{equation}
    \gWw\bF(\bP) = \gW\dF{\bF}{\bP}.
\end{equation}
We leave for future work to show formally this formula. Nonetheless, we verify that it holds for potential and interaction energies. Indeed, for $\bV(\bP)=\int \cF(\mu)\ \dd\bP(\mu)$, we have $\dF{\bV}{\bP}=\cF$ since
\begin{equation}
    \bV(\bP+\varepsilon\chi) = \int \cF(\mu)\dd\bP(\mu) + \varepsilon\int \cF(\mu)\dd\chi(\mu),
\end{equation}
and thus the WoW gradient derived in \Cref{prop:bV_diff} coincides well with $\gW\dF{\bV}{\bP}$. Similarly, for $\bW(\bP) = \iint \cW(\mu,\nu)\ \dd\bP(\mu)\dd\bP(\nu)$,
\begin{equation}
    \begin{aligned}
        \frac{\bW(\bP+t\chi) - \bW(\bP)}{t} &= \frac{1}{t}\left(\bW(\bP) + t \iint \cW(\mu,\nu)\ \mathrm{d}\bP(\mu)\mathrm{d}\chi(\nu) + t \iint \cW(\mu,\nu)\ \mathrm{d}\chi(\mu)\mathrm{d}\bP(\nu) \right. \\ &\left.\quad+ t^2 \iint \cW(\mu,\nu)\ \mathrm{d}\chi(\mu)\mathrm{d}\chi(\nu) - \bW(\bP) \right) \\
        &\xrightarrow[t\to 0]{}\ \int \left( \int \cW(\mu,\nu)\ \mathrm{d}\bP(\mu) + \int \cW(\nu,\mu)\ \mathrm{d}\bP(\mu)\right)\ \mathrm{d}\chi(\nu).
    \end{aligned}
\end{equation}
Thus, the first variation is $\dF{\bW}{\bP}(\nu) = \int \cW(\mu,\nu)\ \mathrm{d}\bP(\mu) + \int \cW(\nu,\mu)\ \mathrm{d}\bP(\mu)$, and its Wasserstein gradient coincides well with the WoW gradient derived in \Cref{prop:bW_diff}.

\paragraph{Relation with Euclidean gradient.}

We provide now the analog of \Cref{prop:backprop_grad} for WoW gradients, which we use in practice to compute them.

We fix here a number of classes $C > 0$ and a number of samples $n > 0$, and we consider the class of %
(fully) discrete measures of $\cP_2(\cP_2(\R^d))$ defined by
\begin{equation}
    \bP_{\mbf{x}} := \frac 1C \sum_{c=1}^C \delta_{\mu_{\mbf{x}^c}}, \quad \mbf{x} \in (\R^d)^{C \times n}, \quad
    \mu_{\mbf{x}^c} := \frac 1n \sum_{i=1}^n \delta_{x_i^c}, \quad \mbf{x}^c \in (\R^d)^n.
\end{equation}

We also define the space
\begin{equation}
    X := \{\mbf{x} \in (\R^d)^{C \times n} \,|\, \forall c,\  \mbf{x}^c \notin \Delta_n \hbox{ and } \forall c \neq c',\ \mu_{\mbf{x}^c} \neq \mu_{\mbf{x}^{c'}} \}.
\end{equation}
where $\Delta_n := \{\mbf{x} \in (\R^d)^n \,|\, \exists i \neq j, x_i = x_j\}$ is the generalized diagonal of $(\R^d)^n$. Informally, $X$ is the space of vectors $\mbf{x}$ such that the empirical measures $\mu_{\mbf{x}^c}$ in the support of $\bP_{\mbf{x}}$ are all distinct, and are each supported on $n$ distinct points of $\R^d$. 

\begin{proposition} \label{prop:wow_backprop_grad}
    Let $\bF : \cP_2(\cP_2(\R^d)) \mapsto \R$ a functional, and $F : (\R^d)^{C \times n} \mapsto \R$ such that for every $\mbf{x} \in X$, $\bF(\bP_{\mbf{x}}) = F(\mbf{x})$. If $\bF$ is Wasserstein differentiable at $\bP_{\mbf{x}}$ and $F$ is differentiable at $\mbf{x}$ for some $\mbf{x} \in X$, then for every $c \in \{1,\ldots,C\}$ and $i \in \{1,\ldots,n\}$,
    \begin{equation}
        \gWw\bF(\bP_{\mbf{x}})(\mu_{\mbf{x}^c})(x^c_{i}) = Cn \nabla_{c,i} F(\mbf{x}).
    \end{equation}
\end{proposition}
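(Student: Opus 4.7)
The plan is to mimic the proof of \Cref{prop:backprop_grad} (the single-level analogue) but at the WoW level, using a well-chosen perturbation $\mbf{x} + \varepsilon\mbf{h}$ and the Taylor expansion from \Cref{def:def_wwgrad}. Fix $\mbf{x} \in X$ and $\mbf{h} \in (\R^d)^{C \times n}$, and consider $\bP_{\mbf{x} + \varepsilon \mbf{h}}$ for $\varepsilon > 0$ small. The first step is to exhibit an explicit element of $\exp_{\bP_{\mbf{x}}}^{-1}(\bP_{\mbf{x} + \varepsilon \mbf{h}})$. The natural candidate is
\begin{equation}
    \bGamma_\varepsilon = \frac{1}{C} \sum_{c=1}^C \delta_{\gamma_\varepsilon^c}, \qquad \gamma_\varepsilon^c = \frac{1}{n} \sum_{i=1}^n \delta_{(x_i^c,\, \varepsilon h_i^c)} \in \cP_2(T\R^d),
\end{equation}
which clearly satisfies $\phi^{\R^d}_\#\bGamma_\varepsilon = \bP_\mbf{x}$ and $\phi^{\exp}_\#\bGamma_\varepsilon = \bP_{\mbf{x} + \varepsilon\mbf{h}}$.

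The key step is to show that for $\varepsilon$ small enough, $\bGamma_\varepsilon$ is optimal, i.e.\ lies in $\exp_{\bP_\mbf{x}}^{-1}(\bP_{\mbf{x} + \varepsilon\mbf{h}})$. This is where the assumption $\mbf{x} \in X$ is used twice: (i) at the inner level, since $\mbf{x}^c \notin \Delta_n$, the distances $\|x_i^c - x_j^c\|$ for $i \neq j$ are bounded below, so the identity coupling $(x_i^c, x_i^c + \varepsilon h_i^c)_i$ between $\mu_{\mbf{x}^c}$ and $\mu_{\mbf{x}^c + \varepsilon\mbf{h}^c}$ is optimal for $\varepsilon$ small enough; (ii) at the outer level, since the $\mu_{\mbf{x}^c}$ are pairwise distinct in $\cP_2(\R^d)$, the $\W_2$-distances $\W_2(\mu_{\mbf{x}^c}, \mu_{\mbf{x}^{c'}})$ for $c \neq c'$ are uniformly bounded below, so the ``diagonal'' matching $\mu_{\mbf{x}^c} \leftrightarrow \mu_{\mbf{x}^c + \varepsilon\mbf{h}^c}$ is the optimal transport plan for the WoW problem. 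In particular, this gives $\Ww^2(\bP_\mbf{x}, \bP_{\mbf{x} + \varepsilon\mbf{h}}) = \frac{\varepsilon^2}{Cn} \sum_{c,i} \|h_i^c\|^2 = O(\varepsilon^2)$. This compactness-type argument is the main technical obstacle but is direct.

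Once $\bGamma_\varepsilon \in \exp_{\bP_\mbf{x}}^{-1}(\bP_{\mbf{x}+\varepsilon\mbf{h}})$ is established, the Taylor expansion \eqref{eq:taylor_expansion_p2p2} applied with this coupling gives
\begin{equation}
    \bF(\bP_{\mbf{x} + \varepsilon\mbf{h}}) = \bF(\bP_\mbf{x}) + \frac{\varepsilon}{Cn} \sum_{c=1}^C \sum_{i=1}^n \langle \gWw\bF(\bP_\mbf{x})(\mu_{\mbf{x}^c})(x_i^c),\, h_i^c \rangle + o(\varepsilon),
\end{equation}
using $\Ww(\bP_\mbf{x}, \bP_{\mbf{x}+\varepsilon\mbf{h}}) = O(\varepsilon)$ to absorb the remainder. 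By hypothesis $\bF(\bP_{\mbf{x}+\varepsilon\mbf{h}}) = F(\mbf{x} + \varepsilon\mbf{h})$ (which is valid because $X$ is open in $(\R^d)^{C\times n}$ so $\mbf{x} + \varepsilon\mbf{h} \in X$ for $\varepsilon$ small), and by Euclidean differentiability of $F$ at $\mbf{x}$,
\begin{equation}
    F(\mbf{x} + \varepsilon\mbf{h}) = F(\mbf{x}) + \varepsilon \sum_{c,i} \langle \nabla_{c,i} F(\mbf{x}),\, h_i^c\rangle + o(\varepsilon).
\end{equation}

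Finally, subtracting the two expansions, dividing by $\varepsilon$, and letting $\varepsilon \to 0^+$ yields
\begin{equation}
    \sum_{c,i} \left\langle \frac{1}{Cn}\gWw\bF(\bP_\mbf{x})(\mu_{\mbf{x}^c})(x_i^c) - \nabla_{c,i}F(\mbf{x}),\, h_i^c \right\rangle = 0
\end{equation}
for every $\mbf{h} \in (\R^d)^{C\times n}$. Taking $\mbf{h}$ to be the standard basis vector supported at index $(c,i)$ in each coordinate direction concludes the proof. I expect the proof to be short, with the bulk of the work being the verification that the candidate $\bGamma_\varepsilon$ is optimal; this follows from standard ``small perturbation preserves optimal matching'' arguments enabled precisely by the definition of $X$.
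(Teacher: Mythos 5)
Your proof is correct and follows essentially the same route as the paper's: you define the same candidate coupling $\bGamma_\varepsilon = \frac{1}{C}\sum_c \delta_{\gamma_\varepsilon^c}$ with $\gamma_\varepsilon^c = \frac{1}{n}\sum_i \delta_{(x_i^c, \varepsilon h_i^c)}$, use the definition of $X$ to show it lies in $\exp_{\bP_{\mbf{x}}}^{-1}(\bP_{\mbf{x}+\varepsilon\mbf{h}})$ for $\varepsilon$ small (inner optimality from $\mbf{x}^c \notin \Delta_n$, outer optimality from the $\mu_{\mbf{x}^c}$ being pairwise distinct), and compare the resulting Taylor expansion with the Euclidean one for $F$. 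The paper fleshes out the outer-level optimality slightly more explicitly via a term-by-term bound on $\int \W_2^2\,\dd\Gamma$ over arbitrary couplings $\Gamma$, but the idea is the same.
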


\begin{proof}
    See \Cref{proof:prop_wow_backprop_grad}.
\end{proof}

\subsection{Convexity on $\cPP{\cM}$} \label{appendix:convexity_cpp}

In this section, we focus on $\cPPa{\R^d}$, and we show the convexity along generalized geodesics of potential energies and interaction energies. Hence, \Cref{prop:jko_scheme} can be applied to these functionals.

We recall that on $\cP_2(\R^d)$, a generalized geodesic between $\mu,\nu\in\cP_2(\R^d)$ is of the form $t\mapsto \mu_t = \big((1-t)\pi^{1,2} + t \pi^{1,3}\big)_\#\gamma$ with $\gamma\in \Pi(\eta,\mu,\nu)$ such that $\pi^{1,2}_\#\gamma\in\Pi_o(\eta,\mu)$ and $\pi^{1,3}_\#\gamma\in\Pi_o(\eta,\nu)$. Then a functional $\cF:\cP_2(\R^d)\to \R$ $\lambda$-convex along this curve satisfies for all $t\in [0,1]$,
\begin{equation}
    \cF(\mu_t) \le (1-t)\cF(\mu) + t \cF(\nu) - \frac{\lambda t (1-t)}{2} \W_2^2(\mu,\nu).
\end{equation}
When $\eta\in\cPa{\R^d}$, by Brenier's theorem, there are OT maps $\T_\eta^\mu$ between $\eta$ and $\mu$ and $\T_\eta^\nu$ between $\eta$ and $\nu$, and the generalized geodesic translates as $\mu_t = \big((1-t)\T_\eta^\mu + t\T_\eta^\nu\big)_\#\eta$.

We define similarly a generalized geodesic between $\bQ,\bO\in\cPPa{\R^d}$ as $t\mapsto \bP_t = \big(\big((1-t)\T_{\pi^1}^{\pi^2} + t \T_{\pi^1}^{\pi^3}\big)_\#\pi^1\big)_\#\Gamma$ where $\Gamma\in\Pi(\bP,\bQ,\bO)$, $\pi^{1,2}_\#\Gamma\in\Pi_o(\bP,\bQ)$ and $\pi^{1,3}_\#\Gamma\in\Pi_o(\bP,\bO)$. We provide sufficient conditions for potential and interaction energies to be $\lambda$-convex along generalized geodesics in $\cPPa{\R^d}$.

\begin{proposition} \label{prop:potential_cvx}
    Let $\lambda\ge 0$ and $\cF:\cP_2(\R^d)\to \R$ be $\lambda$-convex along generalized geodesics of $\cP_2(\R^d)$. Then, the potential energy $\bF(\bP)=\int\cF(\mu)\dd\bP(\mu)$ is $\lambda$-convex along generalized geodesics on $\cPPa{\R^d}$.
\end{proposition}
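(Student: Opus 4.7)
The plan is to reduce the question to the $\lambda$-convexity of $\cF$ along generalized geodesics on $\cP_2(\R^d)$ by a pointwise-then-integrate argument, using the fact that the generalized geodesic $\bP_t$ on $\cPPa{\R^d}$ is, by construction, the pushforward of the base coupling $\Gamma$ through the map $(\mu,\nu,\eta)\mapsto \mu_t^{\mu,\nu,\eta}$, where $\mu_t^{\mu,\nu,\eta} := \big((1-t)\T_\mu^\nu + t\T_\mu^\eta\big)_\#\mu$.

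First, I would note that for $\Gamma$-a.e. triple $(\mu,\nu,\eta)$, the curve $t\mapsto \mu_t^{\mu,\nu,\eta}$ is precisely a generalized geodesic on $\cP_2(\R^d)$ between $\nu$ and $\eta$ with base point $\mu$. Indeed, since $\pi^{1,2}_\#\Gamma \in \Pi_o(\bP,\bQ)$ and $\pi^{1,3}_\#\Gamma \in \Pi_o(\bP,\bO)$, one can verify (using a standard disintegration / stability argument for optimal couplings) that $(\T_\mu^\nu,\T_\mu^\eta)_\#\mu$ is a coupling of $\nu$ and $\eta$ whose marginals on the first and second coordinates give OT plans from the base $\mu$, i.e. a coupling via OT maps. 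This is exactly the structure needed for $\mu_t^{\mu,\nu,\eta}$ to qualify as a generalized geodesic in the sense of \citep[Def.~9.2.2]{ambrosio2005gradient}.

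Next, I would apply the assumed $\lambda$-convexity of $\cF$ pointwise: for $\Gamma$-a.e.\ $(\mu,\nu,\eta)$ and every $t\in[0,1]$,
\begin{equation*}
\cF\big(\mu_t^{\mu,\nu,\eta}\big) \le (1-t)\cF(\nu) + t\cF(\eta) - \tfrac{\lambda t(1-t)}{2}\W_2^2(\nu,\eta).
\end{equation*}
Integrating against $\Gamma$ and using the pushforward structure of $\bP_t$ yields
\begin{equation*}
\bF(\bP_t) \le (1-t)\bF(\bQ) + t\bF(\bO) - \tfrac{\lambda t(1-t)}{2}\int \W_2^2(\nu,\eta)\,\dd\Gamma(\mu,\nu,\eta).
\end{equation*}

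Finally, I would use that $\pi^{2,3}_\#\Gamma \in \Pi(\bQ,\bO)$, hence by the very definition of $\Ww^2$ as an infimum over couplings,
\begin{equation*}
\int \W_2^2(\nu,\eta)\,\dd\Gamma(\mu,\nu,\eta) \ge \Ww^2(\bQ,\bO).
\end{equation*}
Since $\lambda\ge 0$ and $t(1-t)\ge 0$, this inequality combines correctly with the previous one to give the desired $\lambda$-convexity of $\bF$ along $(\bP_t)_{t\in[0,1]}$. The only subtle point is the pointwise identification of $\mu_t^{\mu,\nu,\eta}$ as a generalized geodesic, which requires verifying that the disintegration of an optimal WoW plan against its first marginal yields optimal $\W_2$-plans between $\mu$ and $\T(\mu)$; this is the main technicality, but it follows from the characterization of OT plans on product spaces and the explicit form of the WoW OT map on $\cPPa{\R^d}$ recalled earlier.
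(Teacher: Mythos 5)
Your proof is correct and follows essentially the same route as the paper's: apply the $\lambda$-convexity of $\cF$ pointwise along the inner generalized geodesic $t\mapsto\big((1-t)\T_\mu^\nu+t\T_\mu^\eta\big)_\#\mu$, integrate against $\Gamma$, and conclude with $\int\W_2^2(\nu,\eta)\,\dd\Gamma\ge\Ww^2(\bQ,\bO)$ since $\pi^{2,3}_\#\Gamma\in\Pi(\bQ,\bO)$. The one remark worth making is that the ``subtle point'' you flag at the end is actually a non-issue: by the paper's standing convention, $\T_\mu^\nu$ denotes \emph{the} optimal transport map from $\mu$ to $\nu$ (which exists because $\mu\sim\bP\in\cPPa{\R^d}$ is absolutely continuous), so the triple $(\id,\T_\mu^\nu,\T_\mu^\eta)_\#\mu$ has optimal $\pi^{1,2}$ and $\pi^{1,3}$ marginals by construction, and $t\mapsto\big((1-t)\T_\mu^\nu+t\T_\mu^\eta\big)_\#\mu$ is automatically a generalized geodesic in $\cP_2(\R^d)$ — no disintegration-of-WoW-optimality argument is required at this step; the WoW optimality of $\Gamma$ only enters later, in the bound $\int\W_2^2\,\dd\Gamma\ge\Ww^2$.
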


\begin{proof}
    See \Cref{proof:prop_potential_cvx}.
\end{proof}

\begin{proposition} \label{prop:interaction_cvx}
    Let $\cW:\cP_2(\R^d)\times\cP_2(\R^d)\to \R$ be joint convex along generalized geodesics of $\cP_2(\R^d)$. Then, the interaction energy $\bW(\bP)=\frac12 \iint \cW(\mu,\nu)\ \dd\bP(\mu)\dd\bP(\nu)$ is convex along generalized geodesics on $\cPPa{\R^d}$.
\end{proposition}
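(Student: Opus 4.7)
The plan is to reduce convexity of $\bW$ along generalized geodesics on $\cPPa{\R^d}$ to pointwise joint convexity of $\cW$ along generalized geodesics on $\cP_2(\R^d)\times\cP_2(\R^d)$, integrated against the appropriate product coupling.

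First, I would unpack the definition. Fix $\bP\in\cPPa{\R^d}$, $\bQ,\bO\in\cPP{\R^d}$, and a multi-coupling $\Gamma\in\Pi(\bP,\bQ,\bO)$ with $\pi^{1,2}_\#\Gamma\in\Pi_o(\bP,\bQ)$ and $\pi^{1,3}_\#\Gamma\in\Pi_o(\bP,\bO)$. Define $\Psi_t:\cP_2(\R^d)^3\to\cP_2(\R^d)$ by
\begin{equation}
\Psi_t(\mu,\nu_1,\nu_2)=\big((1-t)\T_{\mu}^{\nu_1}+t\,\T_{\mu}^{\nu_2}\big)_\#\mu,
\end{equation}
which is well defined $\Gamma$-a.e. because $\bP\in\cPPa{\R^d}$ guarantees $\mu\in\cPa{\R^d}$ for $\bP$-a.e.\ $\mu$, and hence Brenier OT maps $\T_\mu^{\nu_1},\T_\mu^{\nu_2}$ exist. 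The generalized geodesic on $\cPPa{\R^d}$ then reads $\bP_t=(\Psi_t)_\#\Gamma$. Crucially, for $\Gamma$-a.e.\ triple $(\mu,\nu_1,\nu_2)$, the curve $t\mapsto\Psi_t(\mu,\nu_1,\nu_2)$ is precisely a generalized geodesic in $\cP_2(\R^d)$ between $\nu_1$ and $\nu_2$ with base $\mu$.

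Second, I would use the pushforward formula to rewrite $\bW$ along the curve. Setting $x=(\mu,\nu_1,\nu_2)$ and $y=(\mu',\nu_1',\nu_2')$,
\begin{equation}
\bW(\bP_t)=\tfrac{1}{2}\iint \cW\big(\Psi_t(x),\Psi_t(y)\big)\dd\Gamma(x)\dd\Gamma(y).
\end{equation}
By the joint convexity of $\cW$ along generalized geodesics in $\cP_2(\R^d)\times\cP_2(\R^d)$, for $(\Gamma\otimes\Gamma)$-a.e.\ $(x,y)$ the real-valued map $t\mapsto\cW(\Psi_t(x),\Psi_t(y))$ is convex on $[0,1]$, since $\Psi_t(x)$ and $\Psi_t(y)$ are generalized geodesics on $\cP_2(\R^d)$ based respectively at $\mu,\mu'\in\cPa{\R^d}$. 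Taking a convex combination $t=(1-s)t_0+st_1$, the pointwise inequality
\begin{equation}
\cW(\Psi_t(x),\Psi_t(y))\le(1-s)\cW(\Psi_{t_0}(x),\Psi_{t_0}(y))+s\,\cW(\Psi_{t_1}(x),\Psi_{t_1}(y))
\end{equation}
integrated against $\dd\Gamma(x)\dd\Gamma(y)$ yields $\bW(\bP_t)\le(1-s)\bW(\bP_{t_0})+s\,\bW(\bP_{t_1})$, establishing convexity.

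The main obstacle is really interpretational rather than technical: one has to ensure that ``joint convexity of $\cW$ along generalized geodesics'' is formulated exactly so that it applies to pairs of generalized geodesics with distinct bases $\mu,\mu'$, since the two arguments of $\cW(\Psi_t(x),\Psi_t(y))$ evolve along curves with independent bases coming from different support points of $\Gamma$. Beyond that, one needs mild measurability and integrability checks, namely that $(x,y)\mapsto\cW(\Psi_t(x),\Psi_t(y))$ is $(\Gamma\otimes\Gamma)$-measurable for every $t$ (which follows from the measurability of the assignment $\mu\mapsto\T_\mu^\nu$ under the absolute continuity of $\bP$) and that it is integrable, so the integral-preserves-convexity step is legitimate. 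Once these are in place, the proof is a direct transfer of the classical argument for interaction energies on $\cP_2(\R^d)$ (as in Proposition~9.3.5 of \citealp{ambrosio2005gradient}) lifted to $\cPPa{\R^d}$.
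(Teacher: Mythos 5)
Your proof is correct and takes essentially the same route as the paper: write $\bW(\bP_t)$ as a double integral via the pushforward by $\Psi_t$, observe that for $(\Gamma\otimes\Gamma)$-a.e.\ pair the arguments of $\cW$ trace out a pair of generalized geodesics in $\cP_2(\R^d)$ (with independent bases), apply the pointwise joint-convexity inequality, and integrate. Your added remarks on the precise meaning of ``joint convexity'' and on measurability are reasonable technical caveats but do not constitute a different argument.
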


\begin{proof}
    See \Cref{proof:prop_interaction_cvx}.
\end{proof}

We can also show that $\bF:\bQ\mapsto \frac12 \Ww(\bQ,\bP)^2$ is 1-convex along particular generalized geodesics, which have as anchor point $\bP$.

\begin{proposition} \label{prop:w_1cvx}
    Let $\bP,\bQ,\bO\in \cPPa{\R^d}$. Define the generalized geodesic $t\mapsto \bP_t = \big(\big((1-t)\T_{\pi^1}^{\pi^2} + t \T_{\pi^1}^{\pi^3}\big)_\#\pi^1\big)_\#\Gamma$ where $\Gamma\in\Pi(\bP,\bQ,\bO)$, $\pi^{1,2}_\#\Gamma\in\Pi_o(\bP,\bQ)$ and $\pi^{1,3}_\#\Gamma\in\Pi_o(\bP,\bO)$. Then $\bF:\bQ\mapsto \frac12 \Ww(\bQ,\bP)^2$ is 1-convex along this generalized geodesic, \emph{i.e.}, it satisfies for all $t\in [0,1]$,
    \begin{equation}
        \bF(\bP_t) \le (1-t)\bF(\bQ) +t \bF(\bO) - \frac{t(1-t)}{2}\Ww(\bQ,\bO)^2.
    \end{equation}
\end{proposition}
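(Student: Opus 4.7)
The plan is to adapt the classical argument that $\nu\mapsto \frac12\W_2^2(\nu,\eta)$ is $1$-convex along generalized geodesics based at $\eta$ in $\cP_2(\R^d)$ (see, e.g., \citet[Lemma 9.2.1]{ambrosio2005gradient}) to the hierarchical setting, by reducing to a pointwise inequality at the level of the fibers of the three-plan $\Gamma$. The key observation is that for any $t\in[0,1]$, pushing $\Gamma$ forward by the map $(\mu,\nu,\xi)\mapsto(\mu,\mu_t^{\nu,\xi})$, where $\mu_t^{\nu,\xi}:=\big((1-t)\T_\mu^\nu+t\T_\mu^\xi\big)_\#\mu$, produces an admissible coupling between $\bP$ and $\bP_t$; in particular,
\begin{equation*}
\Ww^2(\bP,\bP_t) \le \iiint \W_2^2\big(\mu,\mu_t^{\nu,\xi}\big)\,\D\Gamma(\mu,\nu,\xi).
\end{equation*}

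The second step is a pointwise estimate. Fix $(\mu,\nu,\xi)$ in the support of $\Gamma$: since $\mu\in\cPa{\R^d}$ (because $\bP\in\cPPa{\R^d}$), Brenier's theorem yields the OT maps $\T_\mu^\nu$ and $\T_\mu^\xi$. Using the (in general non-optimal) coupling $\big(\id,(1-t)\T_\mu^\nu+t\T_\mu^\xi\big)_\#\mu\in\Pi(\mu,\mu_t^{\nu,\xi})$ together with the parallelogram identity $\|(1-t)a+tb\|^2=(1-t)\|a\|^2+t\|b\|^2-t(1-t)\|a-b\|^2$ applied to $a=x-\T_\mu^\nu(x)$, $b=x-\T_\mu^\xi(x)$, I get
\begin{equation*}
\W_2^2(\mu,\mu_t^{\nu,\xi}) \le (1-t)\W_2^2(\mu,\nu) + t\W_2^2(\mu,\xi) - t(1-t)\int \|\T_\mu^\nu(x)-\T_\mu^\xi(x)\|^2\,\D\mu(x).
\end{equation*}

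The third step integrates this inequality against $\Gamma$. The optimality assumptions $\pi^{1,2}_\#\Gamma\in\Pi_o(\bP,\bQ)$ and $\pi^{1,3}_\#\Gamma\in\Pi_o(\bP,\bO)$ turn the first two terms into $(1-t)\Ww^2(\bP,\bQ)$ and $t\Ww^2(\bP,\bO)$. For the last term, I use that $(\T_\mu^\nu,\T_\mu^\xi)_\#\mu\in\Pi(\nu,\xi)$ gives $\int \|\T_\mu^\nu-\T_\mu^\xi\|^2\,\D\mu \ge \W_2^2(\nu,\xi)$, together with $\pi^{2,3}_\#\Gamma\in\Pi(\bQ,\bO)$, which implies $\iiint \W_2^2(\nu,\xi)\,\D\Gamma \ge \Ww^2(\bQ,\bO)$. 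Combining everything, I obtain $\Ww^2(\bP,\bP_t)\le (1-t)\Ww^2(\bP,\bQ)+t\Ww^2(\bP,\bO)-t(1-t)\Ww^2(\bQ,\bO)$, and dividing by $2$ yields the claim.

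I do not expect real difficulty here, as the argument is essentially the pointwise Wasserstein computation threaded through the fiber structure. The only genuinely technical point is ensuring that $(\mu,\nu,\xi)\mapsto\mu_t^{\nu,\xi}$ is a well-defined measurable map into $\cP_2(\R^d)$, so that the pushforward $\bP_t$ and the integrals above make sense; this reduces to a measurable selection for $(\mu,\nu)\mapsto\T_\mu^\nu$ in $L^2(\mu,\R^d)$, which is standard given $\bP$-a.e.\ uniqueness of the Brenier map.
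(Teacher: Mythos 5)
Your proof is correct and follows essentially the same approach as the paper's: push $\Gamma$ forward to get an admissible coupling between $\bP$ and $\bP_t$, apply the parallelogram identity in $L^2(\mu)$ to the interpolated transport maps, then use the two marginal optimality assumptions on $\Gamma$ plus the fact that $(\T_\mu^\nu,\T_\mu^\xi)_\#\mu\in\Pi(\nu,\xi)$ and $\pi^{2,3}_\#\Gamma\in\Pi(\bQ,\bO)$. The one small divergence is in the inner estimate: you content yourself with $\W_2^2(\mu,\mu_t^{\nu,\xi})\le\|(1-t)\T_\mu^\nu+t\T_\mu^\xi-\id\|^2_{L^2(\mu)}$ via an admissible (not necessarily optimal) coupling, whereas the paper observes that $(1-t)\T_\mu^\nu+t\T_\mu^\xi$ is itself a Brenier map (a gradient of a convex function), so this is an equality. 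For the purpose of the upper bound both work; your version is marginally more economical since it avoids the extra invocation of Brenier's theorem.
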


\begin{proof}
    See \Cref{proof:prop_w_1cvx}.
\end{proof}

In particular, \Cref{prop:w_1cvx} is the main result allowing to show \Cref{prop:jko_scheme}, which can be applied for $\lambda$-convex potential energies with $\lambda\ge 0$ (or more generally, for any $\lambda \in \R$ such that $\frac{1}{\tau} + \lambda \ge 0$, see \citep[Assumption 4.0.1 and Theorem 4.0.4]{ambrosio2005gradient}), and for convex interaction energies.

\section{Proofs} \label{appendix:proofs}

\subsection{Proof of \Cref{prop:surjective_map_PM}}\label{proof:surjective_map_PM}

In the forthcoming proofs, we will make use of the following selection theorem, whose statement can be found in \citep[Chapter 5, Bibliographical notes]{villani2009optimal} :

\begin{theorem} \label{th:selection}
    If $f : A \mapsto B$ is a Borel surjective map between Polish spaces (i.e. separable complete metric spaces), such that all the fibers $f^{-1}(y)$, $y \in B$ are compact, then $f$ admits a Borel right-inverse.
\end{theorem}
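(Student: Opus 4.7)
The plan is to construct the right-inverse as a canonical selector that picks one point per fiber using a fixed linear order. Since $A$ is Polish, I may embed it homeomorphically as a subset of the Hilbert cube $H := [0,1]^{\mathbb{N}}$, preserving compactness of the fibers $f^{-1}(y) \subset H$. Equip $H$ with the lexicographic order. On any non-empty compact $K \subset H$ the lex-infimum is attained: set $a_1 := \min_{x \in K} x_1$ (attained by compactness and continuity of the first projection), then $K_1 := K \cap \{x_1 = a_1\}$ is compact; iterate to obtain coordinates $a_n$ and a nested sequence of compacts $K_n$. By finite intersection property $\bigcap_n K_n$ is non-empty and equals $\{a\}$ where $a = (a_n)_n$, the lex-minimum of $K$.

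Define $g : B \to A$ by $g(y) := \mathrm{lexmin}\big(f^{-1}(y)\big)$. Then $f \circ g = \mathrm{id}_B$ by construction, so $g$ is the desired set-theoretic right-inverse; it remains only to verify Borel measurability.

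For this, write $g_n$ for the $n$-th coordinate of $g$. Observe that
\begin{equation*}
    \{y \in B : g_1(y) < t\} = f\big(A \cap \{x_1 < t\}\big),
\end{equation*}
and more generally, each level set $\{y : g_1(y) = a_1, \ldots, g_{n-1}(y) = a_{n-1},\ g_n(y) < t\}$ can be written as the $f$-image of a Borel subset of $A$ (the intersection of the prescribing slabs with $A$, which is Borel). The critical ingredient is the \emph{Lusin--Souslin theorem}: a Borel map between Polish spaces whose fibers are compact (more generally $\sigma$-compact, i.e.\ $K_\sigma$) sends Borel sets to Borel sets. Applying it, each of the sets above is Borel, whence each $g_n$ is Borel as a map $B \to [0,1]$, and $g = (g_n)_n$ is Borel as a map into $H$; since its image lies in $A$, which is Borel in $H$, $g$ is Borel as a map into $A$.

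The main obstacle is the Lusin--Souslin image theorem itself: without the compact-fiber hypothesis, $f$-images of Borel sets are only analytic, and one obtains merely a universally measurable (Jankov--von Neumann) selector. The standard proof first changes the Polish topology on $A$ so that $f$ becomes continuous while keeping the Borel $\sigma$-algebra fixed, then exploits that continuous images of compact sets are compact (hence Borel), and finally performs an induction over the Borel hierarchy using the Souslin operation to promote the result from closed sets to arbitrary Borel sets. It is precisely the compactness of the fibers which converts the lex-min construction into a Borel selector rather than only an analytically measurable one.
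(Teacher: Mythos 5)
The paper does not actually prove this statement: it is quoted as a known selection theorem, with a pointer to the bibliographical notes of Villani's book, and is then used as a black box in Lemmas C.2 and C.4. So your proposal has to stand on its own, and as written it does not. The step you call the critical ingredient is false: it is \emph{not} true that a Borel map between Polish spaces with compact (or $\sigma$-compact) fibers sends Borel sets to Borel sets. The coordinate projection $\pi:[0,1]^2\to[0,1]$ has compact fibers, yet there are Borel subsets $C\subseteq[0,1]^2$ whose projection $\pi(C)$ is analytic and non-Borel (every analytic subset of $[0,1]$ arises as the projection of a Borel subset of the square, via the identification of $\mathbb{N}^{\mathbb{N}}$ with the irrationals). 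The reason your sketched justification fails is exactly the point at issue: restricting $f$ to a Borel set $C$ destroys $\sigma$-compactness of the fibers ($f^{-1}(y)\cap C$ is merely Borel inside a compact set), so no induction over the Borel hierarchy can promote the image theorem from closed sets to all Borel sets. The correct tool is the Arsenin--Kunugui (Saint Raymond) theorem: if $R\subseteq B\times A$ is Borel and every section $R_y$ is $\sigma$-compact, then $\mathrm{proj}_B(R)$ is Borel and $R$ admits a Borel uniformizing function. Applied to the flipped graph $R=\{(y,x):f(x)=y\}$, whose sections are the compact fibers, this theorem already yields the desired Borel right-inverse in one line, so the lexicographic-minimum construction is not buying you anything beyond what the projection theorem must supply anyway.

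There is also a second, more local error: for $n\ge 2$ your identity between level sets and images of slabs is wrong. For fixed constants $a_1,\dots,a_{n-1}$, the image $f\big(A\cap\{x_1=a_1,\dots,x_{n-1}=a_{n-1},\,x_n<t\}\big)$ consists of those $y$ whose fiber contains \emph{some} point with these coordinates, which does not force $g_i(y)=a_i$ (the lex-min may have strictly smaller earlier coordinates); moreover, knowing such level sets for each fixed $a$ does not by itself give measurability of $g_n$, since uncountably many values of $a$ occur. The construction can be repaired: only the case $n=1$ uses a plain slab, and there the restricted fibers $f^{-1}(y)\cap\{x_1<t\}$ are open subsets of compact sets, hence $\sigma$-compact, so Arsenin--Kunugui applies; for $n\ge 2$ one should argue inductively, using the already-established Borel maps $g_1,\dots,g_{n-1}$ to form the Borel set $\{(y,x):f(x)=y,\ x_i=g_i(y)\ \text{for } i<n,\ x_n<t\}$, whose sections are again open in compact sets, and take its projection. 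But note that after this repair your proof still consumes the full strength of the $\sigma$-compact-section projection theorem, whose uniformization form is precisely the statement you set out to prove.
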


This allows us to prove 
\begin{lemma} \label{lemma:selection_1}
    There exists a measurable selection $s : \cM^2 \mapsto T\cM$ of the map
    \begin{equation}
        f : \left\{\begin{array}{ccc} A = \{(x,v) \in T\cM,\  d\big(x,\exp_x(v)\big) = \|v\|_x\} & \rightarrow & \cM \times \cM \\ (x,v) & \mapsto & \big(x,\exp_x(v)\big). \end{array}\right.
    \end{equation}
\end{lemma}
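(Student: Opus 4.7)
The plan is to apply \Cref{th:selection} directly. To do so, I need to verify three things: that $A$ is a Polish space, that $f$ is Borel (continuous will suffice) and surjective onto $\cM\times\cM$, and that all fibers $f^{-1}(x,y)$ are compact.

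First, the target space $\cM\times\cM$ is Polish since $\cM$ is a compact connected Riemannian manifold. The tangent bundle $T\cM$ is Polish as well (finite-dimensional smooth manifold with a natural complete metric). The map $(x,v)\mapsto d(x,\exp_x(v))-\|v\|_x$ is continuous on $T\cM$, so $A$ is a closed subset of $T\cM$ and is therefore Polish. The map $f:(x,v)\mapsto(x,\exp_x(v))$ is continuous (hence Borel) as a restriction of a continuous map on $T\cM$.

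Next, I verify surjectivity. Given $(x,y)\in\cM\times\cM$, since $\cM$ is compact and connected, Hopf--Rinow guarantees the existence of a minimizing geodesic $\gamma:[0,1]\to\cM$ with $\gamma(0)=x$, $\gamma(1)=y$ and length $d(x,y)$. Setting $v=\dot\gamma(0)\in T_x\cM$, we get $\exp_x(v)=y$ and $\|v\|_x=d(x,y)=d(x,\exp_x(v))$, so $(x,v)\in A$ and $f(x,v)=(x,y)$.

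For the compactness of the fibers: a fiber $f^{-1}(x,y)=\{v\in T_x\cM:\exp_x(v)=y \text{ and }\|v\|_x=d(x,y)\}$ is a subset of the sphere of radius $d(x,y)$ in the finite-dimensional vector space $T_x\cM$, which is compact since $\cM$ (and thus $d$) is bounded. Moreover it is closed in $T_x\cM$ by continuity of $\exp_x$ and of the norm, hence closed in that compact sphere, and therefore compact.

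With these facts in hand, \Cref{th:selection} provides a Borel right-inverse $s:\cM\times\cM\to A\subset T\cM$ of $f$, which is the desired measurable selection. I do not anticipate a serious obstacle; the only subtlety is remembering that compactness of $\cM$ is what makes both surjectivity (via Hopf--Rinow) and fiber compactness (via bounded distance) go through simultaneously, so if one were to attempt this on a non-compact manifold the argument would need to be adapted.
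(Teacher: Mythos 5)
Your proof is correct and follows essentially the same route as the paper's: show $A$ is Polish as a closed subset of $T\cM$, verify that the fibers of $f$ are compact, and invoke the measurable-selection theorem. The only difference is that you also explicitly verify surjectivity of $f$ via Hopf--Rinow, a step the paper leaves implicit; otherwise the arguments are the same (the paper phrases fiber compactness via sequential compactness, you phrase it as a closed subset of a compact sphere, which is equivalent).
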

\begin{proof}
    First, $A \subseteq T\cM$ is a Polish space, as a closed subset of $T\cM$ which is Polish. Second, for every $(x,y) \in \cM^2$, the fiber $f^{-1}(x,y)$ is compact. Indeed, if we let $\{(x_n,v_n)\}_{n=1}^\infty \subseteq f^{-1}(x,y)$, then we have for every $n$, $x_n = x$, $y = \exp_x(v_n)$ and $\|v_n\|_x = d(x,y)$, so by compactness of the spheres in the tangent space $T_x \cM$, up to extracting a subsequence there exists $v \in T_x\cM$ such that $v_n \to v$, and by continuity $y = \exp_x(v)$ so that $(x,v) \in f^{-1}(x,y)$. We can thus apply \Cref{th:selection} to $f$ to deduce the existence of $s$.
\end{proof}

Now, we can prove \Cref{prop:surjective_map_PM}. If $\gamma \in \exp^{-1}_\mu(\nu)$, we have $(\pi^\cM,\exp)_\#\gamma \in \Pi(\mu,\nu)$ by definition of $\exp^{-1}_\mu(\nu)$. Furthermore
\begin{align}
    \int_{\cM^2} d(x,y)^2 \mathrm{d}(\pi^\cM,\exp)_\#\gamma(x,y) &= \int_{T\cM} d(x,\exp_x(v))^2 \mathrm{d}\gamma(x,v) \\
    &\leq \int_{T\cM} \|v\|^2_x \mathrm{d}\gamma(x,v) = \W^2_2(\mu,\nu),
\end{align}
so this transport plan is optimal. In particular the inequality is an equality, and we find that $d(x,\exp_x(v)) = \|v\|_x$ for $\gamma$-a.e. $(x,v)$. The map is thus well defined. To show that it is surjective, take $\gamma \in \Pi_o(\mu,\nu)$, and set $\tilde{\gamma} := s(\pi_1,\pi_2)_\#\gamma$, where $s : \cM^2 \mapsto T\cM$ is the selection map defined in \Cref{lemma:selection_1}. Then $\tilde{\gamma} \in \exp_\mu^{-1}(\nu)$. Indeed, by construction, $(\pi^\cM,\exp)_\#\tilde{\gamma} = \gamma \in \Pi(\mu,\nu)$, and also
\begin{equation}
    \W_2^2(\mu,\nu) = \int_{\cM^2} d(x,y)^2 \mathrm{d}\gamma(x,y) = \int_{T\cM} d(x,\exp_x(v))^2 \mathrm{d}\tilde{\gamma}(x,v) = \int_{T\cM} \|v\|_x^2 \mathrm{d}\tilde{\gamma}(x,v).
\end{equation}
This proves the surjectivity of the map.

Now, assume that $\mu$ is absolutely continuous, and that $\cM$ is compact and connected. Let $\gamma \in \exp^{-1}_\mu(\nu)$ and $\tilde{\gamma} := (\pi^\cM,\exp)_\#\gamma \in \Pi_o(\mu,\nu)$. By \Cref{th:mccann}, $\tilde{\gamma}$ is of the form $(\id,T)_\#\mu$ where $T$ is the unique optimal transport map from $\mu$ to $\nu$. Furthermore $T$ itself is of the form $T(x) = \exp_x(-\nabla \varphi(x))$ where $\varphi$ is a $c$-concave function $\cM \to \R$ (in fact a Kantorovich potential for the pair $\mu,\nu$). Furthermore, we have, for $\mu$-a.e. $x \in \cM$, $T(x)$ belongs to
\begin{equation}
    \partial^c \varphi(x) := \{y \in \cM,\ \varphi(x) + \varphi^c(y) = \frac 12 d(x,y)^2 \}.
\end{equation}
This implies, by \citep[Theorem 1.8]{gigli2011inverse}, that for $\mu$-a.e. $x\in \cM$, $\exp^{-1}_x(T(x)) \subseteq -\partial^+ \varphi(x)$, where $\partial^+ \varphi(x)$ is the superdifferential of $\varphi$. However, since $\cM$ is compact, $\varphi$ is Lipschitz and is thus differentiable almost everywhere, so that $\partial^+ \varphi(x) = \{\nabla \varphi(x)\}$ almost everywhere (in particular this holds $\mu$-a.e. as $\mu$ is absolutely continuous). From this, we conclude that for $\gamma$-a.e. $(x,v)$, we have 
$T(x) = \exp_x(v)$ with $\exp^{-1}_x(T(x)) = \{-\nabla \varphi(x)\}$, so that $v = -\nabla \varphi(x)$. Thus, we have proved that
\begin{equation}
    \gamma = (\id, -\nabla \varphi)_\#\mu.
\end{equation}
This finishes the proof. \qed

\subsection{Proof of \Cref{prop:surjective_map_PPM}}\label{proof:surjective_map_PPM}

We first state and prove another selection result:

\begin{lemma} \label{lemma:selection_2}
    There exists a measurable selection $s : \cP_2(\cM)^2 \mapsto \cP_2(T\cM)$ of the map
    \begin{equation}
        f : A = \left\{\begin{array}{ccc} \{\gamma \in \cP_2(T\cM) | \gamma \in \exp_{\pi^\cM_\#\gamma}^{-1}(\exp_\#\gamma) \} & \rightarrow & \cP_2(\cM) \times \cP_2(\cM) \\ \gamma & \mapsto & (\pi^\cM_\#\gamma, \exp_\#\gamma). \end{array}\right.
    \end{equation}
\end{lemma}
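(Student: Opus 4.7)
The plan is to apply the measurable selection theorem invoked in the proof of \Cref{lemma:selection_1} to the map $f : A \to \cP_2(\cM) \times \cP_2(\cM)$. Three conditions must be verified: (i) $A$ is Polish, (ii) $f$ is Borel (in fact continuous) and surjective, and (iii) every fiber $f^{-1}(\mu,\nu) = \exp_\mu^{-1}(\nu)$ is compact.

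For (i) and (ii), I equip $\cP_2(T\cM)$ with the $2$-Wasserstein topology, which is Polish (one may metrize $T\cM$ via an isometric embedding of $\cM$ into Euclidean space, for instance). Since $\cM$ is compact, the maps $\pi^\cM,\exp:T\cM\to\cM$ are continuous, so the pushforwards $\gamma\mapsto\pi^\cM_\#\gamma$ and $\gamma\mapsto\exp_\#\gamma$ are $\W_2$-continuous from $\cP_2(T\cM)$ into $\cP_2(\cM)$. The functional $\gamma\mapsto\int\|v\|_x^2\,\mathrm{d}\gamma(x,v)$ is also $\W_2$-continuous on $\cP_2(T\cM)$, and $\W_2^2$ is jointly continuous on $\cP_2(\cM)^2$. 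Consequently the set $A$, defined by the single equation $\int\|v\|_x^2\,\mathrm{d}\gamma = \W_2^2(\pi^\cM_\#\gamma,\exp_\#\gamma)$, is closed in $\cP_2(T\cM)$ and hence Polish. Continuity of $f$ is then immediate, and surjectivity is precisely the non-emptiness statement of \Cref{prop:surjective_map_PM}.

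For (iii), I invoke the second conclusion of \Cref{prop:surjective_map_PM}: every $\gamma\in\exp_\mu^{-1}(\nu)$ satisfies $\|v\|_x = d(x,\exp_x(v)) \le \mathrm{diam}(\cM)$ for $\gamma$-a.e.\ $(x,v)$. Such a $\gamma$ is therefore supported in the closed disk bundle $B := \{(x,v)\in T\cM : \|v\|_x\le\mathrm{diam}(\cM)\}$, which is compact since $\cM$ is. By Prokhorov's theorem, $\cP(B)$ is weakly compact, and on $\cP(B)$ weak convergence coincides with $\W_2$-convergence because $B$ is bounded. The defining constraints of $\exp_\mu^{-1}(\nu)$ pass to the limit under this convergence, so the fiber is a closed subset of the compact space $\cP(B)$, and is therefore compact.

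The selection theorem then yields the desired Borel right-inverse $s : \cP_2(\cM)^2 \to A$. The main subtle point is (iii): the non-compactness of $T\cM$ is only bypassed because the optimality constraint built into $\exp_\mu^{-1}(\nu)$ forces the tangent-vector mass to concentrate on a fixed compact subset of the tangent bundle; without this, the fibers would generally fail to be tight.
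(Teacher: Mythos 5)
Your proposal is correct and follows the same approach as the paper: establish that $A$ is Polish (closed in $\cP_2(T\cM)$) and that the fibers $\exp_\mu^{-1}(\nu)$ are compact, then invoke the measurable selection theorem. The only minor difference is that for closedness of $A$ you appeal directly to $\W_2$-continuity of $\gamma \mapsto \int \|v\|_x^2\,\mathrm{d}\gamma$ on all of $\cP_2(T\cM)$, whereas the paper sidesteps this by observing that all $\gamma \in A$ are supported on the compact disk bundle and uses weak convergence there; both routes are sound.
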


\begin{proof}
    We first prove that $A \subseteq \cP_2(T\cM)$ is a Polish space. All we need to do is to prove that it is closed : indeed, since $T\cM$ is a connected Riemannian manifold (with the Sasaki metric), $(\cP_2(T\cM),\W_2)$ is a Polish space. (See \citep[Proposition 7.1.5]{ambrosio2005gradient}. Similarly $\cP_2(\cM)$ is a Polish space.) Note first that if $\gamma \in A$, it is supported on the compact set $K = \{(x,v) \in T\cM,\ \|v\|_x \leq \mathrm{diam}(\cM)\} \subseteq T\cM$, as $\|v\|_x = d(x,\exp_x(v))$ $\gamma$-almost everywhere. Let $\{\gamma_n\}_{n=1}^\infty \subseteq A$ converging to $\gamma \in \cP_2(T\cM)$ in the $\W_2$ metric. Then $\gamma$ is also supported on $K$, and for every $n$, since $\gamma_n \in \exp^{-1}_{\pi^\cM_\#\gamma_n}(\exp_\#\gamma_n)$, we have
    \begin{equation}
        \int \|v\|^2_x \dd \gamma_n(x,v) = \W_2^2(\pi^\cM_\#\gamma_n, \exp_\#\gamma_n).
    \end{equation}
    Letting $n \to \infty$, we find
    \begin{equation} \label{eq:eq_proofs_l62}
        \int \|v\|^2_x \dd \gamma(x,v) = \W_2^2(\pi^\cM_\#\gamma, \exp_\#\gamma).
    \end{equation}
    Indeed, $\W_2(\gamma_n,\gamma) \to 0$ implies weak convergence of $\gamma_n$ to $\gamma$, and thus of $\pi^\cM_\#\gamma_n$ and $\exp_\#\gamma_n$ to respectively $\pi^\cM_\#\gamma$ and $\exp_\#\gamma$, and since $\cM$ is compact, weak convergence on $\cP_2(\cM)$ is the same as $\W_2$ convergence, which in turn implies the convergence of the right-hand side. The left-hand side converges similarly in virtue of the weak convergence of $\gamma_n$ to $\gamma$ (recall that they are supported on the compact set $K$ on which the function $(x,v) \to \|v\|^2_x$ is bounded). Therefore, \eqref{eq:eq_proofs_l62} implies that $\gamma \in \exp^{-1}_{\pi^\cM_\#\gamma}(\exp_\#\gamma)$, so that $\gamma \in A$, and $A$ is thus closed in $\cP_2(T\cM)$, and a Polish space. \newline
    Now, we prove that the fibers of $f$ are compact. Let $\mu,\nu \in \cP_2(\cM)$ and $\{\gamma_n\}_{n=1}^\infty \subseteq f^{-1}(\mu,\nu)$. Since they are supported on $K$, and $\cP(K)$ is compact by compactness of $K$, there exists $\gamma \in \cP(K) \subseteq \cP_2(T\cM)$ such that, up to extracting a subsequence, $\gamma_n$ converges to $\gamma$, both weakly and in the $\W_2$ metric. We then check as above that $\gamma \in A$, with $\pi^\cM_\#\gamma = \mu$ and $\exp_\#\gamma = \nu$. This proves that the fibers of $f$ are compact. Now, the existence of $s$ follows again from \Cref{th:selection}.
\end{proof}

The proof of \Cref{prop:surjective_map_PPM} is pretty much similar to the previous one. If $\bGamma \in \exp^{-1}_\bP(\bQ)$, we have $(\phi^\cM,\phi^{\exp})_\#\bGamma \in \Pi(\bP,\bQ)$ by definition of $\exp^{-1}_\bP(\bQ)$. Furthermore,
\begin{align}
    \int_{\cP_2(\cM)^2} \W_2^2(\mu,\nu) \mathrm{d}(\phi^\cM,\phi^{\exp})_\#\bGamma(\mu,\nu) &= \int_{\cP_2(T\cM)} \W_2^2(\pi^\cM_\#\gamma,\exp_\#\gamma) \mathrm{d}\bGamma(\gamma) \\
    &\leq \int_{\cP_2(T\cM)} \int_{\cM^2} d(x,y)^2 \mathrm{d}(\pi_{\cM},\exp)_\#\gamma(x,y) \mathrm{d}\bGamma(\gamma) \\
    &\leq \int_{\cP_2(T\cM)} \int_{T\cM} d(x,\exp_x(v))^2 \mathrm{d}\gamma(x,v) \mathrm{d}\bGamma(\gamma) \\
    &\leq \int_{\cP_2(T\cM)} \int_{T\cM} \|v\|_x^2 \mathrm{d}\gamma(x,v) \mathrm{d}\bGamma(\gamma) = \Ww(\bP,\bQ)^2,
\end{align}
so this transport plan is optimal. In particular all the inequalities are equalities, and we find that for $\bGamma$-a.e. $\gamma$, $\W_2^2(\pi^\cM_\#\gamma,\exp_\#\gamma) = \int \|v\|_x^2 \mathrm{d}\gamma(x,v)$ hence $\gamma \in \exp_{\pi^\cM_\#\gamma}^{-1}(\exp_\#\gamma)$. The map is thus well defined. To show that it is surjective, take $\bGamma \in \Pi_o(\bP,\bQ)$, and set $\tilde{\bGamma} := s(\pi_1,\pi_2)_\#\bGamma$, where $s : \cP_2(\cM)^2 \mapsto \cP_2(T\cM)$ is the selection map defined in \Cref{lemma:selection_2}.
Then $\tilde{\bGamma} \in \exp_\bP^{-1}(\bQ)$ as, by construction, $(\phi^\cM,\phi^{\exp})_\#\tilde{\bGamma} = \bGamma \in \Pi(\bP,\bQ)$, and also
\begin{align}
    \Ww(\bP,\bQ)^2 &= \int_{\cP_2(\cM)^2} \W_2^2(\mu,\nu) \dd\bGamma(\mu,\nu) \\
    &= \int_{\cP_2(T\cM)} \W_2^2(\pi^\cM_\#\gamma,\exp_\#\gamma) \dd\tilde{\bGamma}(\gamma) \\
    &= \int_{\cP_2(T\cM)} \int_{T\cM} \|v\|^2_x \dd\gamma(x,v) \dd\tilde{\bGamma}(\gamma).
\end{align}
This proves the surjectivity of the map.

Now, assume that $\bP$ is absolutely continuous with respect to $\bP_0$, and that $\cM$ is compact and connected. Let $\bGamma \in \exp^{-1}_\bP(\bQ)$ and $\tilde{\bGamma} := (\phi^\cM,\phi^{\exp})_\#\bGamma \in \Pi_o(\bP,\bQ)$. By \citep[Theorem 13]{emami2024monge}, $\tilde{\bGamma}$ is of the form $(\id,T)_\#\bP$ where $T$ is the unique optimal transport map from $\bP$ to $\bQ$. Furthermore, for $\bGamma$-a.e. $\gamma$, we have $\gamma \in \exp^{-1}_{\mu_\gamma}(\nu_\gamma)$, with $\mu_\gamma := \pi^\cM_\#\gamma$, and $\nu_\gamma := \exp_\#\gamma$. Since $\tilde{\bGamma} = (\id,T)_\#\bP$ and $\bP$ is concentrated on absolutely continuous measures (as $\bP \ll \bP_0$), we also have that for $\bGamma$-a.e. $\gamma$, $\mu_\gamma$ is absolutely continuous and $\nu_\gamma = T(\mu_\gamma)$. Thus, by \Cref{prop:surjective_map_PM}, we have $\gamma = (\id,-\nabla \varphi_{\mu_\gamma,T(\mu_\gamma)})_\#\mu_\gamma = \big(\mu \to (\id,-\nabla\varphi_{\mu,T(\mu)})\big) \circ \phi^\cM (\gamma)$ for $\bGamma$-a.e. $\gamma$. Therefore, we have proved that
\begin{equation}
    \bGamma = \big(\mu \to (\id,-\nabla\varphi_{\mu,T(\mu)})\big)_\# \phi^\cM_\# \bGamma = \big(\mu \to (\id, -\nabla \varphi_{\mu,T(\mu)})_\#\mu\big)_\#\bP.
\end{equation}
This finishes the proof. \qed

\begin{remark} \label{rk:PPM_opt_plan_explicit_form}
    As a side note, there exists a more explicit expression for the unique $\bGamma \in \exp^{-1}_{\bP}(\bQ)$. Let indeed $U : \cP_2(\cM) \mapsto \R$ be a Kantorovich potential for $\bP,\bQ$ (\emph{i.e.} a $\frac 12 \W_2^2$-concave function solving the dual problem). Then, by the same reasoning as in the proof of \citep[Theorem 13]{emami2024monge}, for $\bP$-almost every $\mu \in \cP_2(\cM)$, we have $\nabla \varphi_{\mu,T(\mu)} = DU(\mu)$. Thus, we have
    \begin{equation}
        \bGamma = \big(\mu \to (\id, -DU(\mu))_\#\mu\big)_\#\bP.
    \end{equation}
\end{remark}

\subsection{Proof of \Cref{prop:continuity_eq}}\label{proof:continuity_eq}

The proof is inspired from \citep[Proposition 2.5]{erbar2010heat} and \citep[Theorem 8.3.1]{ambrosio2005gradient}.

First, fix $\varphi \in \cyl$, such that $\varphi(\mu) := F\left(\int V_1 \dd\mu, \ldots, \int V_m \dd\mu\right)$ with $F \in C^\infty_c(\R^m)$ and $V_1,\ldots,V_m \in C^\infty_c(\cM)$. Let us define $H:\cP_2(\cM)\times \cP_2(\cM)\to \R$ as
\begin{equation}
    H(\mu,\nu) := \begin{cases}
        \frac{|\varphi(\mu)-\varphi(\nu)|}{\W_2(\mu,\nu)} & \hbox{ if } \mu \neq \nu \\
        \|\gW \varphi(\mu)\|_{L^2(\mu)} & \hbox{ if } \mu = \nu.
    \end{cases}
\end{equation}
We show in \Cref{lemma:h_upper_semicontinuous} that $H$ is upper semicontinuous. We want to prove that $t \to \bP_t(\varphi) = \int \varphi \dd \bP_t$ is absolutely continuous and bound its metric derivative. For every $s,t \in I$, let $\Gamma_{s,t} \in \Pi_o(\bP_s,\bP_t)$. Then
\begin{equation}
    \begin{aligned}
    \frac{1}{|h|} |\bP_{s+h}(\varphi) - \bP_s(\varphi)| &\leq \frac{1}{|h|} \int_{\cP(\cM)^2} |\varphi(\mu)-\varphi(\nu)|\ \mathrm{d}\Gamma_{s+h,s}(\mu,\nu) \\
    &\leq \frac{1}{|h|} \int_{\cP(\cM)^2} \W_2(\mu,\nu) H(\mu,\nu)\ \mathrm{d}\Gamma_{s+h,s}(\mu,\nu) \\
    &\leq \frac{\Ww(\bP_{s+h},\bP_s)}{|h|} \sqrt{\int_{\cP(\cM)^2} H^2(\mu,\nu) \ \mathrm{d}\Gamma_{s+h,s}(\mu,\nu)}.
    \end{aligned}
\end{equation}
Now, we have $\Gamma_{s+h,s} \rightharpoonup (\id,\id)_\#\bP_s$ when $h \to 0$, so, since $H$ is upper semicontinuous, by \citep[Lemma 5.1.7]{ambrosio2005gradient}, we have
\begin{equation}
    \limsup_{h \to 0} \int_{\cP(\cM)^2} H^2(\mu,\nu)\ \mathrm{d}\Gamma_{s+h,s}(\mu,\nu) \leq \int_{\cP(\cM)} H^2(\mu,\mu)\ \mathrm{d}\bP_s(\mu) = \int_{\cP(\cM)} \|\gW\varphi(\mu)\|^2_{L^2(\mu)}\ \mathrm{d}\bP_s(\mu),
\end{equation}
and thus $s \mapsto \bP_s(\varphi)$ is absolutely continuous, with metric derivative bounded from above by
\begin{equation}
    |\bP'|(s) \|\gW \varphi\|_{L^2(\bP_s,T\cP_2(\cP_2(\cM)))}.
\end{equation}

Let now $\varphi \in \cFC$, $Q = I \times \cP_2(\cM)$, and $\lambda = \int_I \mathrm{d}\bP_t \mathrm{d}t$. Then, for any interval $J \subseteq I$ such that $\spt(\varphi) \subseteq J \times \cP_2(\cM)$,
\begin{equation}
    \begin{aligned}
    \left|\int_Q \partial_t\varphi\  \mathrm{d}\lambda \right| &= \left|\lim_{h \to 0^+} \int_Q \frac{\varphi(t,\mu) - \varphi(t-h,\mu)}{h} \ \mathrm{d}\lambda(t,\mu) \right| \\
    &\leq \limsup_{h \to 0^+} \left| \int_J \frac{\bP_t(\varphi_t) - \bP_{t+h}(\varphi_t)}{h}\  \mathrm{d}t \right| \\
    &\leq \int_J \limsup_{h \to 0^+} \frac{|\bP_t(\varphi_t) - \bP_{t+h}(\varphi_t)|}{|h|}\ \dd t \\
    &\leq \int_J |\bP'|(t) \|\gW\varphi_t\|_{L^2(\bP_t)}\ \dd t \\
    &\leq \sqrt{\int_J |\bP'|^2(t)\ \dd t} \sqrt{\int_J \|\gW\varphi_t\|_{L^2(\bP_t)}^2\ \dd t }.
    \end{aligned}
\end{equation}
From this, we infer that the linear form
\begin{equation}
    L(\gW \varphi) := -\int_Q \partial_t \varphi\ \dd\lambda
\end{equation}
is well-defined and Lipschitz continuous. In particular, there exists $v \in \overline{\{\gW\varphi,\  \varphi \in \cFC\}}^{L^2(\lambda, T\cP_2(\cM))}$ such that for every $\varphi \in \cFC$,
\begin{equation}
    L(\gW \varphi) = \sca{v}{\gW\varphi}_{L^2(\lambda)} = \int_I \int_{\cP_2(\cM)} \sca{v_t(\mu)}{\gW \varphi(\mu)}_{L^2(\mu)}\ \dd\bP_t(\mu) \dd t,
\end{equation}
and we have the continuity equation.

Moreover, for a.e. $t\in I$, $v_t\in \overline{\{\gW\varphi, \ \varphi \in \cFC\}}^{L^2(\bP_t, T\cP_2(\cM))}$, and for every interval $J \subseteq I$, there exists a sequence $(\gW\varphi_n)_n$ supported in $J \times \cP_2(\cM)$ such that $\gW\varphi_n \to v_t \bOne_J$. For all $n$,
\begin{equation}
    L(\gW \varphi_n) \le \left(\int_J |\bP'|(t)^2\ \mathrm{d}t\right)^\frac12 \left(\int_J \|\gW\varphi_n\|_{L^2(\bP_t, T\cP_2(\cM))}^2\ \mathrm{d}t\right)^\frac12
\end{equation}
and letting $n \to \infty$, we find
\begin{equation}
    \begin{aligned}
    \int_J \|v_t\|_{L^2(\bP_t, T\cP_2(\cM))}^2\ \mathrm{d}t &\leq \left(\int_J |\bP'|(t)^2\ \mathrm{d}t\right)^\frac12 \left(\int_J \|v_t\|_{L^2(\bP_t, T\cP_2(\cM))}^2\ \mathrm{d}t\right)^\frac12 \\
    \int_J \|v_t\|_{L^2(\bP_t, T\cP_2(\cM))}^2\ \mathrm{d}t &\leq \int_J |\bP'|(t)^2\ \mathrm{d}t .
    \end{aligned}
\end{equation}
We thus conclude that for a.e. $t\in I$, 
\begin{equation}
    \|v_t\|_{L^2(\bP_t, T\cP_2(\cM))} \le |\bP'|(t). \qedhere
\end{equation}

We now show that $H$ is upper semicontinuous.

\begin{lemma} \label{lemma:h_upper_semicontinuous}
    Let $\varphi\in\cyl$. The function $H:\cP_2(\cM)\times\cP_2(\cM)\to \R$ defined as 
    \begin{equation}
        H(\mu,\nu) := \begin{cases}
            \frac{|\varphi(\mu)-\varphi(\nu)|}{W_2(\mu,\nu)} & \hbox{ if } \mu \neq \nu \\
            \|\gW \varphi(\mu)\|_{L^2(\mu)} & \hbox{ if } \mu = \nu,
        \end{cases}
    \end{equation}
    is upper semicontinuous.
\end{lemma}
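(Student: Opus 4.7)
The function $H$ is manifestly continuous on the open set $\cP_2(\cM)^2 \setminus \Delta$, where $\Delta = \{(\mu, \mu) : \mu \in \cP_2(\cM)\}$: indeed $\varphi$ is continuous on $\cP_2(\cM)$ since it factors through the continuous maps $\mu \mapsto \int V_i \dd\mu$ and $F \in C_c^\infty(\R^m)$, and $\W_2$ is continuous, while the denominator stays positive off $\Delta$. Thus it suffices to establish upper semicontinuity at each diagonal point $(\mu_0, \mu_0)$, i.e.\ for any sequence $(\mu_n, \nu_n) \to (\mu_0, \mu_0)$,
\begin{equation}
    \limsup_{n \to \infty} H(\mu_n, \nu_n) \le \|\gW \varphi(\mu_0)\|_{L^2(\mu_0)}.
\end{equation}

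The key step is a Taylor expansion. Assume without loss of generality that $\mu_n \neq \nu_n$ (the case $\mu_n = \nu_n$ will follow from the continuity statement below). Using \Cref{lemma:selection_1}, I lift an optimal coupling $\gamma_n \in \Pi_o(\mu_n, \nu_n)$ to some $\tilde{\gamma}_n \in \exp_{\mu_n}^{-1}(\nu_n)$. Since each $V_i \in C_c^\infty(\cM)$ and $\cM$ is compact, $\|\Hess_\cM V_i\|_\infty$ is finite; expanding $t \mapsto V_i(\exp_x(tv))$ to second order along geodesics yields
\begin{equation}
    \int V_i \dd\nu_n - \int V_i \dd\mu_n = \int \langle \nabla V_i(x), v \rangle_x \dd\tilde{\gamma}_n(x,v) + R_i^n, \qquad |R_i^n| \le \tfrac{1}{2} \|\Hess_\cM V_i\|_\infty \W_2^2(\mu_n, \nu_n).
\end{equation}
Writing $a_n := (\int V_j \dd\mu_n)_j$ and $b_n := (\int V_j \dd\nu_n)_j$, these both sit in a common bounded subset of $\R^m$, and a second-order expansion of $F$ with $|b_n - a_n| = O(\W_2(\mu_n, \nu_n))$ gives $\varphi(\nu_n) - \varphi(\mu_n) = \sum_i \partial_i F(a_n)(b_n^i - a_n^i) + O(\W_2^2(\mu_n, \nu_n))$. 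Substituting the previous identity and using the explicit form $\gW\varphi(\mu_n) = \sum_i \partial_i F(a_n) \nabla V_i$ together with Cauchy--Schwarz, I obtain
\begin{equation}
    |\varphi(\nu_n) - \varphi(\mu_n)| \le \|\gW\varphi(\mu_n)\|_{L^2(\mu_n)} \cdot \W_2(\mu_n, \nu_n) + O(\W_2^2(\mu_n, \nu_n)),
\end{equation}
so that $H(\mu_n, \nu_n) \le \|\gW\varphi(\mu_n)\|_{L^2(\mu_n)} + O(\W_2(\mu_n, \nu_n))$.

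It remains to check continuity of $\mu \mapsto \|\gW\varphi(\mu)\|_{L^2(\mu)}^2 = \int |\sum_i \partial_i F(\int V_j \dd\mu) \nabla V_i(x)|^2_x \dd\mu(x)$. The coefficients $\partial_i F(\int V_j \dd\mu)$ depend continuously on $\mu$ (composition of a continuous function with a weakly continuous linear functional), the integrand $x \mapsto |\sum_i c_i \nabla V_i(x)|^2_x$ is bounded and continuous in $x$, and $\W_2$-convergence on $\cP_2(\cM)$ coincides with weak convergence since $\cM$ is compact. A standard two-term decomposition — freezing the coefficients at the limit and using weak convergence, then bounding the coefficient perturbation — yields the claimed continuity. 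Passing to the limit in the bound above then gives $\limsup_n H(\mu_n, \nu_n) \le \|\gW\varphi(\mu_0)\|_{L^2(\mu_0)} = H(\mu_0, \mu_0)$, as required.

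The main technical point is ensuring that the remainder in the Taylor expansion is truly of order $\W_2^2(\mu_n, \nu_n)$ with a constant independent of the base point $\mu_n$. This is exactly where compactness of $\cM$ and compact support of $F, V_1, \ldots, V_m$ enter: they make $\|\Hess_\cM V_i\|_\infty$ and the second derivatives of $F$ on the bounded set $\{a_n, b_n\}_n$ uniformly finite, so the implicit $O$-constants do not depend on $n$ and the argument goes through.
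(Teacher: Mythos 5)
Your proof is correct, and it takes a genuinely different route than the paper. The paper establishes the key pointwise bound by writing $\varphi(\nu) - \varphi(\mu) = \int_0^1 \frac{\dd}{\dd t}\varphi(\mu_t)\, \dd t$ along the constant-speed geodesic, applying the chain rule and Cauchy–Schwarz in the time integral to arrive at $H(\mu,\nu)^2 \le \int_0^1 \|\gW\varphi(\mu_t)\|^2_{L^2(\mu_t)}\,\dd t$; the conclusion then follows by dominated convergence once $\mu^n_t \to \mu$ for each $t$. You instead perform a single second-order Taylor expansion at the base point, first for each $V_i$ along a lifted optimal coupling $\tilde{\gamma}_n \in \exp_{\mu_n}^{-1}(\nu_n)$ (with remainder controlled by $\|\Hess_\cM V_i\|_\infty$, which is finite by compactness) and then for $F$, landing on the cleaner estimate $H(\mu_n,\nu_n) \le \|\gW\varphi(\mu_n)\|_{L^2(\mu_n)} + O(\W_2(\mu_n,\nu_n))$. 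Both arguments then hinge on the same continuity lemma for $\mu \mapsto \|\gW\varphi(\mu)\|^2_{L^2(\mu)}$, which you prove by the same two-term (freeze-the-coefficients) decomposition as the paper. Your Taylor-expansion approach has the advantage of being more explicit about the error term and not requiring an appeal to dominated convergence over the geodesic parameter; the paper's FTC approach is slightly shorter because it does not need to track the quadratic remainder explicitly, at the cost of introducing an integral of a velocity field along the interpolating geodesic. One small stylistic note: for the lifting of the optimal coupling it is more natural to cite \Cref{prop:surjective_map_PM} directly (which asserts non-emptiness of $\exp_\mu^{-1}(\nu)$) rather than the underlying selection lemma.
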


\begin{proof}
    We want to show that the function $H : \cP_2(\cM) \times \cP_2(\cM) \to \R$ defined by
    \begin{equation}
        H(\mu,\nu) := \begin{cases}
            \frac{|\varphi(\mu)-\varphi(\nu)|}{W_2(\mu,\nu)} & \hbox{ if } \mu \neq \nu \\
            \|\gW \varphi(\mu)\|_{L^2(\mu)} & \hbox{ if } \mu = \nu
        \end{cases}
    \end{equation}
    is upper semicontinuous. Let $\mu,\nu \in \cP_2(\cM)$, and let $(\mu_t)_{t \in [0,1]}$ be the constant speed geodesic from $\mu$ to $\nu$ with velocity field $v_t \in L^2(\mu_t)$. Then, we have
    \begin{align}
        \varphi(\nu) - \varphi(\mu) &= \int_0^1 \frac{d}{dt} \varphi(\mu_t)\  \dd t \\
        &= \int_0^1 \frac{d}{dt} F\left(\int V_1 \dd\mu_t, \ldots, \int V_m \dd\mu_t\right)\ \dd t \\
        &= \int_0^1 \sum_{i=1}^m \frac{\partial F}{\partial x_i}\left(\int V_1 \dd\mu_t, \ldots, \int V_m \dd\mu_t\right) \frac{d}{dt} \int V_i \dd\mu_t \ \dd t \\
        &= \int_0^1 \sum_{i=1}^m \frac{\partial F}{\partial x_i}\left(\int V_1 \dd\mu_t, \ldots, \int V_m \dd\mu_t\right) \int \sca{\nabla V_i}{v_t}\  \dd\mu_t \dd t \\
        &= \int_0^1 \sca{\gW \varphi(\mu_t)}{v_t}_{L^2(\mu_t)} \dd t \\
        &\leq \sqrt{\int_0^1 \|v_t\|^2_{L^2(\mu_t)}\ \dd t} \sqrt{\int_0^1 \|\gW \varphi(\mu_t)\|^2_{L^2(\mu_t)}\dd t} \\
        &\leq \W_2(\mu,\nu) \sqrt{\int_0^1 \|\gW \varphi(\mu_t)\|^2_{L^2(\mu_t)}\dd t}.
    \end{align}
    Hence,
    \begin{equation} \label{eq:eq_proofs_l817}
        H(\mu,\nu) \leq \sqrt{\int_0^1 \|\gW \varphi(\mu_t)\|^2_{L^2(\mu_t)}\dd t} < \infty
    \end{equation}
    as the right-hand side is finite because the $F,V_i$ have compact support. Note that this inequality is also true when $\mu = \nu$ as $\mu_t = \mu$ for every $t$. 
    
    Furthermore, notice that the map $f : \mu \mapsto \|\gW \varphi(\mu)\|^2_{L^2(\mu)}$ is continuous. Indeed, we can check that we have 
    \begin{equation}
        f(\mu) = \int G\left(\int V_1\dd\mu,\ldots,\int V_m\dd\mu, x\right) \dd\mu(x) 
    \end{equation}
    for some Lipschitz function $G$ with Lipschitz constant $L$, so that if $\mu^n \rightharpoonup \mu$, we have
    \begin{align}
        |f(\mu^n) - f(\mu)| &= \left|\int G\left(\int V_1\dd\mu^n,\ldots,\int V_m\dd\mu^n,x\right) \dd\mu^n(x)  - \int G\left(\int V_1\dd\mu,\ldots,\int V_m\dd\mu,x\right) \dd\mu(x) \right| \\
        &\leq \left|\int G\left(\int V_1\dd\mu^n,\ldots,\int V_m\dd\mu^n,x\right) \dd\mu^n(x)  - \int G\left(\int V_1\dd\mu,\ldots,\int V_m\dd\mu,x\right) \dd\mu^n(x) \right| \\
        &+ \left|\int G\left(\int V_1\dd\mu,\ldots,\int V_m\dd\mu,x\right) \dd\mu^n(x)  - \int G\left(\int V_1\dd\mu,\ldots,\int V_m\dd\mu,x\right) \dd\mu(x) \right| \\
        &\leq L\sum_{i=1}^m \left|\int V_i \dd(\mu^n-\mu) \right| + \left| \int G\left(\int V_1\dd\mu,\ldots,\int V_m\dd\mu,x\right) \dd(\mu^n-\mu)(x)\right| \rightarrow 0.
    \end{align}
    
    Now, if $\mu^n \rightharpoonup \mu$ and $\nu^n \rightharpoonup \mu$, then $\mu^n_t \rightharpoonup \mu$ for every $t$ (indeed $\W_2(\mu_t^n,\mu) \leq \W_2(\mu_t^n,\mu^n) + \W_2(\mu^n,\mu) = t\W_2(\nu^n,\mu^n) + \W_2(\mu^n,\mu) \to 0$), and thus by what precedes $\|\gW \varphi(\mu^n_t)\|^2_{L^2(\mu_t)} \mapsto \|\gW \varphi(\mu)\|^2_{L^2(\mu)}$ for every $t$. Therefore, by \eqref{eq:eq_proofs_l817}, we deduce
    \begin{equation}
        \limsup_n H(\mu^n,\nu^n) \leq \|\gW \varphi(\mu)\|_{L^2(\mu)} = H(\mu,\mu).
    \end{equation}
    This proves the upper semicontinuity of $H$.
\end{proof}

\subsection{Proof of \Cref{prop:prop_tan_wwgrad_is_strong}}\label{sec:proof_prop_tan_wwgrad_is_strong}

\newcommand{\cB}{\mathcal{B}}
\newcommand{\bUpsilon}{\mathbb{\Upsilon}}

Let $\bP \in \cPP{\cM}$, and define $\cPP{T\cM}_\bP := \{\bGamma \in \cPP{T\cM}, \phi^\cM_\#\bGamma = \bP \}$. Fix $\bGamma \in \cPP{T\cM}_\bP$, we define 
\begin{equation}
    \|\bGamma\|^2_\bP := \iint \|v\|^2_x \dd\gamma(x,v) \dd\bGamma(\gamma).
\end{equation}

By the disintegration theorem (see for example \citep[Theorem 5.3.1]{ambrosio2005gradient}), there exists a $\bP$-a.e. unique family of probability measures $(\bGamma_\mu)_{\mu \in \cP_2(\cM)}$ such that $\bGamma_\mu$ is supported on $\cP_2(T\cM)_\mu$ and, for every measurable test function $f : \cP_2(T\cM) \mapsto \R^+$, 
\begin{equation}
    \int f(\gamma) \dd\bGamma(\gamma) = \iint f(\gamma) \dd\bGamma_\mu(\gamma) \dd \bP(\mu).
\end{equation}
We can use this family of measures to define the barycentric projection of $\bGamma$:

\begin{definition}
    The barycentric projection of $\bGamma$ is the vector field $\cB(\bGamma) \in L^2(\bP,T\cM)$ defined by
    \begin{equation}
        \cB(\bGamma)(\mu) := \int \cB(\gamma) \dd\bGamma_\mu(\gamma) \in L^2(\mu,T\cM).
    \end{equation}
\end{definition}

Note that we work here in the space $L^2(\bP,T\cM)$, which is defined in \Cref{appendix:wow_l2}. It is a larger space than $L^2(\bP,T\cP_2(\cM))$, with which it should not be confused. The barycentric projection satisfies the following properties:

\begin{proposition}
    For every $\xi \in L^2(\bP,T\cM)$, it holds
    \begin{equation}
        \iint \sca{\xi(\pi^\cM_\#\gamma)(x)}{v}_x \dd\gamma(x,v) \bGamma(\gamma) = \int \sca{\xi(\mu)}{\cB(\bGamma)(\mu)}_{L^2(\mu)} \dd\bP(\mu) = \sca{\xi}{\cB(\bGamma)}_{L^2(\bP)}.
    \end{equation}
    Furthermore $\|\cB(\bGamma)\|_{L^2(\bP)} \leq \|\bGamma\|_\bP$.
\end{proposition}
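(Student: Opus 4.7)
The plan is to prove the first identity by unpacking both nested barycentric projections via disintegration, and then deduce the norm bound as an immediate corollary by plugging $\xi=\cB(\bGamma)$ into the identity combined with a single Cauchy--Schwarz inequality.

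For the first identity, I would start from the left-hand side and apply the disintegration $\bGamma = \int \bGamma_\mu \dd\bP(\mu)$. Since $\bGamma_\mu$ is concentrated on $\cP_2(T\cM)_\mu$, we have $\pi^\cM_\#\gamma = \mu$ for $\bGamma_\mu$-a.e.\ $\gamma$, hence
\begin{equation}
    \iint \sca{\xi(\pi^\cM_\#\gamma)(x)}{v}_x \dd\gamma(x,v)\dd\bGamma(\gamma) = \iiint \sca{\xi(\mu)(x)}{v}_x \dd\gamma(x,v)\dd\bGamma_\mu(\gamma)\dd\bP(\mu).
\end{equation}
For fixed $\mu$, the innermost integral is $\sca{\xi(\mu)}{\cB(\gamma)}_{L^2(\mu)}$ by the defining property of the barycentric projection on $\cP_2(T\cM)_\mu$ recalled just before the statement. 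Pulling the constant $\xi(\mu)$ out of the $\bGamma_\mu$-integral (by linearity of the inner product) gives $\sca{\xi(\mu)}{\int \cB(\gamma)\dd\bGamma_\mu(\gamma)}_{L^2(\mu)}$, which by definition of $\cB(\bGamma)$ equals $\sca{\xi(\mu)}{\cB(\bGamma)(\mu)}_{L^2(\mu)}$. Integrating against $\bP$ yields $\sca{\xi}{\cB(\bGamma)}_{L^2(\bP)}$, as desired.

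For the norm bound, I would proceed in two short steps. First, applying Cauchy--Schwarz fibrewise in $(x,v)$ and then the Cauchy--Schwarz inequality for the product measure $\dd\gamma(x,v)\dd\bGamma(\gamma)$ gives, for every $\xi \in L^2(\bP,T\cM)$,
\begin{equation}
    \iint \sca{\xi(\pi^\cM_\#\gamma)(x)}{v}_x \dd\gamma(x,v)\dd\bGamma(\gamma) \le \sqrt{\iint \|\xi(\pi^\cM_\#\gamma)(x)\|_x^2 \dd\gamma(x,v)\dd\bGamma(\gamma)}\; \|\bGamma\|_\bP.
\end{equation}
Since $\pi^\cM_\#\gamma=\mu$ for $\bGamma_\mu$-a.e.\ $\gamma$, the first factor equals $\|\xi\|_{L^2(\bP)}$ after disintegration. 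Combining with the identity already proved, we obtain $\sca{\xi}{\cB(\bGamma)}_{L^2(\bP)} \le \|\xi\|_{L^2(\bP)}\|\bGamma\|_\bP$ for all such $\xi$. Taking $\xi = \cB(\bGamma)$ gives $\|\cB(\bGamma)\|_{L^2(\bP)}^2 \le \|\cB(\bGamma)\|_{L^2(\bP)}\|\bGamma\|_\bP$, from which the inequality $\|\cB(\bGamma)\|_{L^2(\bP)} \le \|\bGamma\|_\bP$ follows (when $\cB(\bGamma)\ne 0$, and trivially otherwise).

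I do not expect a serious obstacle: the whole argument is just a careful two-level disintegration plus Cauchy--Schwarz. The only subtle point is measurability, namely checking that the map $\mu \mapsto \cB(\bGamma)(\mu)$ is well defined as an element of $L^2(\bP,T\cM)$. This can be handled by appealing to the measurable selection of the disintegration $(\bGamma_\mu)_\mu$ and the fact that $\gamma \mapsto \cB(\gamma)$ is a measurable assignment, so that the Bochner-type integral $\int \cB(\gamma)\dd\bGamma_\mu(\gamma)$ produces a $\bP$-measurable family of elements of $L^2(\mu,T\cM)$; the integrability $\|\cB(\bGamma)\|_{L^2(\bP)}<\infty$ then follows a posteriori from the proven bound.
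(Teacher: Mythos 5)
Your proof of the first identity follows exactly the same two-level disintegration route as the paper: disintegrate $\bGamma$ into $(\bGamma_\mu)_\mu$, use that $\pi^\cM_\#\gamma=\mu$ on the support of $\bGamma_\mu$, apply the defining property of the fibrewise barycentric projection $\cB(\gamma)$, and then the definition of $\cB(\bGamma)$ as the $\bGamma_\mu$-average. For the norm bound, however, you take a genuinely different (and valid) route. The paper bounds $\|\cB(\bGamma)\|_{L^2(\bP)}^2$ directly: it applies Jensen's inequality to move the squared norm inside the $\bGamma_\mu$-integral defining $\cB(\bGamma)(\mu)$, and then invokes the already-established fibrewise bound $\|\cB(\gamma)\|_{L^2(\mu)}\le\|\gamma\|_\mu$. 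You instead prove the dual estimate $\sca{\xi}{\cB(\bGamma)}_{L^2(\bP)}\le\|\xi\|_{L^2(\bP)}\|\bGamma\|_\bP$ for all $\xi$ by a Cauchy--Schwarz on the integral $\iint\sca{\xi(\pi^\cM_\#\gamma)(x)}{v}_x\,\dd\gamma\,\dd\bGamma$ (using the first identity to rewrite the left side), and then specialize $\xi=\cB(\bGamma)$. Both arguments ultimately rest on a single Cauchy--Schwarz/Jensen step; yours has the small aesthetic advantage of not needing the fibrewise bound $\|\cB(\gamma)\|_{L^2(\mu)}\le\|\gamma\|_\mu$ as a separate ingredient, while the paper's is marginally more direct and self-contained within the statement of the proposition. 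Your remark at the end about measurability and the a posteriori integrability of $\cB(\bGamma)$ is appropriate and matches the level of rigor the paper aims for.
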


\begin{proof}
    For every $\xi \in L^2(\bP,T\cM)$, we have
    \begin{align}
        \iint \sca{\xi(\pi^\cM_\#\gamma)(x)}{v}_x \dd\gamma(x,v) \bGamma(\gamma) &= \iiint \sca{\xi(\pi^\cM_\#\gamma)(x)}{v}_x \dd\gamma(x,v) \dd\bGamma_\mu(\gamma) \dd\bP(\mu) \\
        &= \iiint \sca{\xi(\mu)(x)}{v}_x \dd\gamma(x,v) \dd\bGamma_\mu(\gamma) \dd\bP(\mu) \\
        &= \iint \sca{\xi(\mu)}{\cB(\gamma)}_{L^2(\mu)} \dd\bGamma_\mu(\gamma) \dd\bP(\mu) \\
        &= \int \sca{\xi(\mu)}{\cB(\bGamma)(\mu)}_{L^2(\mu)} \dd\bP(\mu) = \sca{\xi}{\cB(\bGamma)}_{L^2(\bP)}.
    \end{align}
    Furthermore,
    \begin{align}
        \|\cB(\bGamma)\|^2_{L^2(\bP)} &= \iint \left\|\int \cB(\gamma)(x) \dd\bGamma_\mu(\gamma) \right\|^2_x \dd\mu(x) \dd\bP(\mu) \\
        &\leq \iiint \|\cB(\gamma)(x)\|^2_x \dd\bGamma_\mu(\gamma) \dd\mu(x) \dd\bP(\mu) \\
        &\leq \iint \|\cB(\gamma)\|^2_{L^2(\mu)} \dd\bGamma_\mu(\gamma) \dd\bP(\mu) \\
        &\leq \iiint \|v\|^2_x \dd\gamma(x,v) \dd\bGamma_\mu(\gamma) \dd\bP(\mu) \\
        &\leq \iint \|v\|^2_x \dd\gamma(x,v) \dd\bGamma(\gamma) = \|\bGamma\|_\bP^2,
    \end{align}
    where we used $\|\cB(\gamma)\|_{L^2(\mu)}^2 \leq \|\gamma\|_\mu^2 = \int \|v\|^2_x \dd\gamma(x,v)$ to obtain the fourth line.
\end{proof}

We now show \Cref{prop:prop_tan_wwgrad_is_strong}.

Assume by contradiction that $\xi$ is not a strong subdifferential of $\bF$ at $\bP$. Then, there exists $\delta > 0$ and a sequence $\{\bGamma_n\}_{n=1}^\infty \subseteq \cPP{T\cM}_\bP$ such that $\veps_n := \|\bGamma_n\|_\bP \xrightarrow[n \to \infty]{} 0$, and, for every $n$,

\begin{equation} \label{eq:draft_l353}
    \bF(\bP_n) - \bF(\bP) - \iint \sca{\xi(\pi^\cM_\#\gamma)}{v}_x \dd\gamma(x,v) \dd\bGamma_n(\gamma) \leq - \delta \veps_n 
\end{equation}
with $\bP_n := \phi^{\exp}_\#\bGamma_n$. Now, for every $n$, fix $\bUpsilon_n \in \exp^{-1}_\bP(\bP_n)$. Since $\xi \in \partial^- \bF(\bP)$, there exists $N > 0$ such that for every $n > N$,

\begin{equation} \label{eq:draft_l359}
    \bF(\bP_n) - \bF(\bP) \geq \iint \sca{\xi(\pi^\cM_\#\gamma)(x)}{v}_x \dd\gamma(x,v)\dd\bUpsilon_n(\gamma) - \frac{\delta}{2}\Ww(\bP_n,\bP).
\end{equation}

Denoting $\Psi_n := \cB(\bGamma_n)$ and $\Phi_n := \cB(\bUpsilon_n)$, we have, combining \eqref{eq:draft_l353} and \eqref{eq:draft_l359}, that 
\begin{equation}
    \sca{\xi}{\Psi_n}_{L^2(\bP)} - \delta \veps_n \geq \sca{\xi}{\Phi_n}_{L^2(\bP)} - \frac{\delta}{2} \Ww(\bP_n,\bP).
\end{equation}
Furthermore, we have $\Ww(\bP_n,\bP) \leq \veps_n$, since
\begin{align}
    \Ww(\bP_n,\bP)^2 &\leq \int \W_2^2(\pi^\cM_\#\gamma,\exp_\#\gamma) \dd\bGamma_n(\gamma) \\
    &\leq \iint d^2(x,\exp_x(v)) \dd\gamma(x,v) \dd\bGamma_n(\gamma) \\
    &\leq \iint \|v\|^2_x \dd\gamma(x,v) \dd\bGamma_n(\gamma) = \|\bGamma_n\|^2_\bP = \veps_n^2.
\end{align}
Thus, we find for every $n > N$
\begin{equation} \label{eq:draft_l373}
    \sca{\xi}{\Phi_n - \Psi_n}_{L^2(\bP)} \leq - \frac{\delta}{2} \veps_n.
\end{equation}
Now, since $\|\Psi_n\|_{L^2(\bP)} \leq \|\bGamma_n\|_\bP = \veps_n$ and (by optimality of $\bUpsilon_n$) $\|\Phi_n\|_{L^2(\bP)} \leq \|\bUpsilon_n\|_\bP = \Ww(\bP_n,\bP) \leq \veps_n$ for every $n$, it ensues that, up to extracting a subsequence, there exists $\Psi, \Phi \in L^2(\bP,T\cM))$ towards which $\veps_n^{-1} \Psi_n$ and $\veps_n^{-1} \Phi_n$ respectively converge weakly in $L^2(\bP,T\cM)$. Therefore, dividing \eqref{eq:draft_l373} by $\veps_n$ and passing to the limit, we find 
\begin{equation} \label{eq:draft_l377}
    \sca{\xi}{\Phi - \Psi}_{L^2(\bP)} \leq - \frac{\delta}{2}.
\end{equation}

Now, fix $\cF \in \cyl$. By applying \Cref{lemma:PM_cylinder_taylor_expansion} to $\bGamma_n$ and $\bUpsilon_n$, we find
\begin{align}
    \int \cF \dd\bP_n &= \int \cF \dd\bP + \sca{\gW\cF}{\Phi_n}_{L^2(\bP)} + O(\veps_n^2), \\
    \int\cF \dd\bP_n &= \int \cF \dd\bP + \sca{\gW\cF}{\Psi_n}_{L^2(\bP)} + O(\veps_n^2).
\end{align}
Subtracting these two equations, dividing by $\veps_n$ and passing to the limit, we thus find
\begin{equation}
    \sca{\gW\cF}{\Phi - \Psi}_{L^2(\bP)} = 0,
\end{equation}
and this holds for any $\cF \in \cyl$. However, by assumption, $\xi \in T_\bP\cPP{\cM}$, and we recall that
\begin{equation}
    T_\bP\cPP{\cM} = \overline{\{\gW \cF, \cF \in \cyl\}}^{L^2(\bP,T\cP_2(\cP_2(\cM)))}.
\end{equation}
This implies immediately that $\sca{\xi}{\Phi - \Psi}_{L^2(\bP)} = 0$, which contradicts \eqref{eq:draft_l377}. \qedhere

\begin{lemma} \label{lemma:PM_cylinder_taylor_expansion}
    Let $\cF \in \cyl$, then, for every $\bP \in \cPP{\cM}$ and $\bGamma \in \cPP{T\cM}_\bP$,
    \begin{equation}
        \left|\int \cF \dd(\phi^{\exp}_\#\bGamma) - \int \cF \dd\bP - \iint \sca{\gW\cF(\pi^\cM_\#\gamma)(x)}{v}_x \dd\gamma(x,v) \dd\bGamma(\gamma)  \right| \leq C \|\bGamma\|^2_\bP 
    \end{equation}
    for some constant $C$ depending only on $\cF$.
\end{lemma}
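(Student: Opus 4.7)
\textbf{Proof plan for \Cref{lemma:PM_cylinder_taylor_expansion}.} The plan is to carry out a second-order Taylor expansion twice: once at the level of the manifold (for each $V_i \circ \exp$) and once at the level of $\R^m$ (for $F$), then combine them and integrate against $\bGamma$. Write $\cF(\mu) = F(\int V_1 \dd\mu, \ldots, \int V_m\dd\mu)$, so that $\gW\cF(\mu) = \sum_i \partial_i F(\mathbf{y}^\mu) \nabla V_i$ with $\mathbf{y}^\mu := (\int V_1 \dd\mu, \ldots, \int V_m \dd\mu)$. For $\gamma \in \cP_2(T\cM)$ with marginal $\mu_\gamma := \pi^\cM_\#\gamma$ and push-forward $\nu_\gamma := \exp_\#\gamma$, denote $\Delta_i(\gamma) := \int V_i \dd\nu_\gamma - \int V_i \dd\mu_\gamma = \int [V_i(\exp_x(v)) - V_i(x)] \dd\gamma(x,v)$.

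First, since $\cM$ is compact and connected and each $V_i \in C_c^\infty(\cM)$, composing with the exponential map yields a uniform second-order Taylor bound: there exists $C_i$, depending only on $V_i$ and $\cM$, such that for every $(x,v) \in T\cM$,
\begin{equation}
    \big|V_i(\exp_x(v)) - V_i(x) - \langle \nabla V_i(x), v\rangle_x\big| \leq C_i \|v\|_x^2.
\end{equation}
Integrating against $\gamma$ gives $\Delta_i(\gamma) = \int \langle \nabla V_i(x), v\rangle_x \dd\gamma(x,v) + R_i(\gamma)$ with $|R_i(\gamma)| \leq C_i \int \|v\|_x^2 \dd\gamma$. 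Moreover, using $|V_i(\exp_x(v)) - V_i(x)| \leq \|\nabla V_i\|_\infty \|v\|_x$ (since $d(x,\exp_x(v))\le\|v\|_x$) and Cauchy--Schwarz, $\Delta_i(\gamma)^2 \leq \|\nabla V_i\|_\infty^2 \int \|v\|_x^2 \dd\gamma$.

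Second, since $F \in C_c^\infty(\R^m)$ has bounded Hessian, the standard second-order Taylor bound yields
\begin{equation}
    \Big|F(\mathbf{y}^{\nu_\gamma}) - F(\mathbf{y}^{\mu_\gamma}) - \sum_i \partial_i F(\mathbf{y}^{\mu_\gamma})\, \Delta_i(\gamma)\Big| \leq \tfrac{1}{2}\|\nabla^2 F\|_\infty \|\Delta(\gamma)\|^2 \leq C'\int \|v\|_x^2 \dd\gamma(x,v),
\end{equation}
using the estimate on $\Delta_i^2$ just obtained. Substituting $\Delta_i(\gamma) = \int \langle \nabla V_i(x), v\rangle_x \dd\gamma + R_i(\gamma)$ into the linear part and recognising $\sum_i \partial_i F(\mathbf{y}^{\mu_\gamma})\nabla V_i = \gW\cF(\mu_\gamma)$, we obtain
\begin{equation}
    \cF(\nu_\gamma) - \cF(\mu_\gamma) = \int \langle \gW\cF(\mu_\gamma)(x), v\rangle_x \dd\gamma(x,v) + \rho(\gamma),
\end{equation}
where the remainder satisfies $|\rho(\gamma)| \leq C''\int \|v\|_x^2 \dd\gamma(x,v)$ with $C''$ depending only on the $C_i$, $\|\nabla V_i\|_\infty$, $\|\nabla F\|_\infty$ and $\|\nabla^2 F\|_\infty$.

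Finally, integrating against $\bGamma$ and using $\phi^\cM_\#\bGamma = \bP$, $\phi^{\exp}_\#\bGamma$-ae $\nu_\gamma$, we conclude
\begin{equation}
    \left|\int \cF \dd(\phi^{\exp}_\#\bGamma) - \int \cF \dd\bP - \iint \langle \gW\cF(\pi^\cM_\#\gamma)(x), v\rangle_x \dd\gamma(x,v) \dd\bGamma(\gamma)\right| \leq C'' \iint \|v\|_x^2 \dd\gamma \dd\bGamma = C\|\bGamma\|_\bP^2.
\end{equation}
The only mildly delicate step is the uniform second-order bound on $V_i \circ \exp$, but this follows immediately from compactness of $\cM$ and smoothness of $V_i$; everything else is a routine combination of Taylor's theorem and Cauchy--Schwarz.
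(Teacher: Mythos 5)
Your argument is correct, and it follows the same skeleton as the paper's proof (two levels of second-order Taylor: one on $\cM$ for each $V_i\circ\exp$, one on $\R^m$ for $F$). The one place where you diverge is in bounding the quadratic Taylor remainder of $F$, and your choice is actually the cleaner one. The paper bounds $\Delta_i(\gamma)^2$ by first subtracting the linear part and then invoking the second-order estimate on $V_i\circ\exp$; this produces a $\|\gamma\|_\mu^4$ term, which is only $O(\|\gamma\|_\mu^2)$ when $\|\gamma\|_\mu$ is bounded, so the paper has to split the argument into the cases $\|\gamma\|_\mu\le\mathrm{diam}(\cM)$ and $\|\gamma\|_\mu>\mathrm{diam}(\cM)$, handling the latter by comparing to an optimal plan $\eta\in\exp_\mu^{-1}(\nu)$. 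You instead bound $\Delta_i(\gamma)$ directly via the Lipschitz estimate $|V_i(\exp_x(v))-V_i(x)|\le\|\nabla V_i\|_\infty\,d(x,\exp_x(v))\le\|\nabla V_i\|_\infty\|v\|_x$ followed by Cauchy--Schwarz, which gives $\Delta_i(\gamma)^2\le\|\nabla V_i\|_\infty^2\|\gamma\|_\mu^2$ uniformly, without any case distinction. This makes your remainder bound $|\rho(\gamma)|\le C''\|\gamma\|_\mu^2$ hold for all $\gamma\in\cP_2(T\cM)_\mu$ in one shot, and the final integration against $\bGamma$ then proceeds exactly as in the paper. Both uses of Taylor's theorem (the manifold version for $V_i$ with the uniform Hessian bound on a compact $\cM$, and the Euclidean one for $F\in C_c^\infty$) are standard and correctly invoked.
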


\begin{proof}
    Let $F \in C^\infty_c(\R^m)$ and $V_1,\ldots,V_m \in C^\infty_c(\cM)$ be such that
    \begin{equation}
        \cF(\mu) = F\left(\int V_1\dd\mu, \ldots, \int V_m \dd\mu \right), \quad \mu \in \cP_2(\cM).
    \end{equation}
    Since $F$ is compactly supported, there exists $C > 0$ which only depends on $F$ such that for every $x,h \in \R^m$,
    \begin{equation} \label{eq:draft_l411}
        \left|F(x+h) - F(x) - \sca{\nabla F(x)}{h}\right| \leq C \|h\|^2.
    \end{equation}
    Fix $\gamma \in \cP_2(T\cM)$, and let $\mu := \pi^\cM_\#\gamma$ and $\nu := \exp_\#\gamma$. Since the $V_i$ are compactly supported, we know by \Cref{lemma:cylinder_taylor_expansion} that there exists some constant $L > 0$, which depends only on the $V_i$, such that for every $i$,
    \begin{equation} \label{eq:draft_l415}
        \left|\int V_i \dd\nu - \int V_i \dd\mu - \int \sca{\nabla V_i(x)}{v} \dd\gamma(x,v) \right| \leq L \|\gamma\|_\mu^2.
    \end{equation}
    Now, we have
    \begin{align}
        &\left|\cF(\nu) - \cF(\mu) - \int \sca{\gW\cF(\mu)(x)}{v}_x \dd\gamma(x,v) \right| \\
        &= \left|\cF(\nu) - \cF(\mu) - \sum_{i=1}^m \frac{\partial F}{\partial x_i} \int \sca{\nabla V_i(x)}{v}_x \dd\gamma(x,v) \right| \\
        &\leq \left|\cF(\nu) - \cF(\mu) - \sum_{i=1}^m \frac{\partial F}{\partial x_i}\left(\int V_i \dd\nu - \int V_i \dd\mu\right) \right|\\
        &\quad + \left|\sum_{i=1}^m \frac{\partial F}{\partial x_i} \left(\int V_i \dd\nu - \int V_i \dd\mu - \int \sca{\nabla V_i(x)}{v}_x \dd\gamma(x,v)\right)\right| \\
        &\leq C\sum_{i=1}^m \left|\int V_i \dd\nu - \int V_i \dd\mu \right|^2 + C\sum_{i=1}^m \left|\int V_i \dd\nu - \int V_i \dd\mu - \int \sca{\nabla V_i(x)}{v} \dd\gamma(x,v) \right| \\
        &\leq C\sum_{i=1}^m \left|\int V_i \dd\nu - \int V_i \dd\mu \right|^2 + C\|\gamma\|^2_\mu  \\ 
        &\leq C\sum_{i=1}^m \left|\int V_i \dd\nu - \int V_i \dd\mu - \int \sca{\nabla V_i(x)}{v} \dd\gamma(x,v) \right|^2 + \left|\int \sca{\nabla V_i(x)}{v} \dd\gamma(x,v)\right|^2 + C\|\gamma\|^2_\mu \\
        &\leq C \|\gamma\|^4_\mu + C \|\gamma\|^2_\mu,
    \end{align}
    where we used \eqref{eq:draft_l411} in the fifth line, with $x_i = \int V_i \dd\mu$ and $h_i = \int V_i \dd\nu - \int V_i \dd\mu$, we used \eqref{eq:draft_l415} in the sixth and eighth lines, and we used the Cauchy-Schwarz inequality in the eight line. Throughout the derivation, $C$ denotes a constant that may change between lines but which only depends on $F$ and the $V_i$. In particular, there exists a constant $C$ which only depends on $F$ and the $V_i$ such that for every $\gamma \in \cP_2(T\cM)$ with $\mu = \pi^\cM_\#\gamma$ and $\|\gamma\|_\mu \leq \mathrm{diam}(\cM)$, 
    \begin{equation} \label{eq:draft_l430}
        \left|\cF(\exp_\#\gamma) - \cF(\mu) - \int \sca{\gW\cF(\mu)(x)}{v}_x\dd\gamma(x,v) \right| \leq C \|\gamma\|^2_{\mu}.
    \end{equation}
    Now, if $\gamma \in \cP_2(T\cM)$ is such that $\|\gamma\|_\mu > \mathrm{diam}(\cM)$ with $\mu := \pi^\cM_\#\gamma$, let $\nu := \exp_\#\gamma$ and $\eta \in \exp_\mu^{-1}(\nu)$. Since $\|\eta\|_\mu = \W_2(\mu,\nu) \leq \mathrm{diam}(\cM)$, this implies
    \begin{align}
        \left|\cF(\nu) - \cF(\mu) - \int \sca{\gW\cF(\mu)(x)}{v}_x\dd\gamma(x,v) \right| &\leq \left|\cF(\nu) - \cF(\mu) - \int \sca{\gW\cF(\mu)(x)}{v}_x\dd\eta(x,v) \right| \\
        &+ \left|\int \sca{\gW\cF(\mu)(x)}{v}_x\dd(\eta-\gamma)(x,v)\right| \\
        &\leq C\|\eta\|_\mu^2 + |\sca{\gW\cF(\mu)}{\cB(\eta)-\cB(\gamma)}_{L^2(\mu)}| \\
        &\leq C\|\eta\|_\mu^2 + C(\|\cB(\eta)\|_{L^2(\mu)} + \|\cB(\gamma)\|_{L^2(\mu)}) \\
        &\leq C(\|\eta\|_\mu^2 + \|\eta\|_\mu + \|\gamma\|_\mu) \\
        &\leq C(\|\gamma\|_\mu^2 + \|\gamma\|_\mu) \\
        &\leq C\|\gamma\|_\mu^2,
    \end{align}
    where we used \eqref{eq:draft_l430} in the third line, and we obtain the fourth line using the Cauchy-Schwarz inequality and the fact that $\sup_{\mu \in \cP_2(\cM)} \|\gW\cF(\mu)\|_{L^2(\mu)} < +\infty$ (since the $F$ and $V_i$ are compactly supported). Again the $C$'s denote a constant depending only on $F$ and the $V_i$ (and $\mathrm{diam}(\cM)$). Thus, we have shown that there exists a constant $C$ depending only on $\cF$ such that for every $\gamma \in \cP_2(T\cM)$,
    \begin{equation}
        \left|\cF(\exp_\#\gamma) - \cF(\pi^\cM_\#\gamma) - \int \sca{\gW\cF(\pi^\cM_\#\gamma)(x)}{v}_x\dd\gamma(x,v) \right| \leq C \|\gamma\|^2_{\mu}.
    \end{equation}
    Hence for every $\bGamma \in \cPP{T\cM}$, noting $\bP := \phi^\cM_\#\bGamma$ and $\bQ := \phi^{\exp}_\#\bGamma$,
    \begin{align}
        &\left|\int \cF \dd\bQ - \int \cF \dd\bP - \iint \sca{\gW\cF(\pi^\cM_\#\gamma)(x)}{v}_x \dd\gamma(x,v) \dd\bGamma(\gamma)  \right| \\
        &= \left|\int \cF(\exp_\#\gamma) - \cF(\pi^\cM_\#\gamma) - \sca{\gW\cF(\pi^\cM_\#\gamma)(x)}{v}_x\dd\gamma(x,v) \dd\bGamma(\gamma) \right| \\
        &\leq C\int \|\gamma\|^2_{\pi^\cM_\#\gamma} \dd\bGamma(\gamma) = C\|\bGamma\|^2_\bP.
    \end{align}
    This finishes the proof.
\end{proof}

\subsection{Proof of \Cref{prop:prop_tan_wwgrad_is_unique}, and existence of gradients in the tangent space} \label{proof:prop_tan_wwgrad_is_unique}

First, we prove \Cref{prop:prop_tan_wwgrad_is_unique}. Let $\xi_1, \xi_2 \in \partial^- \bF(\bP) \cap \partial^+ \bF(\bP) \cap T_\bP \cPP{\cM}$. Using \Cref{prop:prop_tan_wwgrad_is_strong}, we know that they are also strong gradients of $\bF$ at $\bP$. Therefore, letting $\xi = \xi_1 - \xi_2$, for every $\Psi \in L^2(\bP,T\cP_2(\cM))$, we have 
\begin{equation} \label{eq:eq_proofs_l277}
    \int \sca{\xi(\mu)}{\Psi(\mu)}_{L^2(\mu)} \dd\bP(\mu) = o(\|\Psi\|_{L^2(\bP)}).
\end{equation}
that is, $\sca{\xi}{\Psi}_{L^2(\bP)} = o(\|\Psi\|_{L^2(\bP)})$. Considering $\Psi = \veps \xi$, we obtain $\veps \|\xi\|^2_{L^2(\bP)} = o(\veps)$ that is $\|\xi\|^2_{L^2(\bP)} = o(1)$, and this implies $\xi = \xi_1 - \xi_2 = 0$, and this finishes the proof. \qed

Now, a natural question is to ask whether there is a gradient in $T_\bP \cPP{\cM}$ whenever there exists a gradient $\xi \in \partial^- \bF(\bP) \cap \partial^+ \bF(\bP)$. While a complete answer to this question is out of the scope of this article, a partial answer can be provided using results laid out in \citep{schiavo2020rademacher}. First, we consider the following assumption:

\begin{assumption} \label{assumption:smooth_transport} (Smooth transport property, \citep[Assumption 2.9]{schiavo2020rademacher}) 
    We say that $\cM$ satisfies the \textit{smooth transport property} if, whenever $\mu, \nu \in \cP_2(\cM)$ are absolutely continuous with smooth nowhere vanishing densities, then there exists a smooth optimal transport map $T : \cM \mapsto \cM$ from $\mu$ to $\nu$ (in the sense of \Cref{th:mccann}).
\end{assumption}

This is a relatively restrictive assumption on $\cM$. By \citep[Theorem 5.9]{schiavo2020rademacher}, it holds whenever $\cM$ satisfies the strong Ma-Trudinger-Wang condition $\mathrm{MTW}(K)$ for some $K > 0$ (we refer to \citep[Section 5.2]{schiavo2020rademacher} for further details). Under this assumption, we can prove the following result on the existence of a gradient in the tangent space:

\begin{proposition}
    Assume that $\cM$ satisfies \Cref{assumption:smooth_transport}, and that $\bP$ satisfies \Cref{assumption:ref_measure}. Then, if $\bF$ admits a WoW gradient at $\bP$ (i.e. $\partial^- \bF(\bP) \cap \partial^+ \bF(\bP)$ is not empty), then it admits a WoW gradient in $T_\bP \cPP{\cM}$ (\emph{i.e.}, $\partial^- \bF(\bP) \cap \partial^+ \bF(\bP) \cap T_\bP \cPP{\cM}$ is not empty).
\end{proposition}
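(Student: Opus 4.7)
The plan is to project the given gradient orthogonally onto the tangent space and show that the projection is still a WoW gradient. So let $\xi \in \partial^- \bF(\bP) \cap \partial^+ \bF(\bP)$. Since $T_\bP \cPP{\cM}$ is by definition a closed subspace of the Hilbert space $L^2(\bP, T\cP_2(\cM))$, I decompose orthogonally $\xi = \xi_\parallel + \xi_\perp$ with $\xi_\parallel \in T_\bP \cPP{\cM}$ and $\xi_\perp \in T_\bP \cPP{\cM}^\perp$. The goal is to prove $\xi_\parallel \in \partial^- \bF(\bP) \cap \partial^+ \bF(\bP)$; by \Cref{prop:prop_tan_wwgrad_is_unique}, $\xi_\parallel$ will then be the unique WoW gradient in the tangent space.

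Next, fix an arbitrary $\bQ \in \cPP{\cM}$. Because $\bP$ satisfies \Cref{assumption:ref_measure}, in particular $\bP \ll \bP_0$, \Cref{prop:surjective_map_PPM} together with \Cref{rk:PPM_opt_plan_explicit_form} give a \emph{unique} $\bGamma \in \exp^{-1}_\bP(\bQ)$, explicitly of the form
\begin{equation}
    \bGamma = \big(\mu \mapsto (\id, -D_{\bP_0}U(\mu))_\#\mu\big)_\#\bP,
\end{equation}
where $U$ is a $\tfrac12 \W_2^2$-concave Kantorovich potential for $(\bP,\bQ)$. Writing the Taylor expansion of \Cref{def:def_wwgrad} for both $\xi$ and $\xi_\parallel$ against this unique plan $\bGamma$, their difference equals
\begin{equation}
    \iint \sca{\xi_\perp(\mu)(x)}{v}_x \dd\gamma(x,v)\dd\bGamma(\gamma) = -\sca{\xi_\perp}{D_{\bP_0}U}_{L^2(\bP, T\cP_2(\cM))}.
\end{equation}
So it suffices to show that $D_{\bP_0}U \in T_\bP \cPP{\cM}$, for then this inner product vanishes and the two expansions coincide up to the same $o(\Ww(\bP,\bQ))$ remainder, yielding $\xi_\parallel \in \partial^-\bF(\bP) \cap \partial^+\bF(\bP) \cap T_\bP\cPP{\cM}$.

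The heart of the proof is thus to establish that the Rademacher differential of a Lipschitz Kantorovich potential lies in the tangent space $T_\bP \cPP{\cM}$ defined as the $L^2(\bP,T\cP_2(\cM))$-closure of cylinder gradients. This is exactly where \Cref{assumption:smooth_transport} is needed: under the smooth transport property, the approximation/density machinery of \citep{schiavo2020rademacher} applies, and the Cheeger energy on $\cPP{\cM}$ induced by cylinder functions coincides with the $L^2$-norm of $D_{\bP_0}$ on Lipschitz functions. In particular, every Lipschitz $U$ (and Kantorovich potentials are Lipschitz with constant controlled by $\mathrm{diam}(\cM)$) admits a sequence $\varphi_n \in \cyl$ with $\gW \varphi_n \to D_{\bP_0}U$ in $L^2(\bP, T\cP_2(\cM))$. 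Since $\xi_\perp \perp \gW\varphi_n$ for every $n$, passing to the limit gives $\sca{\xi_\perp}{D_{\bP_0}U}_{L^2(\bP)} = 0$.

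The main obstacle is precisely this last density step. The Rademacher property by itself only produces $D_{\bP_0}U$ as an element of $L^2(\bP_0, \TDer \cP_2(\cM))$, the larger ``derivational'' tangent space. It is not automatic that this element is the $L^2$-limit of gradients of cylinder functions, because $T_\bP \cPP{\cM}$ can in general be strictly smaller than $\TDer_\bP \cPP{\cM}$. The smooth transport assumption is exactly the ingredient that collapses this gap on Kantorovich potentials (via the smoothness of the optimal map between absolutely continuous measures with smooth densities, together with a regularization of $U$), and so the entire argument hinges on quoting the appropriate density theorem from \citep{schiavo2020rademacher}.
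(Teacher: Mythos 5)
Your proof takes essentially the same route as the paper's: decompose $\xi$ orthogonally, observe that the unique optimal plan is built from the Rademacher differential $D_\bP U$ of a Kantorovich potential, and invoke \citep[Theorem 2.10(3)]{schiavo2020rademacher} under the smooth transport hypothesis to place $D_\bP U$ in $T_\bP\cPP{\cM}$, which kills the pairing with $\xi_\perp$. One small cleanup: since the hypothesis is that $\bP$ itself satisfies \Cref{assumption:ref_measure}, the reference measure is $\bP$ (so the differential is $D_\bP U$), rather than $\bP$ being absolutely continuous with respect to some separate $\bP_0$.
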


\begin{proof}
    All we need to do is to prove that for every $\xi \in T_\bP \cPP{\cM}^\perp$, $\bQ \in \cPP{\cM}$ and $\bGamma \in \exp_\bP^{-1}(\bQ)$, 
    \begin{equation}
        \iint \sca{\xi(\pi^\cM_\#\gamma)(x)}{v}_x \dd\gamma(x,v) \dd\bGamma(\gamma) = 0.
    \end{equation}
    Indeed, this ensures that if $\xi \in \partial^- \bF(\bP) \cap \partial^+ \bF(\bP)$ is a WoW gradient of $\bF$ at $\bP$, then its orthogonal projection on $T_{\bP} \cPP{\cM}$ is also a WoW gradient. \newline
    We thus fix $\xi \in T_\bP \cPP{\cM}^\perp$, $\bQ \in \cPP{\cM}$ and $\bGamma \in \exp_\bP^{-1}(\bQ)$. Since $\bP$ satisfies \Cref{assumption:ref_measure}, by \Cref{rk:PPM_opt_plan_explicit_form}, $\bGamma$ is of the form $(\mu \mapsto (\id,-D_\bP U)_\#\mu)_\#\bP$ where $U$ is a Kantorovich potential for the pair $\bP,\bQ$. However, since $\cM$ satisfies \Cref{assumption:smooth_transport}, and $U$ is $\W_2$-Lipschitz (by \citep[Lemma 12]{emami2024monge}), \citep[Theorem 2.10(3)]{schiavo2020rademacher} implies that $D_\bP U \in T_\bP \cPP{\cM}$ (as a limit in $L^2(\bP,T\cP_2(\cM))$ of functions of the form $\gW\cF$, $\cF \in \cyl$), so that 
    \begin{equation}
        \iint \sca{\xi(\pi^\cM_\#\gamma)(x)}{v}_x \dd\gamma(x,v) \dd\bGamma(\gamma) = - \int \sca{\xi(\mu)}{D_\bP U(\mu)}_{L^2(\mu)} \dd \bP(\mu) = 0.
    \end{equation}
    This finishes the proof.
\end{proof}

Note that for this proposition to hold, $\bP$ must not simply be absolutely continuous w.r.t $\bP_0$, but must itself satisfy \Cref{assumption:ref_measure}. According to \citep[Proposition 5.2]{schiavo2020rademacher}, this is the case whenever, for instance, $\bP = \varphi^2 \bP_0$ where $\varphi$ is a strictly positive $\W_2$-Lipschitz function on $\cP_2(\cM)$.

\subsection{Proof of \Cref{prop:jko_scheme}}\label{proof:jko_scheme}

We aim at applying \citep[Theorem 4.0.4]{ambrosio2005gradient}. Since by hypothesis, $\bF$ is $\lambda$-convex along the curve $\bP_t= \big(\big((1-t) \T_{\pi^1}^{\pi^2} + t \T_{\pi^1}^{\pi^3}\big)_\#\pi^1\big)_\#\Gamma$ for $\Gamma\in \Pi(\bP,\bQ,\bO)$ and $\bP\in\cPPa{\R^d}$, we need to show that $\bG:\bQ\mapsto \frac12\Ww(\bQ,\bP)^2$ is 1-convex along $\bP_t$ (see \emph{e.g.} \citep[Lemma 9.2.7]{ambrosio2005gradient}). This is well the case by \Cref{prop:w_1cvx}.

Now, let $\bP_k\in\cPPa{\R^d}$ and $\bJ(\bP) = \frac{1}{2\tau}\Ww(\bP, \bP_k)^2 + \bF(\bP)$ the functional solved at each step of the JKO scheme. Then, we have
\begin{equation}
    \begin{aligned}
        \bJ(\bP_t) &= \frac{1}{2\tau} \Ww(\bP_t, \bP_k)^2 + \bF(\bP_t) \\
        &= \frac{1}{\tau} \bG(\bP) + \bF(\bP) \\
        &\le \frac{1}{\tau} \big((1-t) \bG(\bQ) + t \bG(\bO) - \frac{t(1-t)}{2} \Ww(\bQ,\bO)^2\big) \\ &\quad+ (1-t) \bF(\bQ) + t \bF(\bO) - \frac{\lambda t(1-t)}{2}\Ww(\bQ,\bO)^2 \\
        &= (1-t) \bJ(\bQ) + t \bJ(\bO) - \frac{\lambda \tau +1}{2\tau} t(1-t) \Ww(\bQ,\bO)^2.
    \end{aligned}
\end{equation}
Thus, we conclude that $\bJ$ satisfies well \citep[Assumption (4.0.1)]{ambrosio2005gradient}, and then apply \citep[Theorem 4.0.4]{ambrosio2005gradient}. \qed

\subsection{Proof of \Cref{prop:grad_unique_pm}} \label{proof:prop_grad_unique_pm}

Let $\xi, \xi'\in T_\mu\cP_2(\cM) \cap \partial^+ \cF(\mu) \cap \partial^- \cF(\mu)$. By density, for any $\varepsilon>0$, there exist $\varphi_\varepsilon, \varphi_\varepsilon'\in C_c^\infty(\cM)$ such that $\|\xi-\nabla\varphi_\varepsilon\|_{L^2(\mu,T\cM)}\le \frac{\varepsilon}{2}$ and $\|\xi' - \nabla\varphi'\|_{L^2(\mu,T\cM)}\le \frac{\varepsilon}{2}$.

We rely on the following Lemma, which provides an OT map for any $\psi\in C_c^\infty(\cM)$ for $s$ small enough.

\begin{lemma} \label{lemma:ot_map_local}
    Let $\mu\in\cP_2(\cM)$ and $\psi\in C_c^\infty(\cM)$. Then, there exists $\Bar{s}$ such that $x\mapsto \exp_x(s\nabla \psi(x))$ is an OT map between $\mu$ and $\big(\exp\circ(s \nabla \psi)\big)_\#\mu$ for all $s\in ]-\Bar{s}, \Bar{s}[$.
\end{lemma}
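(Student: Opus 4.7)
The plan is to realize $T_s(x) := \exp_x(s\nabla\psi(x))$ as the gradient-type map of a $c$-concave potential, where $c(x,y) := \tfrac{1}{2}d(x,y)^2$. Setting $\varphi_s := -s\psi \in C_c^\infty(\cM)$, we have by construction $T_s(x) = \exp_x(-\nabla\varphi_s(x))$ for every $x \in \cM$. Once this identification is made, the OT-map property follows from the standard characterization of optimal couplings via $c$-cyclical monotonicity: the coupling $(\id,T_s)_\#\mu$ is supported on the graph of $T_s$, which will be contained in the $c$-superdifferential $\partial^c\varphi_s$, hence optimal for \emph{any} $\mu \in \cP_2(\cM)$ (no absolute continuity needed).

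The main step is therefore to establish that there exists $\bar s > 0$ such that $\varphi_s$ is $c$-concave on $\cM$ for every $s \in (-\bar s,\bar s)$. This is a classical smoothness/size criterion on compact Riemannian manifolds: a smooth function of sufficiently small $C^2$-norm is $c$-concave. I would prove it by exhibiting $T_s(x)$ as the unique global minimizer of $x' \mapsto \tfrac{1}{2}d(x',T_s(x))^2 + \varphi_s(x')$. Indeed, on a uniform neighborhood of the diagonal (of size controlled by the injectivity radius $r_{\mathrm{inj}}(\cM)$), the function $x' \mapsto \tfrac{1}{2}d(x',y)^2$ has Hessian uniformly bounded below by a positive constant; since $\mathrm{Hess}\,\varphi_s = -s\,\mathrm{Hess}\,\psi$ is arbitrarily small in operator norm for $|s|$ small and $\psi$ smooth on the compact manifold $\cM$, the perturbed function $x'\mapsto \tfrac{1}{2}d(x',T_s(x))^2 + \varphi_s(x')$ remains uniformly convex on that neighborhood and admits $x$ as its unique global minimum. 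The constant $\bar s$ depends only on $\|\psi\|_{C^2(\cM)}$, $r_{\mathrm{inj}}(\cM)$, and uniform lower Hessian bounds of the squared distance away from the cut locus; choosing e.g.\ $\bar s < r_{\mathrm{inj}}(\cM)/\|\nabla\psi\|_\infty$ ensures that $T_s(x)$ stays well inside the injectivity ball around $x$, so the above Hessian comparison is meaningful.

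Having $c$-concavity, the second step is routine: since $\varphi_s$ is smooth, differentiability and the uniqueness of the above minimizer give $\partial^c\varphi_s(x) = \{\exp_x(-\nabla\varphi_s(x))\} = \{T_s(x)\}$ at every $x$. The graph of $T_s$ is thus contained in the $c$-cyclically monotone set $\partial^c\varphi_s$, and by the classical equivalence between $c$-cyclical monotonicity and optimality (Villani, \emph{Optimal Transport: Old and New}, Theorem 5.10), the coupling $(\id,T_s)_\#\mu$ is optimal for every $\mu \in \cP_2(\cM)$. In particular $T_s$ is an OT map between $\mu$ and $(T_s)_\#\mu$.

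The main obstacle is the $c$-concavity argument: the geometric content lies in controlling the interaction between the cut locus of $\cM$ and the perturbation $s\nabla\psi$. This is handled uniformly by the injectivity-radius bound on $\bar s$; the remainder of the proof is a direct application of results already available in the background of the paper (notably the form $T(x)=\exp_x(-\nabla\varphi)$ appearing in \Cref{th:mccann} and \Cref{prop:surjective_map_PM}).
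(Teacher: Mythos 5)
Your proof is correct in substance and reaches the same conclusion, but it follows a genuinely different route. The paper's argument is a two-line citation: it invokes Villani's Theorem 13.5 (a Hessian-smallness criterion for $\frac12 d^2$-convexity) to conclude that $s\psi$ is $c$-convex when $|s|$ is small, and then appeals to McCann's theorem to identify $\exp(s\nabla\psi)$ as the OT map. You instead reprove the $c$-concavity criterion from scratch: for each fixed $x$, you argue that $x' \mapsto c(x', T_s(x)) - \varphi_s(x')$ is uniformly strictly convex on a neighborhood of $x$ once $|s|\,\|\mathrm{Hess}\,\psi\|$ is small, and has $x$ as its unique global minimizer, which is exactly the pointwise characterization of $c$-concavity; you then conclude optimality via $c$-cyclical monotonicity. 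Your route is longer but more self-contained and, usefully, makes explicit that \emph{no} absolute continuity on $\mu$ is needed for the sufficiency direction — whereas the paper's bare citation of \Cref{th:mccann} is stated there only for $\mu \in \cPa{\cM}$, even though the lemma is asserted for arbitrary $\mu \in \cP_2(\cM)$. Two small blemishes to fix: the displayed expression should read $\tfrac12 d(x',T_s(x))^2 - \varphi_s(x')$, not $+\varphi_s(x')$ (harmless here because $s$ ranges over a symmetric interval, but it is the superdifferential of $\varphi_s$, not of $-\varphi_s$, that contains the graph of $T_s$); and you write both that $T_s(x)$ and that $x$ is the unique minimizer — it should be $x$. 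Also, the uniform lower Hessian bound on $\tfrac12 d(\cdot,y)^2$ does not hold on the full injectivity ball (on a sphere, for instance, the tangential eigenvalue $d\cot d$ becomes negative past $d=\pi/2$); the correct radius is the smaller of $r_{\mathrm{inj}}$ and a radius determined by curvature bounds, which exists by compactness. Shrinking $\bar s$ accordingly, and adding the one-line argument that the minimum cannot occur outside that ball because $\tfrac12 d^2$ dominates the bounded perturbation $s\psi$ there, makes the argument complete.
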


\begin{proof}
    Suppose $\psi\neq 0$. Let $\varepsilon>0$ and $\Bar{s} = \frac{\varepsilon}{\max_x \|\nabla^2 \psi(x)\|}$. It exists as $\psi$ is supported on a compact. Then, for any $s\in (-\Bar{s}, \Bar{s})$, $\|s\nabla^2\psi\| \le \Bar{s} \|\nabla^2\psi\| \le \varepsilon$. Then, by \citep[Theorem 13.5]{villani2009optimal}, $s\psi$ is $d^2 / 2$ convex, and by McCann's theorem, $\exp(s\nabla\psi)$ is an OT map.
\end{proof}

By \Cref{lemma:ot_map_local}, there exists $s>0$ such that $\gamma = (\id, s\nabla\varphi_\varepsilon)_\#\mu \in\exp_\mu^{-1}(\nu)$ and $\gamma'=(\id, s\nabla\varphi_\varepsilon')_\#\mu\in\exp_\mu^{-1}(\nu')$ with $\nu=\big(\exp\circ(s\nabla\varphi_\varepsilon)\big)_\#\mu$ and $\nu' = \big(\exp\circ (s\nabla \varphi_\varepsilon')\big)_\#\mu$. Thus, we have using the definitions of sub-differentials that
\begin{equation}
    \begin{cases}
        \cF(\nu) \ge \cF(\mu) + s \int \langle\xi(x), \nabla\varphi_\varepsilon(x) \rangle_x\ \mathrm{d}\mu(x) + o(s) \\
        \cF(\nu') \ge \cF(\mu) + s \int \langle \xi'(x), \nabla\varphi_\varepsilon' (x)\rangle_x\ \mathrm{d}\mu(x) + o(s).
    \end{cases}
\end{equation}
Likewise, by the definition of the super-differentials, we have
\begin{equation}
    \begin{cases}
        \cF(\nu') \le \cF(\mu) + s \int \langle\xi(x), \nabla\varphi_\varepsilon'(x) \rangle_x\ \mathrm{d}\mu(x) + o(s) \\
        \cF(\nu) \le \cF(\mu) + s \int \langle \xi'(x), \nabla\varphi_\varepsilon(x)\rangle_x\ \mathrm{d}\mu(x) + o(s).
    \end{cases}
\end{equation}

Dividing by $s>0$ and rearranging the terms, we have
\begin{equation}
    \begin{cases}
        \frac{\cF(\nu) - \cF(\mu)}{s} \ge \langle \xi, \nabla\varphi_\varepsilon\rangle_{L^2(\mu,T\cM)} + o(1) \\
        \frac{\cF(\nu') - \cF(\mu)}{s} \ge \langle \xi', \nabla\varphi_\varepsilon'\rangle_{L^2(\mu,T\cM)} + o(1) \\
        \frac{\cF(\mu)-\cF(\nu')}{s} \ge \langle -\xi, \nabla \varphi_\varepsilon'\rangle_{L^2(\mu,T\cM)} + o(1) \\
        \frac{\cF(\mu) - \cF(\nu)}{s} \ge \langle -\xi', \nabla\varphi_\varepsilon\rangle_{L^2(\mu, T\cM)} + o(1).
    \end{cases}
\end{equation}
Summing them, we get,
\begin{equation}
    0 \ge \langle \xi - \xi', \nabla\varphi_\varepsilon\rangle_{L^2(\mu,T\cM)} + \langle \xi' - \xi, \nabla\varphi_\varepsilon'\rangle_{L^2(\mu, T\cM)} + o(1) = \langle \xi-\xi', \nabla\varphi_\varepsilon - \nabla\varphi_\varepsilon'\rangle_{L^2(\mu,T\cM)} + o(1).
\end{equation}

Then, we have
\begin{equation}
    \begin{aligned}
        \|\xi - \xi'\|_{L^2(\mu,T\cM)} &\le \sqrt{\|\xi - \xi'\|_{L^2(\mu,T\cM)}^2 - 2 \langle \xi-\xi', \nabla\varphi_\varepsilon - \nabla\varphi_\varepsilon'\rangle_{L^2(\mu,T\cM)}  + \|\nabla\varphi_\varepsilon - \nabla\varphi_\varepsilon'\|_{L^2(\mu,T\cM)}^2 } \\
        &= \|\xi - \xi' - (\nabla \varphi_\varepsilon - \nabla \varphi_\varepsilon')\|_{L^2(\mu, T\cM)} \\
        &\le \|\xi - \nabla\varphi_\varepsilon\|_{L^2(\mu,T\cM)} + \|\xi' - \nabla\varphi_\varepsilon'\|_{L^2(\mu,T\cM)} \\
        &\le \varepsilon.
    \end{aligned}
\end{equation}

Taking the limit $\varepsilon\to 0$, we conclude that $\xi=\xi'$. \qed

\subsection{Proof of \Cref{prop:strong_grad_erbar}} \label{proof:prop_strong_grad_erbar}

We assume by contradiction that $\xi$ is not an extended strong subdifferential. Then there exists a sequence $\{\gamma_n\}_{n=1}^\infty \subseteq \cP_2(T\cM)_\mu$ and $\delta > 0$ such that $\veps_n := \|\gamma_n\|_\mu \to 0$ and for every $n$, denoting $\mu_n := \exp_\#\gamma_n$,
\begin{equation}
    \cF(\mu_n) - \cF(\mu) - \int \sca{\xi(x)}{v}_x \dd\gamma_n(x,v) \leq -\delta \veps_n. 
\end{equation}
Now, let $\eta_n \in \exp^{-1}_\mu(\mu_n)$. Since $\xi$ is a subdifferential, there exists $N$ such that for every $n > N$,
\begin{equation}
    \cF(\mu_n) - \cF(\mu) - \int \sca{\xi(x)}{v}_x \dd\eta_n(x,v) \geq -\frac{\delta}{2}\W_2(\mu,\mu_n).
\end{equation}
Since, by optimality of $\eta_n$, we have $\|\eta_n\|_\mu = \W_2(\mu,\mu_n) \leq \|\gamma_n\|_\mu = \veps_n$, combining these inequalities, we find
\begin{equation}
    \int \sca{\xi(x)}{v}_x \dd\eta_n(x,v) - \int \sca{\xi(x)}{v}_x \dd\gamma_n(x,v) \leq - \frac{\delta}{2} \veps_n
\end{equation}
that is
\begin{equation} \label{eq:draft_l69}
    \sca{\xi}{\Phi_n - \Psi_n}_{L^2(\mu)} \leq - \frac{\delta}{2}\veps_n
\end{equation}
for every $n > N$, where we have defined $\Psi_n := \mathcal{B}(\gamma_n)$ and $\Phi_n := \mathcal{B}(\eta_n)$.
Since we have $\|\Psi_n\|_{L^2(\mu)} \leq \|\gamma_n\|_\mu = \veps_n$ and likewise $\|\Phi_n\|_{L^2(\mu)} \leq \|\eta_n\|_\mu \leq \veps_n$, up to extracting a subsequence we can assume that there exists $\Psi, \Phi \in L^2(\mu,T\cM)$ towards which $\veps_n^{-1} \Psi_n$ and $\veps_n^{-1} \Phi_n$ respectively converge weakly in $L^2(\mu,T\cM)$. Thus, dividing \eqref{eq:draft_l69} by $\veps_n$ and passing to the limit, we find
\begin{equation} \label{eq:draft_l74}
    \sca{\xi}{\Phi - \Psi}_{L^2(\mu)} \leq -\frac{\delta}{2}.
\end{equation}
Now, fix some $\varphi \in C_c^\infty(\cM)$. By \Cref{lemma:cylinder_taylor_expansion}, we have
\begin{align}
    \int \varphi \dd\mu_n &= \int \varphi \dd\mu + \int \sca{\nabla\varphi(x)}{v}_x \dd\eta_n(x,v) + O(\|\eta_n\|^2_\mu) \\
    &= \int \varphi \dd\mu + \sca{\nabla \varphi}{\Phi_n}_{L^2(\mu)} + O(\veps_n^2),
\end{align}
and similarly
\begin{equation}
    \int \varphi \dd\mu_n = \int \varphi \dd\mu + \sca{\nabla \varphi}{\Psi_n}_{L^2(\mu)} + O(\veps_n^2).
\end{equation}
Subtracting these two equations, dividing by $\veps_n$ and passing to the limit, we find
\begin{equation}
    \sca{\nabla \varphi}{\Phi - \Psi}_{L^2(\mu)} = 0
\end{equation}
and this holds for any $\varphi \in C^\infty_c(\cM)$. However, by assumption, $\xi \in T_\mu\cP_2(\cM) = \overline{\{\nabla \varphi, \varphi \in C^\infty_c(\cM)\}}^{L^2(\mu,T\cM)}$. This implies immediately that $\sca{\xi}{\Phi - \Psi}_{L^2(\mu)} = 0$, which contradicts \eqref{eq:draft_l74}. \qed

\begin{lemma} \label{lemma:cylinder_taylor_expansion}
    Let $\varphi \in C_c^\infty(\cM)$, then, for every $\mu \in \cP_2(\cM)$ and $\gamma \in \cP_2(T\cM)_\mu$,
    \begin{equation}
        \left|\int \varphi \dd(\exp_\#\gamma) - \int \varphi \dd\mu - \int \sca{\nabla\varphi(x)}{v}_x \dd\gamma(x,v)  \right| \leq \frac 12 L \|\gamma\|^2_\mu 
    \end{equation}
    where $L := \max_{(x,v) \in T\cM, \|v\|_x = 1} \|\Hess_\cM \varphi(x)[v]\| < \infty$.
\end{lemma}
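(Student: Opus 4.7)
The plan is to establish the pointwise Taylor inequality
\[
\bigl|\varphi(\exp_x(v)) - \varphi(x) - \langle \nabla \varphi(x), v\rangle_x\bigr| \leq \tfrac{1}{2} L \|v\|_x^2
\]
for every $(x,v) \in T\cM$, and then integrate it against $\gamma$. Since $\pi^\cM_\#\gamma = \mu$, the first term on the left integrates to $\int \varphi \,\mathrm{d}(\exp_\#\gamma)$, the second to $\int \varphi \,\mathrm{d}\mu$, the third to $\int \langle\nabla\varphi(x),v\rangle_x \,\mathrm{d}\gamma(x,v)$, and the right-hand side integrates to $\tfrac12 L \|\gamma\|_\mu^2$ by definition of $\|\gamma\|_\mu$. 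Note that $L$ is finite because $\varphi \in C_c^\infty(\cM)$, so $\Hess_\cM \varphi$ is supported on a compact set and continuous.

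To get the pointwise inequality, I would fix $(x,v) \in T\cM$, consider the geodesic $c(t) := \exp_x(tv)$ for $t \in [0,1]$, and set $f(t) := \varphi(c(t))$. Then $f \in C^\infty([0,1])$, with $f(0) = \varphi(x)$, $f(1) = \varphi(\exp_x(v))$, and $f'(0) = \langle \nabla \varphi(x), \dot c(0)\rangle_x = \langle \nabla\varphi(x), v\rangle_x$. The key calculation is the identity
\[
f''(t) = \bigl\langle \Hess_\cM \varphi(c(t))[\dot c(t)],\, \dot c(t)\bigr\rangle_{c(t)},
\]
which holds because $c$ is a geodesic, so the acceleration term $\nabla_{\dot c} \dot c$ vanishes. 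Since geodesics have constant speed, $\|\dot c(t)\|_{c(t)} = \|v\|_x$ for every $t$, and therefore $|f''(t)| \leq L \|v\|_x^2$ by the definition of $L$.

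Applying the integral form of Taylor's theorem,
\[
f(1) - f(0) - f'(0) = \int_0^1 (1-t) f''(t)\,\mathrm{d}t,
\]
and bounding $|f''(t)|$ uniformly yields the desired inequality $|f(1)-f(0)-f'(0)| \leq \tfrac12 L \|v\|_x^2$.

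There is no real obstacle here: the only minor subtlety is justifying the formula for $f''$ on a general Riemannian manifold, which follows from the defining property of the Levi-Civita connection applied to the vector field $\dot c$ along the geodesic $c$. Once the pointwise bound is in hand, integrating against $\gamma \in \cP_2(T\cM)_\mu$ is immediate from Fubini/linearity and the definitions of the pushforward $\exp_\#\gamma$ and of $\|\gamma\|_\mu$.
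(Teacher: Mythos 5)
Your proof is correct and follows essentially the same route as the paper: establish the pointwise second-order Taylor bound along the geodesic $c(t)=\exp_x(tv)$, using that the chain-rule acceleration term vanishes for geodesics and that geodesics have constant speed, then integrate against $\gamma$. The only cosmetic difference is that you use the integral form of the Taylor remainder whereas the paper invokes the mean-value form (via Boumal's Exercise 5.40); both yield the same $\tfrac12 L\|v\|_x^2$ bound.
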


\begin{proof}
    Let $(x,v) \in T\cM$. Applying \citep[Exercise 5.40]{boumal2023introduction} to the geodesic given by $c(t) = \exp_x(tv)$, it ensues that there exists $t \in (0,1)$ such that
    \begin{equation}
        \varphi(\exp_x(v)) = \varphi(x) + \sca{\nabla \varphi(x)}{v}_x + \frac 12 \sca{\Hess \varphi(c(t))[c'(t)]}{c'(t)}_{c(t)}
    \end{equation}
    so that, since $\|c'(t)\|_{c(t)} = \|v\|_x$,
    \begin{equation}
        \left|\varphi(\exp_x(v)) - \varphi(x) - \sca{\nabla \varphi(x)}{v}_x\right| \leq \frac 12 L \|v\|^2_x.
    \end{equation}
    This immediately implies
    \begin{align}
        \left|\int \varphi \dd(\exp_\#\gamma) - \int \varphi \dd\mu - \int \sca{\nabla\varphi(x)}{v}\dd\gamma(x,v)  \right| &= \left|\int \varphi(\exp_x(v)) - \varphi(x) - \sca{\nabla\varphi(x)}{v} \dd\gamma(x,v) \right| \\
        &\leq \frac 12 L \int \|v\|^2_x \dd\gamma(x,v) = \frac 12 L \|\gamma\|^2_\mu.
    \end{align}
\end{proof}

\subsection{Proof of \Cref{prop:v_diff_pm}} \label{proof:prop_v_diff_pm}

Let $\nu,\mu\in\cP_2(\cM)$, and $\gamma\in\exp_\mu^{-1}(\nu)$. For any $x\in\cM$, $v\in T_x\cM$, let us note $c_{x,v}(t)=\exp_x(tv)$ the geodesic starting from $x$ with direction $v$. By \citep[Exercise 5.40]{boumal2023introduction}, we have that there exists $t\in [0,1]$ such that
\begin{equation}
    V\big(\exp_x(v)\big) = V(x) + \langle \nabla_\cM V(x), v\rangle_x + \frac12 \langle \mathrm{Hess} V\big(c_{x,v}(t)\big)[c_{x,v}'(t)], c_{x,v}'(t)\rangle_{c_{x,v}(t)}.
\end{equation}
Then,
\begin{equation}
    \begin{aligned}
        \cV(\nu) - \cV(\mu) &= \int \big(V(\exp_x(v))-V(x)\big)\ \mathrm{d}\gamma(x,v) \\
        &= \int \langle \nabla_\cM V(x), v\rangle_x + \frac12 \langle \mathrm{Hess} V\big(c_{x,v}(t)\big)[c_{x,v}'(t)], c_{x,v}'(t)\rangle_{c_{x,v}(t)}\ \mathrm{d}\gamma(x,v) \\
        &= \int \langle \nabla_\cM V(x), v\rangle_x\ \mathrm{d}\gamma(x, v) + \frac12\int \langle \mathrm{Hess} V\big(c_{x,v}(t)\big)[c_{x,v}'(t)], c_{x,v}'(t)\rangle_{c_{x,v}(t)}\ \mathrm{d}\gamma(x, v).
    \end{aligned}
\end{equation}
Moreover, using that $V$ has bounded Hessian and that geodesics are constant speed and thus satisfy $\|c_{x,v}'(t)\|_{c_{x,v}(t)}=\|c_{x,v}'(0)\|_{c_{x,v}(0)}=\|v\|_x$ \citep[Lemma 5.5]{lee2006riemannian}, we have that the last term is bounded by $\W_2^2(\mu,\nu)$ as
\begin{equation} \label{eq:bounded_hessian}
    \begin{aligned}
        \left|\int \langle \mathrm{Hess} V\big(c_{x,v}(t)\big)[c_{x,v}'(t)], c_{x,v}'(t)\rangle_{c_{x,v}(t)}\ \mathrm{d}\gamma(x, v)\right| &\le L \int \|c_{x,v}'(t)\|_{c_{x,v}(t)}^2\ \mathrm{d}\gamma(x,v) \\
        &= L \int \|v\|_x^2\ \mathrm{d}\gamma(x, v) = L \W_2^2(\mu,\nu).
    \end{aligned}
\end{equation}

Thus, we conclude 
\begin{equation}
    \cV(\nu) = \cV(\mu) + \int \langle \nabla_\cM V(x), v\rangle_x\ \mathrm{d}\gamma(x,v) + o\big(\W_2(\mu,\nu)\big).
\end{equation}

Now, let us verify that $\nabla_\cM V \in L^2(\mu)$. We denote by $\mathrm{PT}_{x\to y}$ the parallel transport between $T_x\cM$ and $T_y\cM$ along the geodesic between $x$ and $y$ (see \citep[Definition 10.35]{boumal2023introduction} for the definition). 
By \citep[Corollary 10.48, 3.]{boumal2023introduction}, $V$ having its Hessian bounded in operator norm by $L$ is equivalent with having for all $x,y\in\cM$, for all $(x,v)\in T\cM$,
\begin{equation} \label{eq:ineq_bounded_Hessian}
    \|\nabla_\cM V(x) - \mathrm{PT}_{\exp_x(v)\to x}\nabla_\cM V\big(\exp_x(v)\big)\|_x \le L \|v\|_x.
\end{equation}
Thus, let $\mu\in\cP_2(\cM)$, $o$ some origin, and $\gamma\in\exp_{\mu}^{-1}(\delta_o)$. Then, we have by using sequentially the definition of $\gamma$, $\|x+y\|^2\le 2 \|x\|^2 + 2\|y\|^2 $, \eqref{eq:ineq_bounded_Hessian}, that for any $(x,v)\in\mathrm{supp}(\gamma)$, $\exp_x(v)=o$ and $\mathrm{PT}_{o\to x}$ is an isometry \citep[Proposition 10.36]{boumal2023introduction}, and $\gamma\in\cP_2(T\cM)$,
\begin{equation}\label{eq:bound_pt}
    \begin{aligned}
        \|\nabla_\cM V\|_{L^2(\mu)}^2 &= \int \|\nabla_\cM V(x)\|_x^2\ \mathrm{d}\mu(x) \\ 
        &= \int \|\nabla_\cM V(x)\|_x^2\ \mathrm{d}\gamma(x,v) \\
        &\le 2 \int \|\nabla_\cM V(x) - \mathrm{PT}_{\exp_x(v)\to x}\nabla_{\cM}V\big(\exp_x(v)\big)\|_x^2\ \mathrm{d}\gamma(x,v) \\ &\quad+ 2 \int \|\mathrm{PT}_{\exp_x(v)\to x}\nabla_\cM V\big(\exp_x(v)\big)\|_x^2\ \mathrm{d}\gamma(x,v) \\
        &\le 2 \int L \|v\|_x^2\ \mathrm{d}\gamma(x,v) + 2 \|\nabla_{\cM} V(o)\|_o^2 \\
        &< +\infty.
    \end{aligned}
\end{equation}

Therefore, we can conclude that $\gW\cV(\mu) = \nabla_\cM V$ by \Cref{def:w_grad_pm}. \qed

\subsection{Proof of \Cref{prop:w_diff_pm}} \label{proof:prop_w_diff_pm}

Let $\nu,\mu\in\cP_2(\cM)$, and $\gamma\in\exp_\mu^{-1}(\nu)$. First, we recall that the product space $\cM\times \cM$ is a Riemannian manifold with tangent space $T\cM \times T\cM$. For any $(x,v),(x',v')\in T\cM$ and note $c_x(t)=\exp_x(tv)$, $c_{x',v'}(t)=\exp_{x'}(tv')$ and $c_{x,x',v,v'}(t)=(c_{x,v}(t), c_{x',v'}(t))$ the geodesics starting at $(x,x')$ with direction $(v,v')$. Then, by \citep[Exercise 5.40]{boumal2023introduction}, there exists $t\in ]0,1[$ such that %
\begin{equation}
    \begin{aligned}
        W\big(\exp_x(v), \exp_{x'}(v')\big) &= W(x,x') + \langle \nabla_1 W(x,x'), v\rangle_x + \langle \nabla_2 W(x,x'),v'\rangle_{x'} \\ &\quad + \frac12 \langle \mathrm{Hess} W(c_{x,v}(t),c_{x',v'}(t))[c_{x,v}'(t),c_{x',v'}'(t)], [c_{x,v}'(t),c_{x',v'}'(t)]\rangle_{c_{x,x',v,v'}(t)}. %
    \end{aligned}
\end{equation}
Moreover, by the same argument as \eqref{eq:bounded_hessian} in \Cref{prop:v_diff_pm}, we have
\begin{equation}
    \left|\iint \langle \mathrm{Hess} W(c_{x,v,x',v'}(t))[c_{x,v}'(t),c_{x',v'}'(t)], [c_{x,v}'(t),c_{x',v'}'(t)]\rangle_{c_{x,x',v,v'}(t)}\ \mathrm{d}\gamma(x,v)\mathrm{d}\gamma(x',v')\right| \le 2 L \W_2^2(\mu,\nu).
\end{equation}

Then, we have
\begin{equation}
    \begin{aligned}
        \cW(\nu) - \cW(\mu) &= \iint W(y,y')\ \mathrm{d}\nu(y)\mathrm{d}\nu(y') - \iint W(x,x')\ \mathrm{d}\mu(x)\mathrm{d}\mu(x') \\
        &= \iint \big( W(\exp_x(v), \exp_{x'}(v')) - W(x,x')\big)\ \mathrm{d}\gamma(x,v)\mathrm{d}\gamma(x',v') \\
        &= \int \big(\langle \nabla_1 W(x,x'), v\rangle_x + \langle \nabla_2 W(x,x'), v'\rangle_{x'} \\ &\quad+ \frac12 \langle \mathrm{Hess} W(c_{x,v}(t),c_{x',v'}(t))[c_{x,v}'(t),c_{x',v'}(t)], [c_{x,v}'(t),c_{x',v'}(t')]\rangle_{c_{x,x',v,v'}(t)} \big)\ \mathrm{d}\gamma(x,v)\mathrm{d}\gamma(x',v') \\
        &= \int \langle \int \nabla_1 W(x,x')\mathrm{d}\mu(x'), v\rangle_x \ \mathrm{d}\gamma(x,v) + \int \langle \int\nabla_2 W(x,x')\mathrm{d}\mu(x), v'\rangle_{x'}\ \mathrm{d}\gamma(x',v') + o\big(\W_2(\mu,\nu)\big) \\
        &= \int \left\langle \int \big( \nabla_1 W(x,x') + \nabla_2 W(x',x) \big)\ \mathrm{d}\mu(x'), v\  \right\rangle_x\ \mathrm{d}\gamma(x,v) + o\big(\W_2(\mu,\nu)\big).
    \end{aligned}
\end{equation}
Now, let $\gW \cW(\mu) = \int \big(\nabla_1 W(\cdot, x) + \nabla_2(x, \cdot)\big)\ \mathrm{d}\mu(x)$. Using that by Jensen's inequality,
\begin{equation}
    \begin{aligned}
        \|\gW \cW\|_{L^2(\mu)}^2 &= \int \left\|\int \big(\nabla_1 W(x, x') + \nabla_2 W(x', x)\big)\ \mathrm{d}\mu(x') \right\|_x^2 \ \mathrm{d}\mu(x) \\
        &\le 2\iint \big(\|\nabla_1 W(x,x')\|_x^2 + \|\nabla_2 W(x,x')\|_{x'}^2\big)\ \mathrm{d}\mu(x)\mathrm{d}\mu(x'),
    \end{aligned}
\end{equation}
and a similar reasoning of \eqref{eq:bound_pt}, we find that $\gW\cW\in L^2(\mu)$, and we can conclude that $\gW \cW$ is a Wasserstein gradient by \Cref{def:w_grad_pm}. \qed

\subsection{Proof of \Cref{prop:backprop_grad}} \label{proof:prop_backprop_grad}

Let $\nu\in\cP_2(\R^d)$ and $\mu_n=\frac{1}{n}\sum_{i=1}^n \delta_{x_i}$. Since $\cF$ is Wasserstein differentiable, the Wasserstein gradient $\gW\cF(\mu_n)$ satisfies for any coupling $\gamma\in\Pi(\mu_n,\nu)$ \citep[Proposition 2.12]{lanzetti2022first},
\begin{equation}
    \cF(\nu) = \cF(\mu_n) + \int \langle \gW\cF(\mu_n)(x), y-x\rangle\ \mathrm{d}\gamma(x,y) + o\left(\sqrt{\int \|x-y\|_2^2\ \mathrm{d}\gamma(x,y)}\right).
\end{equation}
Let $h_1,\dots,h_n\in\R^d$, $\nu_n = \frac{1}{n}\sum_{i=1}^n \delta_{x_i+h_i}$ and $\gamma_n = \frac{1}{n}\sum_{i=1}^n \delta_{(x_i, x_i+h_i)} \in \Pi(\mu_n,\nu_n)$. Then, since $F(x_1,\dots,x_n)=\cF(\mu_n)$ and $F(x_1+h_1,\dots,x_n+h_n)=\cF(\nu_n)$, we get
\begin{equation}
    \begin{aligned}
        F(x_1+h_1,\dots,x_n+h_n) &= \cF(\nu_n) \\
        &= \cF(\mu_n) + \int \langle \gW\cF(\mu_n)(x), y-x\rangle\ \mathrm{d}\gamma_n(x,y) + o\left(\sqrt{\int \|x-y\|_2^2\ \mathrm{d}\gamma_n(x,y)}\right) \\
        &= \cF(\mu_n) + \frac{1}{n}\sum_{i=1}^n \langle \gW\cF(\mu_n)(x_i), h_i\rangle + o\left(\sqrt{\sum_{i=1}^n \|h_i\|_2^2}\right) \\
        &= F(x_1,\dots,x_n) + \sum_{i=1}^n \big\langle \frac{1}{n}\gW\cF(\mu_n)(x_i), h_i\big\rangle + o\left(\sqrt{\sum_{i=1}^n \|h_i\|_2^2}\right).
    \end{aligned}
\end{equation}
Thus, by definition of the gradient of $F$, we deduce that $\nabla_i F(x_1,\dots,x_n)=\frac{1}{n}\gW\cF(\mu_n)(x_i)$. \qed

\subsection{Proof of \Cref{prop:geodesic_ppm}} \label{proof:prop_geodesic_ppm}

Let $\bGamma\in\exp_{\bP}^{-1}(\bQ)$. Let $s,t \in [0,1]$, and $\phi^s(\gamma) = (\exp_{\pi^\cM}\circ(s\pi^{\mathrm{v}}))_\#\gamma$ for $\gamma\in\cP_2(T\cM)$. Then, ($\phi^s,\phi^t)_\#\bGamma\in\Pi(\bP_s,\bP_t)$. Therefore,
\begin{equation}
    \Ww(\bP_s,\bP_t)^2 \le \int \W_2^2\big(\phi^s(\gamma), \phi^t(\gamma)\big)\ \mathrm{d}\bGamma(\gamma).
\end{equation}
Moreover, since for $\bGamma$-a.e. $\gamma$, $\big(\exp_{\pi^\cM}\circ (s\pi^{\mathrm{v}}), \exp_{\pi^\cM}\circ (t\pi^{\mathrm{v}})\big)_\#\gamma\in \Pi(\phi^s(\gamma),\phi^t(\gamma))$, we have the following inequality:
\begin{equation}
    \begin{aligned}
        \Ww(\bP_s,\bP_t)^2 &\le \int \W_2^2\big(\phi^s(\gamma), \phi^t(\gamma)\big)\ \mathrm{d}\bGamma(\gamma) \\
        &\le \iint d\big(\exp_x(sv), \exp_x(tv)\big)^2\ \mathrm{d}\gamma(x,v)\mathrm{d}\bGamma(\gamma) \\
        &= |t-s|^2 \iint \|v\|_x^2\ \mathrm{d}\gamma(x,v)\mathrm{d}\bGamma(\gamma) \\
        &= |t-s|^2 \Ww(\bP,\bQ)^2,
    \end{aligned}
\end{equation}
where we used that $\bGamma\in\exp_{\bP}^{-1}(\bQ)$ and that $d\big(\exp_x(tv),\exp_x(sv)\big) = |t-s|\|v\|_x$.

For the other inequality, we have for any $0\le s<t\le 1$, using the triangle inequality and the previous inequality,
\begin{equation}
    \begin{aligned}
        \Ww(\bP,\bQ) &\le \Ww(\bP, \bP_s) + \Ww(\bP_s, \bP_t) + \Ww(\bP_t, \bQ) \\
        &\le s \Ww(\bP,\bQ) + \Ww(\bP_s,\bP_t) + (1-t)\Ww(\bP,\bQ).
    \end{aligned}
\end{equation}
This is equivalent with 
\begin{equation}
    (t-s) \Ww(\bP,\bQ) \le \Ww(\bP_s,\bP_t).
\end{equation}
Thus, we can conclude that $\Ww(\bP_s,\bP_t) = |t-s|\Ww(\bP,\bQ)$ and thus $t\mapsto \bP_t$ is a constant-speed geodesic between $\bP$ and $\bQ$. \qedhere

\subsection{Proof of \Cref{prop:bV_diff}} \label{proof:prop_bv_diff}

We first state a lemma showing a relation between $\gamma\in\exp_\mu^{-1}(\nu)$ and a specifically constructed $\gamma_t \in \exp_{\mu_\gamma(t)}^{-1}(\nu)$, with $t\mapsto \mu_\gamma(t)$ a geodesic between $\mu$ and $\nu$.

\begin{lemma} \label{lemma:gamma_t}
    Let $\mu,\nu\in\cP_2(\cM)$, $\gamma\in\exp_\mu^{-1}(\nu)$ and the geodesic between $\mu$ and $\nu$ defined for all $t\in [0,1]$ as $\mu_\gamma(t)=\big(\exp_{\pi^\cM}\circ (t\pi^{\mathrm{v}})\big)_\#\gamma$. Let $\gamma_t = \big(\exp_{\pi^\cM}\circ (t\pi^{\mathrm{v}}), (1-t)\mathrm{PT}_{\pi^\cM\to \exp_{\pi^\cM}\circ (t\pi^{\mathrm{v}})}\circ \pi^{\mathrm{v}}\big)_\#\gamma$. Then, $\gamma_t\in \exp_{\mu_\gamma(t)}^{-1}(\nu)$, and, for every $s \in [0,1]$, $\mu_{\gamma_t}(s) = \big(\exp_{\pi^\cM}\circ (s\pi^{\mathrm{v}})\big)_\#\gamma_t = \mu_\gamma(t+(1-t)s)$.
\end{lemma}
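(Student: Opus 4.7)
The plan is to verify the three defining properties of $\exp_{\mu_\gamma(t)}^{-1}(\nu)$ for $\gamma_t$, then deduce the geodesic identity as a byproduct of the same computation. The key geometric fact I will use throughout is that Riemannian geodesics are autoparallel: if $c(s)=\exp_x(sv)$, then $c'(s)=\mathrm{PT}_{x\to c(s)}(v)$, and consequently $\exp_{c(s)}(r\,c'(s))=c(s+r)$ for all $s,r$ with $s+r$ in the domain of definition. This one identity handles both the pushforward by $\exp$ and the geodesic-continuation statement at once.

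First, the marginal $\phi^\cM_\#\gamma_t=\mu_\gamma(t)$ is immediate from the definitions of $\gamma_t$ and $\mu_\gamma$. Second, to compute $\exp_\#\gamma_t$, fix $(x,v)\in T\cM$ and apply $\exp$ to the corresponding point $(\exp_x(tv),(1-t)\mathrm{PT}_{x\to\exp_x(tv)}v)$ in the support of $\gamma_t$. With $c(s)=\exp_x(sv)$, the autoparallel identity gives
\[
\exp_{\exp_x(tv)}\!\big((1-t)\,\mathrm{PT}_{x\to\exp_x(tv)}v\big)=c(t+(1-t))=c(1)=\exp_x(v),
\]
so pushing $\gamma$ through this map yields $\exp_\#\gamma=\nu$. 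Third, since parallel transport is an isometry between tangent spaces, I compute
\[
\iint\|v'\|^2_{x'}\,\D\gamma_t(x',v')=(1-t)^2\iint\|v\|_x^2\,\D\gamma(x,v)=(1-t)^2\W_2^2(\mu,\nu).
\]
On the other hand, since $s\mapsto\mu_\gamma(s)$ is a constant-speed geodesic between $\mu$ and $\nu$ by \Cref{prop:geodesic_ppm} applied at the base level (equivalently, by \citep[Theorem 1.11]{gigli2011inverse}), we have $\W_2(\mu_\gamma(t),\nu)=(1-t)\W_2(\mu,\nu)$. Together with the marginal conditions already verified, this shows $\gamma_t\in\exp_{\mu_\gamma(t)}^{-1}(\nu)$.

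Finally, for the geodesic-continuation identity, apply the autoparallel property again: for $s\in[0,1]$ and $(x,v)$ as above,
\[
\exp_{\exp_x(tv)}\!\big(s(1-t)\,\mathrm{PT}_{x\to\exp_x(tv)}v\big)=c(t+s(1-t)),
\]
so pushing $\gamma$ through this map gives $\mu_{\gamma_t}(s)=\big(\exp_{\pi^\cM}\circ((t+s(1-t))\pi^{\mathrm{v}})\big)_\#\gamma=\mu_\gamma(t+(1-t)s)$, completing the proof. The only non-routine point is invoking the autoparallel characterization of geodesics and that parallel transport is an isometry; everything else is bookkeeping with pushforwards.
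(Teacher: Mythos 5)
Your proof is correct and follows essentially the same route as the paper: both rest on the autoparallel property of geodesics (i.e., $\exp_{c(t)}(u\,c'(t))=c(t+u)$ with $\mathrm{PT}_{x\to c(t)}v=c'(t)$) to handle the $\exp$-pushforward and geodesic continuation, the isometry of parallel transport for the norm computation, and the constant-speed property $\W_2(\mu_\gamma(t),\nu)=(1-t)\W_2(\mu,\nu)$ for optimality. The only difference is expository ordering: the paper first establishes the continuation identity $\mu_{\gamma_t}(s)=\mu_\gamma(t+(1-t)s)$ and then reads off the marginals as the special cases $s=0,1$, whereas you verify the marginals directly and present the continuation identity last; this is a matter of packaging, not substance.
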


\begin{proof}
    First, we verify the equality $\mu_{\gamma_t}(s) = \mu_\gamma(t+s(1-t))$. Fix $s \in [0,1]$ and let $h:\cM\to\R$ be a bounded measurable map. Then,
    \begin{equation}
        \begin{aligned}
            \int h(y)\ \dd(\mu_{\gamma_t}(s))(y) &= \int  h\big(\exp_{x_t}(sv_t)\big)\ \dd\gamma_t(x_t,v_t) \\
            &= \int h\big(\exp_{\exp_x(tv)}\big(s(1-t)\mathrm{PT}_{x\to \exp_x(tv)}(v)\big)\big)\ \dd\gamma(x,v).
        \end{aligned}
    \end{equation}
    Fixing $(x,v) \in T\cM$, let $c(t)=\exp_x(tv), \, t \in [0,1]$ be the unique geodesic starting from $x$ with $\dot{c}(0) = v$. Then, we have $\mathrm{PT}_{x\to c(t)}(v) = \dot{c}(t)$ by the properties of the parallel transport\footnote{Recall that a vector field $X$ along a smooth curve $c$ is said to be parallel if $D_t X = 0$, where $D_t$ is the covariant derivative along $c$, and that for every $s,t$, the parallel transport operator $\mathrm{PT}_{c(t)\to c(s)}$ sends every $v \in T_{c(t)} \cM$ to $X(s)$ where $X$ is the unique parallel vector field along $c$ such that $X(t) = v$. Then, since the condition for $c$ to be a geodesic is that $D_t \dot{c} = 0$, if $c$ is a geodesic, we have $\mathrm{PT}_{c(t)\to c(s)}\dot{c}(t) = \dot{c}(s)$ for every $s,t$.}. Furthermore, by definition of the exponential map, for every $u \in [0,1]$,
    \begin{equation}
        \exp_{\exp_x(tv)}\big(u\mathrm{PT}_{x\to \exp_x(tv)}(v)\big) = \exp_{c(t)}\big(u\dot{c}(t)\big) = c_2(u)
    \end{equation}
    where $c_2$ is the unique geodesic such that $c_2(0) = c(t)$ and $\dot{c}_2(0) = \dot{c}(t)$. By uniqueness of the geodesics, we thus have $c_2(u) = c(t+u) = \exp_x((t+u)v)$ for every $0 \leq u \leq 1-t$. From this, we obtain
    \begin{equation}
        \begin{aligned}
            \int h(y)\ \dd(\mu_{\gamma_t}(s))(y) &= \int h\big(\exp_{\exp_x(tv)}\big(s(1-t)\mathrm{PT}_{x\to \exp_x(tv)}(v)\big)\big)\ \dd\gamma(x,v) \\
            &= \int h\big(\exp_x((t+s(1-t))v)\big) \dd\gamma(x,v) \\
            &= \int h(y) \ \dd(\mu_\gamma(t+s(1-t)))(y),
        \end{aligned}
    \end{equation}
    and thus we have proved $\mu_{\gamma_t}(s) = \mu_\gamma(t+(1-s)t)$. In particular, we have $\pi^\cM_\#\gamma_t = \mu_{\gamma_t}(0) = \mu_\gamma(t)$, and $\exp_\#\gamma_t = \mu_{\gamma_t}(1) = \mu_\gamma(1) = \exp_\#\gamma = \nu$, so $\gamma_t$ has the correct marginals.
    Moreover, it is optimal as
    \begin{equation} \label{eq:opt_pt}
        \begin{aligned}
            \int \|v_t\|_{x_t}^2\ \mathrm{d}\gamma_t(x_t,v_t) &= \int \|(1-t)\mathrm{PT}_{x\to \exp_x(tv)}(v)\|_{\exp_x(tv)}^2 \ \mathrm{d}\gamma(x,v) \\
            &= (1-t)^2 \int \|v\|_x^2\ \mathrm{d}\gamma(x,v) = (1-t)^2 \W_2^2(\mu,\nu) = \W_2^2(\mu_\gamma(t), \nu),
        \end{aligned}
    \end{equation}
    where we used in the last line that $\gamma\in\exp_\mu^{-1}(\nu)$ and $\mu_\gamma$ is a geodesic such that $\mu_\gamma(0)=\mu$ and $\mu_\gamma(1)=\nu$, and in particular, $\W_2^2(\mu_\gamma(t),\nu)=\W_2^2\big(\mu_\gamma(t), \mu_\gamma(1)\big) = (1-t)^2 \W_2^2\big(\mu_\gamma(0),\mu_\gamma(1)\big)$.
\end{proof}

Now, we state a second lemma providing a Taylor remainder theorem on $\cP_2(\cM)$.
\begin{lemma} \label{lemma:taylor_remainder}
    Let $\cF:\cP_2(\cM)\to \R$ a twice Wasserstein differentiable functional, $\mu,\nu\in\cP_2(\cM)$ and $\gamma\in\exp_\mu^{-1}(\nu)$, and note $\mu_\gamma:[0,1]\to\cM$  the geodesic between $\mu$ and $\nu$ defined as $\mu_\gamma(t)=\big(\exp_{\pi^{\cM}}\circ (t\pi^{\mathrm{v}})\big)_\#\gamma$, and $\gamma_t\in\exp^{-1}_{\mu_\gamma(t)}(\nu)$ given by \Cref{lemma:gamma_t}. Then, there exists $t\in ]0,1[$ such that
    \begin{equation}
        \cF(\nu) = \cF(\mu) + \int \sca{ \gW\cF(\mu)(x)}{v}_x\ \dd\gamma(x,v) + \frac{1}{2(1-t)^2} \int \sca{\mathrm{H}\cF_{\gamma_t}(x_t, v_t)}{v_t}_{x_t}\ \dd\gamma_t(x_t,v_t).
    \end{equation}
\end{lemma}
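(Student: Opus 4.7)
The natural approach is to reduce to a one-dimensional Taylor expansion along the geodesic. Define $g:[0,1]\to\R$ by $g(s) = \cF(\mu_\gamma(s))$, so that $g(0) = \cF(\mu)$ and $g(1) = \cF(\nu)$. Since $\cF$ is twice Wasserstein differentiable, $g$ is twice differentiable on $(0,1)$, and the classical Taylor--Lagrange formula produces some $t \in (0,1)$ with
\begin{equation}
    g(1) = g(0) + g'(0) + \tfrac{1}{2} g''(t).
\end{equation}
The whole proof then amounts to identifying $g'(0)$ and $g''(t)$ in terms of the Wasserstein gradient and Hessian.

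First, I would compute $g'(0)$. By the definition of the Wasserstein gradient (see the Taylor expansion in the background, equation \eqref{eq:chain_rule}, applied along the geodesic $\mu_\gamma$), we directly obtain
\begin{equation}
    g'(0) = \int \sca{\gW\cF(\mu)(x)}{v}_x\ \dd\gamma(x,v),
\end{equation}
which is the first-order term appearing in the target formula.

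The main step is identifying $g''(t)$ by reparametrizing the geodesic. The key observation is Lemma~\ref{lemma:gamma_t}: the curve $s \mapsto \mu_{\gamma_t}(s)$ is exactly a constant-speed geodesic starting at $\mu_\gamma(t)$, and it satisfies $\mu_{\gamma_t}(s) = \mu_\gamma\big(t + (1-t)s\big)$ for all $s \in [0,1]$. Setting $h(s) := \cF(\mu_{\gamma_t}(s))$, the chain rule gives $h(s) = g\big(t+(1-t)s\big)$, and in particular
\begin{equation}
    h''(0) = (1-t)^2\, g''(t).
\end{equation}
On the other hand, $s \mapsto \mu_{\gamma_t}(s)$ is a constant-speed geodesic associated to the plan $\gamma_t \in \exp^{-1}_{\mu_\gamma(t)}(\nu)$, so by the definition of the Wasserstein Hessian (Definition~\ref{def:wasserstein_hessian}) applied at $\gamma_t$,
\begin{equation}
    h''(0) = \frac{\dd^2}{\dd s^2}\cF(\mu_{\gamma_t}(s))\Big|_{s=0} = \int \sca{\mathrm{H}\cF_{\gamma_t}(x_t,v_t)}{v_t}_{x_t}\ \dd\gamma_t(x_t,v_t).
\end{equation}
Combining the two expressions for $h''(0)$ yields
\begin{equation}
    g''(t) = \frac{1}{(1-t)^2}\int \sca{\mathrm{H}\cF_{\gamma_t}(x_t,v_t)}{v_t}_{x_t}\ \dd\gamma_t(x_t,v_t),
\end{equation}
and plugging this together with $g'(0)$ into the Taylor--Lagrange identity gives exactly the claim.

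The only mild subtlety, and the step that requires the most care, is justifying that $g$ is genuinely $C^2$ (or at least twice differentiable with a pointwise meaningful second derivative) on $(0,1)$, so that the classical Taylor--Lagrange theorem applies. This is where the hypothesis ``twice Wasserstein differentiable'' is used: the first derivative identity above can be applied at any intermediate time along the geodesic using the reparametrization trick (Lemma~\ref{lemma:gamma_t}), yielding a differentiable $g'$, and the second derivative is given by the Hessian expression. Once this regularity is in place, the rest is the one-line Taylor expansion with Lagrange remainder.
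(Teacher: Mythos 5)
Your proposal is correct and follows essentially the same route as the paper: reduce to a one-dimensional Taylor expansion of $g(s)=\cF(\mu_\gamma(s))$ with Lagrange remainder, then use \Cref{lemma:gamma_t} and the reparametrization $h(s)=g(t+(1-t)s)$ to identify $(1-t)^2 g''(t)$ with the Hessian integral. The only cosmetic difference is that the paper writes $g(1)-g(0)$ via the integral-remainder form $\int_0^1(1-s)g''(s)\,\dd s$ and then invokes the mean value theorem for integrals (using $1-s\ge 0$ and continuity of $g''$) to extract the point $t$, whereas you invoke the Lagrange form of Taylor's theorem directly; both require the same regularity of $g$, which you correctly flag as the point needing care.
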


\begin{proof}
    First, let us note that
    \begin{equation}
        \begin{aligned}
            \cF\big(\mu_\gamma(1)\big) - \cF\big(\mu_\gamma(0)\big) &= \int_0^1 \frac{\mathrm{d}}{\mathrm{d}t}\cF\big(\mu_\gamma(t)\big)\ \mathrm{d}t \\
            &= \frac{\mathrm{d}}{\mathrm{d}t}\cF\big(\mu_\gamma(t)\big)\big|_{t=0} + \int_0^1 \left(\frac{\mathrm{d}}{\mathrm{d}t}\cF\big(\mu_\gamma(t)\big) - \frac{\mathrm{d}}{\mathrm{d}t}\cF\big(\mu_\gamma(t)\big)\big|_{t=0}\right)\ \mathrm{d}t \\
            &= \frac{\mathrm{d}}{\mathrm{d}t}\cF\big(\mu_\gamma(t)\big)\big|_{t=0} + \int_0^1\int_0^t \frac{\mathrm{d}^2}{\mathrm{d}s^2}\cF\big(\mu_\gamma(s)\big)\ \mathrm{d}s\mathrm{d}t \\
            &= \frac{\mathrm{d}}{\mathrm{d}t}\cF\big(\mu_\gamma(t)\big)\big|_{t=0} + \int_0^1 (1-s)\frac{\mathrm{d}^2}{\mathrm{d}s^2}\cF\big(\mu_\gamma(s)\big)\ \mathrm{d}s.
        \end{aligned}
    \end{equation}
    For the first term, we get by the chain rule (see \eqref{eq:chain_rule}) $\frac{\mathrm{d}}{\mathrm{d}t}\cF\big(\mu_\gamma(t)\big)\big|_{t=0} = \int \langle \gW\cF(\mu)(x), v\rangle_x\ \mathrm{d}\gamma(x,v)$.

    For the second term, using the mean value theorem (since $s\mapsto \frac{\dd^2}{\dd s^2}\cF\big(\mu_\gamma(s)\big)$ is continuous, and $1-s\ge 0$ for all $s\in [0,1]$), there exists $t\in ]0,1[$ such that 
    \begin{equation}
        \int_0^1 (1-s)\frac{\mathrm{d}^2}{\mathrm{d}s^2}\cF\big(\mu_\gamma(s)\big)\ \mathrm{d}s = \frac{\mathrm{d}^2}{\mathrm{d}t^2}\cF\big(\mu_\gamma(t)\big) \int_0^1 (1-s)\mathrm{d}s = \frac12 \frac{\mathrm{d}^2}{\mathrm{d}t^2}\cF\big(\mu_\gamma(t)\big).
    \end{equation}
    Since, by \Cref{lemma:gamma_t}, we have $\mu_{\gamma_t}(s) = \mu_\gamma(t+s(1-t))$ for every $s \in [0,1]$, we have by \Cref{def:wasserstein_hessian}
    \begin{align}
        \int \sca{\hess\cF_{\gamma_t}(x_t, v_t)}{v_t}_{x_t} \dd\gamma_t(x_t,v_t) &= \frac{\dd^2}{\dd s^2} \cF(\mu_{\gamma_t}(s))\big|_{s=0} \\
        &= \frac{\dd^2}{\dd s^2} \cF(\mu_{\gamma}(t+(1-t)s))\big|_{s=0} \\
        &= (1-t)^2 \frac{\dd^2}{\dd s^2} \cF(\mu_\gamma(s))\big|_{s=t}.
    \end{align}
    This finishes the proof.
\end{proof}

Let $\bP,\bQ\in \cPP{\cM}$. Let $\cF:\cP_2(\cM)\to\R$ a Wasserstein differentiable functional, $\bF(\bP)=\int \cF(\mu)\ \mathrm{d}\bP(\mu)$ and $\bGamma\in \exp_\bP^{-1}(\bQ)$. Let $\gamma$ in the support of $\bGamma$, then we know that ($\bGamma$-almost surely), $\gamma \in \exp^{-1}_\mu(\nu)$ where $\mu = \pi^\cM_\#\gamma$ and $\nu = \exp_\#\gamma$. In particular, by \Cref{lemma:taylor_remainder}, there exists some $t \in ]0,1[$ and $\gamma_t \in \exp^{-1}_{\mu_\gamma(t)}(\nu)$ such that 

\begin{equation}
    \cF(\nu) = \cF(\mu) + \int \sca{ \gW\cF(\mu)(x)}{v}_x\ \dd\gamma(x,v) + \frac{1}{2(1-t)^2} \int \sca{\mathrm{H}\cF_{\gamma_t}(x_t, v_t)}{v_t}_{x_t}\ \dd\gamma_t(x_t,v_t),
\end{equation}

so that, by the assumption on the Hessian of $\cF$,

\begin{align}
    \left|\cF(\nu) - \cF(\mu) - \int \sca{ \gW\cF(\mu)(x)}{v}_x\ \dd\gamma(x,v)\right| &\leq \frac{1}{2(1-t)^2} \int |\sca{\mathrm{H}\cF_{\gamma_t}(x_t, v_t)}{v_t}_{x_t}|\ \dd\gamma_t(x_t,v_t) \\
    &\leq \frac{1}{2(1-t)^2}L \int \|v_t\|^2_{x_t}\ \dd\gamma_t(x_t,x_t) \\
    &\leq \frac{1}{2(1-t)^2}L \W_2^2(\mu_\gamma(t),\nu) = \frac 12 L \W^2_2(\mu,\nu) \\
    &\leq \frac L2 \int \|v\|^2_x\ \dd\gamma(x,v).
\end{align}
From this, we deduce that
\begin{align}
    &\left|\bF(\bQ) - \bF(\bP) - \iint \sca{\gW\cF(\pi^\cM_\#\gamma)(x)}{v}_x \dd\gamma(x,v)\dd\bGamma(\gamma) \right| \\
    &= \left|\int \left( \cF(\exp_\#\gamma) - \cF(\pi^\cM_\#\gamma) - \int \sca{\gW\cF(\pi^\cM_\#\gamma)(x)}{v}_x \dd\gamma(x,v) \right) \dd\bGamma(\gamma) \right| \\
    &\leq \frac L2 \iint \|v\|^2_x \dd\gamma(x,v)\dd\bGamma(\gamma) = \frac L2 \Ww(\bP,\bQ)^2.
\end{align}

Thus, we can conclude that
\begin{equation}
    \bF(\bQ) = \bF(\bP) + \iint \langle \gW\cF(\pi^\cM_\#\gamma)(x), v\rangle_x\ \mathrm{d}\gamma(x,v)\mathrm{d}\bGamma(\gamma) + o\big(\Ww(\bP,\bQ)\big).
\end{equation}

Moreover, as we assumed $\cM$ compact, $\gW\cF(\mu)$ is bounded for any $\mu$ and thus $\int \|\gW\cF(\mu)\|_{L^2(\mu)}^2\ \mathrm{d}\bP(\mu) < +\infty$. Therefore, by \Cref{def:def_wwgrad}, $\gWw\bF(\bP) = \gW\cF \in L^2(\bP)$. \qedhere

\subsection{Proof of \Cref{prop:bW_diff}} \label{proof:prop_bW_diff}

Let $\bP,\bQ\in \cPP{\cM}$. Let $\cW:\cP_2(\cM)\times \cP_2(\cM)\to \R$ a Wasserstein differentiable functional, $\bW(\bP)=\iint \cW(\mu,\nu)\ \mathrm{d}\bP(\mu)\mathrm{d}\bP(\nu)$ and $\bGamma\in \exp_\bP^{-1}(\bQ)$. Let $\gamma$ and $\gamma'$ be in the support of $\bGamma$, then $\gamma\in\exp_\mu^{-1}(\nu)$ and $\gamma'\in\exp_{\mu'}^{-1}(\nu')$ where $\mu=\pi^\cM_\#\gamma$, $\mu'=\pi^\cM_\#\gamma'$ and $\nu=\exp_\#\gamma$, $\nu'=\exp_\#\gamma'$. For notation simplicity, we write $\nabla_1$ and $\nabla_2$ instead of $\nabla_{\W_2,1}$ and $\nabla_{\W_2,2}$. By the remainder Taylor theorem (\Cref{lemma:taylor_remainder}) applied on the product space $\cP_2(\cM)\times \cP_2(\cM)$, we get that there exists $t\in ]0,1[$ such that
\begin{equation}
    \cW(\nu,\nu')=\cW(\mu,\mu') + \int \langle \nabla_1\cW(\mu,\mu')(x),v\rangle_x\ \mathrm{d}\gamma(x,v) + \int \langle \nabla_2\cW(\mu,\mu')(x), v\rangle_x\ \mathrm{d}\gamma'((x,v) + \frac12 \frac{\dd^2}{\dd t^2}\cW\big(\mu_\gamma(t), \mu_{\gamma'}(t)\big).
\end{equation}
The last term is a Hessian term, which can be written as $\frac{\dd^2}{\dd t^2}\cW\big(\mu_\gamma(t), \mu_{\gamma'}(t)\big) = \frac{1}{(1-t)^2} \int \langle \mathrm{H}\cW_{\gamma_t,\gamma_t'}[(x_t, v_t), (x_t',v_t')], (v_t,v_t')\rangle_{(x_t,x_t')}\ \mathrm{d}\gamma_t(x_t, v_t)\mathrm{d}\gamma_t'(x_t',v_t')$, where we define $\mathrm{H}\cW_{\gamma,\gamma'}:T\cM\times T\cM \to T\cM\times T\cM$ the Hessian operator at $(\gamma,\gamma')$ similarly as in \Cref{def:wasserstein_hessian}. By the assumption on the Hessian, we thus have
\begin{equation} \label{eq:ineq_interaction}
    \begin{aligned}
        &\left|\cW(\nu,\nu')-\cW(\mu,\mu') - \int \langle \nabla_1\cW(\mu,\mu')(x),v\rangle_x\ \mathrm{d}\gamma(x,v) - \int \langle \nabla_2\cW(\mu,\mu')(x'), v'\rangle_{x'}\ \mathrm{d}\gamma'(x',v')\right| \\
        &\le \frac{1}{2(1-t)^2}  \int |\langle \mathrm{H}\cW_{\gamma_t,\gamma_t'}[(x_t,v_t),(x_t', v_t')], (v_t,v_t')\rangle_{(x_t,x_t')}|\ \mathrm{d}\gamma_t(x_t, v_t)\mathrm{d}\gamma_t'(x_t',v_t') \\
        &\le \frac{1}{2(1-t)^2}L \left(\int \|v_t\|_{x_t}^2\ \mathrm{d}\gamma_t(x_t,v_t) + \int \|v_t'\|_{x_t'}^2\ \mathrm{d}\gamma_t'(x_t',v_t')\right) \\
        &=\frac{L}{2(1-t)^2} \big(\W_2^2(\mu_\gamma(t), \nu) + \W_2^2(\mu_{\gamma}'(t), \nu)\big) \\
        &= \frac{L}{2} \left(\int \|v\|_x^2\ \mathrm{d}\gamma(x,v) + \int \|v'\|_{x'}^2\ \mathrm{d}\gamma'(x',v')\right).
    \end{aligned}
\end{equation}
Then, let us bound
\begin{equation}
    \begin{aligned}
        &\left|\bW(\bP)-\bW(\bQ)-\iint \left\langle \int \big(\nabla_1 \cW\big(\phi^\cM(\gamma), \eta\big)(x) + \nabla_2\cW\big(\eta, \phi^\cM(\gamma)\big)\big)\mathrm{d}\bP(\eta) ,v \right\rangle_x\ \mathrm{d}\gamma(x,v)\mathrm{d}\bGamma(\gamma) \right| \\
        &\le \left|\iint \big( \cW\big(\phi^{\exp}(\gamma), \phi^{\exp}(\gamma')\big) - \cW\big(\phi^\cM(\gamma), \phi^\cM(\gamma')\big)\big)\ \mathrm{d}\bGamma(\gamma)\mathrm{d}\bGamma(\gamma') \right. \\&\quad \left. - \iiint \langle \nabla_1 \cW\big(\phi^\cM(\gamma), \eta\big)(x), v\rangle_x \mathrm{d}\bP(\eta)\mathrm{d}\gamma(x,v)\mathrm{d}\bGamma(\gamma) \right. \\&\quad \left. - \iiint \langle \nabla_2\cW\big(\eta, \phi^\cM(\gamma')\big)(x'), v'\rangle_{x'}\ \mathrm{d}\bP(\eta)\mathrm{d}\gamma'(x',v')\mathrm{d}\bGamma(\gamma') \right| \\
        &= \left|\iint \big( \cW\big(\phi^{\exp}(\gamma), \phi^{\exp}(\gamma')\big) - \cW\big(\phi^\cM(\gamma), \phi^\cM(\gamma')\big)\big)\ \mathrm{d}\bGamma(\gamma)\mathrm{d}\bGamma(\gamma') \right. \\&\quad \left. - \iiiint \langle \nabla_1 \cW\big(\phi^\cM(\gamma), \phi^\cM(\gamma')\big)(x), v\rangle_x + \langle \nabla_2\cW\big(\phi^\cM(\gamma),\phi^\cM(\gamma')\big)(x'), v'\rangle_{x'}\ \mathrm{d}\gamma(x,v)\mathrm{d}\gamma'(x',v')\mathrm{d}\bGamma(\gamma)\mathrm{d}\bGamma(\gamma') \right| \\
        &\le \iint \Big| \cW\big(\phi^{\exp}(\gamma), \phi^{\exp}(\gamma')\big) - \cW\big(\phi^\cM(\gamma), \phi^\cM(\gamma')\big)  \\ &\quad \left.  - \int \langle \nabla_1\cW\big(\phi^\cM(\gamma), \phi^\cM(\gamma')\big)(x), v\rangle_x\ \mathrm{d}\gamma(x,v) - \int \langle \nabla_2\cW\big(\phi^\cM(\gamma),\phi^\cM(\gamma')\big)(x'), v'\rangle_{x'}\ \mathrm{d}\gamma(x',v') \right|\ \mathrm{d}\bGamma(\gamma)\mathrm{d}\bGamma(\gamma') \\
        &\le \frac{L}{2} \left(\iint \|v\|_x^2\ \mathrm{d}\gamma(x,v)\mathrm{d}\bGamma(\gamma) + \iint \|v'\|_x^2\ \mathrm{d}\gamma'(x',v')\mathrm{d}\bGamma(\gamma') \right) \quad \text{by \eqref{eq:ineq_interaction}} \\
        &= L \W_{\W_2}^2(\bP,\bQ) \quad \text{since $\bGamma\in\exp_{\bP}^{-1}(\bQ)$.}
    \end{aligned}
\end{equation}

This allows to conclude by \Cref{def:def_wwgrad} that
\begin{equation}
    \gWw\cW(\bP)(\mu) = \int \big(\nabla_1 \cW(\mu, \nu) + \nabla_2\cW(\nu, \mu)\big)\ \mathrm{d}\bP(\nu).  \qedhere
\end{equation}

\subsection{Proof of \Cref{prop:wow_backprop_grad}} \label{proof:prop_wow_backprop_grad}

We note $\bP := \bP_{\mbf{x}}$ and $\mu^c = \mu_{\mbf{x}^c}$ for every $c$. Let $\mbf{h} \in (\R^d)^{C \times n}$, for every $t \in \R$, we define $\bP_t := \bP_{\mbf{x}+t\mbf{h}}$, and for every $c$, $\mu^{c}_{t} := \mu_{\mbf{x}^c + t\mbf{h}^c}$, so that $\bP_t = \frac 1C \sum_{c=1}^C \delta_{\mu^{c}_{t}}$. We also consider the transport plan $\gamma^c_t = \frac 1n \sum_{i=1}^n \delta_{(x_{i}^c,th_{i}^c)}$ (which satisfies $\pi^{\R^d}_\#\gamma^c_t = \mu^c$ and $\exp_\#\gamma_t = \mu^{c}_{t}$), and the plan $\bGamma_t = \frac 1C \sum_{c=1}^C \delta_{\gamma^c_t}$ (which satisfies $\phi^{\R^d}_\#\bGamma = \bP$ and $\phi^{\exp}_\#\bGamma = \bP_t$). 

It is not difficult to see that for $t$ small enough, for every $c$, $\gamma^c_t$ is actually optimal between $\mu^c$ and $\mu^{c}_{t}$ (that is, $\gamma^c_t \in \exp^{-1}_{\mu^c}(\mu^{c}_{t})$), and therefore
\begin{equation}
    \W_2^2(\mu^c,\mu^{c}_{t}) = \int \|v\|^2 \dd\gamma^c_t(x,v) = \frac{t^2}{n} \sum_{i=1}^n \|h_{i}^c\|^2.
\end{equation}

Moreover, it is also the case that for $t$ small enough, $\bGamma_t \in \exp^{-1}_{\bP}(\bP_t)$. Indeed, since for every $c,c'$, $\W^2_2(\mu^c,\mu^{c'}_t) \xrightarrow[t \to 0]{} \W_2^2(\mu^c,\mu^{c'})$ which is zero if and only if $c = c'$, it ensues that for $t$ small enough, for every $c$, 
\begin{equation}
    \W^2_2(\mu^c, \mu^{c}_{t}) = \min_{c'}\  \W_2^2(\mu^c, \mu^{c'}_t).
\end{equation}
Thus, for any $\Gamma \in \Pi(\bP,\bP_t)$, represented by the matrix $(\Gamma_{c,c'})_{c,c'=1,\ldots,C}$, we have
\begin{align}
    \int \W_2^2(\mu,\nu) \dd\Gamma(\mu,\nu) &= \sum_{c,c'=1}^C \W^2_2(\mu^c,\mu^{c'}_t) \Gamma_{c,c'} \\
    &\geq \sum_{c,c'=1}^C \W_2^2(\mu^c,\mu^{c}_{t}) \Gamma_{c,c'} \\
    &= \frac 1C \sum_{c=1}^C \W_2^2(\mu^c,\mu^{c}_{t}) \\
    &= \frac{t^2}{Cn} \sum_{c=1}^C \sum_{i=1}^n \|h_{i}^c\|^2 = \iint \|v\|^2 \dd\gamma(x,v) \dd\bGamma_t(\gamma),
\end{align}
so that, by taking the minimum over $\Gamma$, we find $\iint \|v\|^2 \dd\gamma(x,v) \dd\bGamma_t(\gamma) \leq \Ww(\bP,\bP_t)^2$. Since the reverse inequality always hold, we find that
\begin{equation}
    \Ww(\bP,\bP_t)^2 = \iint \|v\|^2 \dd\gamma(x,v) \dd\bGamma_t(\gamma) = \frac{t^2}{Cn} \sum_{c=1}^C \sum_{i=1}^n \|h_{i}^c\|^2,
\end{equation}
and we conclude that $\bGamma_t$ is optimal, with $\Ww(\bP,\bP_t) = O(t)$. Plugging $\bGamma_t$ into the definition of the WoW gradient, we find that
\begin{equation}
    \bF(\bP_t) = \bF(\bP) + \iint \sca{\gWw\bF(\bP)(\mu)(x)}{x} \dd\gamma(x,v) \dd\bGamma_t(x,v) + o(t),
\end{equation}
that is (since $\mbf{x}+t\mbf{h} \in X$ for $t$ small enough),
\begin{equation}
    F(\mbf{x}+t\mbf{h}) = F(\mbf{x}) + \frac{1}{nC} \sum_{c=1}^C \sum_{i=1}^n \sca{\gWw\bF(\bP)(\mu^c)(x_{i}^c)}{h_{i}^c} + o(t).
\end{equation}
From the definition of the gradient, we deduce that for every $c,i$,
\begin{equation}
    \gWw\bF(\bP)(\mu^c)(x_{i}^c) = Cn \nabla_{c,i} F(\mbf{x}).
\end{equation}
This finishes the proof. \qed

\subsection{Proof of \Cref{prop:potential_cvx}} \label{proof:prop_potential_cvx}

Let $\bP_t = \big(\big((1-t)\T_{\pi^1}^{\pi^2} + t\T_{\pi^1}^{\pi^3}\big)_\#\pi^1\big)_\#\Gamma$ where $\Gamma\in\Pi(\bP,\bQ,\bO)$, $\pi^{1,2}_\#\Gamma\in\Pi_o(\bP,\bQ)$ a,d $\pi^{1,3}_\#\Gamma\in\Pi_o(\bP,\bO)$. Then, we have
\begin{equation}
    \begin{aligned}
        \bV(\bP_t) &= \int \cF\big(\big((1-t)\T_\eta^\mu +t \T_\eta^\nu\big)_\#\eta\big)\ \mathrm{d}\Gamma(\eta,\mu,\nu) \\
        &= \int \cF(\mu_t)\ \mathrm{d}\Gamma(\eta,\mu,\nu) \quad \text{for }\mu_t=\exp_\eta\big((1-t)\T_\eta^\mu + t\T_\eta^\nu\big)=\big((1-t)\T_\eta^\mu + t\T_\eta^\nu\big)_\#\eta \\
        &\le (1-t) \int \cF(\mu)\ \mathrm{d}\bQ(\mu) + t \int \cF(\nu)\ \mathrm{d}\bO(\nu) - \frac{\lambda t (1-t)}{2} \int \W_2^2(\mu, \nu)\ \mathrm{d}\Gamma(\eta, \mu,\nu) \\
        &\le (1-t) \bV(\bQ) + t \bV(\bO) - \frac{\lambda t (1-t)}{2} \Ww(\bQ,\bO)^2,
    \end{aligned}
\end{equation}
where we used in the last two lines that $\cF$ is $\lambda$-convex along $t\mapsto \mu_t$, and $\Ww(\bQ,\bO)^2 \le \int \W_2^2(\mu,\nu)\ \mathrm{d}\Gamma(\eta,\mu,\nu)$. \qed

\subsection{Proof of \Cref{prop:interaction_cvx}} \label{proof:prop_interaction_cvx}

Let $\bP_t = \big(\big((1-t)\T_{\pi^1}^{\pi^2} + t\T_{\pi^1}^{\pi^3}\big)_\#\pi^1\big)_\#\Gamma$ where $\Gamma\in\Pi(\bP,\bQ,\bO)$, $\pi^{1,2}_\#\Gamma\in\Pi_o(\bP,\bQ)$ a,d $\pi^{1,3}_\#\Gamma\in\Pi_o(\bP,\bO)$. Then, we have
\begin{equation}
    \begin{aligned}
        \bW(\bP_t) &= \frac12 \iint \cW\big(\big((1-t)\T_\eta^\mu + t\T_\eta^\nu\big)_\#\eta, \big((1-t)\T_{\eta'}^{\mu'} + t \T_{\eta'}^{\nu'}\big)_\#\eta'\big)\ \dd\Gamma(\eta,\mu,\nu)\dd\Gamma(\eta',\mu',\nu') \\
        &\le (1-t) \frac12 \iint \cW(\mu,\mu') \dd\bQ(\mu)\dd\bQ(\mu') + t \frac12 \iint \cW(\nu,\nu')\ \dd\bO(\nu)\dd\bO(\nu') \\
        &= (1-t)\bW(\bQ) + t \bW(\bO).
    \end{aligned} 
\end{equation}
\qed

\subsection{Proof of \Cref{prop:w_1cvx}} \label{proof:prop_w_1cvx}

Let $\bP,\bQ,\bO\in \cPPa{\R^d}$. Define the generalized geodesic $t\mapsto \bP_t = \big(\big((1-t)\T_{\pi^1}^{\pi^2} + t \T_{\pi^1}^{\pi^3}\big)_\#\pi^1\big)_\#\Gamma$ where $\Gamma\in\Pi(\bP,\bQ,\bO)$, $\pi^{1,2}_\#\Gamma\in\Pi_o(\bP,\bQ)$ and $\pi^{1,3}_\#\Gamma\in\Pi_o(\bP,\bO)$. Let us show that $\bF:\bQ\mapsto \frac12 \Ww(\bQ,\bP)^2$ is convex along this curve.

To do this, first note that $\Tilde{\Gamma} = \big(\pi^1, ((1-t)\T_{\pi^1}^{\pi^2} + t\T_{\pi^1}^{\pi^3})_\#\pi^1\big)_\#\Gamma\in\Pi(\bP,\bP_t)$. Then, we have
\begin{equation}
    \begin{aligned}
        \bF(\bP_t) &= \frac12\Ww(\bP_t,\bP)^2 \\
        &\le \frac12 \int \W_2^2\big(\mu, ((1-t)\T_\mu^\nu + t\T_\mu^\eta)_\#\mu\big)\ \mathrm{d}\Gamma(\mu,\nu, \eta).
    \end{aligned}
\end{equation}
Note that $\T = (1-t)\T_{\pi^1}^{\pi^2} + t\T_{\pi^1}^{\pi^3}$ is an OT map by Brenier's theorem since it is the gradient of a convex function (as a nonnegative weighted sum of convex functions). Thus, for $\Gamma$-almost every $(\mu,\nu,\eta)$, $\W_2^2\big(\mu, ((1-t)\T_\mu^\nu + t\T_\mu^\eta)_\#\mu\big) = \| (1-t) \T_\mu^\nu + t \T_\mu^\eta - \id \|_{L^2(\mu)}^2$. Then, applying the parallelogram identity on the Hilbert space $L^2(\mu)$, we get
\begin{equation} \label{eq:ineq_gg}
    \begin{aligned}
        \bF(\bP_t) &\le \frac12 \int \|(1-t)\T_\mu^\nu + t\T_\mu^\eta - \id\|_{L^ 2(\mu)}^2\ \mathrm{d}\Gamma(\mu,\nu,\eta) \\
        &= \frac12 \int \|(1-t) (\T_\mu^\nu - \id) + t (\T_\mu^\eta - \id)\|_{L^2(\mu)}^2\ \mathrm{d}\Gamma(\mu,\nu,\eta) \\
        &=  \frac{(1-t)}{2}\int \|\T_\mu^\nu-\id\|_{L^2(\mu)}^2\ \mathrm{d}\Gamma(\mu,\nu,\eta) +\frac{t}{2} \int \|\T_\mu^\eta-\id\|_{L^2(\mu)}^2\ \mathrm{d}\Gamma(\mu,\nu,\eta) \\ &\quad- \frac{t(1-t)}{2} \int \|\T_\mu^\nu - \T_\mu^\eta\|_{L^2(\mu)}^2\ \mathrm{d}\Gamma(\mu,\nu,\eta) \\
        &= \frac{(1-t)}{2} \Ww(\bP,\bQ)^2 + \frac{t}{2} \Ww(\bP,\bO)^2 - \frac{t (1-t)}{2} \int \|\T_\mu^\nu - \T_\mu^\eta\|_{L^2(\mu)}^2\ \mathrm{d}\Gamma(\mu,\nu,\eta) \\
        &= (1-t)\bF(\bQ) + t\bF(\bO) - \frac{t(1-t)}{2} \int \|\T_\mu^\nu - \T_\mu^\eta\|_{L^2(\mu)}^2\ \mathrm{d}\Gamma(\mu,\nu,\eta).
    \end{aligned}
\end{equation}
Finally, since $(\T_\mu^\nu, \T_\mu^\eta)_\#\mu\in \Pi(\nu,\eta)$, we also have $\W_2^2(\nu,\eta)\le \|\T_\mu^\eta - \T_\mu^\nu\|_{L^2(\mu)}^2$. Thus, we have
\begin{equation}
    \int \|\T_\mu^\nu - \T_\mu^\eta\|_{L^2(\mu)}^2\ \mathrm{d}\Gamma(\mu,\nu,\eta) \ge \int \W_2^2(\nu,\eta)\ \mathrm{d}\Gamma(\mu,\nu,\eta) \ge \Ww(\bQ,\bO)^2,
\end{equation}
where we applied that $\pi^{2,3}_\#\Gamma\in\Pi(\bQ,\bO)$ for the last inequality. Plugging this result in \eqref{eq:ineq_gg}, we get
\begin{equation}
    \bF(\bP_t) \le (1-t)\bF(\bQ) + t \bF(\bO) - \frac{t(1-t)}{2} \Ww(\bQ,\bO)^2.
\end{equation}

\qed

\section{Additional Details and Experiments} \label{appendix:add_xps}

\subsection{Minimization of the MMD}

We want to minimize $\bF(\bP)=\frac12 \mmd^2(\bP,\bQ)$ for $\bP,\bQ\in\cPP{\R^d}$ and a kernel $K:\cP_2(\R^d)\times \cP_2(\R^d)\to \R$. Recall that $\bF(\bP)=\bV(\bP) + \bW(\bP) + \mathrm{cst}$ with $\bV(\bP) = \int \cV(\mu)\ \mathrm{d}\bP(\mu)$ and $\cV(\mu)=-\int K(\mu,\nu)\ \mathrm{d}\bQ(\nu)$, and $\bW(\bP)=\frac12 \iint K(\mu,\nu)\ \mathrm{d}\bP(\mu)\mathrm{d}\bP(\nu)$. If $K_\nu(\mu) = K(\mu,\nu)$ is a differentiable functional, then the gradient of $\bF$ is given for all $\bP\in\cPP{\R^d}$, $\mu\in\cP_2(\R^d)$ by
\begin{equation}
    \begin{aligned}
        \gWw\bF(\bP)(\mu) &= \gWw\bV(\bP)(\mu) + \gWw\bW(\bP)(\mu) \\
        &= \gW\cV(\mu) + (\gW\cW * \bP)(\mu) \\
        &= -\int \gW K_\nu(\mu)\ \mathrm{d}\bQ(\nu) + \int \gW K_\nu(\mu)\ \mathrm{d}\bP(\nu).
    \end{aligned}
\end{equation}

We can choose different kernels, giving different discrepancies. We compare here different kernels based on the Sliced-Wasserstein distance \citep{rabin2012wasserstein}. Let $p\ge 1$. We recall that the Sliced-Wasserstein distance is defined between $\mu,\nu\in\cP_2(\R^d)$ as 
\begin{equation}
    \sw_p^p(\mu,\nu) = \int_{S^{d-1}} \W_2^p(P^\theta_\#\mu, P^\theta_\#\nu)\ \mathrm{d}\sigma(\theta),
\end{equation}
where $S^{d-1}=\{\theta\in \R^d,\ \|\theta\|_2=1\}$ is the sphere, $P^\theta(x)=\langle \theta,x\rangle$ and $\sigma$ is the uniform measure on the sphere. The Sliced-Wasserstein distance allows defining a Gaussian positive definite kernel $K(\mu,\nu) = e^{-\frac12\sw_2^2(\mu,\nu)/h}$ \citep{kolouri2016sliced, carriere2017sliced} and a Laplace positive definite kernel $K(\mu,\nu) = e^{- \sw_1(\mu,\nu)/h}$ \citep{meunier2022distribution}. We also propose in practice to use the Riesz SW kernel $K(\mu,\nu)=-\sw_2(\mu,\nu)^r$ for $r\in (0,2)$ and inverse multiquadric kernel (IMQ) $K(\mu,\nu) = \frac{1}{\sqrt{c + \sw_2^2(\mu,\nu)}}$. Note however that the Riesz SW kernel is not positive definite (but conditionally positive definite), and that showing that the IMQ kernel is positive definite is an open question.

The Wasserstein gradient of the Sliced-Wasserstein distance $\cF(\mu)=\frac12\sw_2^2(\mu,\nu)$ can be computed as \citep[Proposition 5.1.7]{bonnotte2013unidimensional}
\begin{equation}
    \gW\cF(\mu) = \int_{S^{d-1}} \psi_\theta'(\langle x,\theta\rangle)\theta\ \mathrm{d}\sigma(\theta),
\end{equation}
with $\psi_\theta$ the Kantorovich potential between $P^\theta_\#\mu$ and $P^\theta_\#\nu$, and thus $\psi_\theta'(u) = u - F^{-1}_{P^\theta_\#\nu}\big(F_{P^\theta_\#\mu}(u)\big)$ for all $u\in \R$. For the Gaussian kernel, by the chain rule, we have $\gW K_\nu(\mu) = - \frac{1}{h}e^{-\frac12 \sw_2^2(\mu,\nu)/h} \gW\cF(\mu)$.

In practice, the integral \emph{w.r.t.} $\sigma$ is approximated using a Monte-Carlo approximation, \emph{i.e.}, we draw $\theta_1,\dots,\theta_L \sim \sigma$ $L$ independent directions, and approximate the Sliced-Wasserstein distance and its gradient as 
\begin{equation}
    \widehat{\sw}_2^2(\mu,\nu) = \frac{1}{L}\sum_{\ell=1}^L \W_2^2(P^{\theta_\ell}_\#\mu, P^{\theta_\ell}_\#\nu), \quad \widehat{\gW\cF}(\mu) = \frac{1}{L} \sum_{\ell=1}^L \psi_{\theta_\ell}'(\langle x, \theta_\ell\rangle) \theta_\ell.
\end{equation}
The Wasserstein gradient can also be computed using backpropagation as shown in \Cref{prop:backprop_grad}.

\subsection{Ablation of Hyperparameters on Rings} \label{appendix:rings}

\begin{figure}[htbp]
    \centering
    \includegraphics[width=\linewidth]{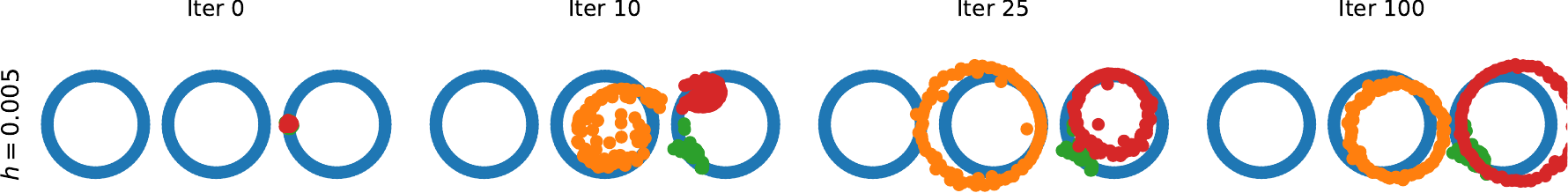}
    \includegraphics[width=\linewidth]{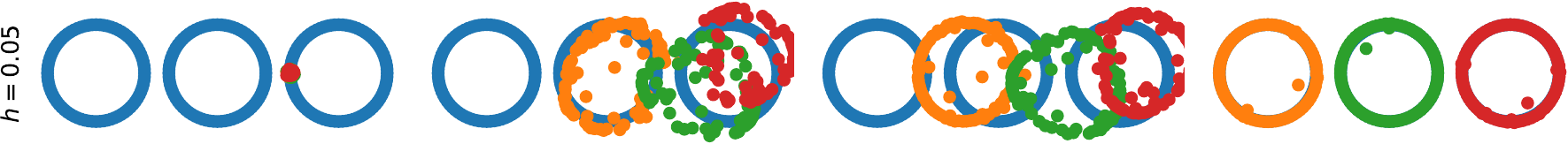}
    \includegraphics[width=\linewidth]{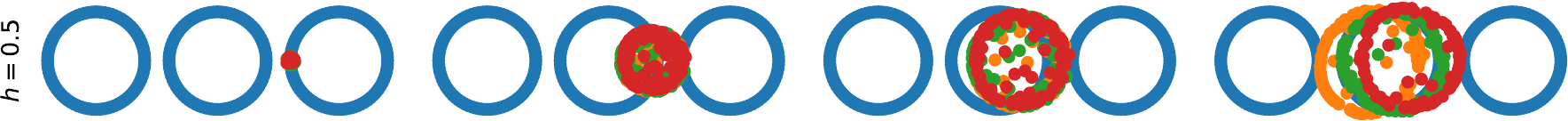}
    \caption{Gradient flow of MMD with SW Gaussian kernel $K(\mu,\nu) = e^{- \sw_2^2(\mu,\nu)/(2h)}$.}
    \label{fig:mmdsw_gaussian_gf}
\end{figure}

\begin{figure}[htbp]
    \centering
    \includegraphics[width=\linewidth]{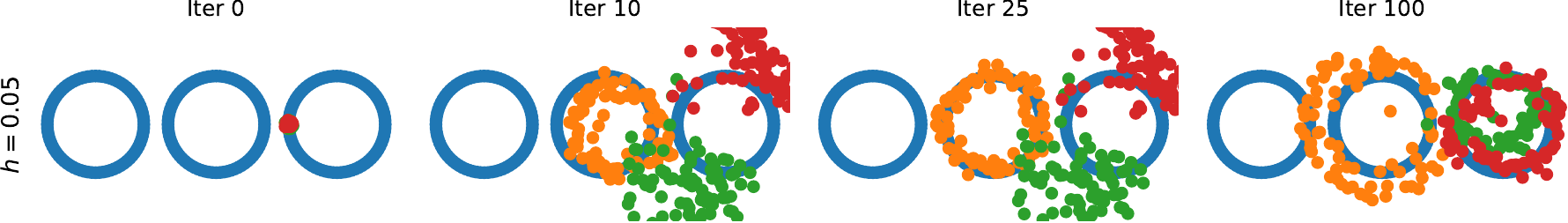}
    \includegraphics[width=\linewidth]{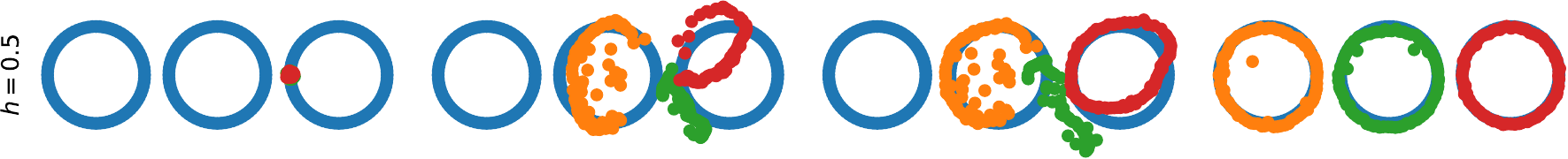}
    \includegraphics[width=\linewidth]{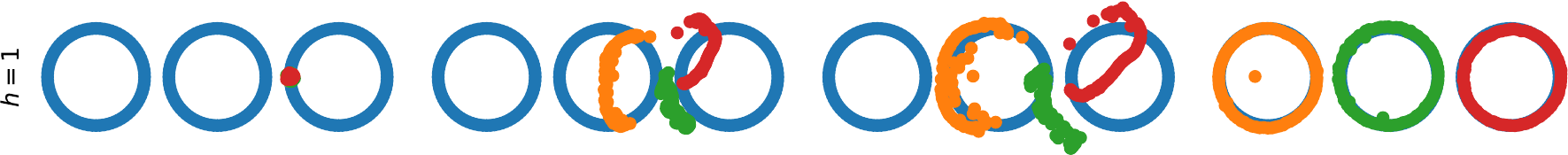}
    \caption{Gradient flow of MMD with SW Laplace kernel $K(\mu,\nu) = e^{-\sw_1(\mu,\nu)/h}$.}
    \label{fig:mmdsw_laplace_gf}
\end{figure}

\begin{figure}[htbp]
    \centering
    \includegraphics[width=\linewidth]{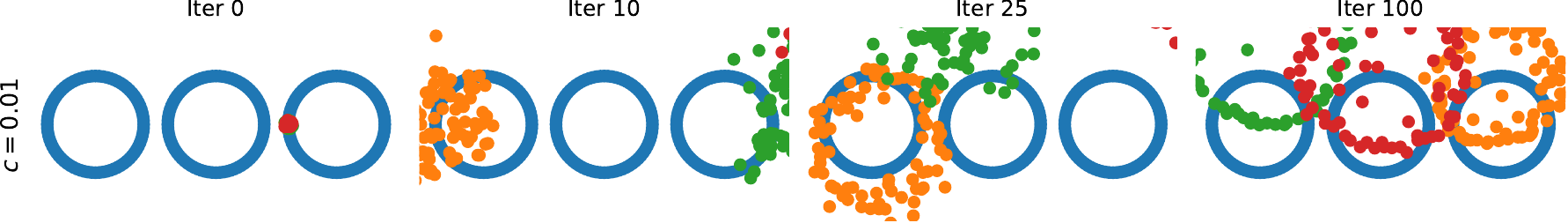}
    \includegraphics[width=\linewidth]{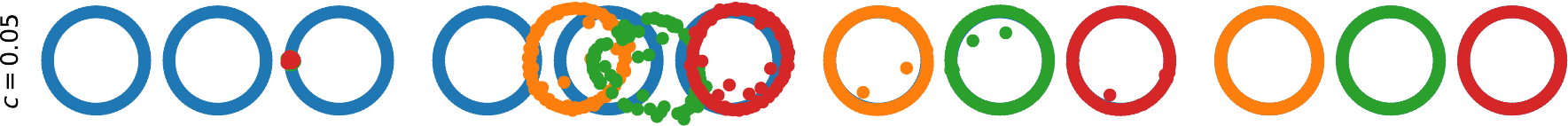}
    \includegraphics[width=\linewidth]{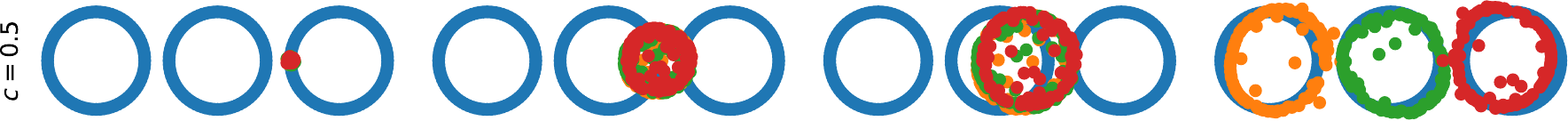}
    \caption{Gradient flow of MMD with SW IMQ kernel $K(\mu,\nu)=(c + \sw_2^2(\mu,\nu))^{-\frac12}$.}
    \label{fig:mmdsw_imq_gf}
\end{figure}

\begin{figure}[htbp]
    \centering
    \includegraphics[width=\linewidth]{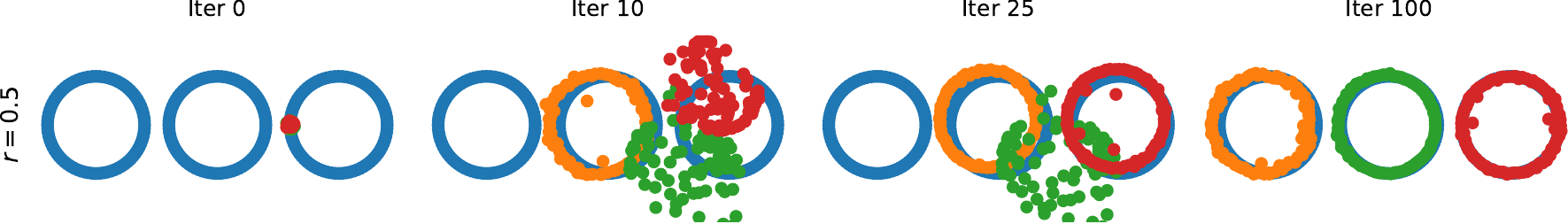}
    \includegraphics[width=\linewidth]{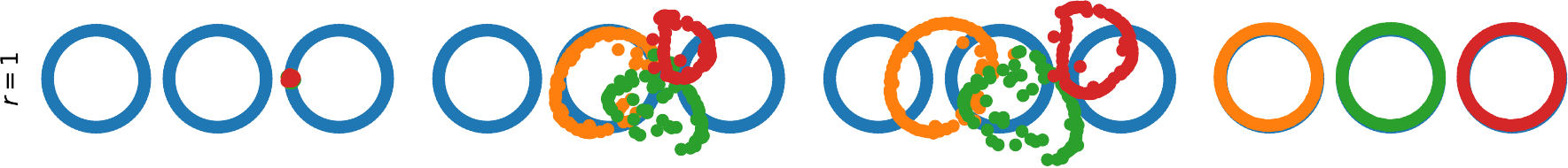}
    \includegraphics[width=\linewidth]{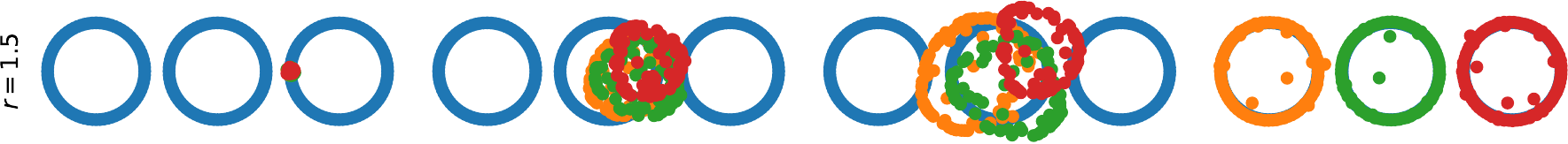}
    \caption{Gradient flow of MMD with SW Riesz kernel $K(\mu,\nu)=-\sw_2(\mu,\nu)^r$.}
    \label{fig:mmdsw_riesz_gf}
\end{figure}

\begin{figure}[htbp]
    \centering
    \includegraphics[width=\linewidth]{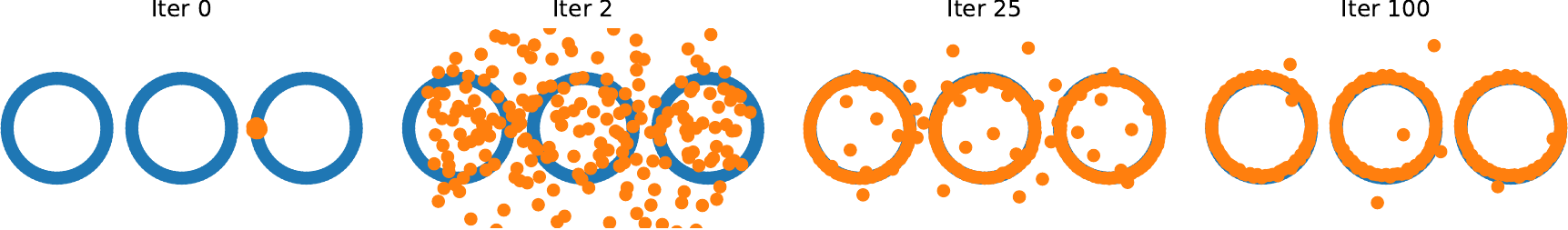}
    \caption{Gradient flow of MMD with Riesz kernel $k(x,y)=-\|x-y\|_2$.}
    \label{fig:mmd_riesz}
\end{figure}

\begin{figure}[thpb]
    \centering
    \includegraphics[width=\linewidth]{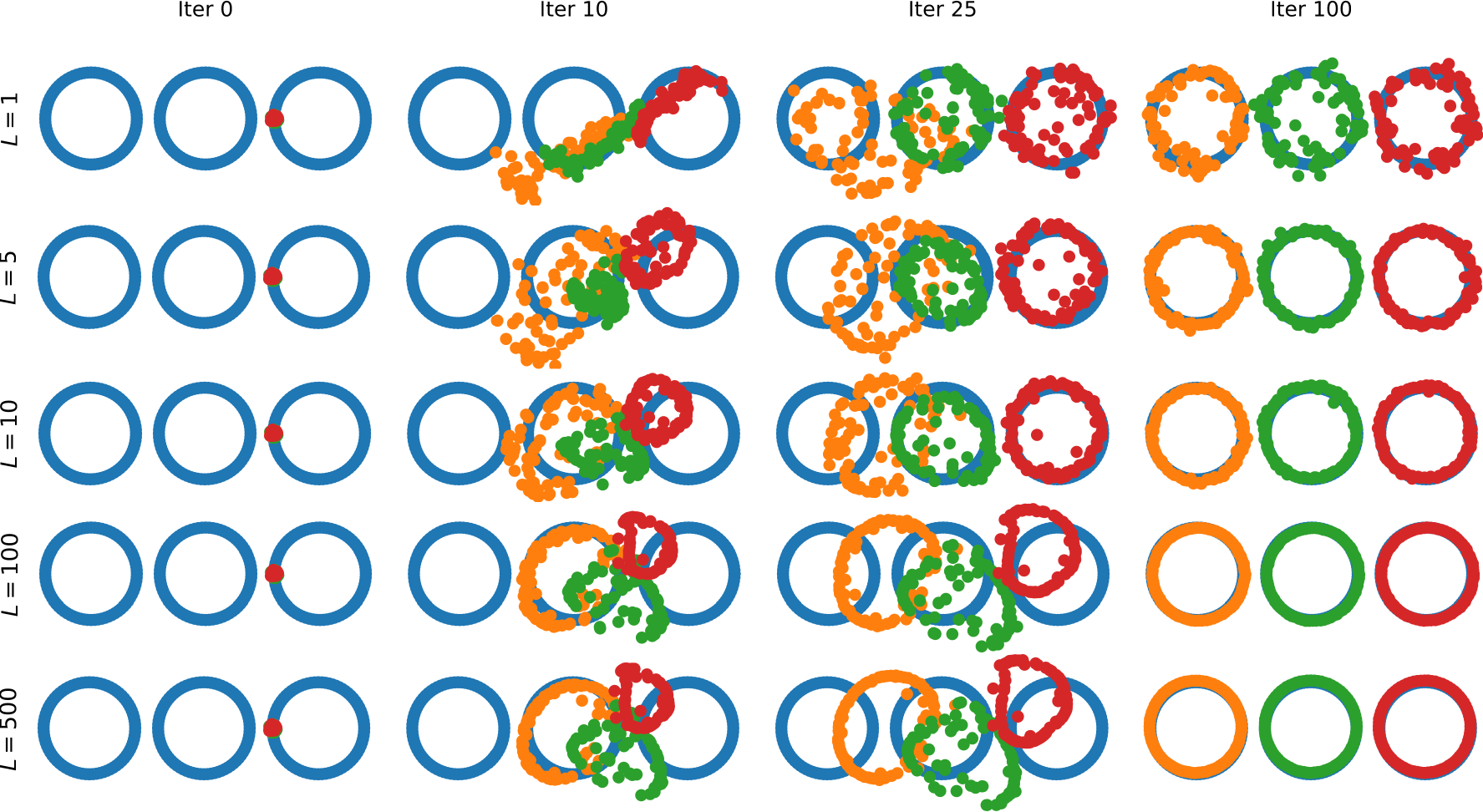}
    \caption{Ablation of the number of projections $L$ for the approximation of the Sliced-Wasserstein distance (with the SW Riesz kernel and $r=1$).}
    \label{fig:ablation_L_rings}
\end{figure}

Additionally to \Cref{fig:xp_ring}, we compare in the following figures trajectories of the minimization of the MMD with various kernels and with different hyperparameters. To recall the setting here, the target is a mixture of rings \citep{glaser2021kale}, and each ring is seen through an empirical distribution $\hat{\nu}^{c,n} = \frac{1}{n}\sum_{i=1}^n \delta_{y_i^c}$. Thus, the target is a mixture of three Dirac: $\bQ=\frac13 \delta_{\hat{\nu}^1} + \frac13 \delta_{\hat{\nu}^2} + \frac13\delta_{\hat{\nu}^3}$. Each distribution $\hat{\nu}^c$ contains $n=80$ samples. We learn a distribution $\bP = \frac13 \delta_{\mu^1} + \frac13 \delta_{\mu^2} + \frac13 \delta_{\mu^3}$, modeling each $\mu^c$ as $\mu^c=\frac{1}{n}\sum_{i=1}^n \delta_{x_i^c}$. In practice, the distributions $\bQ$ and $\bP$ are seen as tensors of size $(3,80,2)$. 

To compute the gradients of the MMD, we use $L=500$ projections, and $\tau=0.1$ as learning rate. We plot on \Cref{fig:mmdsw_gaussian_gf} results with the Gaussian SW kernel $K(\mu,\nu) = e^{- \sw_2^2(\mu,\nu)/(2h)}$, on \Cref{fig:mmdsw_laplace_gf} results with the Laplace SW kernel $K(\mu,\nu) = e^{-\sw_1(\mu,\nu)/h}$, on \Cref{fig:mmdsw_imq_gf} results with the IMQ kernel $K(\mu,\nu)=(c + \sw_2^2(\mu,\nu))^{-\frac12}$ and on \Cref{fig:mmdsw_riesz_gf} results with the Riesz SW kernel $K(\mu,\nu)=-\sw_2(\mu,\nu)^r$. We also add on \Cref{fig:mmd_riesz} a comparison with the flow of the MMD with Riesz kernel $k(x,y)=-\|x-y\|_2$ as in \citep{hertrich2024generative}, where the structure of the rings is not taken into account.

On \Cref{fig:ablation_L_rings}, we report an ablation of the the trajectories with different number of projections for the SW Riesz kernel. More precisely, we show the results for $L\in \{1,5,10,100,500\}$. This demonstrates that for low dimensional problems such as 2d rings, $L=100$ projections already provides good results. However, the scheme is more sensitive to the number of projections in higher dimension as we show on \Cref{fig:ablation_L_mnist}.

\begin{figure}[t]
    \centering
    \centering
    \hspace*{\fill}
    \subfloat{\includegraphics[width=0.4\linewidth]{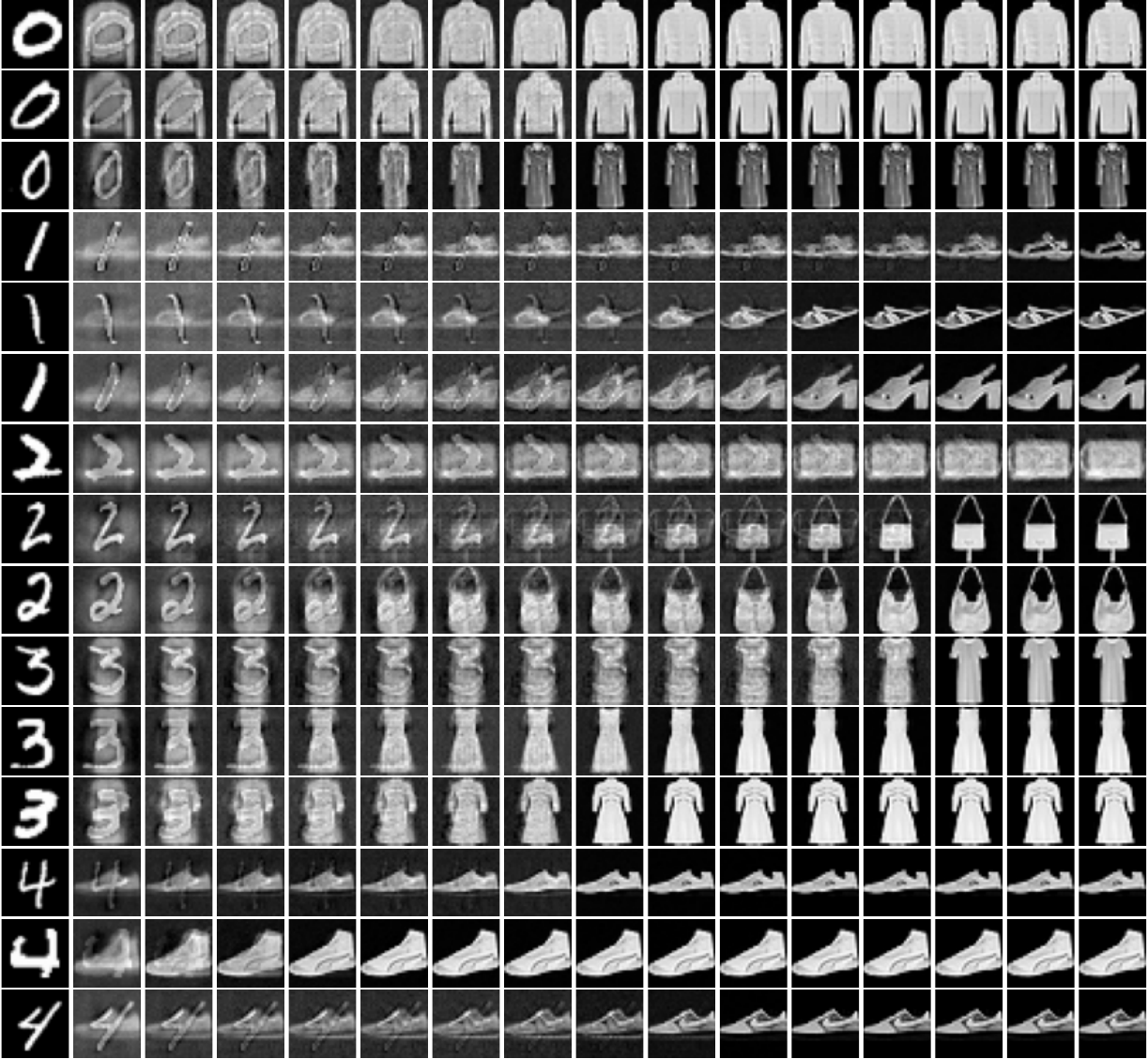}} \hfill
    \subfloat{\includegraphics[width=0.4\linewidth]{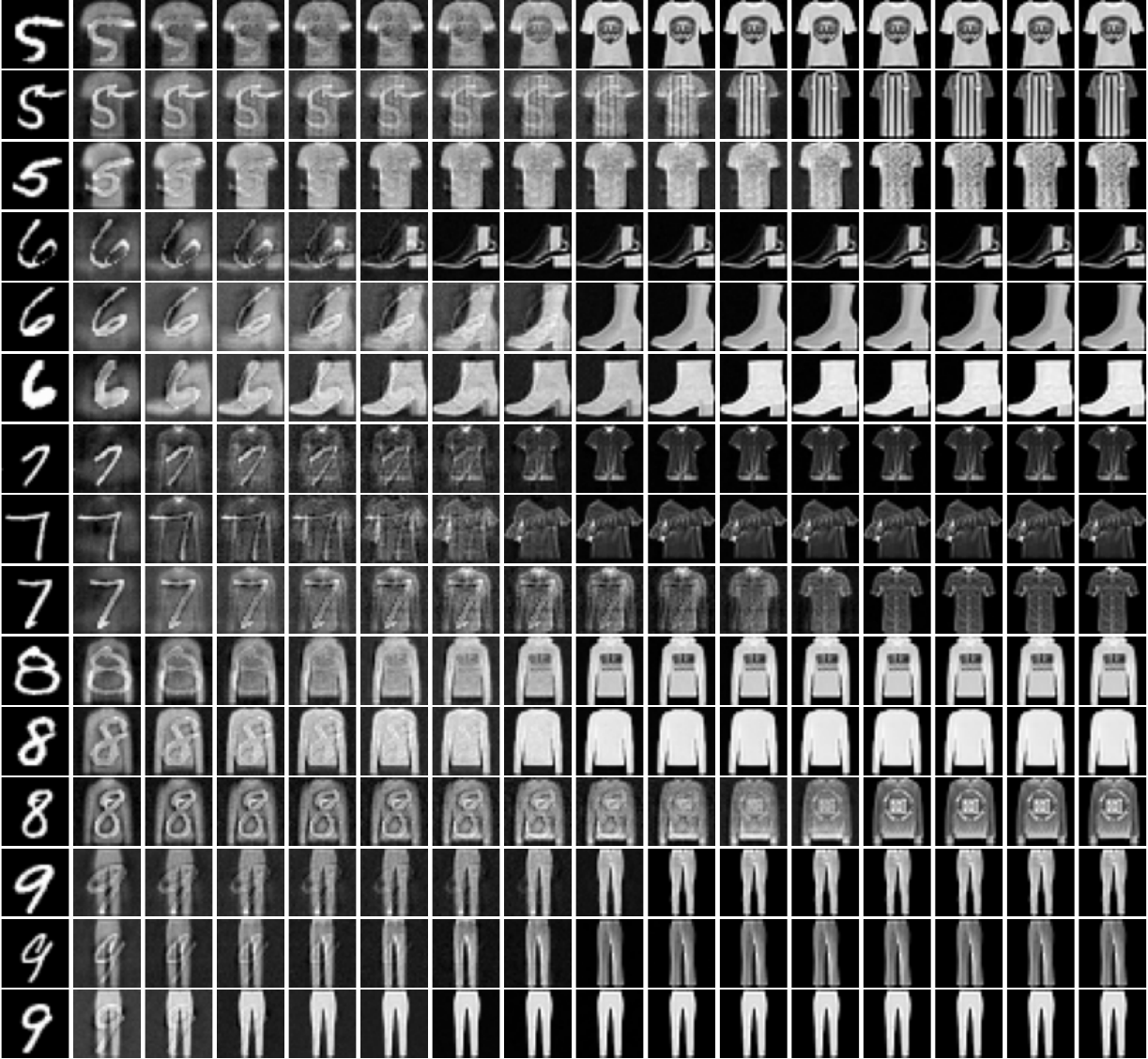}}
    \hspace*{\fill}    \caption{Images along the trajectory of the flow from MNIST to Fashion MNIST. We see that images belonging to the same class in the source dataset are flowed towards images from the same class in the target dataset.}
    \label{fig:flows_same_class}
\end{figure}

\subsection{Domain Adaptation} \label{appendix:xp_da}

We first add on \Cref{fig:flows_same_class} more samples of the flows of the MMD with $K(\mu,\nu)=-\sw_2(\mu,\nu)$ between MNIST and FashionMNIST. In this experiment, we recall that the flow starts from $\bP_0=\frac{1}{C}\sum_{c=1}^C \delta_{\mu^{c,n}}$, where $\mu^{c,n}$ is the uniform empirical distribution of samples belonging to the class $c\in \{1,\dots 10\}$ of MNIST, and targets $\bQ=\frac{1}{C}\sum_{c=1}^C \delta_{\nu^{c,n}}$. We used $n=200$ samples for each class of the datasets. The Sliced-Wasserstein distance is approximated with $L=500$ projections. To speed up the flow, similarly as \citep{hertrich2024generative}, we add a momentum $m\in [0,1)$, \emph{i.e.}, at each iteration $k\ge 0$, the update for each particle $i\in \{1,\dots,n\}$ in class $c\in\{1,\dots,C\}$ is of the form
\begin{equation}
    \begin{cases}
        v_{i,k+1} = \gWw\bF(\bP_k)(\mu_k^{c,n})(x_{i,k}^c) + m v_{i,k} \\
        x_{i,k+1}^c = x_{i,k}^c - \tau v_{i,k+1},
    \end{cases}
\end{equation}
with $v_{i,0} = 0$. We choose a step size of $\tau=0.05$ and $m=0.9$.

Complementary to \Cref{fig:snapshots_mnist_to_gmnist}, we see on \Cref{fig:flows_same_class} that images from a same class are flowed towards images from a same class in the target dataset. To verify this intuition, we applied a domain adaptation experiment which we describe now.

We first train a classifier on the training set of the MNIST dataset (using $n=500$ samples by class). The classifier is the CNN used in the examples of the \texttt{equinox} library\footnote{\url{https://docs.kidger.site/equinox/examples/mnist/}} \citep{kidger2021equinox}. It is trained for 5000 steps with the AdamW optimizer \citep{loshchilov2018decoupled} and a batch size of 64. After the training, it has an accuracy of 96\% on the test set, and of 100\% on the training set.

Then, we flow the dataset FMNIST towards MNIST by minimizing the MMD with kernel $K(\mu,\nu)=-\sw_2(\mu,\nu)$. We run the scheme for 500K steps with a step size of $\tau=0.1$ and a momentum of $m=0.9$. To match the labels of the flowed dataset with the labels of MNIST, we solve an OT problem between $\bP$ the flowed dataset and $\bQ$ the target dataset with the squared $2$-Wasserstein distance as groundcost, \emph{i.e.} with $\bP=\frac{1}{C}\sum_{c=1}^C \delta_{\mu^{c,n}}$ and $\bQ=\frac{1}{C}\sum_{c=1}^C \delta_{\nu^{c,n}}$, we solve the problem
\begin{equation}
    \min_{\Gamma\in\Pi(\bP,\bQ)}\ \langle \big(\W_2^2(\mu^{k,n},\nu^{\ell,n})\big)_{1\le k,\ell\le C}, \Gamma\rangle_F
\end{equation}
using the Python Optimal Transport library \citep{flamary2021pot}.

\looseness=-1 We plot on \Cref{fig:evol_da_fmnist} the accuracy of the pretrained classifier along the flow starting from FMNIST. We observe that the accuracy converges to 100\% for a sufficient number of iterations. Thus, it shows that the classes of the sources datasets are perfectly flowed towards classes of the target dataset, on which the pretrained neural network is trained, and thus has perfect accuracy.

On \Cref{fig:evol_da_fmnist}, we also replicate this experiment from SVHN to CIFAR10 which are composed of $32\times 32 \times 3$ dimensional images. The neural network used is the same convolutional network used in \Cref{sec:xp_dd}, and is pretrained on CIFAR10 during 5000 steps with the AdamW optimizer and a batch size of 64. We use here $n=100$ samples by class, and run the scheme for $500K$ steps with a step size of $\tau=0.1$ and $m=0.9$. We also observe that the accuracy converges to 100\%, indicating that it also works in moderately high dimensions.

\begin{figure}[t]
    \centering
    \hspace*{\fill}
    \subfloat{\includegraphics[width=0.4\linewidth]{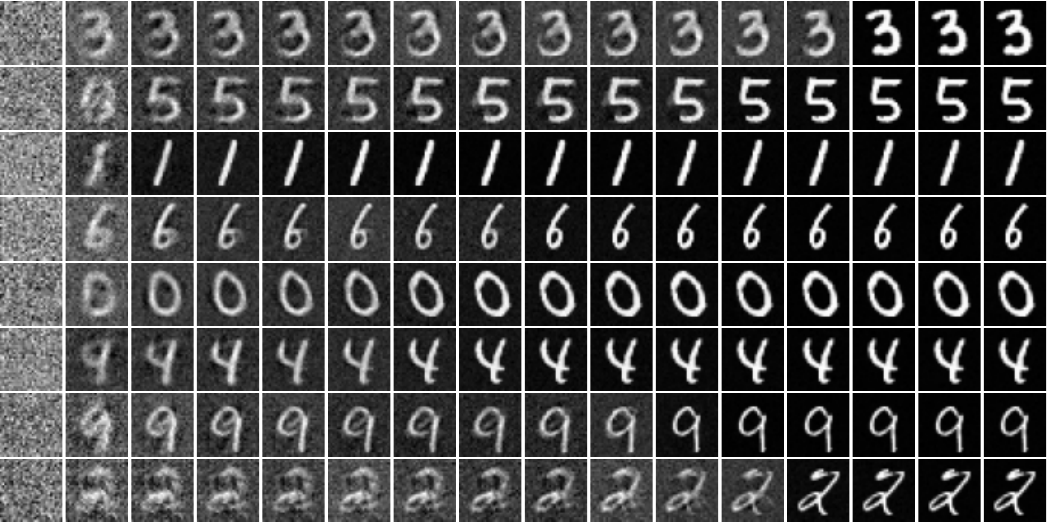}} \hfill
    \subfloat{\includegraphics[width=0.4\linewidth]{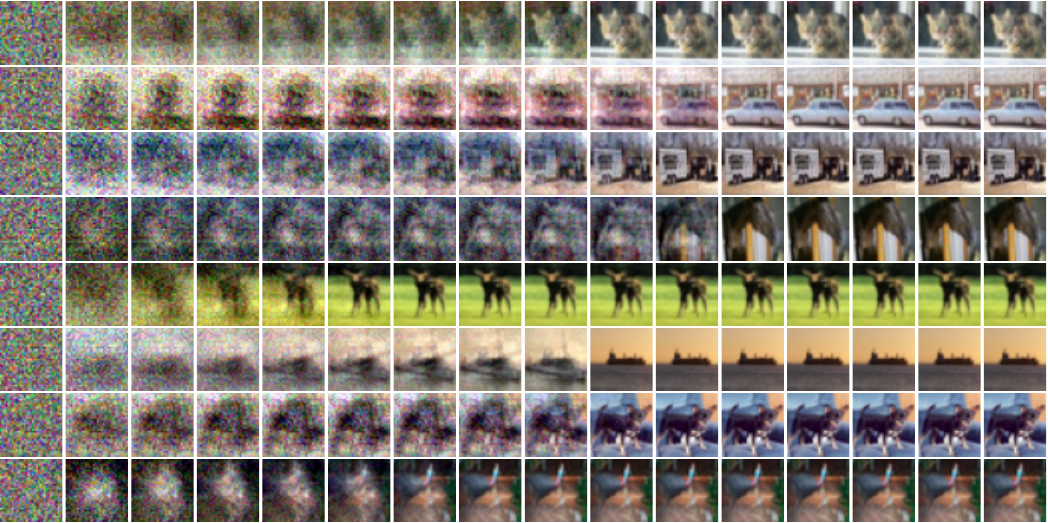}}
    \hspace*{\fill}
    \caption{Samples of trajectories starting from Gaussian noise towards MNIST (\textbf{Left}) and CIFAR10 (\textbf{Right}).}
    \label{fig:trajectories_generative}
\end{figure}

\begin{figure}[t]
    \centering
    \hspace*{\fill}
    \subfloat{\includegraphics[width=0.4\linewidth]{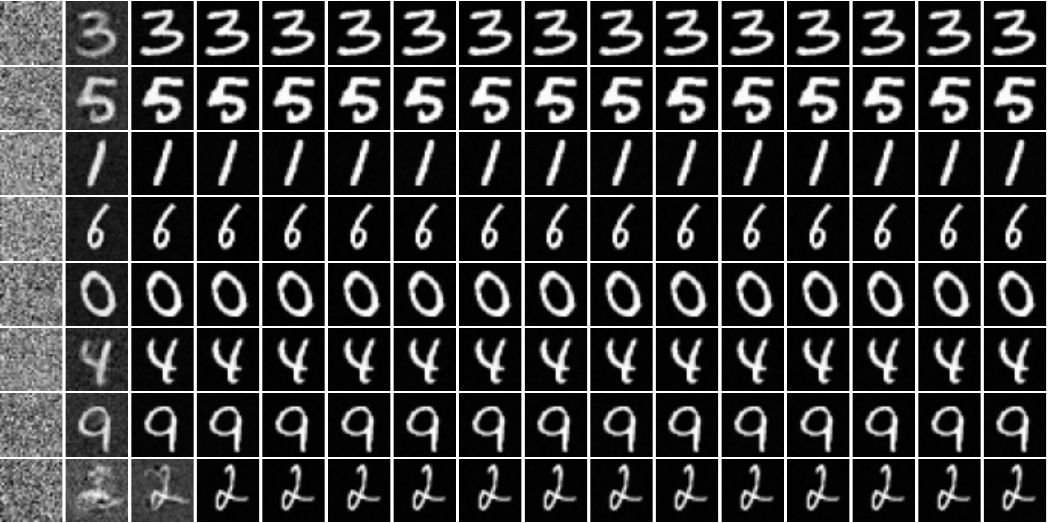}} \hfill
    \subfloat{\includegraphics[width=0.4\linewidth]{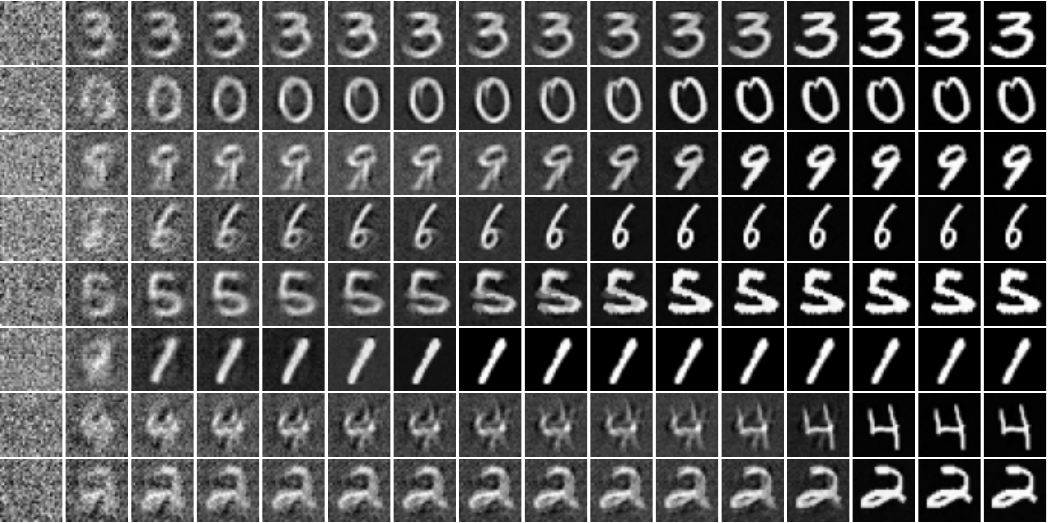}}
    \hspace*{\fill}
    \caption{Samples of trajectories starting from Gaussian noise towards MNIST with momentum $m=0.9$ (\textbf{Left}) and no momentum (\textbf{Right}). We run the flow for 100K steps, and plot samples every 6667 steps.}
    \label{fig:comparison_momentum}
\end{figure}

\begin{figure}[thbp]
    \centering
    \hspace*{\fill}
    \subfloat[$L=10$]{\includegraphics[width=0.45\linewidth]{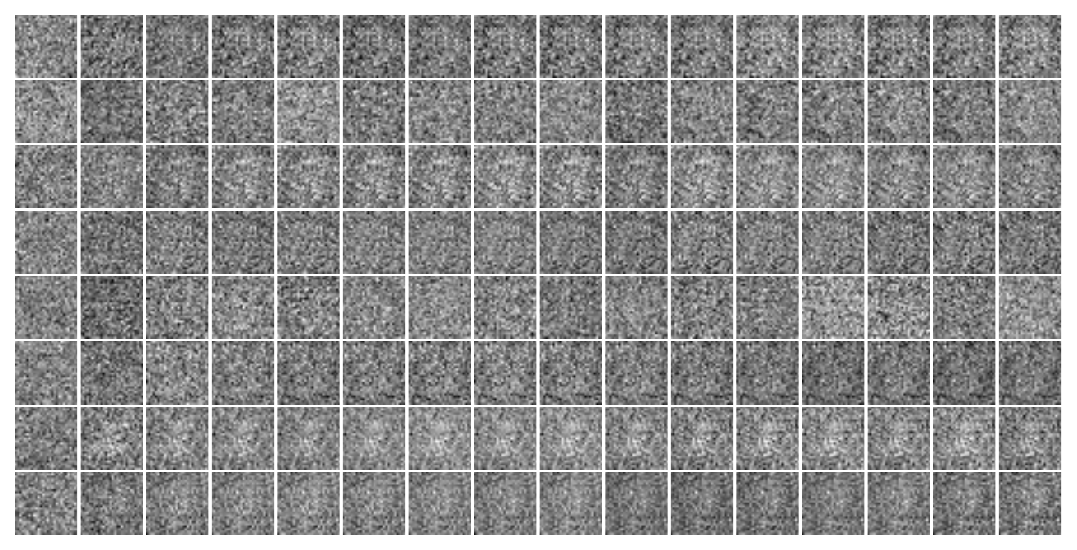}} \hfill
    \subfloat[$L=50$]{\includegraphics[width=0.45\linewidth]{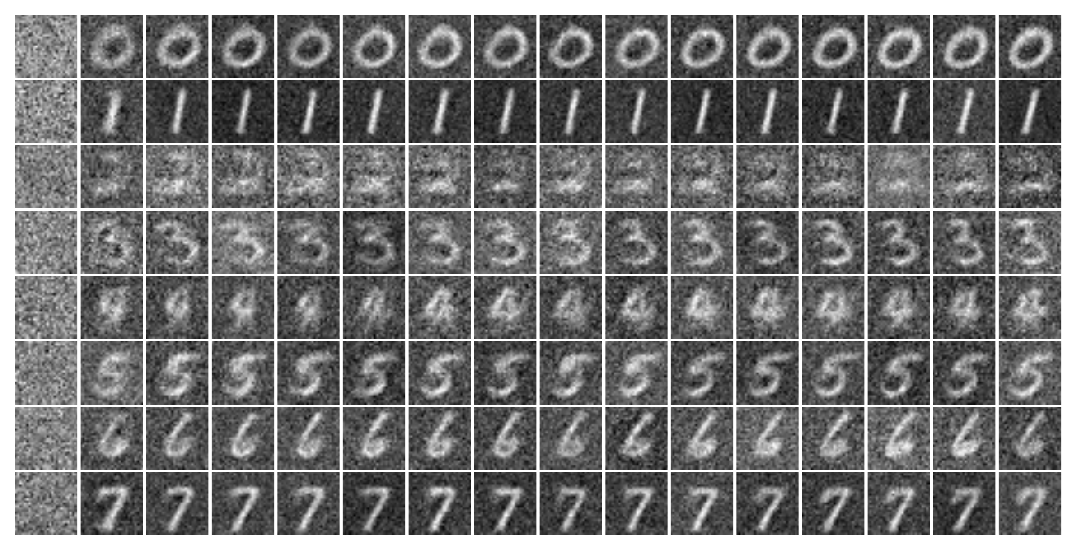}} \hfill
    \hspace*{\fill} \\
    \hspace*{\fill}
    \subfloat[$L=100$]{\includegraphics[width=0.45\linewidth]{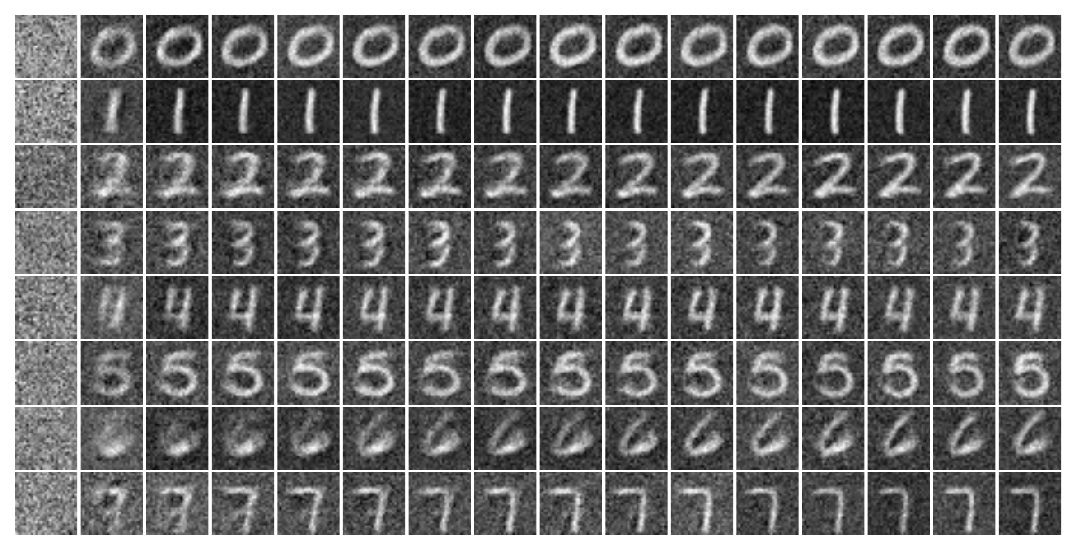}} \hfill
    \subfloat[$L=200$]{\includegraphics[width=0.45\linewidth]{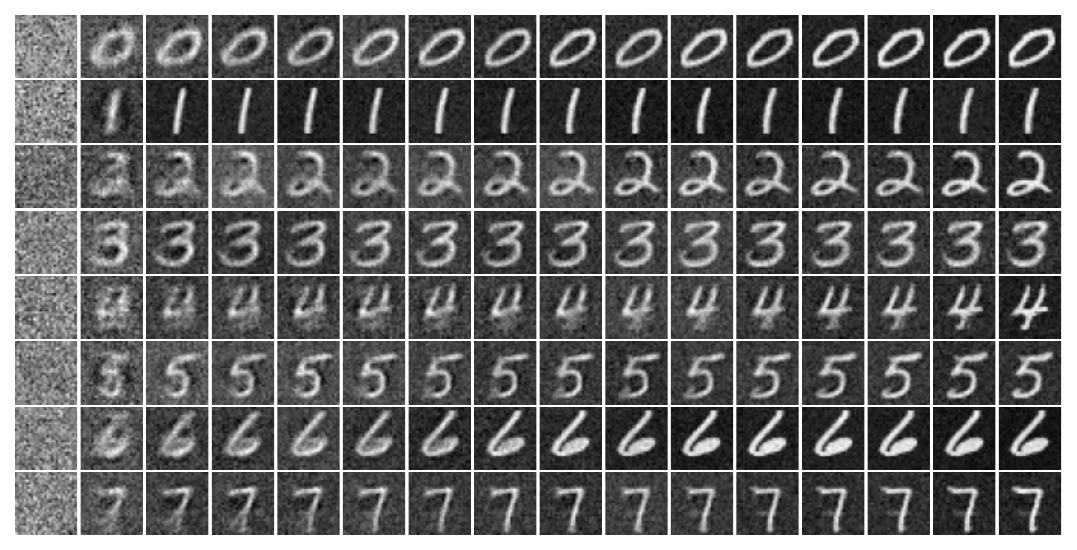}} \hfill
    \hspace*{\fill} \\
    \hspace*{\fill}
    \subfloat[$L=300$]{\includegraphics[width=0.45\linewidth]{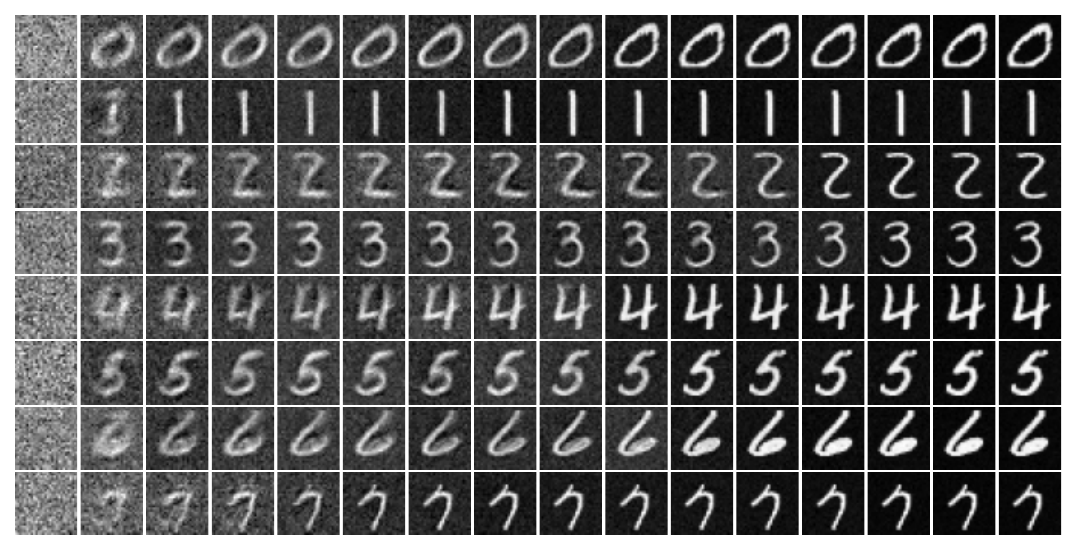}} \hfill
    \subfloat[$L=500$]{\includegraphics[width=0.45\linewidth]{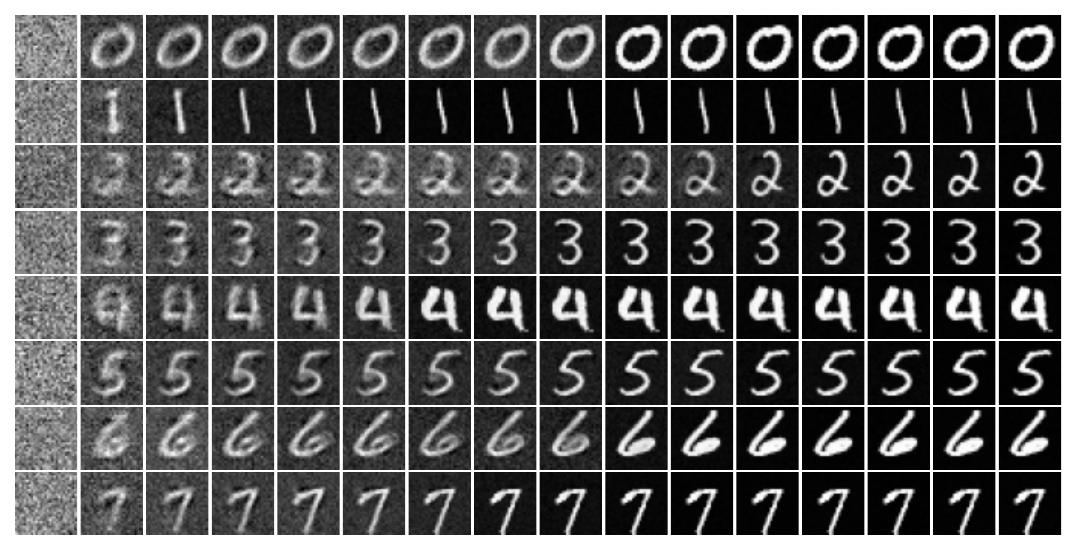}} \hfill
    \hspace*{\fill} \\
    \hspace*{\fill}
    \subfloat[$L=1000$]{\includegraphics[width=0.45\linewidth]{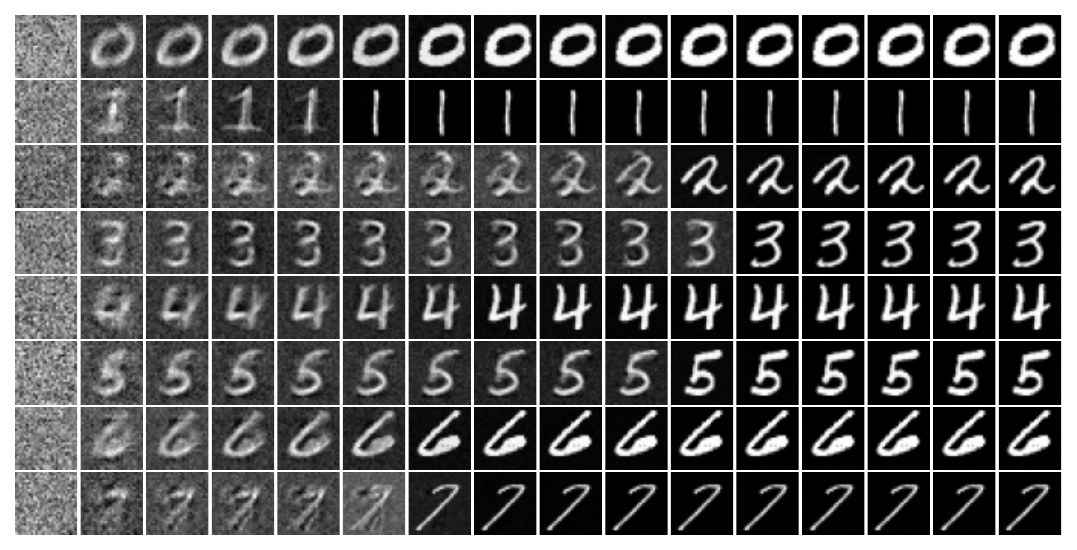}} \hfill
    \subfloat[$L=2000$]{\includegraphics[width=0.45\linewidth]{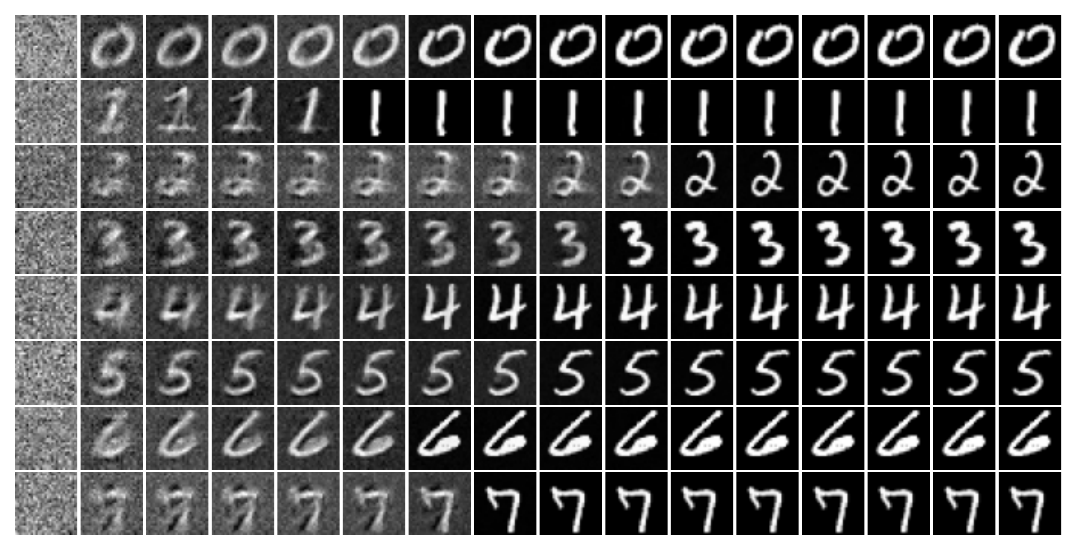}} \hfill
    \hspace*{\fill}
    \caption{Ablation over the number of projections for generative modeling on MNIST.}
    \label{fig:ablation_L_mnist}
\end{figure}

\subsection{Generative Modeling} \label{appendix:gen_modeling}

In this experiment, we generate samples from different datasets starting from Gaussian noise.

We show on \Cref{fig:trajectories_generative} trajectories starting from Gaussian noise towards MNIST and CIFAR10. For both datasets, we use a momentum of $m=0.9$ and a step size of $\tau=1$. For MNIST, we run the flow for 18K steps, and plot samples every 1200 step, while for CIFAR10, we run it for 150K steps and plot samples every 10K step. We used $n=200$ samples for each class for MNIST and $n=50$ for CIFAR10. We also compare trajectories on \Cref{fig:comparison_momentum} with using a momentum $m=0.9$ or no momentum for MNIST, running the flow for 100K steps and showing samples every 6667 step.

In \Cref{fig:ablation_L_mnist}, we present an ablation study over the number of projections used to approximate the Sliced-Wasserstein distance on the MNIST dataset (with the same setting with momentum, \emph{i.e.} $m=0.9$, $\tau=1$ for 18K steps). We observe that to generate sufficiently clear images, we need at least 300 projections. This may be because a higher number of projections provides a better approximation of the gradients.

\subsection{Dataset Distillation} \label{sec:xp_dd}

\begin{figure}[t]
    \centering
    \hspace*{\fill}
    \subfloat[$\psi^\theta = \id$]{\includegraphics[width=0.24\linewidth]{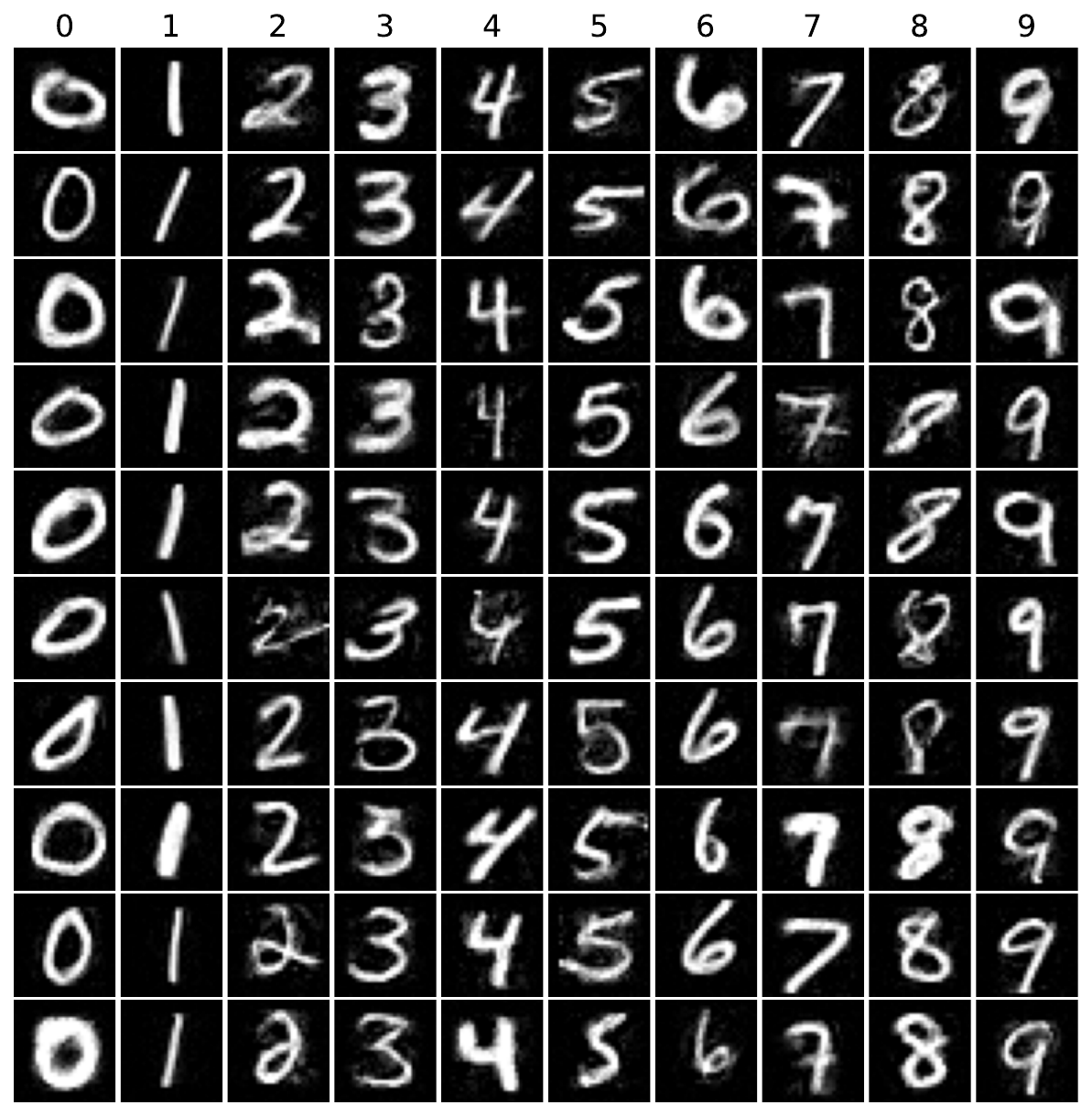}} \hfill
    \subfloat[$\mathcal{A}^\omega + \psi^\theta$]{\includegraphics[width=0.24\linewidth]{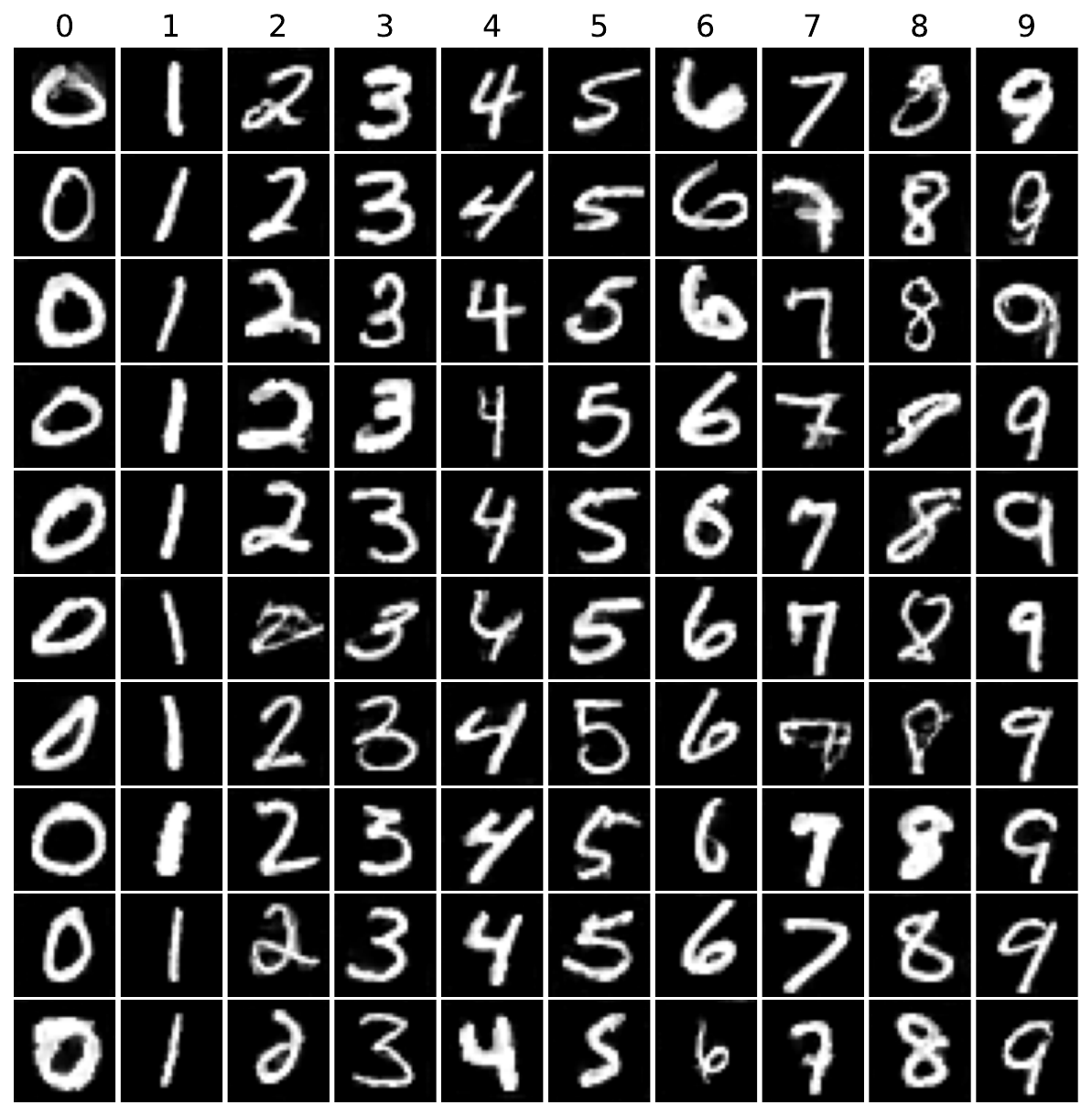}} \hfill
    \subfloat[$\psi^\theta = \id$]{\includegraphics[width=0.24\linewidth]{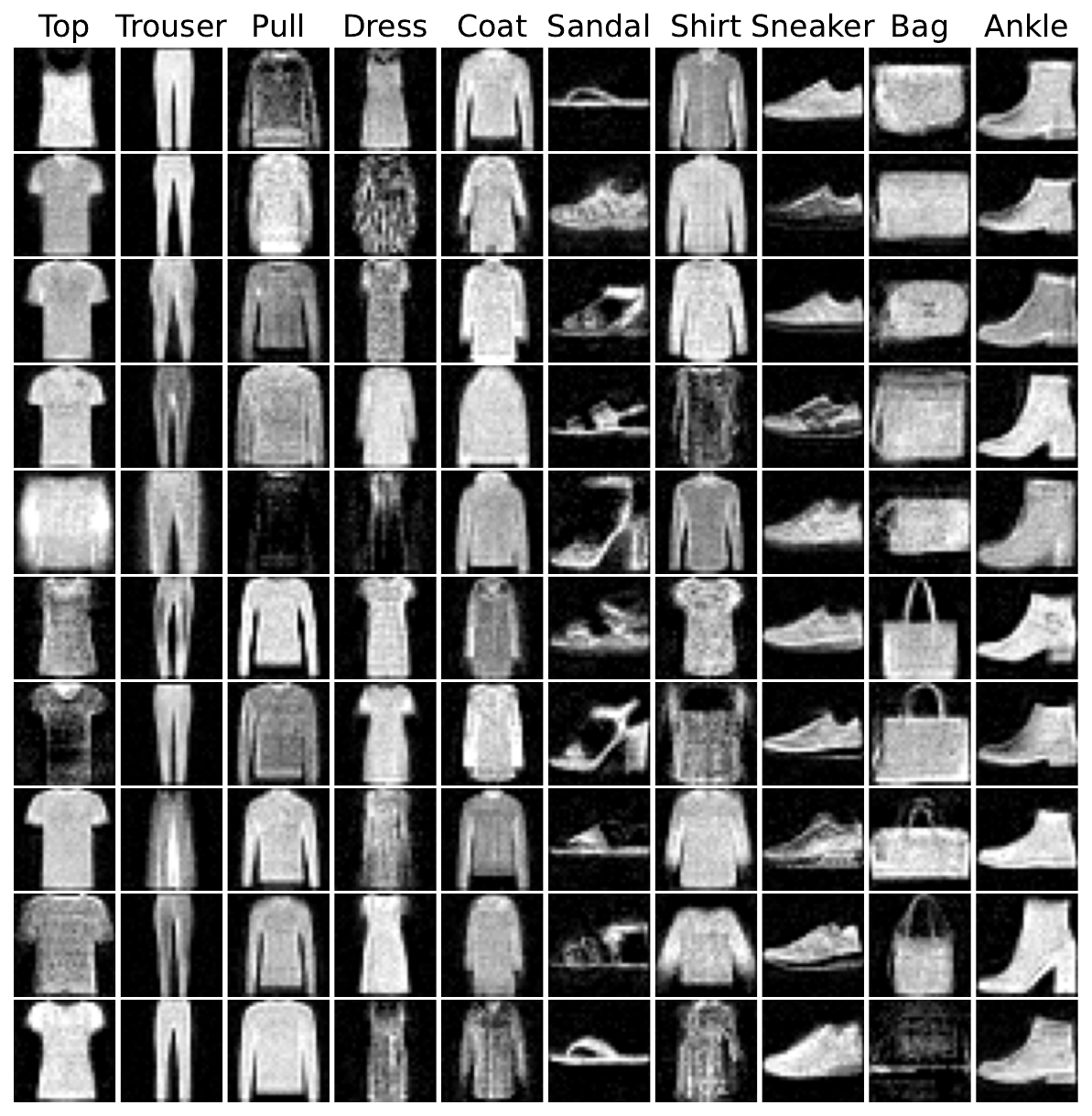}} \hfill
    \subfloat[$\mathcal{A}^\omega + \psi^\theta$]{\includegraphics[width=0.24\linewidth]{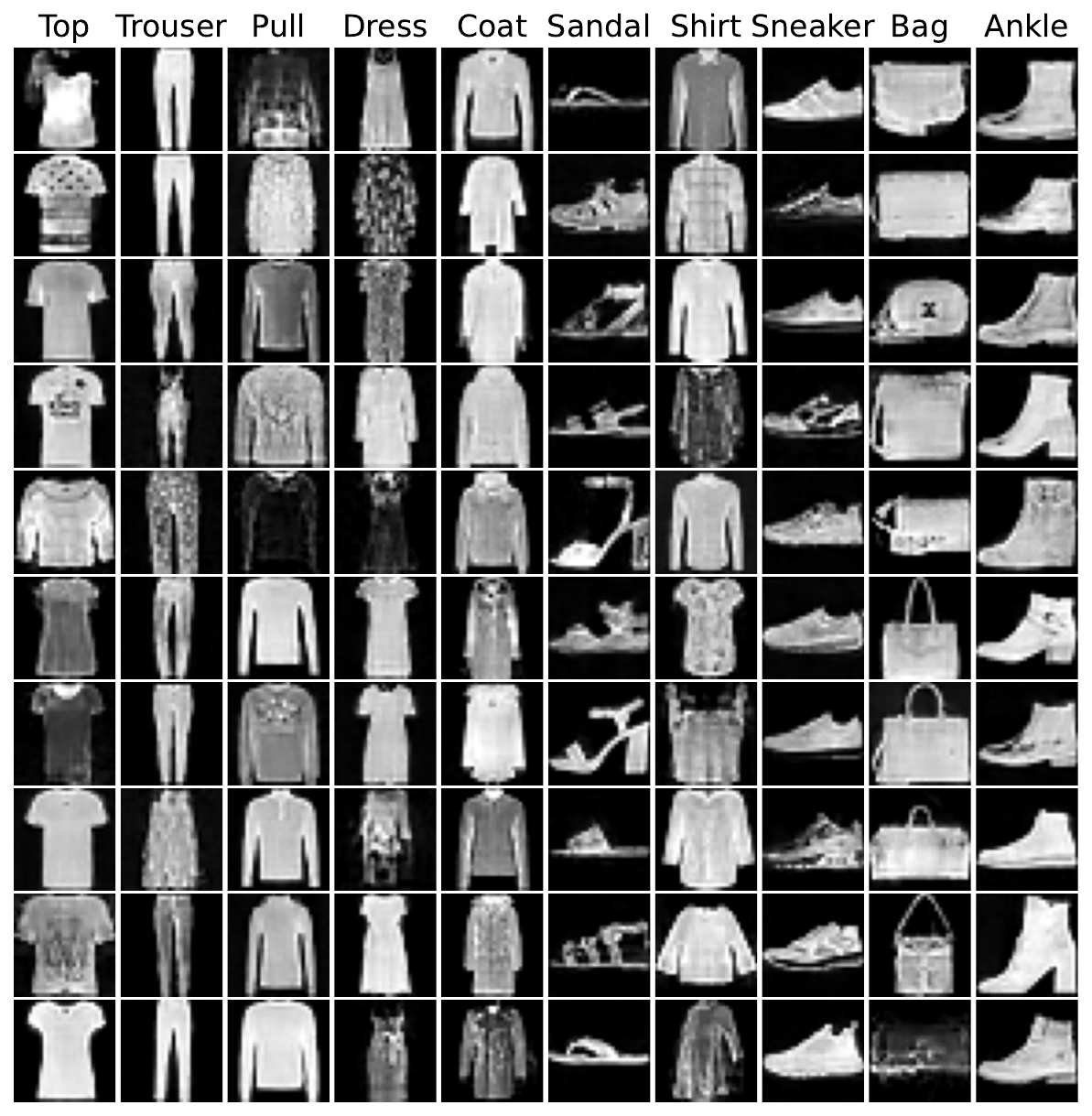}}    \hspace*{\fill}    \caption{Synthetic data for the dataset distillation task on MNIST (\textbf{Left}) and FMNIST (\textbf{Right}) with or without embedding.}
    \label{fig:samples_dd}
\end{figure}

\looseness=-1 In this task, we aim at generating a new dataset allowing to approximate a target distribution $\bQ=\frac{1}{C}\sum_{c=1} \delta_{\nu^{c,n}}$ with a distribution $\bP=\frac{1}{C}\sum_{c=1}^C \delta_{\mu^{c,p}}$ for $p\ll n$, in order to be able to train more efficiently neural networks on it. In \Cref{tab:results_dd}, we take $\bQ$ as the MNIST and Fashion MNIST dataset, with $n=5000$ samples by class, and $C=10$ classes, and report the results for $p\in\{1,10,50\}$. We report the accuracy of a ConvNet trained on the synthetic dataset and evaluated on a test set, averaged over 5 trainings of the neural network, and 3 synthetic datasets. We use a similar architecture as \citep{zhao2023dataset}, \emph{i.e.} the ConvNet includes three repeated convolutional blocks, and each block involves a 128-kernel convolution layer, instance normalization layer, ReLU activation function and average pooling. This forms the backbone part of the network, and the full classifier is followed by a linear layer. For the initial distribution $\bP_0=\frac{1}{C}\sum_{c=1}^C\delta_{\mu^{c,p}}$, each $\mu^{c,p}$ is chosen as a random subset of the samples of $\nu^{c,n}$. The results reported in the column ``Random'' correspond to the ConvNet trained on the initial data.

\citet{zhao2023dataset} proposed to solve the problem by minimizing 
\begin{equation} \label{eq:loss_DM}
    \begin{aligned}
        \mathcal{F}\big((\mu^c)_c\big) = \sum_{c=1}^C \mathbb{E}_{\theta, \omega}\left[\left\|\int \psi^\theta\big(\mathcal{A}^\omega(x)\big)\ \mathrm{d}(\mu^c - \nu^c)(x)\right\|^2\right] = \mathbb{E}_{\theta, \omega}\left[ \sum_{c=1}^C \mmd_k^2\big(\psi^\theta_\#\mathcal{A}^\omega_\#\mu^c, \psi^\theta_\#\mathcal{A}^\omega_\#\nu^c) \right],
    \end{aligned}
\end{equation}
with linear kernel $k(x,y) = \langle x,y\rangle$, where $\mathcal{A}^\omega:\R^d\to\R^d$ is some data augmentation and $\psi^\theta:\R^d\to \R^{d'}$ with $d'\ll d$ is a randomly initialized neural network used to embed the data. This loss does not take into account the interaction between the classes and just learn any set of synthetic samples for each class. In this work, we propose to take into account the interaction between the classes, and thus minimize 
\begin{equation} \label{eq:mmdsw_DD}
    \Tilde{\bF}(\bP) = \frac12 \mathbb{E}_{\theta, \omega}\big[\mmd_K^2(\phi^{\theta,\omega}_\#\bP, \phi^{\theta,\omega}_\#\bQ)\big],
\end{equation}
with $K(\mu,\nu)=-\sw_2(\mu,\nu)$.

\looseness=-1 In practice, for $\psi^\theta$, we use the backbone part of the ConvNet, and for $\mathcal{A}^\omega$, we follow the same strategy as \citep{zhao2021dataset} (\emph{i.e.} we sample one augmentation among color jittering, cropping, cutout, scaling and a rotation for MNIST, and also add flipping for Fashion MNIST). We optimize \eqref{eq:loss_DM} by stochastic gradient descent over the particles, sampling one random network and one random augmentation at each step. We trained it for 20K iterations, a learning rate of $\tau=1$ and a momentum of $m=0.9$. In practice, we observed numerical instabilities when optimizing in the ambient space with augmentations.

To optimize \eqref{eq:mmdsw_DD}, we also performed a stochastic gradient descent, sampling one random neural network and one random augmentation at each step. We used also 20K iterations, a learning rate of $\tau=1$ and a momentum $m=0.9$. Then, we assign the classes using an OT matching as explained in \Cref{appendix:xp_da}. We add on \Cref{fig:samples_dd} samples learned with this loss, with and without embedding. We observe that images are slightly clearer when using an embedding. To compute the gradient of $\Tilde{\bF}$ in practice, we use autodifferentiation.

\subsection{Transfer Learning} \label{appendix:tf}

\begin{figure}[t]
    \centering
    \hspace*{\fill}
    \subfloat{\includegraphics[width=0.3\linewidth]{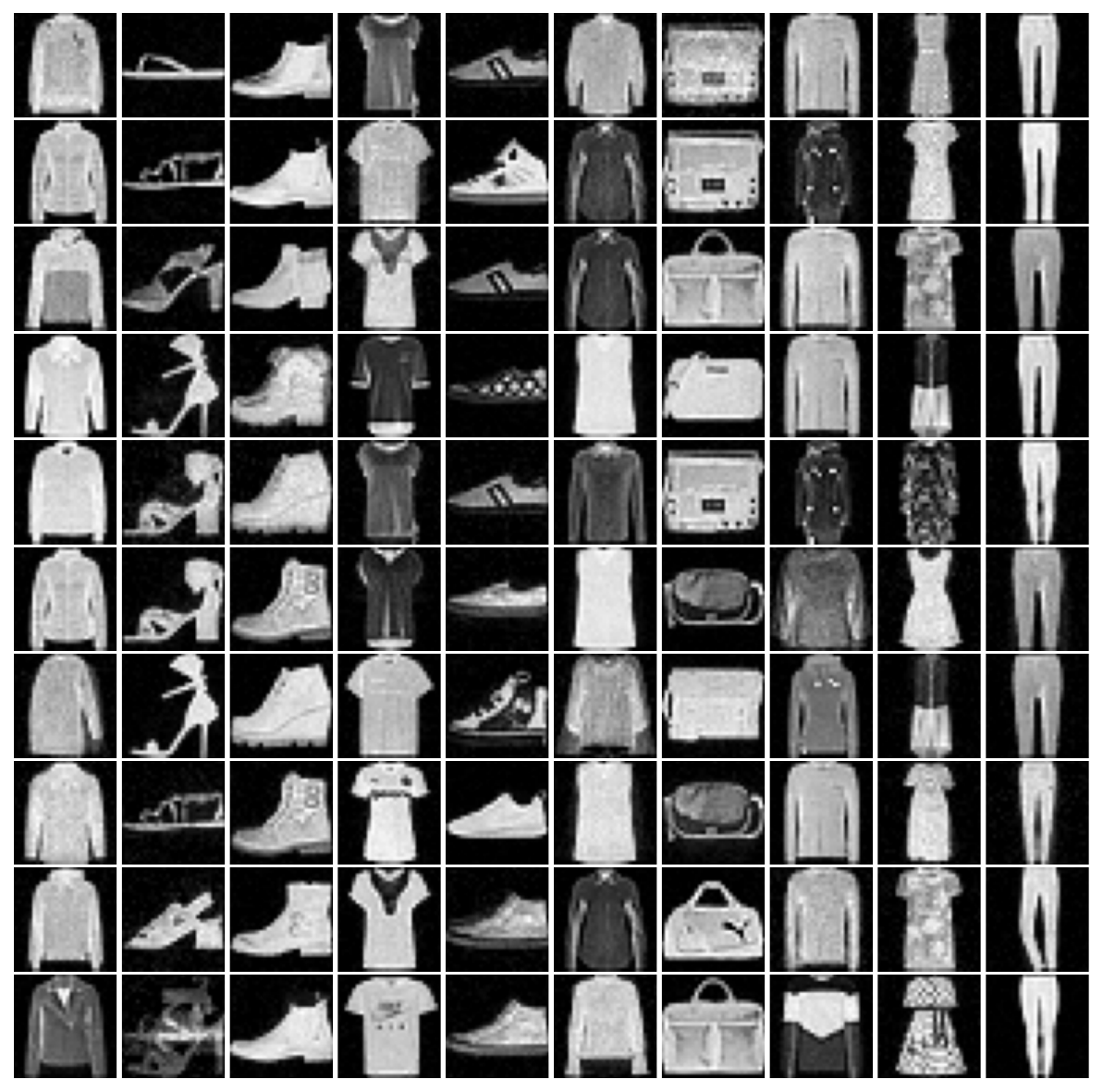}} \hfill
    \subfloat{\includegraphics[width=0.3\linewidth]{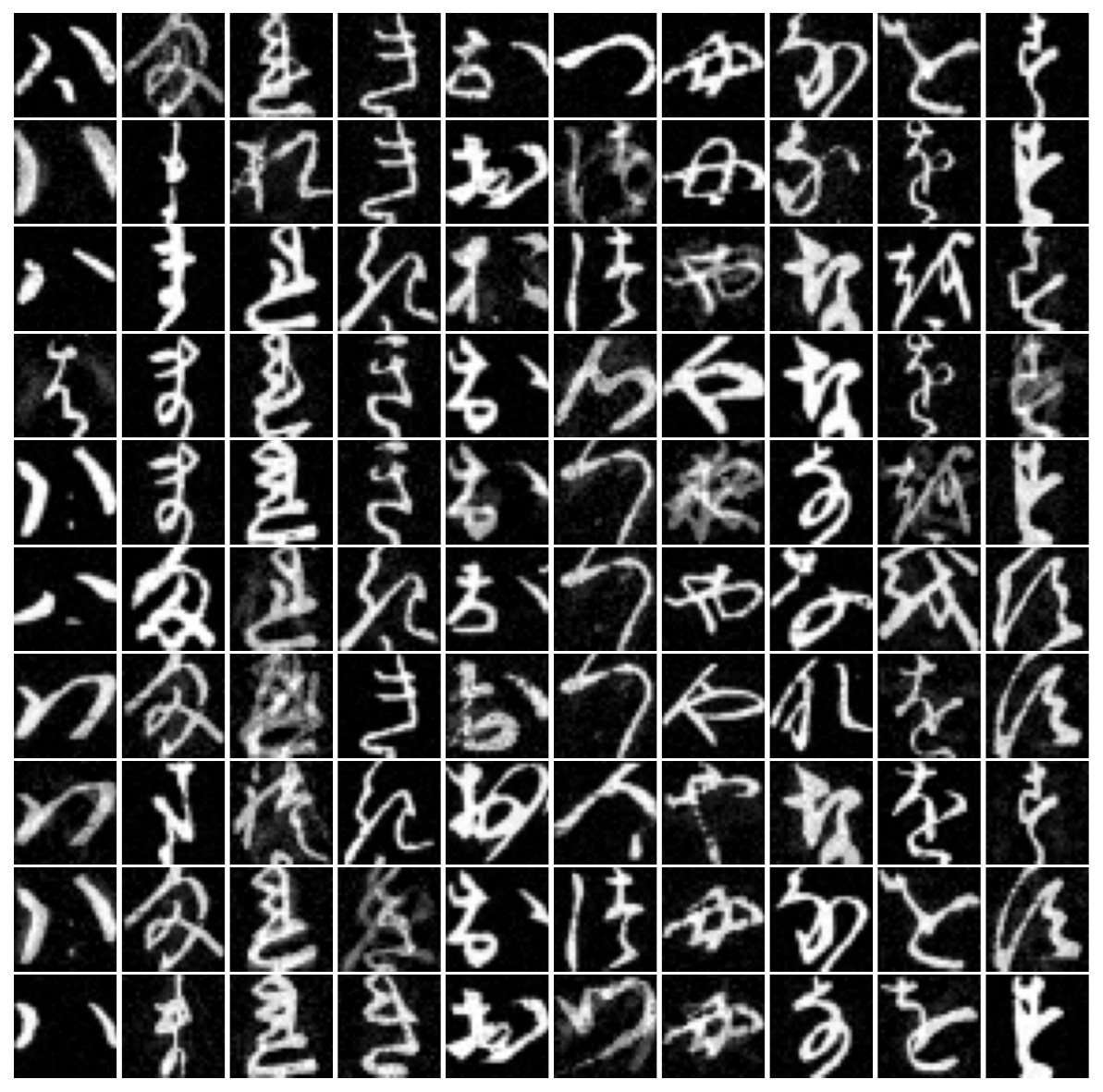}} \hfill
    \subfloat{\includegraphics[width=0.3\linewidth]{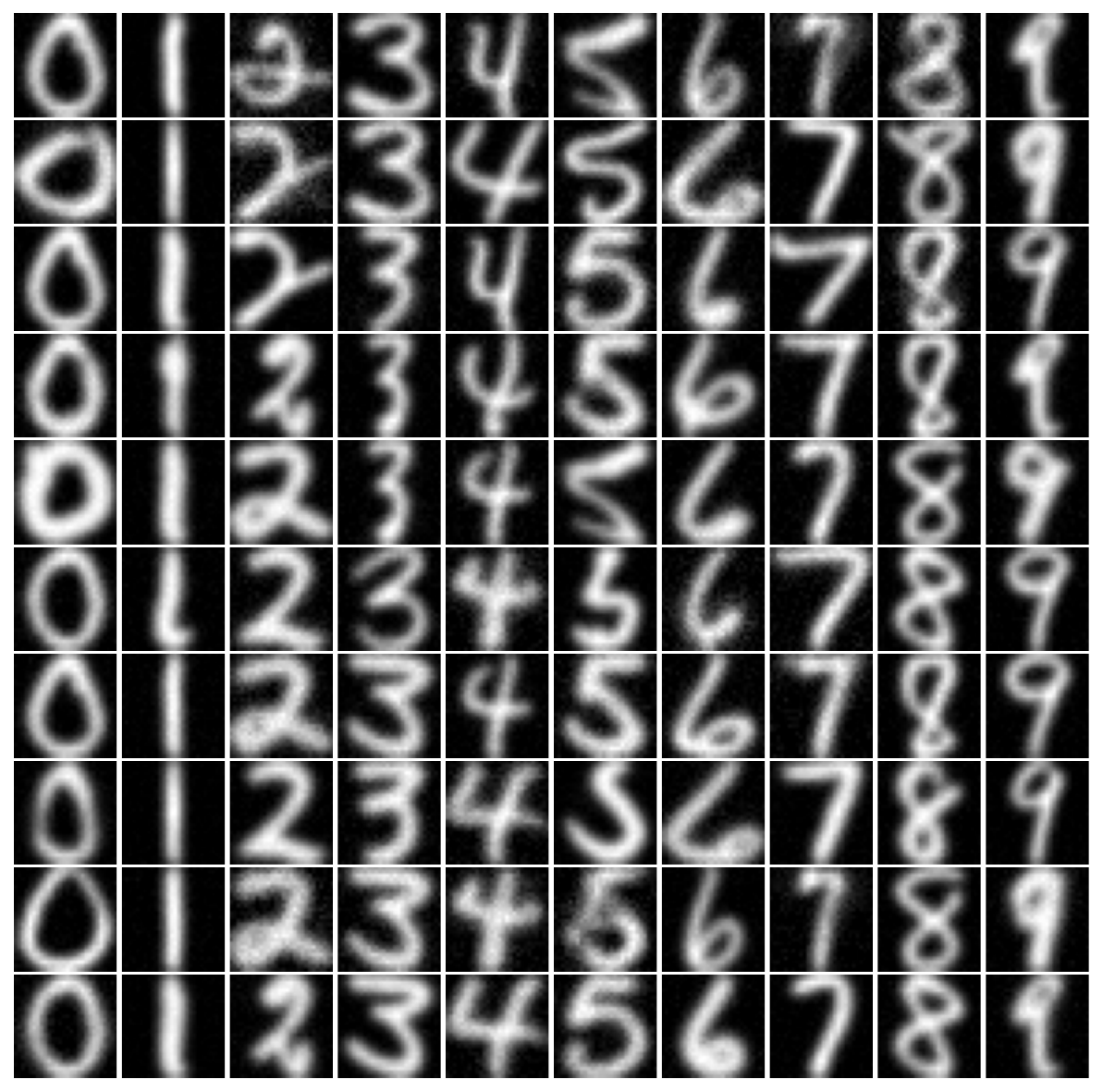}}
    \hspace*{\fill}
    \caption{Examples of images output by flows for the transfer learning task with $k=10$ for Fashion MNIST (\textbf{Left}), KMNIST (\textbf{Middle}) and USPS (\textbf{Right}).}
    \label{fig:imgs_tf}
\end{figure}

We describe the details for the experiment of transfer learning. We recall that the target dataset is of the form $\bQ=\frac{1}{C}\sum_{c=1}^C \delta_{\nu^{c,k}}$ with $\nu^{c,k}$ a uniform empirical distribution of $k$ samples of the class $c$. In \Cref{tab:results_tf}, the targets datasets are Fashion-MNIST, KMNIST and USPS. Thus, $C=10$, and we choose $k\in\{1, 5, 10, 100\}$. For the source dataset $\bP = \frac{1}{C}\sum_{c=1}^C \delta_{\mu^{c,n}}$, we used the MNIST dataset with $n=200$ samples in each class. 

We augment the target dataset by flowing the samples of MNIST on the target. For MMDSW, we minimize $\bF(\bP)=\frac12 \mmd^2_K(\bP,\bQ)$ with kernel $K(\mu,\nu)=-\sw_2(\mu,\nu)$, by running the forward scheme for 5K steps for $k\in\{1, 5, 10\}$ and 20K steps for $k=100$, with step size $\tau=1$ and momentum $m=0.9$. Finally, we align the labels using an OT matching between the flowed samples $\bP$ and the target $\bQ$, as for the dataset distillation experiment.

We compare it with training directly on the small dataset, and with two other methods. The first one, called OTDD \citep{alvarez2020geometric}, represents the dataset as a probability distribution on $\R^d\times \cP_2(\R^d)$, where the labels are embedded in $\cP_2(\R^d)$ by considering the conditional distribution, \emph{i.e.}, a feature-label pair $(x, c)$ is represented as $(x, \mu^c)$. Then, they compare datasets using Optimal Transport with cost $d\big((x, c), (x', c')\big)^2 = \|x-x'\|_2^2 + \W_2^2(\mu^c, \mu^{c'})$. The flow of OTDD then minimizes the OT distance with this cost, \emph{i.e.}, the objective is $\cF(\mu)=\frac12 \mathrm{OTDD}(\mu, \nu)$ for $\mu,\nu\in\cP_2(\R^d\times \cP_2(\R^d))$, with
\begin{equation}
    \mathrm{OTDD}(\mu,\nu) = \inf_{\gamma\in\Pi(\mu,\nu)}\ \int \big(\|x-x'\|_2^2 + \W_2^2(\mu^c, \mu^{c'})\big)\ \mathrm{d}\gamma\big((x,c), (x,c')\big).
\end{equation}
For big datasets, the conditional distributions $\mu_c$ can be approximated by Gaussian distributions. \citet{alvarez2021dataset} proposed several schemes to optimize this loss using Wasserstein gradient flows. We did not manage to replicate their results with their code. Thus, we reimplemented it with some differences. First, similarly as \citep{hua2023dynamic}, we used an embedding in dimension 2 of the data to approximate the conditional distributions with Gaussian distributions. Thus, we model the datasets as distributions over $\R^d \times \R^2 \times S_2^{++}(\R)$, with $S_2^{++}(\R)$ the space of symmetric positive definite matrices. This helps avoiding memory issues and scaling to higher dimensional datasets as it reduces a lot the dimension of the samples to flow. For this embedding, we used a Principal Component Analysis (but note that we could use other embedding methods such as TSNE \citep{hua2023dynamic} or Multidimensional Scaling \citep{liu2025wasserstein}). In practice, we approximate OTDD using an entropic regularization, which we compute using the Sinkhorn algorithm \citep{cuturi2013sinkhorn} and \texttt{ott-jax} \citep{cuturi2022optimal}. We optimize it using AdamW with a learning rate of $\tau=1e^{-3}$ and run it for 5K iterations for $k\in\{1,5,10,100\}$.
To get the labels, we use an OT matching as in \citep{hua2023dynamic}, which we solve using \texttt{POT} \citep{flamary2021pot}. More precisely, for each class $c\in\{1,\dots,C\}$ of the target distribution, we can compute a mean $\Bar{m}_c$ and a covariance $\Bar{\Sigma}_c$, and a weight $\omega_c = \frac{n_c}{n}$ with $n$ the number of samples in the target dataset, and $n_c$ the number of samples belonging to class $c$. After flowing $n$ samples, we have tuples $(x_i, m_i, \Sigma_i)_{i=1}^n$ and we want to associate to each sample a class. To do this, they propose to solve the discrete OT problem between $\bQ=\sum_{c=1}^C \omega_c \delta_{\mathcal{N}(\Bar{m}_c, \Bar{\Sigma}_c)}$ and $\bP=\frac{1}{n}\sum_{i=1}^n \delta_{\mathcal{N}(m_i,\Sigma_i)}$:
\begin{equation} \label{eq:get_labels_mmd}
    \min_{P\in \Pi(\bP,\bQ)}\ \sum_{i=1}^n \sum_{c=1}^C P_{ic} \W_2^2\big(\mathcal{N}(m_i,\Sigma_i), \mathcal{N}(m_c, \Sigma_c)\big),
\end{equation}
and then use as distribution $\mu^n = \frac{1}{n}\sum_{i=1}^n \delta_{(x_i, y_i)}$ with $y_i = \sum_{c=1}^C c \bOne_{\{P_{ic}^* = \max\ P_i^*\}}$.

The second baseline we use is the one proposed in \citep{hua2023dynamic}. In this work, they first observe that the Gaussian approximation for high dimensional datasets might not scale well in memory. Thus, they propose to use an embedding in a lower dimension space of the conditional distributions, before doing the Gaussian approximation. The datasets are then represented as probability distributions on $\R^d\times \R^p \times S_p^{++}(\R)$ with $p\ll d$. Instead of using an OT cost to compare the datasets, they used the MMD with a kernel obtained as a product of Gaussian kernel. Then, they applied a Wasserstein gradient flow of the MMD \citep{arbel2019maximum} to minimize it, with a Bures-Wasserstein gradient descent step \citep{altschuler2021averaging} for the symmetric positive definite covariance matrix. We note that in contrast with our proposed MMD, it requires many hyperparameters to tune (3 bandwidth of Gaussian kernels and noise to add to make the flow converge). We reimplemented it in \texttt{jax} \citep{jax2018github}, used $p=2$ and a Principal Component Analysis (using \texttt{scikit-learn} \citep{scikit-learn}) for the lifting of the conditional distribution (instead of TSNE in \citep{hua2023dynamic}). The Gaussian of each class is then obtained by computing the mean and variance of each class. We used as bandwidth $h=100$ for the feature part, $h=50$ for the mean part and $h=1000$ for the covariance part. We ran the flow for 20K steps with a step size of $\tau=10$, and momentum $m=0.9$. To get the final labels, we solved \eqref{eq:get_labels_mmd} as explained in the last paragraph.

In \Cref{tab:results_tf}, we report the accuracy obtained by training a LeNet-5 neural network for 50 epochs with a AdamW optimizer and a learning rate of $3\cdot 10^{-4}$. Moreover, we average the results for 5 trainings of the neural network, and 3 outputs of the flows. We add on \Cref{fig:imgs_tf} examples of images returned at the end of the flow of the MMD with $K(\mu,\nu)=-\sw_2(\mu,\nu)$.

\subsection{Handling Different Number of Distributions between the Source and Target} \label{appendix:diff_distrs}

Let $\bP=\frac{1}{N}\sum_{k=1}^N \delta_{\mu^{k,n}}$ and $\bQ=\frac{1}{M}\sum_{k=1}^M \delta_{\nu^{k,n}}$ with $M< N$. In this situation, the flow might not converge well towards the target distribution since they have a different number of Dirac. This is illustrated on \Cref{fig:MMDGaussian_4Rings}, where the target is composed of $M=3$ rings $\nu^{k,n}$, and the source is initialized with $N=4$ distributions, and we minimize the MMD with a Gaussian SW kernel $K(\mu,\nu)=e^{-\sw_2^2(\mu,\nu)/h^2}$. We see that the flow does not converge to 3 rings, as it cannot split the mass because the Wasserstein gradient descent allow only changing the position of particles. 

\looseness=-1 This problem could be solved by different solutions. For instance, one could use a Wasserstein-Fisher-Rao gradient flow instead of a Wasserstein gradient flow \citep{gallouet2017jko}. This flow can be approximated \emph{e.g.} by using Birth death Langevin algorithms \citep{lu2019accelerating, lu2023birth} where the Langevin step approximates the Wasserstein gradient flow part, and the Birth death part approximates the Fisher-Rao gradient flow part. The birth death consists at killing and duplicating randomly particles at each step. Another solution to approximate the Fisher-Rao flow is to change the weights \citep{yan2024learning}.

We propose to perform the Wasserstein gradient flow, but allowing to change the weights of the particles, which is not possible for the Wasserstein gradient descent. Ideally, one would want to solve directly the JKO scheme
\begin{equation} \label{eq:semi_uot}
    \begin{cases}
        \gamma_{k+1} = \argmin_{\gamma\in\cP_2(\R^d\times \R^d),\ \pi^1_\#\gamma=\mu_k}\ \int \|x-y\|_2^2\ \mathrm{d}\gamma(x,y) + \tau\cF(\pi^2_\#\gamma) \\
        \mu_{k+1} = \pi^2_\#\gamma_{k+1}.
    \end{cases}
\end{equation}
However, if we do not fix the support, it is not possible to directly solve this problem, except if we use neural networks. Note that \eqref{eq:semi_uot} can be seen as a semi-relaxed unbalanced optimal transport problem, where the first marginal is fixed. This has been leveraged to solve the JKO scheme \emph{e.g.} in \citep{choi2024scalable}.

\begin{figure}[t]
    \centering
    \includegraphics[width=\linewidth]{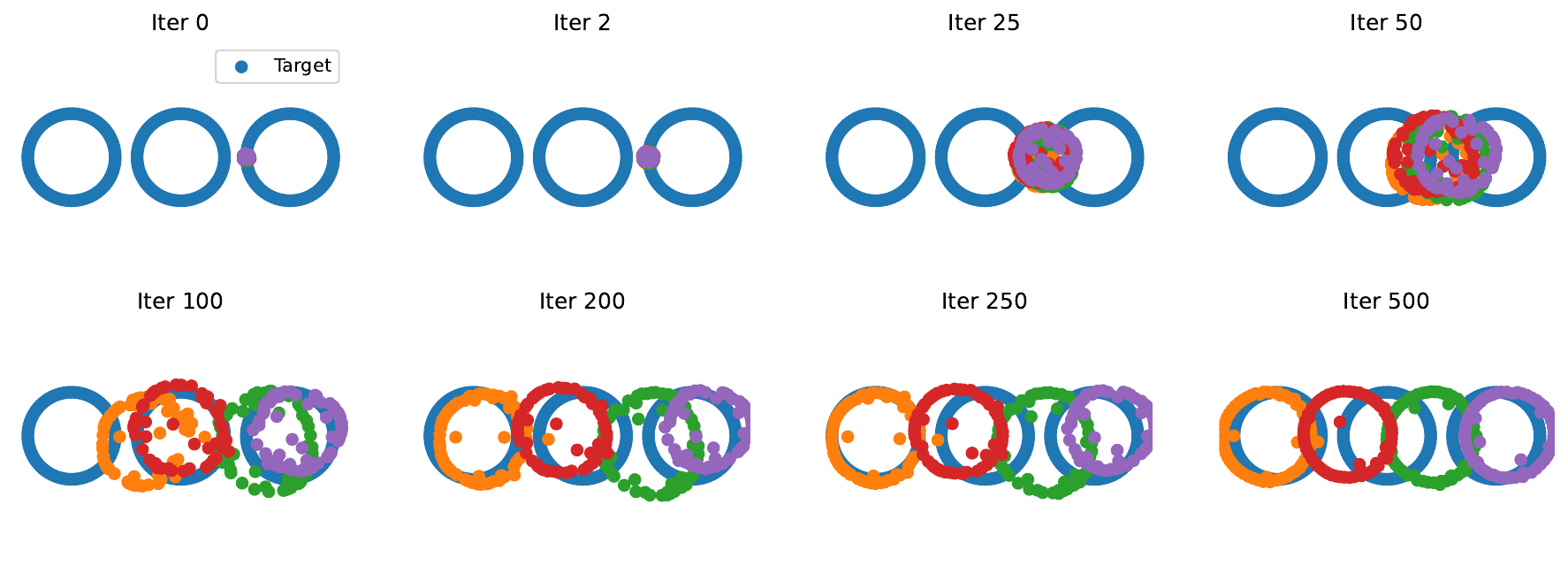}
        \caption{MMD with Gaussian SW kernel and 4 distributions flowed towards 3 rings. The flow does not converge to the 3 rings.}
    \label{fig:MMDGaussian_4Rings}
\end{figure}

We propose instead to alternate between a Wasserstein gradient descent step, which allows moving the particles without changing the weights, and a backward step for which we optimize over the coupling while fixing its support, which allows then to change the weights.

For simplicity, let us describe the procedure more precisely on $\cP(\R^d)$. Let $\nu\in\cP_2(\R^d)$ be a target distribution, and suppose at step $k$, $\mu_k = \sum_{i=1}^n \alpha_i^k \delta_{x_i}$ with $\alpha_i^k\ge 0$, $\sum_{i=1}^n \alpha_i^k = 1$. Let $D$ be a divergence we want to minimize \emph{w.r.t.} $\nu$, \emph{i.e.} we want to minimize $\cF(\mu)= D(\mu,\nu)$. Then, our update is 
\begin{equation}
    \begin{cases}
        \mu_{k+\frac12} = \big(\id - \tau \gW\cF(\mu_k)\big)_\#\mu_k \\
        \gamma_{k+1} = \argmin_{\gamma\in \cP(\R^d\times \R^d), \mathrm{supp}(\gamma) \subset \mathrm{supp}(\mu_{k+\frac12}) \times \mathrm{supp}(\mu_{k+\frac12}),\ \pi^1_\#\gamma=\mu_{k+\frac12}}\ \frac12\int \|x-y\|_2^2\ \mathrm{d}\gamma(x,y) + \tau D(\pi^2_\#\gamma, \nu) \\
        \mu_{k+1} = \pi^2_\#\gamma_{k+1}.
    \end{cases}
\end{equation}
The first step is a regular forward step, which moves the position of the particles. The second step learns a coupling $\gamma\in\cP_2(\R^d\times\R^d)$ which satisfies $\pi^1_\#\gamma=\mu_{k+\frac12}$, and such that $\pi^2_\#\gamma$ is supported on the same set of particles. This step can be seen as solving a semi-relaxed Unbalanced Optimal Transport problem if the support for both distributions is the same. Suppose that $\gamma = \sum_{i,j=1}^n P_{ij}\delta_{(x_i,x_j)}$, and note $C\in \R^{n\times n}$ the matrix distance. Then, the second step can be rewritten as
\begin{equation}
        \min_{P\in\R_+^{n\times n}, \langle P, \bOne_{\{n\times n\}}\rangle=1, P\bOne_n=\alpha}\ \langle C, P\rangle + \tau D\left(\sum_{i=1}^n [P^T \bOne_n]_i \delta_{x_i}, \nu\right).
\end{equation}
For $D(\mu,\nu)=\kl(\mu||\nu)$, this can be solved using the Sinkhorn algorithm for the semi-relaxed UOT problem, \emph{i.e.} with $\varphi_1 = \iota_{\{1\}}$ \citep{sejourne2023unbalanced}. For $D=\mmd^2$, one can use different algorithms to solve it such as a Projected Mirror Descent or an Accelerated Gradient Descent \citep{manupriya2020mmd}. Here, we propose to use the half step of the Mirror Sinkhorn algorithm \citep{ballu2023mirror}, which performs first a Mirror Descent step with Bregman potential $\phi(P)=\langle P,\log P\rangle$ (for which $P_{k+1} = \nabla\phi^*(\nabla \phi(P_k) - \tau \nabla f(P_k)) = P_k \odot e^{-\tau \nabla f(P_k)}$), and then perform a (Sinkhorn-like) projection on the constraint, \emph{i.e.}, noting 
\begin{equation}
    f(P) = \langle C, P\rangle + \tau \mmd^2\left(\sum_{i=1}^n [P^T \bOne_n]_i \delta_{x_i}, \nu\right)
\end{equation}
the objective, the algorithm becomes
\begin{equation}
    \begin{cases}
        P_{k+1}' = P_k \odot e^{-\tau \nabla f(P_k)} \\
        P_{k+1} = \mathrm{diag}\big(\alpha \oslash (P_{k+1}' \bOne_n)\big) P_{k+1}'.
    \end{cases}
\end{equation}

We show on \Cref{fig:MMDGaussian_weights_4Rings} the results on the rings experiment. We observe that the weight of the 4th ring is set to 0, and thus that the scheme converges to the target.

\begin{figure}[t]
    \centering
    \includegraphics[width=\linewidth]{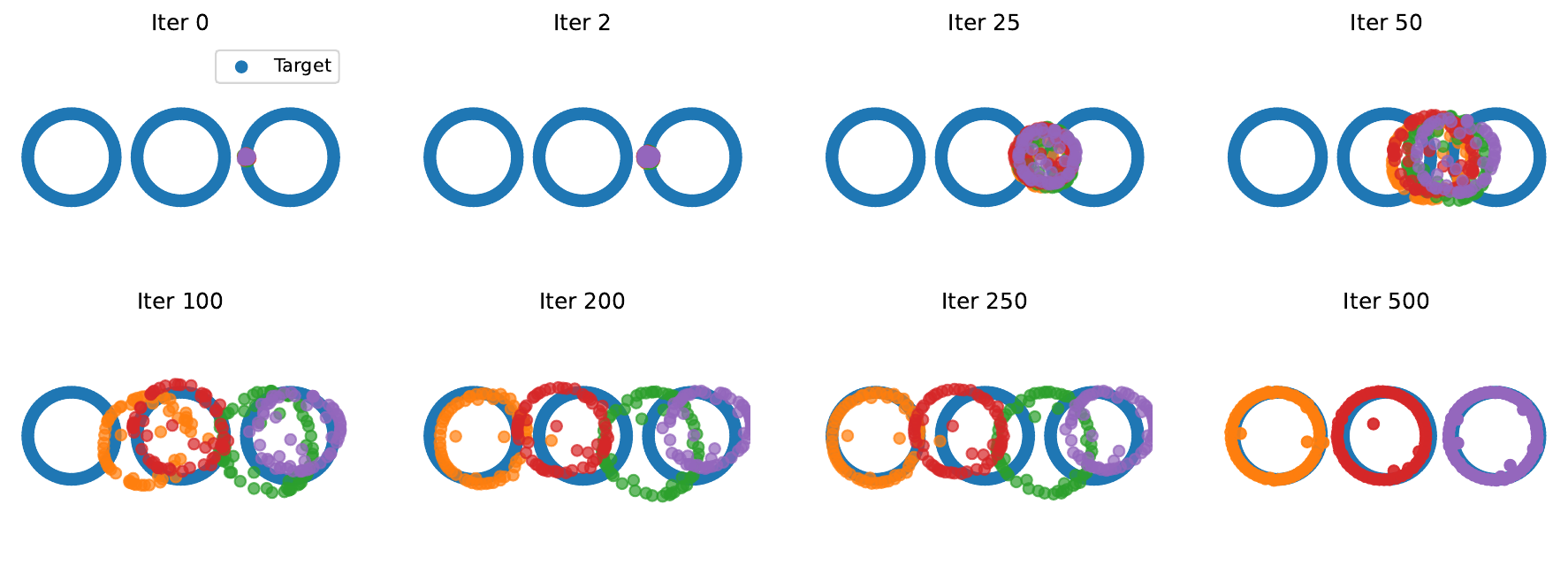}
        \caption{MMD with Gaussian SW kernel and 4 distributions flowed towards 3 rings using the proposed algorithm. The flow converges to the 3 rings by setting the weights of one of the ring to 0.}
    \label{fig:MMDGaussian_weights_4Rings}
\end{figure}

\section{Related Works} \label{appendix:related}

\subsection{Optimal Transport Distance for Datasets}

\citet{alvarez2020geometric} first proposed to compare datasets with a dedicated discrepancy, which takes into account features and labels. They proposed to do it by representing datasets as uniform empirical distributions over $\R^d\times \cP_2(\R^d)$, embedding the labels in $\cP_2(\R^d)$ by considering the conditional distributions, \emph{i.e.}, a feature-label pair $(x,c)$ is represented as $(x,\mu^c)$ with $\mu^c$ the distribution of samples belonging to the class $c$. They proposed to compare datasets using an optimal transport distance with cost $d\big((x,c), (x',c')\big)^2 = \|x-x'\|_2^2 + \W_2^2(\mu^c,\mu^{c'})$. To summarize, they consider as distance between $\mu,\nu\in\cP_2(\R^d\times \cP_2(\R^d))$,
\begin{equation}
    \mathrm{OTDD}(\mu,\nu) = \inf_{\gamma\in\Pi(\mu,\nu)}\ \int \big(\|x-x'\|_2^2 + \W_2^2(\mu^c,\nu^{c'})\big)\ \mathrm{d}\gamma\big((x,c), (x,c')\big).
\end{equation}
In practice, \citet{alvarez2020geometric} approximated the conditional distributions $\mu_c$ by Gaussians to be able to compute the Wasserstein distance in closed-form, which leads to a complexity of $O\big(Cnd^2 + C^2d^3 + n^3C^3\log(nC)\big)$ as it requires to estimate $C$ means and covariance matrices from $n$ samples, to compute $C^2$ Bures-Wasserstein distances, and an OT problem between $Cn$ samples. The final OT problem can be approximated using an entropic regularization, which reduces the complexity to $O\big(Cnd^2 + C^2d^3 + \varepsilon^{-2}n^2C^2\log (nC)\big)$ \citep{dvurechensky2018computational}. 

\citet{liu2025wasserstein} instead embedded the labels in $\R^d$ using a Multidimensional Scaling, and further approximated the resulting squared Wasserstein distance with a Wasserstein embedding. \citet{bonet2024sliced} proposed to embed the labels in a hyperbolic space, and used a Sliced-Wasserstein distance to compare distributions on the product space $\R^d\times \mathbb{H}$. \citet{nguyen2024hierarchical} used a similar embedding, and a hierarchical hybrid Sliced-Wasserstein distance. More recently, \citet{nguyen2025lightspeed} introduced a sliced optimal transport dataset distance using a dedicated projection from $\R^d\times \cP_2(\R^d)$ to $\R$.

Concerning the task of flowing datasets, \citet{alvarez2021dataset, hua2023dynamic} both modeled conditional distributions as Gaussian, and solved flows on $\R^d \times \R^p \times S_p^{++}(\R)$. More precisely, \citet{alvarez2021dataset} minimized OTDD on $\R^d \times \R^d\times S_d^{++}(\R)$, while \citet{hua2023dynamic} minimized an MMD over $\R^d\times \R^2 \times S_2^{++}(\R)$ with a product of Gaussian kernels, and using an embedding on $\R^2$ for the conditional distributions. In contrast to these works, we encode the labels directly into the discrepancy by using a MMD on the space of probability distributions with a suitable kernel.

\citet{alvarez2021dataset} proposed several ways of minimizing $\cF(\mu)=\mathrm{OTDD}(\mu,\nu)$ for $\mu=\frac{1}{n}\sum_{i=1}^n \delta_{(x_i,\mu^{c_i})}$. Let us note $\mu_k = \frac{1}{n}\sum_{i=1}^n \delta_{(x_{i,k}, \mu^{c_i}_{k})}$ the dataset at step $k$, and assume $c_i\in\{1,\dots,C\}$. For small datasets for which the Wasserstein distance between conditional distributions can be computed efficiently, they just proposed to flow the samples, \emph{i.e.} computing $x_{i,k+1} = x_{i,k} - \tau\nabla_{x_i}\mathrm{OTDD}(\mu_k,\nu)$ and updating the conditional distributions at each step. When using the Gaussian approximation, they proposed to update the mean and covariance at each step (feature driven), or to do a gradient descent step for the $C$ means and covariances (joint-driven-fixed-label). They also considered the joint-driven-variable-label, where they decoupled at time 0 the Gaussian, and flowed one mean $m_i$ and covariance $\Sigma_i$ by Gaussian, \emph{i.e.}, for all $i\in\{1,\dots,n\}$ and $k\ge 0$,
\begin{equation}
    \begin{cases}
        x_{i,k+1} = x_{i,k} - \tau \nabla_{x_i}\mathrm{OTDD}(\mu_k, \nu) \\
        m_{i,k+1} = m_{i,k} - \tau \nabla_{m_i}\mathrm{OTDD}(\mu_k, \nu) \\
        \Sigma_{i,k+1} = \Sigma_{i,k} - \tau \nabla_{\Sigma_i}\mathrm{OTDD}(\mu_k, \nu).
    \end{cases}
\end{equation}
This however requires to cluster the pairs $(m_i,\Sigma_i)$ to recover labels.

\citet{hua2023dynamic} observed that the embedding of the conditional distribution as Gaussian can be very costly in practice for high-dimensional datasets. Thus, they first proposed to embed the features in $\R^2$ using TSNE, in order to embed the labels as Gaussian in $\R^2$, and therefore represented the datasets as empirical distributions over $\R^d\times\R^2\times S_2^{++}(\R)$. Then, they proposed to minimize the MMD on this space with kernel $k\big((x,m,\Sigma),(x',m',\Sigma')\big) = e^{-\|x-x'\|_2^2/h_x} e^{-\|m-m'\|_2^2 / h_m} e^{-\|\Sigma-\Sigma'\|_2^2 / h_\Sigma}$. 
For $\mu,\nu\in \cP_2\big(\R^d\times \R^2\times S_2^{++}(\R)\big)$, let $\cF(\mu) = \frac12 \mmd^2(\mu,\nu) = \int V\mathrm{d}\mu + \frac12 \iint k(x,y)\ \mathrm{d}\mu(x)\mathrm{d}\mu(y)$, with $V(x)=-\int k(x,y)\mathrm{d}\nu(x)$. Its Wasserstein gradient is then for all $(x,m,\Sigma)$,
\begin{equation}
    \gW\cF(\mu)\big((x,m,\Sigma)\big) = \nabla V\big((x,m,\Sigma)\big) + \int \nabla_1 k\big((x,m,\Sigma), (x',m',\Sigma')\big)\ \mathrm{d}\mu\big((x',m',\Sigma')\big) \in \R^d\times\R^2\times S_2(\R).
\end{equation}
Using the Bures-Wasserstein geometry for the covariance part, their updates are given by
\begin{equation}
    \begin{cases}
        x_{i,k+1} = x_{i,k} - \tau [\gW\cF(\mu_k)\big((x_{i,k},m_{i,k},\Sigma_{i,k})\big)]_1 \\
        m_{i,k+1} = m_{i,k} - \tau [\gW\cF(\mu_k)\big((x_{i,k},m_{i,k},\Sigma_{i,k})\big)]_2 \\
        \Sigma_{i,k+1} = \exp_{\Sigma_{i,k}}\big(-\tau [\gW\cF(\mu_k)\big((x_{i,k},m_{i,k},\Sigma_{i,k})\big)]_3\big),
    \end{cases}
\end{equation}
with $\exp_\Sigma(S) = (I_d+S)\Sigma(I_d+S)$ for $\Sigma\in S_d^{++}(\R)$, $S\in S_d(\R)$ the exponential map on the Bures-Wasserstein space, see \emph{e.g.} \citep[Appendix A.1]{altschuler2021averaging}.

\subsection{Variational Inference with Mixture of Gaussians}

\citet{lambert2022variational} considered to do Variational Inference with a family of Gaussian mixtures. Let's note $\bw(\R^d)\subset \cP_2(\R^d)$ the Bures-Wasserstein space, \emph{i.e.}, the space of Gaussian distributions endowed with the Wasserstein distance. Observing that there is an identification between $\bw(\R^d)$ and $\R^d\times S_d^{++}(\R)$ \citep{chen2018optimal, delon2020wasserstein}, this amounts at solving the problem, for $\pi\in\cPa{\R^d}$,
\begin{equation}
    \min_{\mu\in\cP_2(\R^d\times S_d^{++}(\R))}\ \kl\left(\int p_\theta\mathrm{d}\mu(\theta) || \pi\right),
\end{equation}
where $p_\theta = \mathcal{N}(\cdot; m,\Sigma)$ for $\theta=(m,\Sigma) \in \R^d\times S_d^{++}(\R)$. Equivalently, it can be framed as an optimization problem over $\cP_2(\bw(\R^d))$, by solving
\begin{equation}
    \min_{\bP\in\cP_2(\bw(\R^d))}\ \kl\left(\int \mu\ \mathrm{d}\bP(\mu) || \pi\right).
\end{equation}
Note that the KL here is the usual Kullback-Leibler divergence, defined between $\mu,\nu\in\cPa{\R^d}$ as
\begin{equation}
    \kl(\mu||\nu) = \int \log\left(\frac{p_\mu(x)}{p_\nu(x)}\right)\ \mathrm{d}\mu(x),
\end{equation}
where we note $p_\mu$ and $p_\nu$ the densities of $\mu$ and $\nu$ \emph{w.r.t} the Lebesgue measure.

They address the problem by solving an ODE on the means and covariances, which characterizes the trajectory of the gradient flow in $(\cP_2(\bw(\R^d)), \W_{\bw_2})$. Alternatively, they propose to solve the JKO scheme between particles by solving for all $k\ge 0$,
\begin{equation}
    (\theta_{k+1}^{(1)},\dots,\theta_{k+1}^{(n)}) = \argmin_{\theta^{(1)},\dots,\theta^{(n)}}\ \bF\left(\frac{1}{n}\sum_{i=1}^n \delta_{\mathcal{N}(m^{(i)}, \Sigma^{(i)})}\right) + \frac{1}{\tau} \Ww^2\left(\frac{1}{n}\sum_{i=1}^n \delta_{\mathcal{N}(m^{(i)}, \Sigma^{(i)})}, \frac{1}{n}\sum_{i=1}^n \delta_{\mathcal{N}(m_k^{(i)}, \Sigma_k^{(i)})}\right).
\end{equation}

We now derive the gradient of this functional using our framework, and make the connections with the formula derived in \citep[Appendix F]{lambert2022variational}.

\paragraph{Computation of the gradient.}

Let $\bP\in\cPP{\R^d}$ and $\xi\in T_\bP\cPP{\R^d}$. We want to do the Taylor expansion of $\bF(\exp_\bP(t\xi)) = \bF\big((\mu\mapsto (\id +t\xi(\mu))_\#\mu)_\#\bP\big)$:
\begin{equation} \label{eq:taylor_kl}
    \begin{aligned}
        \bF\big(\exp_\bP(t\xi)\big) &= \kl\left( \int \mu \ \mathrm{d}\big(\exp_\bP(t\xi)\big)(\mu) || \pi\right) \\
        &= \kl\left(\int (\id + t\xi(\mu))_\#\mu\ \mathrm{d}\bP(\mu) || \pi\right) \\
        &= \iint \log\left(\frac{\int p_{(\id + t\xi(\nu))_\#\nu}(x)\ \mathrm{d}\bP(\nu)}{p_\pi(x)}\right)\ p_{(\id + t\xi(\mu))_\#\mu}(x)\ \mathrm{d}\bP(\mu) \dd x.
    \end{aligned}
\end{equation}
By a Taylor expansion, we can write for all $x\in\R^d$,
\begin{equation}
    p_{(\id + t \xi(\mu))_\#\mu}(x) = p_\mu(x) + t\partial_t p_{(\id + t\xi(\mu))_\#\mu}(x) + o(t) = p_\mu(x) - t \mathrm{div}\big(p_\mu(x)\xi(\mu)(x)\big) + o(t),
\end{equation}
where we used \citep[Theorem 5.34]{villani2021topics} for $\partial_t p_{(\id + t\xi(\mu))_\#\mu} = - t\mathrm{div}\big(p_\mu \xi(\mu)\big)$. Plugging this in \eqref{eq:taylor_kl}, we get
\begin{equation}
    \begin{aligned}
        \bF\big(\exp_{\bP}(t\xi)\big) &= \iint \log\left(\frac{\int p_\nu(x)\mathrm{d}\bP(\nu) - t\int \mathrm{div}\big(p_\nu(x)\xi(\nu)(x)\big)\ \mathrm{d}\bP(\nu) + o(t)}{p_\pi(x)}\right)\ p_{(\id + t\xi(\mu))_\#\mu}(x)\ \mathrm{d}\bP(\mu) \dd x \\
        &= \iint \big(\log\left(\int p_\nu(x)\mathrm{d}\bP(\nu)\right) - t \frac{\int \mathrm{div}\big(p_\nu(x)\xi(\nu)(x)\big)\ \mathrm{d}\bP(\nu)}{\int p_\nu(x)\mathrm{d}\bP(\nu)} + o(t) \\ &\qquad - \log p_\pi(x) \big)\ p_{(\id + t\xi(\mu))_\#\mu}(x)\ \mathrm{d}\bP(\mu) \dd x \\
        &= \iint \left( \log\left(\frac{\int p_\nu(x)\dd \bP(\nu)}{p_\pi(x)}\right) - t \frac{\int \mathrm{div}\big(p_\nu(x)\xi(\nu)(x)\big)\ \mathrm{d}\bP(\nu)}{\int p_\nu(x)\mathrm{d}\bP(\nu)} + o(t) \right) p_{(\id + t\xi(\mu))_\#\mu}(x)\ \mathrm{d}\bP(\mu) \dd x. \\
    \end{aligned}
\end{equation}
Performing the Taylor expansion of the second density, we get
\begin{equation}
    \begin{aligned}
        \bF\big(\exp_\bP(t\xi)\big) &= \iint \left( \log\left(\frac{\int p_\nu(x)\dd \bP(\nu)}{p_\pi(x)}\right) - t \frac{\int \mathrm{div}\big(p_\nu(x)\xi(\nu)(x)\big)\ \mathrm{d}\bP(\nu)}{\int p_\nu(x)\mathrm{d}\bP(\nu)} + o(t) \right)\ \\ & \qquad (p_\mu(x) - t \mathrm{div}\big(p_\mu(x)\xi(\mu)(x)\big) + o(t))\ \mathrm{d}\bP(\mu)\mathrm{d}x \\
        &= \bF(\bP) - t \iint \log\left(\frac{\int p_\nu(x)\dd \bP(\nu)}{p_\pi(x)}\right)\cdot \mathrm{div}\big(p_\mu(x)\xi(\mu)(x)\big)\ \mathrm{d}x\dd \bP(\mu) \\ &\quad- t \int \frac{\int \mathrm{div}\big(p_\nu(x)\xi(\nu)(x)\big)\ \dd\bP(\nu)}{\int p_\nu(x)\dd \bP(\nu)} \ \int p_\mu(x) \dd\bP(\mu)\dd x + o(t) \\
        &= \bF(\bP) + t \iint \left\langle \nabla \log\left(\frac{\int p_\nu(x)\dd \bP(\nu)}{p_\pi(x)}\right), \xi(\mu)(x)\right\rangle\ \mathrm{d}\mu(x)\mathrm{d}\bP(\mu) + o(t).
    \end{aligned}
\end{equation}
We used in the last line the integration by part formula, and $\int \mathrm{div}\big(p_\nu(x)\xi(\nu)(x)\big)\ \mathrm{d}x = 0$. We can conclude that $\gWw \bF(\bP)(\mu) = \nabla V_\bP$, where $V_\bP(x) = \log\big(\int p_\nu(x)\mathrm{d}\bP(\nu)\big) - \log p_\pi(x)$.

\paragraph{Computation with 1st variation.} We now verify that we would recover the same result by computing the first variation, as conjectured in \Cref{appendix:diff_V_W}.

Let $\pi\in\cPa{\R^d}$, $\pi\propto e^{-V}$ with $V:\R^d \to \R$ a potential. Denote $\cF_\pi:\cPa{\R^d}\to \R$, $\cF_\pi(\mu) = \kl(\mu||\pi)$ for all $\mu \in\cPa{\R^d}$, and for all $\bP\in\cP_2\big(\cP_\mathrm{ac}(\R^d)\big)$,
\begin{equation}
    \bF(\bP) = \kl \left(\int \mu\ \mathrm{d}\bP(\mu) \Big|\Big| \pi\right) = \cF_\pi\left(\int \mu\ \mathrm{d}\bP(\mu)\right).
\end{equation}
We will now derive the 1st variation of $\bF$. First, recall that $\dF{\cF_\pi}{\mu} = 1 + \log \mu - \log \pi = 1 + \log \mu + V$.
Thus, we have, noting $\Tilde{\mu} = \int \mu\ \mathrm{d}\bP(\mu)$, $p_{\Tilde{\mu}}(x) = \int p_\mu(x)\ \mathrm{d}\bP(\mu)$ and $\Tilde{\chi} = \int \mu\ \mathrm{d}\chi(\mu)$,
\begin{equation}
    \begin{aligned}
        \frac{\mathrm{d}\bF}{\mathrm{d}t}(\bP+t\chi)\big|_{t=0} &= \frac{\mathrm{d}\cF_\pi}{\mathrm{d}t} \left(\int \mu\ \mathrm{d}\bP(\mu) + t \int \mu\ \mathrm{d}\chi(\mu)\right)\Big|_{t=0} \\
        &= \frac{\mathrm{d}\cF_\pi}{\mathrm{d}t}(\Tilde{\mu} + t\Tilde{\chi})\big|_{t=0} \\
        &= \int \frac{\delta\cF_\pi}{\delta\mu}(\Tilde{\mu})(x)\ \mathrm{d}\Tilde{\chi}(x) \quad \text{by definition of the 1st variation of $\cF_\pi$} \\
        &= \int \big(1 + \log p_{\Tilde{\mu}}(x) - \log p_\pi(x)\big)\ \mathrm{d}\Tilde{\chi}(x) \\
        &= \int_{\R^d} \big(1 + \log\left(\int p_\mu(x)\ \mathrm{d}\bP(\mu)\right) - \log p_\pi(x)\big)\ \int_{\cP_2(\R^d)} p_\mu(x)\ \mathrm{d}\chi(\mu)\mathrm{d}x \\
        &= \int_{\cP_2(\R^d)}\int_{\R^d} \left( 1 + \log\left(\int p_\nu(x)\ \mathrm{d}\bP(\nu)\right) - \log p_\pi(x)\right)\ \mathrm{d}\mu(x)\ \mathrm{d}\chi(\mu).
    \end{aligned}
\end{equation}
Therefore, the first variation of $\bF$ at $\bP$ is, 
\begin{equation}
    \forall \mu\in \cP_2(\R^d),\ \dF{\bF}{\bP}(\mu) = \int_{\R^d} \left(1 + \log\left(\int p_\nu(x)\ \mathrm{d}\bP(\nu)\right) - \log p_\pi(x)\right)\ \mathrm{d}\mu(x).
\end{equation}
We note that this coincides with the formula of the 1st variation provided in \citep[Appendix F]{lambert2022variational} (in the particular case of mixture of Gaussian).

Now, noting $V_\bP(x) = 1 + \log\left(\int p_\nu(x)\ \mathrm{d}\bP(\nu)\right) - \log p_\pi(x)$, the first variation is a potential energy $\dF{\bF}{\bP}(\mu) = \int V_\bP\ \mathrm{d}\mu$. Thus, the gradient of $\bF$ at $\bP\in\cPP{\R^d}$ is obtained by the conjecture in \Cref{appendix:diff_V_W} as, for all $\mu\in\cP_2(\R^d)$ $x\in \R^d$,
\begin{equation}
    \gWw \bF(\bP)(\mu)(x) = \gW \dF{\bF}{\bP}(\mu)(x) = \nabla V_\bP(x).
\end{equation}
This is well the same formula obtained by computing the Taylor expansion.

If we want to compute the gradient on $\cP_2\big(\bw(\R^d)\big)$, we can take the Bures-Wasserstein gradient of the first variation instead of the Wasserstein gradient. Since it is a potential energy, by \citep[Lemma 3.1]{diao2023forward}, for any $\bP\in\cP_2\big(\bw(\R^d)\big)$ and $\mu\in\bw(\R^d)$, $x\in\R^d$,
\begin{equation}
    \nabla_{\W_{\bw}}\bF(\bP)(\mu)(x) = \nabla_{\bw}\dF{\bF}{\bP}(\mu)(x) = \int \nabla V_\bP\ \mathrm{d}\mu + \left(\int \nabla^2 V_\bP\ \mathrm{d}\mu\right) (x-m_\mu),
\end{equation}
with $m_\mu = \int x\mathrm{d}\mu(x)$. Since the tangent space of the Bures-Wasserstein space is of the form $T_\mu\bw(\R^d) = \{x\mapsto m + S(x-m_\mu),\ m\in \R^d, S\in S_d(\R)\}$ (see \emph{e.g.} \citep[Appendix A]{diao2023forward}), then we can identify the mean and covariance part of the gradient as $(\int \nabla V_\bP\ \mathrm{d}\mu, \int \nabla^2 V_\bP \ \mathrm{d}\mu)$, which coincides well with the formula derived in \citep[Appendix F]{lambert2022variational}.

\citet{lambert2022variational} experimented with $\bF$ in practice by evolving Gaussian particles. Note however that they observed that, even though the KL divergence is (geodesically) convex in $\cP_2(\R^d)$ for $V$ convex, $\bF$ is not convex in $\cP_2\big(\bw(\R^d)\big)$ as the negative entropy is not.

Also related, \citet{huix2024theoretical} considered optimizing the KL over mixtures of Gaussian, but with fixed covariance observing that the objective can be seen as the KL between a mollified distribution and the target. They minimized it using Wasserstein gradient flows over the means of each mixture. Moreover, their scheme in that case can be seen as a particular case of the one of \citep{lambert2022variational}, as described in \citep[Appendix B]{huix2024theoretical}. 

Also to solve Variational Inference problems, \citet{lim2024particle} considered the family $q_{\theta,\mu}=\int k_\theta(\cdot|z) \mathrm{d}\mu(z)$ with parametric kernels $k_\theta$ satisfying $\int k_\theta(x|z)\mathrm{d}x=1$ for all $z\in\R^{d_z}$. They solved this problem by minimizing the KL divergence with a regularizer, using a gradient flow over $\R^{d_\theta}\times \cP_2(\R^{d_z})$. \citet{rnning2025elboing} considered a mixture family of the form $q(x|\mu_m)=\frac{1}{m}\sum_{\ell=1}^m k(x|z_\ell)$ with $\mu_m=\frac{1}{m}\sum_{\ell=1}^m \delta_{z_{\ell}}$, and minimized an ELBO using the Stein Variational Gradient Descent.

\end{document}